\newtheorem*{thm*}{Theorem}
\newtheorem*{lemma*}{Lemma}
\newtheorem*{cor*}{Corollary}
\newtheorem*{prop*}{Proposition}
\newtheorem*{conjecture*}{Conjecture}
\theoremstyle{definition}
\newtheorem{defn}{Definition}[section]
\newtheorem*{defn*}{Definition}
\theoremstyle{definition}
\theoremstyle{definition}
\theoremstyle{remark}
\newtheorem*{ex*}{Example}
\theoremstyle{definition}
\newtheorem{example}{Example}
\theoremstyle{remark}
\newtheorem*{example*}{Example}
\theoremstyle{definition}
\theoremstyle{definition}
\newtheorem*{assumption*}{Assumption}
\theoremstyle{remark}
\newtheorem{remark}{Remark}[section]
\theoremstyle{remark}
\newtheorem*{remark*}{Remark}
\DeclareFontFamily{U}{mathx}{\hyphenchar\font45}
\DeclareFontShape{U}{mathx}{m}{n}{
      <5> <6> <7> <8> <9> <10> gen * mathx
      <10.95> mathx10 <12> <14.4> <17.28> <20.74> <24.88> mathx12
      }{}
\DeclareSymbolFont{mathx}{U}{mathx}{m}{n}
\DeclareMathSymbol{\intop}  {1}{mathx}{"B3}
\DeclareFontFamily{U}{mathx}{\hyphenchar\font45}
\DeclareFontShape{U}{mathx}{m}{n}{
      <5> <6> <7> <8> <9> <10>
      <10.95> <12> <14.4> <17.28> <20.74> <24.88>
      mathx10
      }{}
\DeclareSymbolFont{mathx}{U}{mathx}{m}{n}
\DeclareMathAccent{\widecheck}{0}{mathx}{"71}
\DeclareMathAccent{\wideparen}{0}{mathx}{"75}
\newcommand\indep{\independent}
\newcommand\independent{\protect\mathpalette{\protect\independenT}{\perp}}
\def\independenT#1#2{\mathrel{\rlap{$#1#2$}\mkern4mu{#1#2}}}
\let\temp\phi
\let\phi\varphi
\let\varphi\temp
\newcommand{\given}{\,|\,}  
\newcommand{\gr}{\mathsf{G}}
\DeclareMathOperator{\pa}{pa} 
\DeclareMathOperator{\ch}{ch} 
\DeclareMathOperator{\an}{an} 
\DeclareMathOperator{\de}{de} 
\DeclareMathOperator{\source}{so} 
\newcommand{\Unshielded}{\text{Isolated}}
\newcommand{\unshielded}{\text{isolated}}
\newcommand{\shielded}{\text{non-isolated}}
\newcommand{\Shielded}{\text{Non-isolated}}
\newcommand{\Redundant}{\text{Fractured}}
\newcommand{\redundant}{\text{fractured}}
\DeclareMathOperator{\dsep}{d-sep}
\newcommand{\tupleset}{\mathcal{T}}
\newcommand{\untupleset}{\mathcal{M}}
\newcommand{\Itarget}{I}
\newcommand{\distri}{P}
\newcommand{\ItargetFamily}{\mathcal{I}}
\newcommand{\biParG}{G_{\!B}} 
\newcommand{\latentG}{G_{\!H}} 
\newcommand{\cover}{\ch_{X}} 
\newcommand{\UDG}{D} 
\newcommand{\MaxCliq}{\Omega}
\newcommand{\Cliq}{C}
\newcommand{\MEC}{\mathcal{E}}
\newcommand{\latentGEdge}{E^{H}}
\newcommand{\biParGEdge}{E^{B}}
\newcommand{\bern}{\sigma}
\newcommand{\rande}{\tau}
\newcommand{\gaussian}{\mathcal{N}}
\newcommand{\xor}{\oplus}
\newcommand{\andlogic}{\wedge}
\newcommand{\NewInv}{NewInv}
\theoremstyle{plain}
\newtheorem{theorem}{Theorem}[section]
\newtheorem{proposition}[theorem]{Proposition}
\newtheorem{lemma}[theorem]{Lemma}
\theoremstyle{definition}
\newtheorem{definition}[theorem]{Definition}
\crefname{assumption}{Assumption}{Assumptions}
\tikzset{
    -Latex,auto,node distance =1 cm and 1 cm,semithick,
    state/.style ={circle, draw, minimum width = 1 cm, inner sep=0pt},
    point/.style = {circle, draw, inner sep=0.04cm,fill,node contents={}},
    bidirected/.style={Latex-Latex,dashed},
    el/.style = {inner sep=2pt, align=left, sloped}
}
\title{Learning nonparametric latent causal graphs with\\unknown interventions}
\author{Yibo Jiang\\
\texttt{yiboj@uchicago.edu} \\
University of Chicago
\and Bryon Aragam\\
\texttt{bryon@chicagobooth.edu}\\
University of Chicago}
\begin{document}

\date{}
\maketitle

\begin{abstract}
    We establish conditions under which latent causal graphs are nonparametrically identifiable and can be reconstructed from unknown interventions in the latent space. 
    Our primary focus is the identification of the latent structure in measurement models
    without parametric assumptions such as linearity or Gaussianity. Moreover, we do not assume the number of hidden variables is known, and we show that at most one unknown intervention per hidden variable is needed. 
    This extends a recent line of work on learning causal representations from observations and interventions.
    The proofs are constructive and introduce two new graphical concepts---\emph{imaginary subsets} and \emph{isolated edges}---that may be useful in their own right.
    As a matter of independent interest, 
    the proofs also involve a novel characterization of the limits of edge orientations within the equivalence class of DAGs induced by \emph{unknown} interventions. 
    These are the first results to characterize the conditions under which causal representations are identifiable without making any parametric assumptions in a general setting with unknown interventions and without faithfulness.
\end{abstract}

\vspace{-1em}
\section{Introduction}

Among the many challenges in modern machine learning and artificial intelligence, learning and reasoning about causes and effects from data remains a key challenge. In practice, one of the hurdles that must be overcome is that we often do not have direct access to measurements on causally meaningful variables, and instead can only measure primitive, indirect measurements such as pixel intensities in vision, gene expression values in biology, letters and words in language, or frequency signals in audio. In these applications, it is necessary to first learn representations with meaningful causal signals, a problem known as causal representation learning \citep{scholkopf2021toward}.
Besides learning representations or features of data, we are also interested in understanding what happens when we intervene on these learned features, which is essential for causal reasoning. 
As such, broadly speaking, causal representation learning can be broken down into two steps: (1) learning high-level features from raw data and (2) learning causal relations between these features. 

Given the proliferation of recent work of identifying latent representations in the observational setting \citep{anandkumar2013,choi2011learning,xie2020generalized,yang2020causalvae,khemakhem2020variational,markham2020measurement,kivva2021learning,hyvarinen2016unsupervised,halva2021disentangling,khemakhem2020ice,lachapelle2021disentanglement,li2019identifying,pmlr-v139-mita21a,pmlr-v139-roeder21a,yang2021nonlinear,sorrenson2019disentanglement,zimmermann2021contrastive,wang2021posterior,moran2021identifiable}, in this paper we consider the case of interventions. 
Data arising from interventions opens the door for a causal interpretation of the learned representations, which is often described by a directed acyclic graph (DAG). This setting raises new challenges,
namely that the interventions are both \emph{latent} and \emph{unknown}. Moreover, in practical applications,
flexible deep neural networks are used to learn nonlinear representations of the data, which necessitates the consideration of nonparametric assumptions. 
Motivated by these challenges, we seek to answer the following important question:
\begin{quote}
\emph{Given a list of latent, unknown interventions, when is it possible to identify the underlying (latent) causal relationships without making parametric assumptions?}
\end{quote}

\begin{figure}[t]
    \centering
  \begin{tikzpicture}[node distance={18mm}, 
                      latent/.style = {draw, rectangle, red, minimum size=0.9cm},
                      obs/.style = {draw, circle, blue},
                      empty/.style = {}
                      ]

  \node[obs] (X2) {$X_2$}; 
  \node[obs] (X1) [right of=X2]  {$X_1$}; 
  \node[obs] (X5) [right of=X1]  {$X_5$}; 
  \node[obs] (X6) [right of=X5]  {$X_6$}; 
  \node[obs] (X4) [right of=X6]  {$X_4$}; 
  \node[obs] (X3) [right of=X4]  {$X_3$}; 
 
  \node[latent] (H2) [above of=X5] {$H_2$};    
  \node[latent] (H1) [left of=H2] {$H_1$};    
  \node[latent] (H4) [above of=X6] {$H_4$};
  \node[latent] (H3) [right of=H4] {$H_3$};

  \node[empty,blue] (GB) at (-1.15,0.5) {$G_B$}; 
  \node[empty,red] (GH) at (0.6,1.9) {$G_H$};

  \node[empty] (inv1) at (0,1.35) {};

  \node [draw=blue,dashed,fit=(X2)(X3)(inv1), inner sep=0.1cm, rounded corners] (GBbox) {};
  \node [draw=red,dashed,fit=(H1)(H3), inner sep=0.1cm] (GHbox) {};

  \draw[->, red] (H2) -- (H4);
  \draw[->, red] (H4) -- (H3);

  \draw[->, blue] (H1) -- (X1);
  \draw[->, blue] (H1) -- (X4);
  \draw[->, blue] (H1) -- (X2);
  
  \draw[->, blue] (H2) -- (X2);
  \draw[->, blue] (H2) -- (X1);
  \draw[->, blue] (H2) -- (X5);

  \draw[->, blue] (H3) -- (X3);
  \draw[->, blue] (H3) -- (X5);
  
  \draw[->, blue] (H4) -- (X6);

  \end{tikzpicture} 
  \caption{Illustration of the main concepts used in this paper. $G = \biParG \cup \latentG$ is a measurement model with bipartite DAG $\biParG$ (blue edges) and latent DAG $\latentG$ (red edges), $\{X_5, X_6\}$ is an imaginary subset, $\{X_1, X_2\}$ is a replaceable subset while $\{X_1, X_2, X_5 \}$ is a non-replaceable subset, and $H_{2}\to H_{4}$ is an isolated edge. In fact, $\{X_5, X_6\}$ is a non-replaceable imaginary subset. This is also not a maximal measurement model although it still illustrates the main concepts. For completeness, the undirected dependency graphs under different interventional targets are provided in \cref{fig:udg}.}
  \label{fig:pure-imaginary}
  \end{figure}
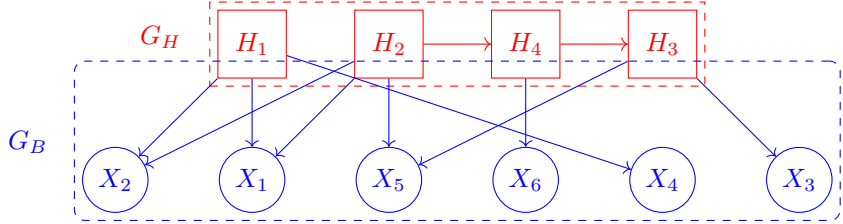

In particular, we focus on the problem of learning structure \citep{pearl2009causality, spirtes2000causation}, and leave the problem of learning the representations themselves to existing work, since one can then use deep latent variable models to infer the latent distributions from the latent structure, which is well-studied \citep[e.g.][]{moutonintegrating,weilbach2020structured,webb2018faithful,johnson2016composing,germain2015made,kocaoglu2017causalgan}.
We adopt the measurement model \citep{silva2006learning,kummerfeld2016causal,markham2020measurement} where there are no direct causal edges between observed covariates. This model has been used extensively to model causal representations  \citep[e.g.][]{xie2020generalized,kivva2021learning,huang2022latent,xie2022identification,xie2023causal,ahuja2022interventional,kivva2021learning, https://doi.org/10.48550/arxiv.2211.16467, varici2023score}.
Surprisingly, we show that it is possible to learn latent causal structure without learning the distributions of latent representations, unlike \citep{ahuja2022interventional,kivva2021learning, https://doi.org/10.48550/arxiv.2211.16467, varici2023score}.
Furthermore, unlike \citep{ahuja2022interventional, https://doi.org/10.48550/arxiv.2211.16467, varici2023score}, we do not impose any parametric assumptions: We allow for general noisy, nonlinear transformations between latents and observed as well as arbitrary nonlinear relationships between the latents, which makes the problem significantly harder.
Part of our motivation is to better understand the \emph{minimal} assumptions for learning causal structure from unknown interventions.

\subsection{Contributions}
Our main contribution is a set of nonparametric assumptions under which the entire latent causal graph is identifiable given unknown interventions in the latent space, up to edges that we show cannot be oriented without additional assumptions (in a similar sense to reversible edges in a Markov equivalence class). 
To the best of our knowledge, structure identification in a nonparametric setting given \emph{unknown} and \emph{latent} interventions has not been considered in-depth in the literature.

More specifically, we make the following contributions:

\begin{enumerate}[label=\arabic*.]
    \item We introduce two new graphical concepts---\emph{imaginary subsets} (\cref{defn:imagine}) and \emph{isolated edges} (\cref{defn:isolated})---that are key to identifiability and orientability of edges in the true causal graph. These are illustrated in \cref{fig:pure-imaginary} and discussed in detail in Sections~\ref{sec:bipar}-\ref{sec:latentG}.
    \item We combine these concepts with nonparametric, graphical assumptions to show that the causal graph is identifiable up to isolated edges (\cref{thm:main}). 
    \item We show the limitation of edge orientations using CI relations alone under unknown interventions even when there are no latent variables (\cref{thm:unshilded-same}). 
\end{enumerate}

The implications of these results are twofold: 1) It \emph{is} possible to learn the entire DAG without making parametric assumptions, albeit at the cost of nontrivial graphical assumptions, and 2) If we wish to relax these graphical conditions, alternative assumptions are needed. 
That is, in Appendix~\ref{appendix:assm} (Examples~\ref{ex:counter-1b}-\ref{ex:counter-maximal}), we prove that our assumptions are nearly necessary in the sense that if any individual assumption is relaxed, then identifiability fails.
\cref{fig:pure-imaginary} illustrates our graphical conditions in a simple example that will be referred back to throughout the paper.
Finally, we verify our theoretical results in a simulation study.

\subsection{Related work}
Learning the Markov equivalence class of causal graphs from observational distributions is a well-researched area \citep{spirtes2000causation}. Although our focus is on learning measurement models with interventions, we note that the observational case has been extensively studied \citep{silva2006learning,kummerfeld2016causal,markham2020measurement,xie2020generalized,kivva2021learning,huang2022latent,xie2022identification,xie2023causal}. 

\paragraph{Intervention design.}
In the classical setting of known interventions on observables,
\citet{eberhardt2006n} show that for causal graphs with more than $n>2$ variables, $n-1$ single node interventions are sufficient, and in the worst case, they are also necessary. 
\citet{kocaoglu2017experimental} consider intervention design in the presence of latents, and propose efficient algorithms.
\citet{hauser2012characterization} derive a notion of interventional equivalence for hard/structural interventions, while \citet{tian2013causal, yang2018characterizing} do the same for soft/parametric interventions.

\paragraph{Unknown interventions.}
There has been growing literature on learning under unknown interventions recently as well. A recent line of work studies soft interventions where the unknown interventional targets are observed \citep{kocaoglu2019characterization,NEURIPS2020_6cd9313e, squires2020permutation, https://doi.org/10.48550/arxiv.2206.02013, castelletti2022network, hagele2022bacadi, faria2022differentiable, eaton2007exact}. In particular, \citet{NEURIPS2020_6cd9313e} consider the case where the causal graph consists of measured and unmeasured latent variables, but the intervention targets, though unknown, are from measured variables. \citet{squires2020permutation} assume no hidden variables and use direct $\mathcal{I}$-faithfulness to identify the unknown intervention targets. \citet{https://doi.org/10.48550/arxiv.2206.02013} utilize independent causal mechanisms as a key assumption and measure the number of mechanism changes to identify the true DAG. \citet{castelletti2022network,faria2022differentiable, eaton2007exact} propose Bayesian methods to learn DAGs under unknown interventions. 

\paragraph{Latent interventions.}
A very different setting arises when the interventions are both unknown \emph{and} latent.
Perhaps the earliest approach to this general setting is \emph{causal feature learning}, introduced in \citep{chalupka2014visual,chalupka2017causal}.
More closely related to our paper are \citet{https://doi.org/10.48550/arxiv.2211.16467, https://doi.org/10.48550/arxiv.2208.14153, varici2023score}, which consider unknown interventions on latent variables under parametric assumptions. \citet{https://doi.org/10.48550/arxiv.2211.16467} assume hard interventions on latent variables under a linear model. \citet{varici2023score} allow nonlinearities in the latent space with a linear map between hidden and observed, using the score function for identification. \citet{https://doi.org/10.48550/arxiv.2208.14153} assume the existence of an auxiliary observed variable $u$ that modulates the variant weights among latent causal variables where the setup is similar to that of iVAE \citep{https://doi.org/10.48550/arxiv.1907.04809}. \citet{ahuja2022interventional} assume a polynomial decoder and the intervention target on the latent is known. \citet{brehmer2022weakly} consider the weakly supervised setting with paired pre- and post-intervention samples, where the interventions are random and unknown. \citet{lippe2022citris}, on the other hand, work with latent interventions on temporal sequences. 

While our work was under review, we were made aware of several concurrent works that study the same problem under different assumptions \citep{vonkügelgen2023nonparametric, buchholz2023learning, zhang2023identifiability}.

\section{Preliminaries}
\label{sec:prelim}
Our basic setup is a standard graphical model with both observed and latent variables.
\cref{appendix:defn} contains a comprehensive overview of all the necessary formal graphical concepts; we briefly outline the basics here.
Let $G = (V, E)$ be a directed acyclic graph (DAG) with $V = (X, H)$ where $X$ denotes the observed part and $H$ denotes the hidden or latent part. Define $n:=|X|$ and $m:=|H|$.
For a given node $v$, we use standard notation such as $\pa(v)$, $\ch(v)$, $\an(v)$ and $\de(v)$ for parents, children, ancestors, and descendants respectively. Given a subset $V' \subseteq V$, $\pa(V') \coloneqq \cup_{v \in V'} \pa(v)$, given two subsets $A, B \subseteq V$, $\pa_B(A)=\pa(A)\cap B$ and given a subgraph $G' \subseteq G$, $\pa_{G'}(V') \coloneqq \pa(V') \cap G'$. Similar notation can be defined for children, ancestors, and descendants. 
For disjoint subsets $A, B, C \subseteq V$, define $\dsep_G(AB \given  C)$ to mean that $A, B$ are $d$-separated by $C$ in the graph $G$. A DAG encodes a set of $d$-separation relations, defined by
\begin{equation*}
\begin{split}
\tupleset_{V'}(G) &= \{ \langle A, B, C \rangle : \dsep_G(AB \given C) \text{ for disjoint subsets } A, B, C \subseteq V' \}
\end{split}
\end{equation*}
where $V' \subseteq V$. For simplicity, we write $\tupleset(G) = \tupleset_{V}(G)$. Recall that two DAGs $G_1$ and $G_2$ are Markov equivalent if $\tupleset(G_1) = \tupleset(G_2)$. The Markov equivalence relations define a set of equivalence classes of DAGs called the Markov equivalence class (MEC).

Every distribution $\distri$ on $V$ defines a collection of conditional independence (CI) relations:
\begin{equation*}
\begin{split}
    \tupleset_{V'}(\distri) &= \{ \langle A, B, C \rangle : A \indep B \given  C \text{ in } \distri  \text{ for disjoint subsets } A, B, C \subseteq V' \}
\end{split}
\end{equation*}
where $V' \subseteq V$, and as before $\tupleset(\distri) = \tupleset_{V}(\distri)$. We denote the marginal of $V'$ by $P_{V'}$.

\begin{restatable}{remark}{remempty}
\label{rem:empty}
    Throughout the paper, \emph{unconditional} (i.e. marginal) independence and $d$-separation will play a prominent role. Thus, unless otherwise stated, when we say $A$ and $B$ are dependent or $d$-connected, or that there is an active path between $A$ and $B$, without explicit mention of the separating set $C$, we consider it implies that $C=\emptyset$. 
\end{restatable}

\paragraph{Interventions.} 
For background, \citet{eberhardt2007interventions} provides a detailed account of the different types of interventions in causal systems.
In this paper, we consider \emph{hard} (or \emph{structural}) interventions on a \emph{single node}. Let $\Itarget \subseteq V$ be a set of intervention targets. The \textbf{intervention graph} of $G$ is the DAG $G^{(\Itarget)} = (V, E^{(\Itarget)})$, where $E^{\Itarget} \coloneqq \{(a,b) | (a,b) \in E, b \notin \Itarget \}$. Similarly, let $\distri^{(\Itarget)}$ be the interventional distribution. We further restrict intervention targets to be latent variables in this paper ($\Itarget \subseteq H$), which necessitates considering \emph{unknown} intervention targets. Thus, we have access to a family of intervention targets $\ItargetFamily = \{ \Itarget_0, ..., \Itarget_k \}$ with $|\Itarget_j| \leq 1$. By convention, we let $\Itarget_0 = \emptyset$ so that $G^{(\Itarget_0)} = G$ and $\distri^{(\Itarget_0)}=\distri$ is the observational distribution.

A subtle but important point is that different interventional distributions may be the same, i.e. $P_X^{(I)}=P_X^{(I')}$ but $I\ne I'$. This implies, in particular, that the number of latent variables is unknown (see \cref{rem:ivnknown}).

\begin{example}
\label{ex:known-vs-unknonw}
Known and unknown interventions have different implications in terms of edge orientations. To see this, consider the unoriented edge between two variables $X_1$ and $X_2$. If we know the intervention target is $\{ X_1\}$ and we observe that $X_1$ and $X_2$ become independent under this intervention, then we know $X_1 \leftarrow X_2$. However, if we do not know the intervention target is $X_1$ and we observe the same independence between $X_1$ and $X_2$, then it is possible the true DAG is $X_1 \leftarrow X_2$ with intervention target $X_1$ or the true DAG is $X_1 \rightarrow X_2$ and the intervention target $X_2$. 
\end{example}

\paragraph{Measurement models.}

Our main results apply to so-called \emph{measurement models} \citep{silva2006learning,kummerfeld2016causal} in which every observed variable only has incoming edges and no outgoing edges (i.e. $\ch(X)=\emptyset$). This assumption cleanly encapsulates the problem of reconstructing latent causal structure and captures relevant applications where the relationships between raw observations are less relevant than causal features and is a standard model adopted in prior work \citep[e.g.][]{xie2020generalized,markham2020measurement,huang2022latent,xie2022identification,xie2023causal}.

For any measurement model, $G$ decomposes as the union of two subgraphs $G = \biParG \cup \latentG$ where $\biParG$ is a directed, bipartite graph pointing from $H$ to $X$, and $\latentG$ is a DAG over the latent variables $H$. See \cref{fig:pure-imaginary}.

Following \citet{markham2020measurement}, for any distribution $P$ over $V$, define the undirected dependency graph (UDG), denoted $\UDG(P)$, to be the undirected graph over $X$ in which there is an edge between $X_i$ and $X_j$ if and only if they are marginally dependent (i.e. given the empty set, cf. \Cref{rem:empty}). Clearly, $\UDG(P)$ is easily constructed from $\tupleset_{X}(P)$ by checking if each pair of observed variables is marginally independent or not.

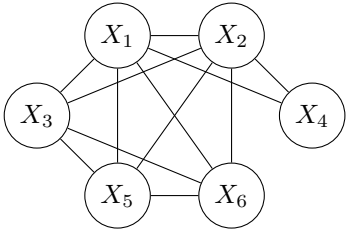
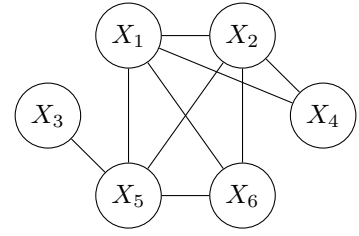
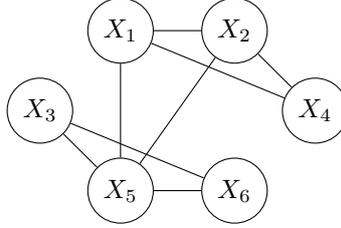
\begin{figure}[t]
    \centering
    \begin{subfigure}{.4\linewidth}
    \centering
      \begin{tikzpicture}[node distance={15mm}, 
                        latent/.style = {draw, rectangle, black, minimum size=0.8cm},
                        obs/.style = {draw, circle, black}
                        ] 
    \node[obs] (X1) {$X_1$}; 
    \node[obs] (X2) [right of=X1] {$X_2$};
    \node[obs] (X3) [below left of=X1] {$X_3$};
    \node[obs] (X4) [below right of=X2] {$X_4$};
    \node[obs] (X5) [below right of=X3] {$X_5$};
    \node[obs] (X6) [below left of=X4] {$X_6$};

    \draw[-, black] (X1) -- (X2);
    \draw[-, black] (X1) -- (X4);
    \draw[-, black] (X2) -- (X4);

    \draw[-, black] (X1) -- (X3);
    \draw[-, black] (X1) -- (X5);
    \draw[-, black] (X1) -- (X6);
    \draw[-, black] (X2) -- (X3);
    \draw[-, black] (X2) -- (X5);
    \draw[-, black] (X2) -- (X6);
    \draw[-, black] (X3) -- (X5);
    \draw[-, black] (X3) -- (X6);
    \draw[-, black] (X5) -- (X6);

    \end{tikzpicture} 
    \label{fig:assm-3-a}
     \caption{When the intervention target is $\emptyset, \{H_1\}$ or $\{ H_2\}$, the UDG contains maximal cliques: $\{1, 2, 3, 5, 6 \}, \{1, 2, 4 \}$.}
    \end{subfigure}%
    \hfill
    \begin{subfigure}{.4\linewidth}
    \centering
      \begin{tikzpicture}[node distance={15mm}, 
                        latent/.style = {draw, rectangle, black, minimum size=0.8cm},
                        obs/.style = {draw, circle, black}
                        ] 
    \node[obs] (X1) {$X_1$}; 
    \node[obs] (X2) [right of=X1] {$X_2$};
    \node[obs] (X3) [below left of=X1] {$X_3$};
    \node[obs] (X4) [below right of=X2] {$X_4$};
    \node[obs] (X5) [below right of=X3] {$X_5$};
    \node[obs] (X6) [below left of=X4] {$X_6$};

    \draw[-, black] (X1) -- (X2);
    \draw[-, black] (X1) -- (X4);
    \draw[-, black] (X2) -- (X4);

    \draw[-, black] (X1) -- (X5);
    \draw[-, black] (X1) -- (X6);
    \draw[-, black] (X2) -- (X5);
    \draw[-, black] (X2) -- (X6);
    \draw[-, black] (X5) -- (X6);

    \draw[-, black] (X3) -- (X5);
    \end{tikzpicture} 
    \label{fig:assm-3-b}
     \caption{When the intervention target is $\{H_3\}$, the UDG contains maximal cliques: $\{1, 2, 5, 6 \}, \{1, 2, 4 \}, \{3, 5 \}$.}
    \end{subfigure}%
    
        \begin{subfigure}{.4\linewidth}
    \centering
      \begin{tikzpicture}[node distance={15mm}, 
                        latent/.style = {draw, rectangle, black, minimum size=0.8cm},
                        obs/.style = {draw, circle, black}
                        ] 
    \node[obs] (X1) {$X_1$}; 
    \node[obs] (X2) [right of=X1] {$X_2$};
    \node[obs] (X3) [below left of=X1] {$X_3$};
    \node[obs] (X4) [below right of=X2] {$X_4$};
    \node[obs] (X5) [below right of=X3] {$X_5$};
    \node[obs] (X6) [below left of=X4] {$X_6$};

    \draw[-, black] (X1) -- (X2);
    \draw[-, black] (X1) -- (X4);
    \draw[-, black] (X2) -- (X4);

    \draw[-, black] (X1) -- (X5);
    \draw[-, black] (X2) -- (X5);

    \draw[-, black] (X3) -- (X5);
    \draw[-, black] (X3) -- (X6);
    \draw[-, black] (X5) -- (X6);
    \end{tikzpicture} 
    \label{fig:assm-3-b}
    \caption{When the intervention target is $\{H_4\}$, the UDG contains maximal cliques: $\{1, 2, 5\}, \{1, 2, 4 \}, \{3, 5, 6 \}$.}
    \end{subfigure}%
    
    \caption{UDG under different intervention targets for Figure 1.}
    \label{fig:udg}
    \end{figure}

\section{Main result}

We can now state our goal formally as follows:
\begin{quote}
    \emph{Given a set of interventional distributions $\{P_X^{(\Itarget)}\}_{ \Itarget \in \ItargetFamily}$, can we recover $\biParG$ and $\latentG$?} 
\end{quote}
Here, $\{P_X^{(\Itarget)}\}_{ \Itarget \in \ItargetFamily}$ is a \emph{set} and not a tuple: If two different interventions lead to the same interventional distributions, then we only observe one copy of $P_X^{(\Itarget)}$ in this set.
As a result, we do not know the number of latent variables (see \cref{rem:ivnknown}); we show how this is learned in the proof.
Of course, in practice, we only have sample access to $P_X^{(\Itarget)}$. In this paper, we focus on identifiability and leave estimation from finite samples to future work.

\begin{remark}
Learning from the tuple $(P_X^{(\Itarget)})_{ \Itarget \in \ItargetFamily}$ is easier than learning from the set $\{P_X^{(\Itarget)}\}_{ \Itarget \in \ItargetFamily}$, since one can trivially reduce the tuple problem to a set problem by removing duplicates. 
Thus, there is no loss of generality in our setting.
\end{remark}

\subsection{Assumptions}

To solve this problem, we make the following assumptions: 

\begin{restatable}[Graphical conditions]{assumption}{gr}
\label{assm:gr}
The DAG $G$ satisfies the following conditions for every $I\in\ItargetFamily$ and pair of hidden variables $H_i \neq H_j$:
\begin{enumerate}[label=(\alph*)]
    \item $\distri^{(\Itarget)}$ is Markov with respect to $G^{(\Itarget)}$, i.e. $\tupleset(G^{(\Itarget)}) \subseteq \tupleset(\distri^{(\Itarget)})$.
    \item $X_i\!\!\indep_{\!\!P^{(\Itarget)}} X_j\!\!\implies\!\!\dsep_{G^{(\Itarget)}}\!(\{X_i\}\{X_j\}\given\emptyset)$, i.e. marginal independence in $X$ implies $d$-separation.
    \item $\dsep_{G^{(\Itarget)}}(\{H_i\}\{H_j\} \given  \emptyset)\!\!\implies\!\!$ there exists $X_i \in \cover(H_i)$ and $X_j \in \cover(H_j)$ such that $\dsep_{G^{(\Itarget)}}(\{X_i\}\{X_j\} \given  \emptyset)$.
\end{enumerate}
\end{restatable}

\begin{restatable}[Complete family of targets] {assumption}{completefaimly}
\label{assm:complete-faimly}
$\ItargetFamily = \{\emptyset, \{H_1\}, \ldots, \{H_m\}\}$.
\end{restatable}
Intuitively, the graphical conditions in \cref{assm:gr} ensure that hidden variables and their dependencies have detectable signatures in observed distributions. 
Of course, \cref{assm:gr}(a) is just the usual Markov assumption that relates the graph $G$ to the distribution $P$.
\cref{assm:gr}(b) requires that marginal dependencies of observed variables are reflected in the underlying graph, and is much weaker than similar conditions in the graphical modeling literature.
\cref{assm:complete-faimly} ensures that the effect of each hidden variable is measured.
Furthermore, with the exception of \cref{assm:gr}(c), which arises from our fully nonparametric setup, each of these assumptions has appeared previously in the literature \citep{markham2020measurement,kivva2021learning,textor2015learning,deligeorgaki2022combinatorial,evans2016graphs,pearl1992statistical,seigal2022linear,ahuja2022interventional,varici2023score}.
See also \cref{rem:pure}.
A detailed discussion of these assumptions is deferred to \cref{appendix:assm}. 
In particular, with the exception of the Markov property \cref{assm:gr}(a), we give counterexamples to show that when any \emph{one} of these assumptions is violated (but the rest continue to hold), there are two graphs that have the same set of observed distributions under different interventions.

\begin{remark}
It is worth noting that the well-known subset condition \citep{pearl1992statistical, evans2016graphs,kivva2021learning} is implied by \cref{assm:gr} and \cref{assm:complete-faimly} (\cref{lemma:subset} in \cref{appendix:subset}). However, 
this arises from the fact that \cref{assm:complete-faimly} is needed for \emph{exact} recovery. If the main objective is instead partial recovery and \cref{assm:complete-faimly} is relaxed, then one needs to additionally assume the subset condition. See \cref{appendix:subset} for details.
\end{remark}

\begin{remark}
\cref{assm:gr} and \cref{assm:complete-faimly} also generalize the well-known \emph{pure child} condition, which is widely used and has applications in NLP and topic modeling \citep[e.g.][]{arora2012learning,NEURIPS2019_8c66bb19, xie2020generalized,moran2021identifiable}; see also Section~\ref{sec:pure}. It is easy to see that the existence of pure children for each latent implies \cref{assm:gr}(c).
\end{remark}

\subsection{Maximal measurement model}

Under Assumptions~\ref{assm:gr}-\ref{assm:complete-faimly}, two different measurement models can still induce exactly the same interventional distributions of observed variables. 
Even without interventions, there may be ambiguities, an observation that dates back to \citep{richardson2002ancestral} in the definition of a maximal ancestral graph and was more recently used in \citep{kivva2021learning}.
\cref{ex:counter-maximal} in \cref{appen:max} gives a concrete example where two measurement models share the same set of observed distributions under interventions.
Fortunately, the ambiguity is limited: There is always a \emph{maximal} measurement model that encodes as many non-redundant dependencies as possible, as defined below:\footnote{Here, non-redundant means that the edge encodes a genuine dependence. In other words, adding any more edges would change the underlying model.} 
\begin{restatable}[Maximal measurement model]{definition}{maxMM}
\label{defn:maxMM}
A measurement model $G$ is called \textbf{maximal} if it satisfies \cref{assm:gr} and the following two conditions:
\begin{enumerate}[label=(\alph*)]
    \item $\pa(X_i)\ne\emptyset$ for all $i\in[n]$,
    \item There is no DAG $G' = (V, E')$ also satisfying \cref{assm:gr} such that, $\{\tupleset_X(G^{(\Itarget)})\}_{ \Itarget \in \ItargetFamily}=\{\tupleset_X(G'^{(\Itarget)})\}_{ \Itarget \in \ItargetFamily}$, and $E \subsetneq E'$.
\end{enumerate}
\end{restatable}

Our definition of maximality here mirrors the concept of maximality introduced in \citep{richardson2002ancestral}, extended to include interventions. Throughout the paper, if not mentioned explicitly, the measurement model is considered to be maximal. More discussion on maximality is provided in \cref{appen:max}.

\subsection{Imaginary subsets and isolated edges}

To identify $G$, we will break the problem into two phases: First, we learn the bipartite graph $\biParG$, and then we use this to learn the latent DAG $\latentG$. Each of these phases introduces a new graphical concept, which are defined here.

\subsubsection{Imaginary subsets}
Latent variables induce cliques in the UDG $\UDG(P_X^{(\Itarget)})$, which provide a way to identify the existence of a latent \citep{markham2020measurement}. Unfortunately, the identification of $\biParG$ is complicated by \emph{imaginary subsets}: Intuitively, imaginary subsets are ambiguous subsets of observed variables that may not be the children of a single latent. 
Given $\UDG(P_X^{(\Itarget)})$, let $\MaxCliq_{P}^{(\Itarget)} $ be the set of maximal cliques in $\UDG(P_X^{(\Itarget)})$. We also let $\MaxCliq = \cup_{\Itarget \in \ItargetFamily} \MaxCliq_{P}^{(\Itarget)}$. 

\begin{restatable}[Imaginary subsets]{definition}{defnvalid}
\label{defn:imagine}
A subset $X' \subseteq X$ is a \textbf{valid subset} if for all $\Itarget \in \ItargetFamily$, there exists a maximal clique $\Cliq \in \MaxCliq_{P}^{(\Itarget)}$ such that $X' \subseteq \Cliq$. A valid subset is \textbf{maximal} if there is no other valid subset $X''$ such that $X' \subsetneq X'' \subseteq \Cliq$ for every maximal clique $\Cliq\,\in \MaxCliq$ containing $X'$.
A maximal valid subset $X' \subseteq X$ is \textbf{imaginary} if there is no hidden variable $H_i$ such that $X' \subseteq \cover(H_i)$.
\end{restatable}

Imaginary subsets (more precisely, the lack thereof) are crucial to the identification of $\biParG$, although at first glance they may seem a bit abstract. Indeed, handling imaginary subsets turns out to be a nontrivial issue, so we defer further discussion of this concept to \cref{sec:bipar}, where several examples and intuitions are given.

\subsubsection{Isolated edges}
Once we identify $\biParG$, we'd like to identify $\latentG$. Unfortunately,
\cref{ex:known-vs-unknonw} gave an example where unknown interventions are incapable of orienting an edge using CI information only. This type of edge can be identified more broadly as follows:
\begin{restatable}[\Unshielded{} edge]{definition}{isolated}
\label{defn:isolated}
We say an edge $x \to y$ is\textbf{~\unshielded{}} if $x$ does not have any parent ($\pa(x)=\emptyset$) and $y$ only has $x$ as its parent ($\pa(y)=\{x\}$).
\end{restatable}

The importance of \unshielded{} edges is that these are precisely the edges that cannot be oriented using CI information only (see \cref{sec:limit-orientation}).

\subsection{Identifiability of causal graph}
\label{sec:main}

We can now state the main result of this paper:
\begin{restatable}{theorem}{main}
\label{thm:main}
Let $G$ be a maximal measurement model satisfying Assumption~\ref{assm:gr}.
Then, given a complete family of interventions (Assumption~\ref{assm:complete-faimly}), the following statements are true:
\begin{enumerate}[label=(\alph*)]
    \item If there are no imaginary subsets (\cref{defn:imagine}), then $G$ is identifiable from $\{P^{(\Itarget)}\}_{ \Itarget \in \ItargetFamily}$ up to~\unshielded{} edges (\cref{defn:isolated}) in $\latentG$;
    \item Isolated edges in $\latentG$ cannot be oriented using CI information only.
\end{enumerate}
In particular, the unknown number of latents $m$ is also identified. 
\end{restatable}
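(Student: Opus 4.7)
The plan is to proceed in two phases, mirroring the decomposition $G = \biParG \cup \latentG$: first recover $\biParG$ (and along the way the number of latents $m$), then orient $\latentG$ as far as CI information permits, and finally argue that isolated edges form a hard obstruction that establishes part (b).

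\textbf{Phase 1 (identifying $\biParG$ and $m$).} I would start from the observational UDG $\UDG(P_X^{(\emptyset)})$ and collect the maximal cliques across the entire family $\{\UDG(P_X^{(\Itarget)})\}_{\Itarget\in\ItargetFamily}$, extracting maximal valid subsets as in \cref{defn:imagine}. By \cref{assm:gr}, each $\cover(H_i)$ is a clique in every UDG in which $H_i$ is not intervened on (children of a common latent are pairwise $d$-connected, hence marginally dependent by the contrapositive of \cref{assm:gr}(b)), so each $\cover(H_i)$ is a valid subset. The hypothesis of no imaginary subsets, combined with maximality of $G$, should then force the maximal valid subsets to be exactly $\{\cover(H_1),\ldots,\cover(H_m)\}$, yielding $m$, the set of latents $H$, and all bipartite edges of $\biParG$.

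\textbf{Phase 2 (orienting $\latentG$).} Next, I would pair each target $\Itarget_j \in \ItargetFamily\setminus\{\emptyset\}$ with the latent it intervenes on. Intervening on $H_i$ cuts incoming edges into $H_i$ in $\latentG$ but preserves the edges from $H_i$ to $\cover(H_i)$; the resulting change in the UDG is a distinctive fingerprint (dependencies that used to flow into $\cover(H_i)$ through ancestors of $H_i$ vanish) that matches $\Itarget_j$ to a unique $H_i$. With this labeling in hand, each candidate latent edge $H_i$--$H_j$ can be tested by checking whether marginal dependence between members of $\cover(H_i)$ and $\cover(H_j)$ survives interventions on either endpoint; \cref{assm:gr}(c) lifts the observed outcome to latent $d$-separation. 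Whenever the edge is non-isolated, i.e.\ some endpoint has another parent, one of these tests is decisive, and the remaining ambiguity is confined exactly to isolated edges. For part (b), given an isolated edge $H_i \to H_j$ I would consider the DAG $G'$ obtained by reversing it: because $\pa_G(H_i) = \emptyset$ and $\pa_G(H_j) = \{H_i\}$, the reversal creates no new v-structure and destroys none, and does not affect the ancestry of any other node relative to targets in $\ItargetFamily$. A direct check extending \cref{ex:known-vs-unknonw} should give $\tupleset_X(G^{(\Itarget)}) = \tupleset_X(G'^{(\Itarget)})$ for every $\Itarget \in \ItargetFamily$, so no CI-based procedure can distinguish the two orientations.

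The hardest step will be Phase 1: establishing the bijection between maximal valid subsets and covers $\cover(H_i)$. A priori, a maximal clique can be an accidental union of two or more covers (exactly the imaginary-subset phenomenon) or extend beyond any single cover in one of the interventional UDGs, and ruling these pathologies out is precisely what the no-imaginary-subsets hypothesis together with maximality must accomplish. The intervention-to-latent matching at the start of Phase 2 is also delicate because of the unknown-intervention ambiguity highlighted by the authors: distinct targets can induce identical observed distributions, so the matching must be driven by structural changes in the UDG rather than by point-identified interventional distributions.
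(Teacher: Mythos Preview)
Your two-phase decomposition matches the paper's, and your handling of part (b) is essentially correct. But Phase 1 contains a genuine error, and Phase 2 relies on a step that cannot be carried out as stated.

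\textbf{Phase 1 gap.} Your claim that ``no imaginary subsets $+$ maximality forces the maximal valid subsets to be exactly $\{\cover(H_1),\ldots,\cover(H_m)\}$'' is false. Even with no imaginary subsets, there can be maximal valid subsets strictly contained in some $\cover(H_i)$. Take $H_1\to H_2$, $\cover(H_1)=\{X_1,X_2\}$, $\cover(H_2)=\{X_2,X_3\}$: under $\Itarget=\{H_2\}$ the two maximal cliques are $\{X_1,X_2\}$ and $\{X_2,X_3\}$, and one checks that $\{X_2\}$ is a \emph{maximal} valid subset (no single valid subset strictly larger than $\{X_2\}$ sits inside \emph{every} clique containing $\{X_2\}$), yet $\{X_2\}\subsetneq\cover(H_1)$. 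There is no imaginary subset here. The paper handles this by introducing the notion of a \emph{replaceable} subset (a maximal valid subset strictly contained in another maximal valid subset) and proving that the \emph{non-replaceable} maximal valid subsets are exactly the covers $\cover(H_i)$ when there are no (non-replaceable) imaginary subsets. You need this extra filtering step; without it your identification of $\biParG$ and of $m$ is wrong.

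\textbf{Phase 2 gap.} Your plan to ``pair each target $\Itarget_j$ with the latent it intervenes on'' cannot be executed in general: if $H_i$ is a source in $\latentG$, intervening on $H_i$ removes no edges, so $\UDG(P_X^{(\{H_i\})})=\UDG(P_X^{(\emptyset)})$ and there is no fingerprint at all. You note this is ``delicate'' but offer no workaround. The paper avoids a global matching entirely: it lifts observed pairwise dependencies to the latent level (two latents are $d$-separated iff their covers are not jointly contained in a maximal clique, using \cref{assm:gr}(c)), then uses multiplicity across the family: a latent pair $\langle H_i,H_j\rangle$ is a non-edge iff it becomes $d$-separated under at least \emph{two} distinct interventional patterns, and is an edge iff it becomes $d$-separated under exactly one. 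Orientation then exploits that when the intervention on a node with $\ge 2$ parents fires, that node is the unique common member of all newly separated pairs, so it can be identified as the target; this propagates to orient every non-isolated edge. Your edge-testing idea is in the right spirit, but it presupposes a labeling you cannot obtain.
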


In other words, $G$ can be maximally identified in the sense that any edge in the latent space that isn't oriented cannot be oriented from the given list of interventions using CI information only: Additional assumptions are needed (e.g. conditional invariances and direct $\mathcal{I}$-faithfulness as in \cite{squires2020permutation}).

We devote a significant effort in the sequel to interpreting and understanding the assumption of no imaginary subsets, which turns out to be subtle and complicated. Thus, in \cref{sec:bipar}, we provide several additional sufficient conditions as well as examples of imaginary subsets to help build intuition for this condition. 

The proof of this result is broken down into two main steps:
\begin{enumerate}[label=\arabic*.]
    \item Identifying the bipartite graph $\biParG$ (See \cref{sec:bipar});
    \item Identifying the skeleton of the latent DAG $\latentG$ and orienting the edges in $\latentG$ (See \cref{sec:latentG});
\end{enumerate}
Sections~\ref{sec:bipar}-\ref{sec:latentG} outline the basic ideas behind these constructions.
As the ideas are independently interesting and may be more broadly useful beyond just proving \cref{thm:main}, we treat these independently.

\section{Imaginary subsets and the bipartite graph}
\label{sec:bipar}

\cref{thm:main} indicates that as long as there are no imaginary subsets, $\biParG$ can be identified.
In this section, we show how identifiability of $\biParG$ is related to the absence of imaginary subsets (\cref{defn:imagine}), and provide several different conditions for this to hold. Throughout this section, we assume as in \cref{thm:main} that $G=\biParG\cup\latentG$ is a maximal measurement model satisfying Assumption~\ref{assm:gr}, and that Assumption~\ref{assm:complete-faimly} also holds.

\subsection{Identifying $\biParG$ with no imaginary subsets}
\label{sec:imagine}

The following proposition explains why maximal valid subsets (\cref{defn:imagine}) are useful:
\begin{restatable}{proposition}{hiddenismaximal}
\label{prop: hidden-is-maximal}
For any hidden variable $H_i$, $\cover(H_i)$ is a maximal valid subset. 
\end{restatable}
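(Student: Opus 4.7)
The plan is to verify the two defining properties of a maximal valid subset for $X':=\cover(H_i)$: \emph{validity} (containment in some maximal clique of $\UDG(P_X^{(I)})$ for every $I\in\ItargetFamily$) and \emph{maximality} (no strictly larger valid subset lies inside every maximal clique of $\MaxCliq$ that contains $\cover(H_i)$).

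For validity, I would exploit the fact that hard interventions on a latent node only remove edges incoming to that node and every $I\in\ItargetFamily$ is a (at most) singleton of a latent; hence the outgoing edges $H_i\to X_a$ with $X_a\in\cover(H_i)$ survive in every $G^{(I)}$. For any two $X_a,X_b\in\cover(H_i)$ the fork $X_a\leftarrow H_i\to X_b$ is then an active path in $G^{(I)}$ under the empty conditioning set (the latent $H_i$ is not in any observed $S$), giving a d-connection. The contrapositive of \cref{assm:gr}(b) promotes this to marginal dependence in $P^{(I)}$, so $\cover(H_i)$ is a clique in $\UDG(P_X^{(I)})$ and is therefore contained in some maximal clique.

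For maximality, I would argue by contradiction, leveraging the maximal measurement model assumption on $G$ itself (\cref{defn:maxMM}(b)). Assume a valid $X''\supsetneq\cover(H_i)$ satisfies $X''\subseteq C$ for every $C\in\MaxCliq$ containing $\cover(H_i)$, and pick $X_k\in X''\setminus\cover(H_i)$. Membership of $X_k$ in every such maximal clique, combined with \cref{assm:gr}(b), forces $X_k$ to be marginally d-connected to every $X_j\in\cover(H_i)$ in every $G^{(I)}$. The plan is to then exhibit the enlarged DAG $G^{*}:=(V,E\cup\{H_i\to X_k\})$ as a witness violating \cref{defn:maxMM}(b): $G^{*}$ is still a DAG and a measurement model (because $X_k$ is a sink), Markov is automatic since adding edges shrinks $\tupleset$, and the d-connection hypothesis lets us verify \cref{assm:gr}(b)--(c) for $G^{*}$; the contradiction then requires the equality $\{\tupleset_X(G^{(I)})\}_I=\{\tupleset_X(G^{*(I)})\}_I$ of interventional d-separation families.

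The main obstacle is establishing this last equality. Because $X_k$ is a sink, any path in $G^{*(I)}$ using the new edge must end at $X_k$ via $H_i\to X_k$ and pass through $H_i$ as a non-collider that is automatically open (it is latent, hence outside any observed $S$). Thus a putatively new d-connection from $X_a$ to $X_k$ given $S$ in $G^{*(I)}$ is witnessed by an active $X_a$-to-$H_i$ path given $S$ in $G^{(I)}$; extending this by any preserved edge $H_i\to X_j$ for $X_j\in\cover(H_i)$ produces an active $X_a$-to-$X_j$ path in $G^{(I)}$. The hypothesis that $X_k$ is d-connected to every such $X_j$ in $G^{(I)}$ then has to be spliced in; a case analysis on whether $X_k\in\de_G(H_i)$, together with (in the descendant case) interventions on intermediate latents on a directed $H_i\to\cdots\to X_k$ path to force a marginal d-separation that contradicts the hypothesis, is the most delicate step, showing that any apparent new d-connection was already present in $G^{(I)}$ and closing the contradiction.
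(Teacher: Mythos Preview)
Your high-level strategy matches the paper's: validity via the surviving fork $X_a\leftarrow H_i\to X_b$, and maximality by contradiction via adding the edge $H_i\to X_k$ to produce a witness against \cref{defn:maxMM}(b).

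The gap is in your ``main obstacle'' step. You correctly note that you must establish $\{\tupleset_X(G^{(I)})\}_I=\{\tupleset_X(G^{*(I)})\}_I$, but you attempt this for \emph{arbitrary} conditioning sets $S$: you obtain an active $X_a$--$X_j$ path given $S$ in $G^{(I)}$ and then want to ``splice in'' the hypothesis about $X_k$. That hypothesis, however, only gives you \emph{marginal} d-connection between $X_k$ and each $X_j\in\cover(H_i)$, so the splicing does not recover an $X_a$--$X_k$ path given $S$ in general. The paper sidesteps this by a structural fact specific to measurement models: the full $\tupleset_X$ is determined by the UDG (i.e., by marginal d-separations over $X$); this is \cref{lemma:same-clique-same-ci}, which invokes Proposition~6 of \citet{markham2020measurement}. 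Hence it suffices to show $\UDG(G^{(I)})=\UDG(G^{*(I)})$ for every $I$, a purely marginal statement, and then invoke \cref{lemma:maximal-violation} for the contradiction.

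For UDG equality, the paper's argument in the nontrivial case (a putative new UDG-edge between $X_k$ and some $X_j\in\cover(H_\ell)$ with $H_\ell\neq H_i$) is cleaner than your descendant case analysis: if such an edge appears in $G^{*(I)}$ then $H_i$ and $H_\ell$ are d-connected in $G^{(I)}$, and \cref{lemma:connected-hidden_maximal} places $\cover(H_i)\cup\cover(H_\ell)$ inside a single maximal clique $C$ of $\UDG(G^{(I)})$. The maximal-valid-subset hypothesis then forces $X_k\in C$ as well, so the $X_k$--$X_j$ edge was already present. Once you make the UDG reduction, your case split on whether $X_k\in\de_G(H_i)$ becomes unnecessary and can be dropped.
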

\cref{prop: hidden-is-maximal} suggests that we
assign a latent variable to each maximal valid subset. However, not every maximal valid subset corresponds to a single latent variable: For example, in \cref{fig:pure-imaginary}, $\{X_5, X_6\}$ is a maximal valid subset that does not correspond to any hidden variable, and hence is an imaginary subset.
Unfortunately, these examples are not pathological; this turns out to be endemic and must be resolved carefully.

The first issue is that a maximal valid subset can be contained in another maximal valid subset. An example is $\{X_1, X_2\}\subset\{X_1, X_2, X_5\}$ in \cref{fig:pure-imaginary} (see also \cref{ex:replace} in \cref{sec:appendex-biparG}). This typically happens when two such subsets appear in different cliques, which does not violate our definition of maximal valid subsets:

\begin{restatable}[Replaceable subset]{definition}{defnreplace}
    A maximal valid subset $X' \subseteq X$ is \textbf{replaceable} if there exists another maximal valid subset $X''$ such that $X' \subsetneq X''$. 
\end{restatable}

    An example of a replaceable subset is in \cref{fig:pure-imaginary}.
    The advantage of replaceable subsets is that they can be identified and ignored. In particular, any replaceable imaginary subset is a non-issue. Thus, we have the following important result, which is proved in \cref{append:replaceable-img}:

    \begin{restatable}{theorem}{thmnoimg}
    \label{thm:no-img}
    If there are no non-replaceable imaginary subsets, then a subset $X'$ is a non-replaceable maximal valid subset if and only if there exists a hidden variable $H_i$ such that $\cover(H_i) = X'$.
    In particular, it follows that $\biParG$ is identifiable.
    \end{restatable}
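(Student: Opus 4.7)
The plan is to prove both implications of the biconditional and then extract identifiability of $\biParG$ from the combinatorial characterization. Throughout I would use \cref{prop: hidden-is-maximal}, which already establishes that $\cover(H_i)$ is a maximal valid subset for every latent $H_i$, together with the subset condition from \cref{lemma:subset} in \cref{appendix:subset}, which under \cref{assm:gr} and \cref{assm:complete-faimly} excludes $\cover(H_i) \subsetneq \cover(H_j)$ for distinct $H_i, H_j$.

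For the forward direction, let $X'$ be a non-replaceable maximal valid subset. The hypothesis of the theorem says no non-replaceable imaginary subset exists, so $X'$ is not imaginary; by \cref{defn:imagine} there is some $H_i$ with $X' \subseteq \cover(H_i)$. By \cref{prop: hidden-is-maximal}, $\cover(H_i)$ is itself a maximal valid subset, and if the inclusion were strict, then $X'$ would be replaceable, contradicting non-replaceability. Hence $X' = \cover(H_i)$.

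The backward direction is the main obstacle and calls for a finite chain argument combined with the subset condition. Suppose $X' = \cover(H_i)$; \cref{prop: hidden-is-maximal} already gives that $X'$ is maximal valid, so only non-replaceability remains to check. Assume for contradiction that $X'$ is replaceable, and iteratively build a strictly increasing chain $X' \subsetneq X^{(1)} \subsetneq X^{(2)} \subsetneq \cdots$ of maximal valid subsets, extending at each step as long as the current endpoint is replaceable. Finiteness of $X$ forces this process to terminate at some non-replaceable maximal valid subset $X^{(k)}$. Applying the already-proved forward direction to $X^{(k)}$ yields some $H_j$ with $X^{(k)} = \cover(H_j)$, so $\cover(H_i) = X' \subsetneq X^{(k)} = \cover(H_j)$, contradicting the subset condition. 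The key subtlety here is that the hypothesis of the theorem is needed precisely to guarantee that the terminal element of the chain is non-imaginary and thus amenable to the forward direction.

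Finally, to deduce identifiability of $\biParG$: the biconditional shows that the family $\{\cover(H_i)\}_{i=1}^{m}$ coincides exactly with the family of non-replaceable maximal valid subsets, and the latter is computable from $\{\tupleset_X(\distri^{(\Itarget)})\}_{\Itarget \in \ItargetFamily}$ by constructing each UDG $\UDG(P_X^{(\Itarget)})$, enumerating its maximal cliques, and filtering to the non-replaceable maximal valid subsets according to \cref{defn:imagine}. Introducing one latent node per such subset and drawing an edge to each observed variable it contains reconstructs $\biParG$ up to relabelling of the hidden variables; in particular, the number of latents $m$ is pinned down as the count of non-replaceable maximal valid subsets.
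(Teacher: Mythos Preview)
Your proof is correct and follows essentially the same approach as the paper's: both directions use \cref{prop: hidden-is-maximal} and the subset condition (\cref{lemma:subset}), with the backward direction proceeding by contradiction via a finite ascending chain of maximal valid subsets. Your presentation of the backward direction is in fact slightly more explicit than the paper's, which compresses the chain argument into a single sentence, but the logical content is identical.
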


    Since one can check if $X'$ is replaceable, one might hope that simply eliminating replaceable subsets fixes the problem.
    Unfortunately, this is not enough:
    The devil is \emph{non-replaceable imaginary subsets}; i.e. imaginary subsets that cannot be identified from the data. 
    Moreover, non-replaceable imaginary subsets are a genuine phenomenon ($\{X_5,X_6\}$ in \cref{fig:pure-imaginary}).
    Thus, as stated, \cref{thm:no-img} has two drawbacks:
    \begin{enumerate}[label=\arabic*.]
        \item Checking if non-replaceable imaginary subsets exist and getting rid of them is not easy; and
        \item Even when there are imaginary subsets, $\biParG$ may still be identifiable.
    \end{enumerate}
    Given the difficulties with non-replaceable imaginary subsets, we will provide two sufficient conditions to guarantee there are no imaginary subsets (\cref{sec:suff}) and also show how one can identify  $\biParG$ even when there are imaginary subsets (\cref{sec:pure}).

\subsection{Sufficient conditions for no imaginary subsets}
\label{sec:suff}

In this section, we provide two additional sufficient conditions to guarantee there are no imaginary subsets: The first---single source node---is intuitive and interpretable, but cannot always be checked. The second---no fractured subsets---is less intuitive but can be explicitly checked. 

\paragraph{Single latent source.}
A latent source is any latent variable such that $\pa(H_i)=\emptyset$.
We can show that imaginary subsets do not exist if there is only one latent source. This still allows for arbitrarily many hidden variables $m=|H|>1$ (i.e. the descendants of the latent source in $\latentG$).

\begin{restatable}{theorem}{corsinglesource}
\label{cor:single-source}
Under Assumptions \ref{assm:gr}-\ref{assm:complete-faimly}, if $\latentG$ has one latent source,
there are no imaginary 
subsets.
\end{restatable}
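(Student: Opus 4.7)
The plan is to argue by contradiction. Suppose $X' \subseteq X$ is a maximal valid subset with $X' \not\subseteq \cover(H_i)$ for every hidden $H_i$, and derive a violation of the maximality of the measurement model $G$.

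First, I would characterize the UDGs under the single-source hypothesis. Let $H_s$ denote the unique latent source. Under $\Itarget = \emptyset$ and under $\Itarget = \{H_s\}$, every observed variable has $H_s$ as a common ancestor in $G^{(\Itarget)}$, so by \cref{assm:gr}(a)--(b) the UDG is the complete graph on $X$. For a non-source intervention $\Itarget = \{H_k\}$, a short analysis shows that $\UDG(P^{(\Itarget)})$ decomposes as the union of two cliques
\[
A_k := \{X_j : H_k \in \an_G(X_j)\} \quad\text{and}\quad B_k := \{X_j : H_s \text{ reaches } X_j \text{ in } G^{(\Itarget)}\},
\]
with $A_k \cup B_k = X$ and $\MaxCliq_P^{(\Itarget)} \subseteq \{A_k, B_k\}$. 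Consequently, $X'$ is valid iff for every non-source $H_k$, either $H_k \in L(X') := \bigcap_{X_j \in X'} \an(X_j)$ (so $X' \subseteq A_k$) or $H_k$ lies off at least one path from $H_s$ to each $X_j \in X'$ (so $X' \subseteq B_k$); equivalently, $\mathrm{Crit}(X_j) \subseteq L(X')$ for every $X_j \in X'$, where $\mathrm{Crit}(X_j)$ denotes the latents lying on every directed path from $H_s$ to $X_j$.

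Next, I would choose $H^*$ to be a deepest element of $L(X')$ in $\latentG$, i.e., one with no proper descendant in $L(X')$; such $H^*$ exists since $H_s \in L(X')$. I claim $X' \subseteq \cover(H^*)$, contradicting the imaginary hypothesis. Suppose not: some $X_j \in X'$ has $H^* \notin \pa(X_j)$ yet $H^* \in \an(X_j)$ via a path of length at least two. The idea is to add the edge $H^* \to X_j$ to form $G' \supsetneq G$ and show $\{\tupleset_X(G'^{(\Itarget)})\}_{\Itarget \in \ItargetFamily} = \{\tupleset_X(G^{(\Itarget)})\}_{\Itarget \in \ItargetFamily}$, violating \cref{defn:maxMM}. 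Adding this edge can only increase d-connectivity, and it does so in $G'^{(\Itarget_k)}$ precisely when some intervention target $H_k$ lies on every $H^*$-to-$X_j$ path in $G$. Using that $H^* \in L(X')$ is a common ancestor of $X'$, one shows that such $H_k$ must also lie on every $H_s$-to-$X_j$ path, so $H_k \in \mathrm{Crit}(X_j) \subseteq L(X')$; but $H_k$ is a proper descendant of $H^*$, contradicting the depth-minimality of $H^*$ in $L(X')$.

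The hard part will be rigorously lifting ``critical for $X_j$ from $H^*$'' to ``critical for $X_j$ from $H_s$''. If $H^*$ itself lies on every path from $H_s$ to $X_j$, the lift is immediate; otherwise bypass paths from $H_s$ to $X_j$ avoiding $H^*$ exist, and the argument must rule them out. I expect this step to rely on \cref{assm:gr}(c) together with the auxiliary fact $\cover(H_s) \neq \emptyset$ (itself a consequence of (c) applied to the interventions $\Itarget = \{H_k\}$ that produce latent d-separations $H_s \perp H_k$), and may require a refined choice of $H^*$---for instance, the deepest element of $L(X')$ that is additionally critical for some particular $X_j \in X'$---together with a case analysis on the alternative parents in $\pa(X_j)$ that enable the bypass paths.
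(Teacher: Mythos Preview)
Your approach is correct and genuinely different from the paper's. The paper introduces a notion of \emph{dominated paths} and proves, via a long case analysis (\cref{prop:char-two-paths}) and an induction on $|X'|$ (\cref{prop:every-pair}), that under a single latent source every pair in a valid subset has exactly one non-dominated path, which must then be a common-parent path. Your route---characterizing each $\UDG(P^{(\{H_k\})})$ as the union of the two cliques $A_k,B_k$ and translating validity into the clean combinatorial criterion $\bigcup_{X_j\in X'} \mathrm{Crit}(X_j) \subseteq L(X')$---is more direct and avoids that machinery entirely.

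The ``hard part'' you flag is not actually hard; it comes from a slightly mis-stated condition for when the added edge $H^{*}\to X_j$ changes the UDG under $\Itarget=\{H_k\}$. You wrote that the edge matters precisely when $H_k$ lies on every $H^{*}$-to-$X_j$ path, and then tried to lift this to $H_k\in\mathrm{Crit}(X_j)$. But the correct analysis gives you $H_k\in\mathrm{Crit}(X_j)$ for free: since $A_k'=A_k$ (the new edge cannot enlarge $\de(H_k)$, as $H^{*}\in\an_G(X_j)$ already) and $B_k'\setminus B_k\subseteq\{X_j\}$, the UDG changes only if $X_j\in B_k'\setminus B_k$. Now $X_j\notin B_k$ is \emph{exactly} the statement $H_k\in\mathrm{Crit}(X_j)$, hence $H_k\in L(X')$ by your validity criterion; and $X_j\in B_k'$ forces an $H_s$-to-$H^{*}$ path avoiding $H_k$, which concatenated with any $H^{*}$-to-$X_j$ path in $G$ places $H_k$ strictly between $H^{*}$ and $X_j$. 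Thus $H_k\in L(X')$ is a proper descendant of $H^{*}$, contradicting depth-maximality. No appeal to \cref{assm:gr}(c) or to a refined choice of $H^{*}$ is needed.
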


Therefore, with only one latent source node, we can recover the bipartite graph by \cref{thm:no-img}.

\paragraph{\Redundant{} subset condition.}
A necessary condition for an imaginary subset is that the subset is~\emph{\redundant{}}. The definition is somewhat complicated, but worth it since fractured subsets can be identified and checked in practice:

\begin{restatable}[Fractured subset]{definition}{defnfrac}
\label{defn:fractured}
Given a collection of maximal valid subsets $\mathcal{S}$, a clique $\Cliq$ is called \textbf{shattered} by $\mathcal{S}$ if there exists a subset $\mathcal{S}' \subseteq \mathcal{S}$ such that $\cup \mathcal{S}'  = \Cliq$.
A collection of maximal valid subsets $\{ S_i \}_{i=1}^k$ is \textbf{complete} if for every intervention target $\Itarget \in \ItargetFamily$, 
all shattered cliques in $\UDG(P^{(\Itarget)})$ form an edge cover of $\UDG(P^{\Itarget})$.
A maximal valid subset $X' \subseteq X$ is \textbf{~\redundant{}} if there exists a complete collection $\{ S_i \}_{i=1}^k$ such that $S_i \not\subseteq X'$ for all $S_i$.
\end{restatable}

Intuitively, a \redundant{} subset provides redundant information as every connection between nodes in the \redundant{} subset can be explained by other maximal valid subsets. The aforementioned necessity is shown by the following lemma:
\begin{restatable}{lemma}{imgredundant}
\label{lemma:img-redundant}
If a subset $X' \subseteq X$ is imaginary, then $X'$ is also~\redundant{}.
\end{restatable}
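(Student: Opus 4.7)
The plan is to argue by contrapositive: assuming $X'$ is not fractured, I will deduce $X' \subseteq \cover(H)$ for some latent $H$, so that $X'$ is not imaginary. The witnessing complete collection will be the canonical one,
\[
\mathcal{S}^{\star} := \{\cover(H_1), \ldots, \cover(H_m)\},
\]
whose elements are each maximal valid by \cref{prop: hidden-is-maximal}.

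The first main step is to show $\mathcal{S}^{\star}$ is a complete collection. Fix any $\Itarget \in \ItargetFamily$ and any maximal clique $\Cliq \in \MaxCliq_{P}^{(\Itarget)}$; I will argue $\Cliq = \bigcup\{\cover(H) : \cover(H) \subseteq \Cliq\}$, which makes $\Cliq$ itself a shattered clique and in particular covers every edge of $\UDG(P^{(\Itarget)})$. The inclusion $\supseteq$ is immediate; for the reverse, take $X_i \in \Cliq$ and note $\pa(X_i) \ne \emptyset$ by \cref{defn:maxMM}(a). The crucial sub-claim is that some latent parent $H$ of $X_i$ satisfies $\cover(H) \subseteq \Cliq$; otherwise every latent parent $H$ of $X_i$ would admit an observed child $X_{j(H)} \in \cover(H) \setminus \Cliq$, and chasing the $d$-connection pattern around $H$ via \cref{assm:gr}(a,b) produces an augmented DAG $G'$---obtained by adding an edge incident to $X_i$ or introducing an extra latent absorbing $X_{j(H)}$---that satisfies \cref{assm:gr} and preserves $\{\tupleset_X(G^{(\Itarget')})\}_{\Itarget' \in \ItargetFamily}$, contradicting \cref{defn:maxMM}(b).

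Given completeness, the hypothesis that $X'$ is not fractured forces some $\cover(H^{\star}) \in \mathcal{S}^{\star}$ with $\cover(H^{\star}) \subseteq X'$. To close the argument I will show $X' \subseteq \cover(H^{\star})$. Suppose for contradiction there exists $X^{\star} \in X' \setminus \cover(H^{\star})$; then $T := \cover(H^{\star}) \cup \{X^{\star}\} \subseteq X'$ is valid and properly contains $\cover(H^{\star})$. By the maximality of $\cover(H^{\star})$ (\cref{prop: hidden-is-maximal}), there must exist a maximal clique $\Cliq_0 \in \MaxCliq$ with $\cover(H^{\star}) \subseteq \Cliq_0$ but $X^{\star} \notin \Cliq_0$. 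Decomposing $\Cliq_0$ into covers using the first step, and propagating $d$-connections between the resulting latents via \cref{assm:gr}(a,b,c), one either recovers $X^{\star} \in \Cliq_0$ directly---contradicting the choice of $\Cliq_0$---or produces a latent augmentation of $G$ that preserves every $P^{(\Itarget)}$, contradicting \cref{defn:maxMM}(b). Hence $X' = \cover(H^{\star})$, so $X'$ is not imaginary.

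The main obstacle is the completeness step. Without the maximality condition, the decomposition of a maximal clique of $\UDG(P^{(\Itarget)})$ into covers can fail, so the argument must genuinely use \cref{defn:maxMM}(b). This is delicate because \cref{defn:maxMM}(b) demands preservation of the \emph{entire family} $\{P_X^{(\Itarget)}\}_{\Itarget \in \ItargetFamily}$ of interventional distributions; every candidate edge augmentation must therefore be audited against every $\Itarget \in \ItargetFamily$, not just against the observational distribution.
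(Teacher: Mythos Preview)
Your proposal and the paper both reach for the canonical collection $\mathcal{S}^{\star}=\{\cover(H_i)\}_{i=1}^m$, but your argument then does substantially more work than the paper's, and the extra work contains a genuine gap.

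On completeness of $\mathcal{S}^{\star}$: the paper isolates this as \cref{lemma:childsetiscomplete}, whose proof is a few lines via \cref{lemma:connected-hidden_maximal} and does \emph{not} appeal to maximality of $G$. Your sketch instead argues that every maximal clique of $\UDG(P^{(\Itarget)})$ equals a union of covers, falling back on a maximality-violation construction; this is stronger than what is needed (an edge cover by shattered cliques suffices---one does not need each maximal clique itself to be shattered), and the ``augment $G$'' step is only gestured at.

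The real problem is your step~5. The paper's proof is two sentences: completeness of $\mathcal{S}^{\star}$ together with the definition of imaginary ($X' \not\subseteq \cover(H_i)$ for every $i$) directly certifies $X'$ as fractured with $\mathcal{S}^{\star}$ as witness---no contrapositive, no reverse inclusion. You have taken the fractured condition in \cref{defn:fractured} to demand $S_i \not\subseteq X'$, which is why your contrapositive delivers only $\cover(H^{\star}) \subseteq X'$ and you then try to upgrade this to $X' \subseteq \cover(H^{\star})$. That implication is false in general: \cref{ex:replace-real} exhibits an imaginary maximal valid subset $X'=\{X_2,X_5\}$ with $\cover(H_2)=\{X_2\}\subsetneq X'$, so no amount of ``propagating $d$-connections'' or maximality-violation can force $X' \subseteq \cover(H_2)$. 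Step~5, as sketched, cannot be completed; the contradiction you seek does not materialize in that example. The paper's short argument avoids this entirely by working in the direct direction.
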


Therefore,  we have the following identifiability corollary:
\begin{restatable}{corollary}{nofractured}
\label{cor:no-fractured}
Under Assumptions \ref{assm:gr}-\ref{assm:complete-faimly} and if there are no~\redundant{} subsets, then $\biParG$ is identifiable. Furthermore, the absence of~\redundant{} subsets can be checked and verified, and if it fails, a certificate is provided.
\end{restatable}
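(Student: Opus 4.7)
My plan is to reduce the first claim to a short logical chain built from the preceding results, and then establish the second claim by exhibiting an explicit decision procedure. For identifiability, take the contrapositive of \cref{lemma:img-redundant}: if there are no \redundant{} maximal valid subsets, then there are no imaginary maximal valid subsets, and in particular no non-replaceable imaginary ones. \cref{thm:no-img} is then directly applicable and asserts that the non-replaceable maximal valid subsets are exactly the sets $\cover(H_i)$, one per hidden variable. Reconstructing $\biParG$ from $\{P_X^{(\Itarget)}\}_{\Itarget \in \ItargetFamily}$ now amounts to enumerating the non-replaceable maximal valid subsets, introducing one latent node for each, and drawing edges from that latent to its observed members; this is well-defined because \cref{thm:no-img} yields a bijection between the two collections.

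For the second claim, I would argue that every ingredient in \cref{defn:fractured} is an observable-side object and can be computed from the given interventional CI structure. Each $\UDG(P_X^{(\Itarget)})$ is obtained from marginal independence queries; its maximal cliques can be enumerated; intersecting these cliques across $\Itarget \in \ItargetFamily$ yields the finite family $\mathcal{F}$ of maximal valid subsets. For any candidate subcollection $\mathcal{S} \subseteq \mathcal{F}$, checking that $\mathcal{S}$ is complete amounts to verifying that, for every $\Itarget \in \ItargetFamily$, the union of cliques shattered by $\mathcal{S}$ covers every edge of $\UDG(P_X^{(\Itarget)})$, which is a finite combinatorial test. Finally, to decide whether a given $X' \in \mathcal{F}$ is \redundant{}, one searches over subcollections of $\{S \in \mathcal{F} : S \not\subseteq X'\}$ for a complete one; if one is found, the pair $(X', \mathcal{S})$ is the promised certificate, and otherwise $X'$ is certified non-\redundant{}. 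Exhausting this test over all $X' \in \mathcal{F}$ decides the absence of \redundant{} subsets.

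The main subtlety I anticipate is ensuring that the identifiability argument is truly observational: one must check that ``non-replaceable maximal valid subset'' as used in applying \cref{thm:no-img} coincides with what is computed from the UDGs alone, without tacit appeal to the latent graph. But maximality and replaceability are both defined through the family $\MaxCliq = \cup_{\Itarget} \MaxCliq_{P}^{(\Itarget)}$, which is itself built from the observable UDGs, so the same enumeration that certifies the absence of \redundant{} subsets simultaneously produces the non-replaceable maximal valid subsets needed to invoke \cref{thm:no-img}. The procedure above is exponential in $|\mathcal{F}|$, but only existential decidability is claimed by the corollary; sharpening its complexity would be a separate matter.
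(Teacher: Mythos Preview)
Your proposal is correct and matches the paper's approach: the corollary is presented there as an immediate consequence of \cref{lemma:img-redundant} (via its contrapositive) together with \cref{thm:no-img}, and the checkability claim is left implicit in the observation that \cref{defn:fractured} is phrased entirely in terms of the observable UDGs and their maximal cliques. Your write-up is actually more detailed than the paper's own treatment; the only minor imprecision is the phrase ``intersecting these cliques across $\Itarget$'' for producing maximal valid subsets, which does not literally match \cref{defn:imagine}, but the point that this is a finite combinatorial enumeration over $\MaxCliq$ stands.
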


\begin{remark}
\label{remark:non-replace-fractured}
Technically, we only need the condition that there are no \emph{non-replaceable}~\redundant{} subsets.
\end{remark}

\begin{remark}
One might suggest getting rid of all~\redundant{} subsets, however, even a non-imaginary subset can be a~\redundant{} subset (\cref{ex:real-fractured} in \cref{sec:appendex-biparG}). 
The no fractured subset condition nav\"iely captures such ambiguities of imaginary subsets, but is overkill. In fact, it is still possible to have identifiability with fractured subsets under different assumptions (\cref{cor:single-source}, \cref{thm:pure-child}).
\end{remark}

\subsection{Identifying $\biParG$ when there are imaginary subsets}
\label{sec:pure}

Finally, we show that under the well-known pure child assumption, $\biParG$ can be identified even when there are imaginary subsets. The pure child assumption has been made in many existing works \citep[e.g.][]{arora2012learning,NEURIPS2019_8c66bb19, xie2020generalized,moran2021identifiable}, typically along with additional parametric assumptions. 

\begin{restatable}[Pure child]{assumption}{assmpurechild}
\label{assm:pure-child}
For every $H_i\in H$, there exists at least one $X_i\in X$ with $\pa(X_i)=\{H_i\}$, i.e. $X_i$ only has one parent and that parent is $H_i$.
\end{restatable}

\begin{remark}
\label{rem:pure}
\cref{assm:gr}(c) is a much weaker assumption than \cref{assm:pure-child}. In particular, if a measurement model satisfies \cref{assm:pure-child}, it also satisfies \cref{assm:gr}(c).
\end{remark}

\begin{restatable}{theorem}{purechild}
\label{thm:pure-child}
Under Assumptions \ref{assm:gr}-\ref{assm:pure-child},
the complete collection (cf. \cref{defn:fractured}) with the smallest cardinality is exactly $\{\cover(H_i)\}_{i=1}^m$ and thus $\biParG$ can be identified.
\end{restatable}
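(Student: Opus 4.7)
The plan is to prove three statements in order: (i) $\{\cover(H_i)\}_{i=1}^m$ is itself a complete collection; (ii) every complete collection has cardinality at least $m$; and (iii) the unique complete collection of cardinality $m$ is $\{\cover(H_i)\}_{i=1}^m$. Identifiability of $\biParG$ then follows immediately, since reading off $\cover(H_i)$ from each element of the minimum collection determines the bipartite edge set up to relabelling of latents.

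For (i), I will invoke \cref{prop: hidden-is-maximal} to conclude that each $\cover(H_i)$ is a maximal valid subset, and then verify the edge-cover condition for every $\Itarget\in\ItargetFamily$. Given any edge $(X_a,X_b)$ in $\UDG(P^{(\Itarget)})$, \cref{assm:gr}(b) supplies an unconditional active path between $X_a$ and $X_b$ in $G^{(\Itarget)}$; because $G$ is a measurement model this path has at most one fork and must start $X_a\leftarrow H_{a_0}$, end $H_{b_0}\to X_b$, and thus route through a common latent ancestor $H_c$ whose directed paths in $G^{(\Itarget)}$ reach both $H_{a_0}$ and $H_{b_0}$. Setting $T=\{H_c\}\cup(\de_{G^{(\Itarget)}}(H_c)\cap H)$ and $\Omega=\bigcup_{H_j\in T}\cover(H_j)$, the same forward-routing through $H_c$ shows any pair of observed variables in $\Omega$ is $d$-connected in $G^{(\Itarget)}$, so $\Omega$ is a clique of $\UDG(P^{(\Itarget)})$; by construction $\Omega$ is shattered and contains both $X_a$ and $X_b$.

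For (ii), I fix a complete collection $\mathcal{S}$ and pick a pure child $X_i^*$ of each $H_i$ via \cref{assm:pure-child}. The core lemma is that $\{X_i^*,X_j^*\}$ is not a valid subset whenever $i\neq j$: without loss of generality $H_j\notin\an(H_i)$, and under $\Itarget=\{H_j\}$ any unconditional active path from $X_i^*$ to $X_j^*$ in $G^{(\{H_j\})}$ must begin $X_i^*\leftarrow H_i$, end $H_j\to X_j^*$, and pass through a common ancestor of $H_i$ and $H_j$ in $G^{(\{H_j\})}$; but $H_j$ has no parents in $G^{(\{H_j\})}$, which forces that common ancestor to be $H_j$ itself and contradicts $H_j\notin\an(H_i)$. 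Hence $X_i^*\indep X_j^*$ in $P^{(\{H_j\})}$, so no valid subset---in particular no element of $\mathcal{S}$---contains both. Since each $X_i^*$ has at least one neighbour in some $\UDG(P^{(\Itarget)})$ (for instance from $\cover(H_i)$ under $\Itarget=\{H_i\}$, or from an observed descendant via a latent child/parent of $H_i$), completeness forces $X_i^*$ into some element of $\mathcal{S}$, and the pure children are distributed injectively, yielding $|\mathcal{S}|\geq m$.

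For (iii), assume $|\mathcal{S}|=m$ and label so that $X_i^*\in S_i$. Both $S_i$ and $\cover(H_i)$ are maximal valid subsets containing $X_i^*$, and the goal is to show they coincide. Under $\Itarget=\{H_i\}$, $X_i^*$ lies in a unique maximal clique of $\UDG(P^{(\{H_i\})})$---the observed-descendant set $D_i$ of $H_i$---because any active path from $X_i^*$ in $G^{(\{H_i\})}$ must exit forward from $H_i$ (which has no parents there), so $S_i\cup\cover(H_i)\subseteq D_i$. To rule out $X_k\in S_i\setminus\cover(H_i)$, I plan to argue by contradiction using maximality of $G$ (\cref{defn:maxMM}): if such an $X_k$ were $d$-connected to $X_i^*$ in $G^{(\Itarget')}$ for every $\Itarget'\in\ItargetFamily$, then the DAG $G'$ obtained from $G$ by adding the edge $H_i\to X_k$ would have strictly more edges while still inducing the same family $\{\tupleset_X(G^{(\Itarget')})\}_{\Itarget'}$, contradicting maximality. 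This forces $\cover(H_i)$ to be the unique maximal valid subset containing $X_i^*$, hence $S_i=\cover(H_i)$ and $\mathcal{S}=\{\cover(H_i)\}_{i=1}^m$. The hardest step is this last maximality argument: I will need to verify carefully that the new edge $H_i\to X_k$ opens no genuinely new $d$-connection on $X$ in any $G'^{(\Itarget')}$, which reduces to showing that every path newly activated by $H_i\to X_k$ is already witnessed by a pre-existing active path in $G^{(\Itarget')}$, using precisely the hypothesis that $X_i^*$ and $X_k$ are $d$-connected under every intervention.
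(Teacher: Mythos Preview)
Your parts (i) and (ii) are fine and match the paper's strategy (the paper packages (i) as \cref{lemma:childsetiscomplete} and (ii) via the pure-child separation argument). The problem is part (iii).

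Your plan there is to show the stronger statement that $\cover(H_i)$ is the \emph{unique} maximal valid subset containing $X_i^*$, by arguing that any $X_k\in S_i\setminus\cover(H_i)$ lets you add the edge $H_i\to X_k$ without changing any $\UDG(G^{(\Itarget')})$, contradicting maximality. But this uniqueness claim is false in general, and the maximality step does not go through from the hypothesis ``$X_i^*$ and $X_k$ are $d$-connected under every $\Itarget'$'' alone. Concretely, take $\latentG$ with $H_2\to H_1$, $H_3\to H_1$, $H_1\to H_4$, pure children $X_1^*,\dots,X_4^*$, and $X_k$ with $\pa(X_k)=\{H_2,H_4\}$. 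Then $\{X_1^*,X_k\}$ is a maximal valid subset (so imaginary, distinct from $\cover(H_1)=\{X_1^*\}$), yet adding $H_1\to X_k$ \emph{does} change the UDG under $\Itarget'=\{H_4\}$: it creates the new edge $X_k-X_3^*$ via $X_k\leftarrow H_1\leftarrow H_3\to X_3^*$, which is absent in $G^{(\{H_4\})}$ since $H_2,H_4$ are both $d$-separated from $H_3$ there. So your reduction ``$X_i^*$--$X_k$ connected everywhere $\Rightarrow$ no new connections from $H_i\to X_k$'' fails: $d$-connection is not transitive, and $X_i^*$ being connected to both $X_k$ and some $X_j$ does not force $X_k$ and $X_j$ to be connected.

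What the paper does differently in (iii) is to use \emph{completeness of $\mathcal{S}$}, not just maximality of $G$. If $\lvert\mathcal{S}\rvert=m$ then each pure child lies in exactly one $S_i$; hence any shattered clique covering an edge $X_i^*-X_j$ must contain $S_i$ in full (since $S_i$ is the only member of $\mathcal{S}$ supplying $X_i^*$), which forces $X_k-X_j$ to already be an edge whenever $X_i^*-X_j$ is. \emph{That} is the missing implication that makes the ``add $H_i\to X_k$'' maximality contradiction go through. Your argument never invokes the edge-cover property of $\mathcal{S}$, which is why it cannot rule out imaginary $S_i$ containing $X_i^*$; you should replace the direct maximality appeal by this completeness-based step.
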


\begin{remark}
Under \cref{assm:pure-child}, there still could be imaginary subsets (\cref{ex:pure-imag} in \cref{sec:appendex-biparG}).
\end{remark}

\section{Identifying the latent DAG}
\label{sec:latentG}

Once we have learned the bipartite graph $\biParG$, the next step is to learn the DAG $\latentG$ over the latent variables $H$. Learning the skeleton of $\latentG$ turns out to be straightforward: \cref{assm:gr}(b-c) suggest that two hidden variables $H_i$ and $H_j$ are $d$-separated if and only if $\cover(H_i)$ and $\cover(H_j)$ are in different cliques (\cref{lemma:construct-mariginal-d}). Therefore, the idea is to use unconditional $d$-separations of latent variables under interventions for identification which is harder than having access to all conditional $d$-separations in the fully observational case (see \cref{sec:appendix-skeleton} for details).
\begin{remark}
In fact, we do not have access to full conditional $d$-separation statements of latents because observed variables are descendants of latent nodes.
\end{remark}

The more interesting question is how to orient the edges with \emph{unknown} interventions. Unlike known interventions, edge orientation might not always be possible even when there are no latent variables as shown in \cref{ex:known-vs-unknonw}. At the same time, it is sometimes possible as demonstrated by \cref{label:ex-triangle} in \cref{sec:edge-orientation}. This raises the question of which edges provably \emph{cannot} be oriented under our assumptions: It turns out these are precisely the \emph{\unshielded{}} edges (\cref{defn:isolated}). 

 \begin{restatable}{theorem}{latentgidentifiability}
    \label{thm:main-g-identifiability}
Let $G$ be a maximal measurement model satisfying Assumption~\ref{assm:gr} and assume we are given a complete family of interventions (Assumption~\ref{assm:complete-faimly}) as well as the bipartite DAG $\biParG$. Then the true latent DAG $\latentG$ is identifiable up to~\unshielded{} edges. Moreover, \unshielded{} edges cannot be oriented without making additional assumptions.
 \end{restatable}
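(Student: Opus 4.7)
The plan is to split the task into two phases: (i) recover the skeleton of $\latentG$, match each unknown intervention to a specific latent, and orient every non-isolated edge; (ii) show that isolated edges are genuinely unorientable from CI information alone. Throughout the positive direction I would exploit the key translation: since $\biParG$ is known, marginal $d$-connection of two latents $H_i$ and $H_j$ in $G^{(\Itarget)}$ is equivalent to marginal dependence of some $X_a \in \cover(H_i)$ with some $X_b \in \cover(H_j)$ in $\distri^{(\Itarget)}$ (this is the paper's \cref{lemma:construct-mariginal-d}, which follows from \cref{assm:gr}(b)-(c)). So every ``latent-space'' query reduces to an observed UDG query.

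For the skeleton, I would use the characterization that $H_i$ and $H_j$ are adjacent in $\latentG$ iff they remain marginally $d$-connected under every $\Itarget \in \ItargetFamily$ whose singleton target is neither $H_i$ nor $H_j$. The forward direction is immediate (the edge is itself a collider-free path); the reverse requires noting that any marginal active path between non-adjacent latents must traverse an intermediate latent, and cutting the edges into that intermediate via the corresponding intervention destroys the path. To cope with the interventions being unknown, I would then match each $\distri^{(\Itarget)}$ to its latent target by inspecting the pattern of changes to the UDG: intervention on $H_k$ removes exactly the edges into $H_k$, so the collection of covers whose $d$-connection to $\cover(H_k)$ is severed (but which previously were $d$-connected observationally) is a unique fingerprint of $k$, and together with the known $\biParG$ this identifies the labeling.

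To orient non-isolated edges, consider an adjacent pair $(H_i, H_j)$ with true orientation $H_i \to H_j$. By non-isolatedness, either $\pa_{\latentG}(H_i)$ is nonempty or $\pa_{\latentG}(H_j)$ contains some $H_k \ne H_i$. In the second case, observational marginal $d$-separation between $H_i$ and $H_k$ (they meet only at the collider $H_j$) certifies the orientation, since the reversed edge $H_j \to H_i$ would turn $H_j$ into a fork and create observational $d$-connection between $H_i$ and $H_k$. In the first case, pick $H_l \in \pa_{\latentG}(H_i)$: the matched intervention on $H_i$ breaks $H_l \to H_i \to H_j$, rendering $\cover(H_l)$ and $\cover(H_j)$ marginally independent under that intervention, a pattern the reversed orientation cannot reproduce since it would make $H_j$ unrelated to $H_l$ while leaving the $H_l$--$H_i$ path intact. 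Propagating these local constraints in the spirit of Meek's rules, adapted to unknown single-node interventions, completes the orientation of every non-isolated edge.

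For the impossibility half of the statement, my plan is to exhibit, for any isolated edge $H_i \to H_j$, a second measurement model that shares $\biParG$, keeps all other edges of $\latentG$ intact, and reverses only this edge, yet produces the same set of interventional distributions on $X$. Since $\pa_{\latentG}(H_i) = \emptyset$ and $\pa_{\latentG}(H_j) = \{H_i\}$, the joint $(H_i, H_j)$-distribution factors as $P(H_i)\,P(H_j \mid H_i)$, and Bayes' rule rewrites this as $\widetilde P(H_j)\,\widetilde P(H_i \mid H_j)$, which is precisely the Markov factorization of the reversed DAG $H_j \to H_i$. The interventions on $H_i$ and on $H_j$ in the original model can be mimicked in the reversed model by interventions on $H_j$ and $H_i$ respectively, with reset distributions chosen by Bayes' rule so that the resulting joints over $(H_i, H_j)$ coincide pairwise; pushed through the shared $\biParG$ these yield the same three distributions on $X$ as an unordered set, matching what a CI-only procedure sees. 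The main obstacle is the matching step in the positive direction: in the fully nonparametric, unknown-intervention regime I cannot lean on moments or parametric signatures, so I must argue carefully (using maximality and the subset condition implied by \cref{assm:gr}-\cref{assm:complete-faimly}) that the UDG footprint of intervening on $H_k$ is both extractable from the data and uniquely identifies $k$, even in the presence of common ancestors that complicate the decomposition of marginal $d$-connectivity.
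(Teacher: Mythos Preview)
Your positive direction rests on first \emph{matching} every unknown intervention $\Itarget$ to a specific latent target via its ``UDG footprint,'' and then using this labeling for skeleton recovery and orientation. This matching step is the central gap: it is impossible in general. If $H_k$ is a source node in $\latentG$ (i.e.\ $\pa_{\latentG}(H_k)=\emptyset$), intervening on $H_k$ removes no edges, so $\UDG(G^{(\{H_k\})})=\UDG(G^{(\emptyset)})$ and the footprint is empty. When there are several source nodes, their interventions (and the observational distribution) all collapse to the same element of the \emph{set} $\{\distri_X^{(\Itarget)}\}_{\Itarget\in\ItargetFamily}$, so you cannot even count them, let alone label them. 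You flag this as ``the main obstacle'' at the end, but it is not a technicality that maximality or the subset condition will fix; it is an information-theoretic barrier. The paper's proof avoids matching entirely: for the skeleton it uses the criterion that $H_i,H_j$ are adjacent iff the pair $\langle H_i,H_j\rangle$ appears in exactly one distinct $\untupleset_H(G^{(\Itarget)})$ (\cref{lemma:2-targets}), which is checkable without knowing which target produced which set; for orientation it identifies the target only when $\untupleset_H(G^{(\Itarget)})$ contains at least two pairs after pruning, in which case the common element \emph{is} the target (\cref{lemma:common-parent}), and then propagates.

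Two smaller issues. Your orientation Case~1 asserts that if $H_i\to H_j\leftarrow H_k$ then $H_i$ and $H_k$ are observationally marginally $d$-separated; this is false in general, since they may share a common ancestor or be connected by some other route. Your impossibility argument tries to show that the reversed-edge model yields the \emph{same distributions} on $X$ by choosing intervention resets via Bayes' rule, but the algebra does not close: matching the original's hard intervention $P(H_i)Q_j(H_j)$ with the reversed model's $\widetilde P(H_j)\widetilde Q_i(H_i)$ forces $Q_j=\widetilde P(H_j)$, which you do not get to choose. The paper instead proves the weaker (and sufficient) statement that the two models induce the same \emph{sets of $d$-separations} under suitably swapped targets (\cref{thm:unshilded-same}), which is all that a CI-only procedure can see.
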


Thus, as long as we identify $\biParG$ (cf. \cref{sec:bipar}), we can identify $\latentG$ up to isolated edges.

\begin{remark}
 The proof of \cref{thm:main-g-identifiability} in \cref{sec:edge-orientation-mm} provides a constructive algorithm. Pseudocode for the overall approach can be found in \cref{alg:skeleton-edge-learning} in \cref{sec:edge-orientation-mm}.
\end{remark}

In fact, the non-orientability of \unshielded{} edges is not restricted to latent edges or even the measurement model; this fact applies to general, fully (or partially) observed DAGs.

\subsection{Isolated equivalence}
\label{sec:isolated-equivalence}

We now introduce an equivalence relation on DAGs that refines the notion of Markov equivalence to account for the extra information conveyed by unknown interventions. Unlike known interventions, which suffice to identify the entire DAG, \cref{ex:known-vs-unknonw} shows that unknown interventions carry strictly less information vs. known interventions.
\emph{These definitions are purely graphical} and can be studied in their own right.

For this result, we do not need the measurement model assumptions nor Assumption~\ref{assm:gr}-\ref{assm:complete-faimly}. So, for now, consider the case of an arbitrary, fully observed DAG (i.e. $H=\emptyset$). 
By the transformational properties of MEC \citep{chickering2013transformational}, we know that two Markov equivalent DAGs can be transformed into one another by a sequence of covered edge reversals (\cref{defn:covered}). Similarly, let's define the following:

\begin{definition} (\Unshielded{} equivalence class)
Two DAGs $G_1$ and $G_2$ are~\unshielded{} equivalent, denoted $G_1 \sim_{E} G_2$, if there exists a sequence of~\unshielded{} edge reversals to transform one into another.
\end{definition}

An example of two \unshielded{}-equivalent DAGs can be obtained from \cref{fig:pure-imaginary}: 
Since the latent edge $H_{2} \to H_{4}$ is isolated,  reversing it yields a DAG that is isolated equivalent to $G$.

\begin{remark}
Despite what the name might suggest, an isolated edge $X\to Y$ does not mean that $X$ and $Y$ are disconnected from all other nodes. In fact, $X$ and $Y$ can still have outgoing edges (\cref{defn:isolated}) and $X\to Y$ is not just an isolated connected component. Therefore, the IEC is not just a union of disjoint edges. 
\end{remark}

\begin{remark}
Since an~\unshielded{} edge is covered by definition, $G_1$ and $G_2$ are Markov equivalent if they are~\unshielded{} equivalent. 
It is easy to check that \unshielded{} equivalence is an equivalence relation. \emph{Therefore, the~\unshielded{} equivalence class (IEC) is a finer partition of the Markov equivalence class (MEC).} An IEC can and often will be a singleton (i.e. a DAG that is not~\unshielded{} equivalent to any other DAG in the MEC).
\end{remark}

\begin{remark}
    This is also different from the interventional Markov equivalence class where the intervention target is known \citep{yang2018characterizing}. 
\end{remark}

\subsection{(Non-)Orientability of isolated edges}
\label{sec:limit-orientation}

The value of isolated equivalence is that it identifies which DAGs cannot be distinguished (from CI information alone) using unknown interventions. This is formalized in the following theorem:

\begin{restatable}{theorem}{unshildedsame}
\label{thm:unshilded-same}
Suppose $G_1$ and $G_2$ are in the same IEC. If the tuple $(G_1, \ItargetFamily_1)$ induces $\{\tupleset(G_1^{(\Itarget)})\}_{ \Itarget \in \ItargetFamily_1}$, then there exists a family of interventional targets $\ItargetFamily_2$ (possibly the same as $\ItargetFamily_1$) such that the tuple $(G_2, \mathcal{I}_2)$ also induces $\{\tupleset(G_1^{(\Itarget)})\}_{ \Itarget \in \ItargetFamily_1}$. 
\end{restatable}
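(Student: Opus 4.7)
The plan is to reduce to the case of a single isolated edge reversal and then explicitly construct a matching intervention family by swapping the endpoints of the reversed edge. Since, by definition of the IEC, $G_1 \sim_E G_2$ iff there exists a sequence $G_1 = H_0, H_1, \ldots, H_k = G_2$ where each $H_{i+1}$ is obtained from $H_i$ by reversing an edge that is isolated \emph{in $H_i$}, induction on $k$ immediately reduces the theorem to the base case where $G_2$ is obtained from $G_1$ by reversing a single isolated edge $x \to y$.

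For this base case, recall that an isolated edge is in particular covered, so $G_1$ and $G_2$ are Markov equivalent, i.e.\ $\tupleset(G_1) = \tupleset(G_2)$. Given $\ItargetFamily_1$ (with each target of size at most one), I will define $\ItargetFamily_2$ by applying the following map $\phi$ to each element of $\ItargetFamily_1$ and collecting the images as a set:
\begin{equation*}
\phi(\emptyset) = \emptyset, \qquad \phi(\{v\}) = \{v\} \text{ for } v \notin \{x,y\}, \qquad \phi(\{x\}) = \{y\}, \qquad \phi(\{y\}) = \{x\}.
\end{equation*}
It then suffices to verify $\tupleset(G_1^{(\Itarget)}) = \tupleset(G_2^{(\phi(\Itarget))})$ for each possible $\Itarget$.

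The verification splits into cases. When $\Itarget = \emptyset$, both sides are $\tupleset(G_1)$ and $\tupleset(G_2)$, which are equal by Markov equivalence. When $\Itarget = \{v\}$ with $v \notin \{x,y\}$, the intervention removes only edges into $v$; since $v \ne x, y$, the edge $x \to y$ survives in $G_1^{(\{v\})}$, and the defining conditions $\pa(x) = \emptyset$ and $\pa(y) = \{x\}$ are preserved, so $x \to y$ remains isolated (hence covered) in $G_1^{(\{v\})}$. Reversing it yields $G_2^{(\{v\})}$, which is therefore Markov equivalent to $G_1^{(\{v\})}$. The cases $\Itarget = \{x\}$ and $\Itarget = \{y\}$ are direct: intervening on $x$ in $G_1$ removes nothing (since $\pa_{G_1}(x) = \emptyset$) and likewise intervening on $y$ in $G_2$ removes nothing (since $\pa_{G_2}(y) = \emptyset$), so $G_1^{(\{x\})} = G_1$ and $G_2^{(\{y\})} = G_2$ are Markov equivalent; conversely, intervening on $y$ in $G_1$ deletes the unique incoming edge $x \to y$, and intervening on $x$ in $G_2$ deletes the unique incoming edge $y \to x$, yielding the \emph{same} DAG (namely $G_1$ with the $xy$-edge removed), hence identical $d$-separation sets.

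The main (mild) obstacle is bookkeeping across the inductive step: after one reversal $H_i \to H_{i+1}$, the matched family $\ItargetFamily_{i+1}$ need not equal $\ItargetFamily_i$ as a set of labels even though the induced collections of $d$-separation statements agree, so one must invoke the inductive hypothesis on the \emph{pair} $(H_i, \ItargetFamily_i)$ rather than chain the maps through a fixed $\ItargetFamily_1$. Once this is handled, composition of the per-step bijections $\phi$ yields the desired $\ItargetFamily_2$, and the set equality $\{\tupleset(G_1^{(\Itarget)})\}_{\Itarget \in \ItargetFamily_1} = \{\tupleset(G_2^{(\Itarget)})\}_{\Itarget \in \ItargetFamily_2}$ follows because $\phi$ is a bijection at every step.
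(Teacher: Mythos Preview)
Your proof is correct and follows the same high-level strategy as the paper: reduce to a single isolated edge reversal by induction along the IEC sequence, then verify $\tupleset(G_1^{(\Itarget)}) = \tupleset(G_2^{(\phi(\Itarget))})$ for an explicit relabeling $\phi$ of intervention targets. Your execution of the single-edge case is in fact more elementary than the paper's: whereas the paper introduces augmented active subgraphs and argues path-by-path about how interventions interact with the isolated edge, you simply observe that for $v\notin\{x,y\}$ the intervention graph $G_1^{(\{v\})}$ still has $x\to y$ isolated (hence covered), so reversing it gives the Markov equivalent $G_2^{(\{v\})}$, and for $v\in\{x,y\}$ the two intervened graphs are either both unchanged or literally identical. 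Your endpoint-swapping bijection $\phi$ also cleanly handles the case $\Itarget=\{y\}$, which the paper's stated mapping (keeping $\Itarget_j=\Itarget_i$ whenever $\Itarget_i\neq\{x\}$) glosses over.
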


\cref{thm:unshilded-same} shows that it is impossible to distinguish DAGs in the same IEC by looking at $d$-separations only. While it is impossible to distinguish within an IEC, it is possible to do so between different IECs (see \cref{thm:shilded-notsame} in the appendix). Together, 
Theorems~\ref{thm:unshilded-same} and~\ref{thm:shilded-notsame} establish that \unshielded{} edges are precisely those edges that cannot be oriented by unknown interventions under our assumptions. 
This does not imply, of course, that this is impossible in practice: We simply need to impose additional assumptions.
For example, one could use conditional invariances to improve the identifiability of edge orientations under additional assumptions like direct $\mathcal{I}$-faithfulness~\citep{squires2020permutation}, which we have not assumed in this paper.

\section{Experiments}

We test the theoretical results on simulated datasets under two settings: pure child and single latent source. Because this paper is primarily theoretical, the purpose of experiments is simply to verify the theory. In particular, for single latent source experiments, we still adopt the pure child structure but we explicitly test our identification strategy - no imaginary subset (\cref{cor:single-source}). We generate random causal graphs under different settings of $m$, $n$. \emph{We do not enforce \cref{assm:gr} (b) nor maximality (\cref{defn:maxMM}).} For each variable $V_i$ in the causal graph, the structural equation is simply $V_i \leftarrow \sum_{V_j \in \pa(V_i)} f(V_j) + \epsilon$, where $\epsilon$ is Gaussian noise, and $f$ is a nonlinear function. We set $f$ to be a quadratic function. To test independence, we adopt Chatterjee's coefficient \citep{chatterjee2020new}. The metric we use is the Structural Hamming Distance (SHD) between the estimated DAG and the true DAG. The results show that even without the graphical assumptions, our method can be effective in recovering the DAG for nonlinear models.

\begin{table}[h]
\caption{Experiments on simulated data show the effectiveness of our identification theory. The table shows SHD and standard errors are over $100$ runs.}
\label{tab:pacs-invar}
\begin{center}
\begin{small}
\begin{sc}
\begin{tabular}{lccccr}
\toprule
(m,n) & (2, 5) & (3, 8) & (4, 7) & (4, 8)\\
\midrule
Pure Child  & 0.01$\pm$0.01 & 0.54$\pm$0.13 & 1.35$\pm$0.17 & 2.35$\pm$0.30\\
Single Source    & 0.02$\pm$0.02 & 0.92$\pm$0.18 & 1.52$\pm$0.21 & 2.81$\pm$0.30\\
\bottomrule
\end{tabular}
\end{sc}
\end{small}
\end{center}
\end{table}

\section{Conclusion}
Using unknown, latent interventions, we have provided nonparametric identifiability results for a class of graphical measurement models that are commonly used to learn causal representations. 
For this, we introduced two important graphical concepts: \emph{Imaginary subsets} and \emph{isolated edges}.
Our proofs are constructive and can be implemented on finite samples.
Obvious relaxations of interest include finding better sufficient conditions for identifying bipartite graphs and extensions to multi-node and/or soft interventions. 

\section{Acknowledgments}
The authors would like to acknowledge the support of the NSF via IIS-1956330, the NIH via R01GM140467, and the Robert H. Topel Faculty Research Fund at the University of Chicago Booth School of Business. This work was done in part while B.A. was visiting the Simons Institute for the Theory of Computing.

\bibliography{ref}
\bibliographystyle{abbrvnat}

\newpage
\appendix

\section{Definitions}
\label{appendix:defn}
Let $G = (V, E)$ be a DAG with $V = (X, H)$ where $X$ denotes the observed part and $H$ denotes the hidden or latent part. 
In the measurement model, $G$ decomposes as the union of two subgraphs $G = \biParG \cup \latentG$ where $\biParG$ is a directed, bipartite graph of edges pointing from $H$ to $X$, and $\latentG$ is a DAG over the latent variables $H$. We let $\biParGEdge$ be the set of edges in $\biParG$ and let $\latentGEdge$ be the set of edges in $\latentG$. See \cref{fig:pure-imaginary}.

For a given node $v$, we use standard notation such as $\pa(v)$, $\ch(v)$, $\an(v)$, $\de(v)$ for parents, children, ancestors, descendants respectively. A node $w$ is a parent of $v$ if there exists an edge $w \rightarrow v$. A node $w$ is a child of $v$ if there exists an edge $v \rightarrow w$. A node $w$ is an ancestor of $v$ if there exists a directed path from $w$ to $v$. A node $w$ is a descendent of $v$ if there exists a directed path from $v$ to $w$. We also define $\overline{\pa}(v) =  \pa(v) \cup \{ v \}$, $\overline{\ch}(v) =  \ch(v) \cup \{ v \}$, $\overline{\an}(v) =  \an(v) \cup \{ v \}$ and $\overline{\de}(v) =  \de(v) \cup \{ v \}$. Given a subset $V' \subseteq V$, $\pa(V') \coloneqq \cup_{v \in V'} \pa(v)$, given two subsets $A, B \subseteq V$, $\pa_B(A)=\pa(A)\cap B$ and given a subgraph $G' \subseteq G$, $\pa_{G'}(V') \coloneqq \pa(V') \cap G'$. Similar notation can be defined for children, ancestors, and descendants. We use $\source(G')$ to denote the source nodes of $G'$ which are the nodes in $G'$ that do not have incoming edges. We also adopt the convention that $H$ is identified with the indices $[m] = \{1, ..., m\}$, and similarly $X$ is identified with $[n] = \{ 1, ..., n\}$. In particular, we use $\pa(i)$ and $\pa(H_i)$ interchangeably when the context is clear.

Recall that a node $V_1$ is called a collider if it is in the form
\begin{equation*}
    V_2 \rightarrow V_1 \leftarrow V_3.
\end{equation*}

For disjoint subsets, $A, B, C \subseteq V$, define $\dsep_G(AB \given  C)$ to mean that $A, B$ are $d$-separated by $C$ in the graph $G$ which means that are no active path between $A$ and $B$ given by $C$.  

A path between $A$ and $B$ is active given $C$ if the following hold:
\begin{enumerate}[label=(\alph*)]
    \item For every collider in the path, either the collider or one of its descendants is in $C$. 
    \item $C$ does not include any non-colliders in the path
\end{enumerate}

Recall the definitions of $\tupleset_{V'}(G)$ and $ \tupleset_{V'}(\distri)$ as follows:
\begin{equation*}
    \tupleset_{V'}(G) = \{ \langle A, B, C \rangle : \dsep_G(AB \given  C) \text{ for disjoint subsets } A, B, C \subseteq V' \}
\end{equation*}

\begin{equation*}
    \tupleset_{V'}(\distri) = \{ \langle A, B, C \rangle : A \indep B \given  C \text{ in } \distri  \text{ for disjoint subsets } A, B, C \subseteq V' \}
\end{equation*}
where $V' \subseteq V$.

\section{Preliminaries}

\paragraph{Markov equivalence class.} Usually, there is more than one DAG that encodes the same set of $d$-separations. To formalize this, Markov equivalence can be defined as follows \citep{lauritzen1996graphical}:
\begin{definition}
    Two DAGs $G_1$ and $G_2$ are called Markov equivalent if $\tupleset(G_1) = \tupleset(G_2)$.
\end{definition}

In particular, we use $\MEC(G)$ to denote the Markov equivalence class (MEC) that $G$ belongs to. Fortunately, MEC has a convenient graphical characterization. Recall that the skeleton of a DAG is the undirected graph of the DAG by ignoring its edge orientations. And a $v$-structure in a DAG $G$ is ordered triple of variables $(V_1, V_2, V_3)$ such that (1) $G$ contains edges $V_1 \rightarrow V_2$ and $V_3 \rightarrow V_2$ and (2) $V_1$ and $V_3$ are not  adjacent in $G$.

\begin{theorem}[\citet{DBLP:conf/uai/VermaP90}]
    Two DAGs are Markov equivalent if and only if they have
the same skeletons and the same $v$ structures.
\end{theorem}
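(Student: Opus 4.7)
The plan is to prove the two implications separately, relying on standard $d$-separation machinery rather than any of the interventional structure developed earlier in the paper.

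For the forward direction, suppose $\tupleset(G_1)=\tupleset(G_2)$. First I would show the skeletons agree. The standard fact is that in any DAG, two nodes $X,Y$ are adjacent if and only if no subset $S\subseteq V\setminus\{X,Y\}$ satisfies $\dsep(\{X\}\{Y\}\given S)$; one direction is immediate since an edge cannot be blocked, and the other follows by taking $S=\pa(X)\setminus\{Y\}$ if $Y\notin\de(X)$ (or symmetrically). Since adjacency is thus determined entirely by $\tupleset$, the two skeletons coincide. Next, I would show the $v$-structures agree. Fix an unshielded triple $X-Z-Y$ in the common skeleton. I would use the characterization that $X\to Z\leftarrow Y$ is a $v$-structure if and only if every $S$ with $\dsep(\{X\}\{Y\}\given S)$ satisfies $Z\notin S$: the ``only if'' direction is because $S\cup\{Z\}$ opens the collider, and the ``if'' direction is obtained by showing that whenever $Z$ is a non-collider on the $X$-$Z$-$Y$ path (a chain or fork), one can find a separating set containing $Z$ (e.g.\ an ancestral separator). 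Since this criterion is expressed entirely in terms of $\tupleset$ together with the skeleton, the $v$-structures in $G_1$ and $G_2$ must coincide.

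For the backward direction, suppose the skeletons and $v$-structures agree but the $d$-separation tuples differ; pick $A,B,C$ and a path $\pi$ that is active in $G_2$ given $C$ but has no active counterpart in $G_1$ given $C$. I would first reduce to the case of a \emph{minimal} active path, i.e.\ one that cannot be shortened by replacing a triple $U-Z-W$ with the direct edge $U-W$ when $U,W$ are adjacent (such a shortcut preserves activity by a standard case analysis on the orientations). On a minimal active path every consecutive triple is unshielded in the skeleton, so its collider/non-collider status is determined by the shared $v$-structure set. Therefore the same sequence of vertices, viewed as a path in $G_1$ (using the shared skeleton), has the identical pattern of colliders and non-colliders, and hence is active in $G_1$ given $C$, contradicting the choice of $\pi$.

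The step I expect to be the main obstacle is the $v$-structure characterization in the forward direction, specifically producing, for any unshielded non-collider triple $X-Z-Y$, an explicit separating set that contains $Z$. The cleanest route is to use the ancestral set construction: take $S=(\an(\{X,Y,Z\})\setminus\{X,Y\})\cap(V\setminus\de(Z)\cup\{Z\})$, or more simply appeal to the fact that in the moralized ancestral graph over $\overline{\an}(\{X,Y,Z\})$, $Z$ lies on no moral edge between $X$ and $Y$ precisely when the triple is a non-collider. Once that lemma is in hand, both directions fit on a page, and the reduction to minimal paths makes the backward direction essentially a bookkeeping argument.
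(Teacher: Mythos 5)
First, note that the paper does not prove this statement at all---it is stated verbatim as a cited classical result of \citet{DBLP:conf/uai/VermaP90}---so the only question is whether your argument stands on its own. Your forward direction does: the adjacency criterion via separating sets is the standard one, and your unshielded-triple criterion (the triple $X-Z-Y$ is a $v$-structure iff $Z$ lies in no separating set of $X,Y$) is correct; the non-collider case is handled even more simply than your ancestral-set formula suggests, since a non-collider $Z$ on an unshielded triple is a parent of $X$ or of $Y$, so whichever of $\pa(X)$, $\pa(Y)$ is a valid separator (namely the one attached to the vertex that is not a descendant of the other) already contains $Z$.

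The backward direction, however, has a genuine gap. After reducing to a minimal active path with all triples unshielded, you claim that ``the same sequence of vertices, viewed as a path in $G_1$, has the identical pattern of colliders and non-colliders, and hence is active in $G_1$ given $C$.'' The collider pattern is indeed preserved, but activity is not a function of the collider pattern and $C$-membership alone: a collider is also activated when it has a \emph{descendant} in $C$, and descendant relations are not invariant across graphs with the same skeleton and $v$-structures. Concretely, let $G_2$ have edges $U\to Z$, $V\to Z$, $Z\to W$, $U\to W$, $V\to W$, and let $G_1$ be identical except that $Z\to W$ is reversed to $W\to Z$. Both graphs have skeleton $\{U\!-\!Z,\,V\!-\!Z,\,Z\!-\!W,\,U\!-\!W,\,V\!-\!W\}$ and exactly the $v$-structures $U\to Z\leftarrow V$ and $U\to W\leftarrow V$. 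Take $C=\{W\}$ and the path $U\to Z\leftarrow V$: it is minimal in your sense (its only triple, $U\!-\!Z\!-\!V$, is unshielded), and it is active in $G_2$ given $C$ because $W\in\de_{G_2}(Z)$, yet the same vertex sequence is inactive in $G_1$ given $C$ because $Z$ has no descendants there. The theorem is of course saved by a different path ($U\to W\leftarrow V$), which is precisely what your proof is missing: when a collider is activated through a descendant path rather than through membership in $C$, one must reroute through the shielding edges rather than keep the vertex sequence fixed. Standard repairs are (i) strengthening minimality to the full augmented active subgraph (the path together with the collider-to-$C$ descendant paths) and carrying out a Meek-rule-style case analysis showing the relevant orientations are either forced or supply a bypass, or (ii) avoiding paths entirely via the transformational characterization the paper itself recalls from \citet{chickering2013transformational}: same skeleton and $v$-structures yield a sequence of covered edge reversals, and a single covered edge reversal is easily checked to preserve all $d$-separations.
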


There is also another transformational characterization of Markov equivalent DAGs \citep{chickering2013transformational}.

\begin{restatable}[Covered Edge]{definition}{covered}
\label{defn:covered}
We say an edge $x \to y$ is covered if $x$ and $y$ share the same parent excluding $x$ (i.e. $\pa(x)\cup\{x\}=\pa(y)$).
\end{restatable}

\begin{theorem}[\citet{chickering2013transformational}]
If $G_1$ and $G_2$ are Markov equivalent, then $G_1$ and $G_2$ can be transformed into one another by a sequence of covered edge reversals.
\end{theorem}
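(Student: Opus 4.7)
The plan is to reduce to the case of a single isolated-edge reversal and then pair the interventions in $\ItargetFamily_1$ and $\ItargetFamily_2$ by swapping the endpoints of the reversed edge. Since $G_1 \sim_E G_2$ means, by definition, that there is a sequence $G_1 = G^{(0)} \to G^{(1)} \to \cdots \to G^{(k)} = G_2$ of isolated-edge reversals, the problem reduces by induction on $k$ to the case $k=1$. So assume $G_2$ is obtained from $G_1$ by reversing a single isolated edge $x \to y$; by \cref{defn:isolated} this means $\pa_{G_1}(x) = \emptyset$ and $\pa_{G_1}(y) = \{x\}$, and after reversal $\pa_{G_2}(y) = \emptyset$ and $\pa_{G_2}(x) = \{y\}$, with all other parent sets unchanged. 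Define a swap $\phi : \ItargetFamily_1 \to \mathcal{P}(V)$ by $\phi(\{x\}) = \{y\}$, $\phi(\{y\}) = \{x\}$, and $\phi(\Itarget) = \Itarget$ otherwise, and set $\ItargetFamily_2 := \phi(\ItargetFamily_1)$. It then suffices to show $\tupleset(G_1^{(\Itarget)}) = \tupleset(G_2^{(\phi(\Itarget))})$ for each $\Itarget \in \ItargetFamily_1$, which is stronger than (and immediately implies) the set equality in the statement.

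The verification splits into four cases. First, if $\Itarget = \emptyset$, the isolated edge $x\to y$ is covered (trivially, since $\pa_{G_1}(x) \cup \{x\} = \{x\} = \pa_{G_1}(y)$), so by Chickering's theorem $G_1$ and $G_2$ are Markov equivalent and $\tupleset(G_1) = \tupleset(G_2)$. Second, if $\Itarget = \{x\}$ then $G_1^{(\{x\})} = G_1$ because $\pa_{G_1}(x) = \emptyset$ means no incoming edges are removed; symmetrically $G_2^{(\{y\})} = G_2$, so this reduces to the previous case. Third, if $\Itarget = \{y\}$ then $G_1^{(\{y\})}$ deletes the edge $x \to y$, while $G_2^{(\{x\})}$ deletes $y \to x$; the two resulting graphs are literally identical on $V$ (no edge between $x$ and $y$, all other edges unchanged), hence induce the same $d$-separations. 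Fourth, if $\Itarget = \{z\}$ with $z \notin \{x,y\}$, then $G_1^{(\{z\})}$ and $G_2^{(\{z\})}$ differ only by the orientation of $x \to y$; since the mutilation only touches incoming edges at $z$, the parent sets of $x$ and $y$ are preserved, so $x \to y$ is still isolated in $G_1^{(\{z\})}$, and $G_2^{(\{z\})}$ is obtained from it by one isolated-edge reversal, yielding Markov equivalence.

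The main obstacle I anticipate is verifying that the isolated-edge property is stable under interventional mutilation on a third node, so that the inductive step can iterate cleanly along the reversal chain; this is exactly what Case (iv) handles, and the check is mechanical because single-node interventions only remove incoming edges at the intervened node. A secondary subtlety worth flagging in the write-up is that the theorem only asks for set-level equality $\{\tupleset(G_1^{(\Itarget)})\}_{\Itarget\in\ItargetFamily_1} = \{\tupleset(G_2^{(\Itarget)})\}_{\Itarget\in\ItargetFamily_2}$, while our argument delivers the stronger indexed equality via $\phi$; this is convenient, and also explains why in general $\ItargetFamily_2 \neq \ItargetFamily_1$ (they agree iff $\ItargetFamily_1$ is symmetric in $\{x\}$ and $\{y\}$), matching the ``possibly the same as'' wording of the statement. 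Finally, to conclude the induction along the reversal chain $G^{(0)} \to \cdots \to G^{(k)}$, one composes the swaps $\phi_1,\ldots,\phi_k$ associated with each reversed isolated edge and takes $\ItargetFamily_2 := (\phi_k \circ \cdots \circ \phi_1)(\ItargetFamily_1)$.
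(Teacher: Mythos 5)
You have proved the wrong statement. The theorem in question is Chickering's transformational characterization of Markov equivalence: assuming \emph{only} that $G_1$ and $G_2$ are Markov equivalent (same $d$-separation sets, equivalently same skeleton and $v$-structures), one must \emph{construct} a sequence of covered edge reversals transforming $G_1$ into $G_2$. Your proposal instead proves, essentially verbatim, the paper's \cref{thm:unshilded-same} (via its single-reversal version, \cref{thm:unshilded-single-edge-same}): your hypothesis already hands you a sequence of isolated-edge reversals ($G_1 \sim_E G_2$), and your conclusion concerns families of intervention targets $\ItargetFamily_1, \ItargetFamily_2$ and the sets $\tupleset(G^{(\Itarget)})$ — none of which appear in the statement to be proved. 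The mismatch becomes circular in your Case 1, where you invoke ``Chickering's theorem'' to conclude $\tupleset(G_1)=\tupleset(G_2)$ after a covered edge reversal: you are using the result you were asked to establish (or at least its companion lemma on covered-edge reversals preserving equivalence) as an ingredient. Nothing in your argument addresses the actual burden of the theorem, which is to produce the reversal sequence from Markov equivalence alone; the existence of such a sequence is the entire content of the claim, not a given.

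For the record, there is no internal proof to compare against: the paper imports this result from \citep{chickering2013transformational} as background (it is then \emph{used}, e.g.\ in the proof of \cref{thm:shilded-notsame}, where a sequence of $|\Delta(G_1,G_2)|$ distinct covered edge reversals is taken as known). A correct blind proof would run along Chickering's original lines: induct on $|\Delta(G_1,G_2)|$, the number of edges oriented differently in the two DAGs; show that if $\Delta(G_1,G_2)\neq\emptyset$ one can always locate an edge $x\to y$ in $G_1$, reversed in $G_2$, that is covered in $G_1$ (e.g.\ by choosing a suitable extremal such edge relative to a topological order and arguing that any uncovered candidate would force either a skeleton difference, a $v$-structure difference, or a directed cycle); then verify that reversing a covered edge preserves the skeleton and the $v$-structures, hence Markov equivalence by the Verma--Pearl characterization, while strictly decreasing $|\Delta|$. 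Your proposal contains none of these steps, so as a proof of the stated theorem it has a complete gap rather than a fixable flaw.
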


\begin{figure}[h]
\centering

\begin{subfigure}{.4\linewidth}
\centering
  \begin{tikzpicture}[node distance={15mm}, 
                    latent/.style = {draw, rectangle, black, minimum size=0.8cm},
                    obs/.style = {draw, circle, black}
                    ] 
\node[latent] (H1) {$H_1$}; 
\node[obs] (X1) [below right of=H1] {$X_1$}; 
\node[obs] (X2) [below left of=H1] {$X_2$};

\draw[->, black] (H1) -- (X1);
\draw[->, black] (H1) -- (X2);
\end{tikzpicture} 
\label{fig:assm-3-b}
\caption{}
\end{subfigure}%
\begin{subfigure}{.4\linewidth}
\centering
  \begin{tikzpicture}[node distance={15mm}, 
                    latent/.style = {draw, rectangle, black, minimum size=0.8cm},
                    obs/.style = {draw, circle, black}
                    ] 
\node[latent] (H1) {$H_1$}; 
\node[latent] (H2) [right of=H1] {$H_2$};
\node[obs] (X1) [below of=H1] {$X_1$}; 
\node[obs] (X2) [below of=H2] {$X_2$};

\draw[->, black] (H1) -- (H2);

\draw[->, black] (H1) -- (X1);
\draw[->, black] (H2) -- (X2);
\end{tikzpicture} 
\label{fig:assm-3-a}
 \caption{}
\end{subfigure}
\caption{Distinguishable measurement models under latent interventions. }
\label{fig:medel-main}
\end{figure}
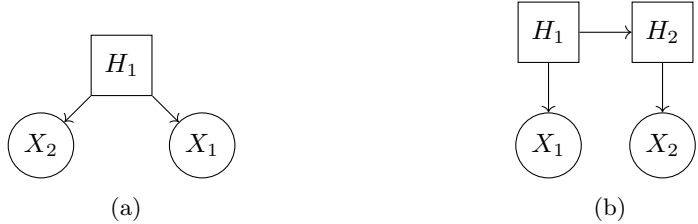

\paragraph{PDAGs and CPDAGs.}

A PDAG is a partially directed acyclic graph that contains both directed and undirected edges \citep{chickering2002optimal}. If a DAG $G$ and a PDAG $P$ have the same skeleton and $v$-structures and if every directed edge in $P$ has the same orientations in $G$, then $G$ is called a consistent extension of $P$.

We use completed PDAGs (CPDAG) to represent Markov equivalence classes of DAGs \citep{chickering2002optimal}. We define an edge to be compelled if, for every DAG of the Markov equivalence class, this edge has the same orientation. A reversible edge is an edge that is not compelled. Therefore, the completed PDAG of a MEC is a PDAG that has a directed edge for every compelled edge and an undirected edge for every reversible edge. A CPDAG uniquely represents an equivalence class of DAGs and every DAG in the equivalence class is a consistent extension of this CPDAG.

\paragraph{Measurement models and UDG.}
Recall that throughout the paper, we consider a specific type of latent causal graph called the \emph{measurement model} \citep{silva2006learning,kummerfeld2016causal} in which there are no direct edges connecting the observed variables.

For any distribution $P$ over $V$, recall the definition of the undirected dependency graph (UDG), denoted $\UDG(P)$, to be the undirected graph over $X$ in which there is an edge between $X_i$ and $X_j$ if and only if they are marginally dependent (i.e. given the empty set, cf. \Cref{rem:empty}). 
Similarly, we can also define UDG for a measurement model $G$, denoted $\UDG(G)$, to be the undirected graph over $X$ in which there is an edge between two $X_i$ and $X_j$ if and only if they are $d$-connected (i.e. by the empty set, cf. \Cref{rem:empty}) in $G$. Parallel to the definiton of $\MaxCliq_{P}^{(\Itarget)} $ which is the set of maximal cliques in $\UDG(P_X^{(\Itarget)})$, we can also define  $\MaxCliq_{G}^{(\Itarget)} $ to the set of maximal cliques in $\UDG(G^{(\Itarget)})$. We also let $\MaxCliq = \cup_{\Itarget \in \ItargetFamily} \MaxCliq_{P}^{(\Itarget)}$.

Although it is always possible (i.e. by ignoring independencies and allowing for degenerate edges) to represent any latent variable model with densely connected, independent latents, this ignores precisely the latent (causal) structure that we seek to capture.
We aim to uncover latent structures that can best capture the observed (in)dependencies. \citet{markham2020measurement} show that, with only access to the observational distribution, we only need a minimal measurement model, where every latent variable corresponds to a clique of a minimum clique cover set of $\UDG(G)$ and there is no edge between latent variables, to represent all the dependencies of the observed variables $X$.
With interventions, however, such a minimal measurement model may not capture all of the observable dependencies, as illustrated by \cref{ex:counter-minimal}.

\begin{example}
\label{ex:counter-minimal}
Without interventions, the graphs in \Cref{fig:medel-main}(a) and \Cref{fig:medel-main}(b) share the same CI statements over $X$, and the first graph would be the one returned as the minimal measurement model. But they do not share the same CI statements under interventions. Specifically, we can intervene on $H_2$ in \cref{fig:medel-main}(b) which makes $X_1$ and $X_2$ independent, and this is not possible to achieve in \cref{fig:medel-main}(a).
\end{example}

\section{Discussion of assumptions}
\label{appendix:assm}

In this section, we discuss the main assumptions made in this paper (\cref{assm:gr} and \cref{assm:complete-faimly}). 
With the exception of the Markov property (Assumption~\ref{assm:gr}(a)), which is standard, we will also show why each assumption is needed by providing a counterexample showing that if each of the other assumptions hold but the assumption of interest fails, there exist two sets of measurement models and interventional targets $(G_1, \ItargetFamily_1)$ and $(G_2, \ItargetFamily_2)$, which can generate identical sets of observational distributions under unknown interventions. This implies, in particular, that none of our assumptions can be removed without imposing additional assumptions. It is an interesting direction for future work to explore possible alternative assumptions more carefully.

We use the following notation in the examples:
\begin{itemize}
    \item $\bern(p)$ is a Bernoulli random variable such that
    \begin{equation*}
        \bern(p) = \begin{cases}
            1 & \text{with probability } p\\
            0 & \text{otherwise}
        \end{cases}
    \end{equation*}
    \item $\rande(p)$ is a random variable such that
    \begin{equation*}
        \rande(p) = \begin{cases}
            1 & \text{with probability } p\\
            -1 & \text{otherwise}
        \end{cases}
    \end{equation*}
    
    \item $\xor$ means XOR;
    \item $\andlogic$ means AND.
\end{itemize}

For ease of reference, we recall our two main assumptions here:

\gr*

\completefaimly*

\begin{remark}
\label{rem:ivnknown}
Although we know the set of intervention targets, we do not know the mapping from targets to distribution. In fact, it is possible that for two intervention targets $\Itarget_1$ and $\Itarget_2$, we have $P_X^{(\Itarget_1)} = P_X^{(\Itarget_2)} $. 
For instance, if the measurement model has multiple source nodes, then intervening on these source nodes would not change the structures of the DAG. And it is not hard to have parametric examples where the distributions are also kept the same (for instance, see \cref{ex:counter-2b}). So the number of elements in the set $\{P_X^{(\Itarget)}\}_{\Itarget \in \ItargetFamily}$ could be less than $m+1$. Therefore, we do not even know the number of hidden variables.
\end{remark}

In particular, by \cref{assm:gr}(a) and (b), $\UDG(P^{(\Itarget)})$ is the same as $\UDG(G^{(\Itarget)})$ and thus $\MaxCliq_{P}^{(\Itarget)}$ equals $\MaxCliq_{G}^{(\Itarget)}$. We use these concepts interchangeably in this paper.

It is important to highlight that two distributions $P_1$ and $P_2$ over two measurement models $G_1$ and $G_2$ share the same UDG (i.e. $\UDG(P_1)$ = $\UDG(P_2)$) if and only if $\tupleset_X(G_1)$ = $\tupleset_X(G_2)$ (\cref{lemma:same-p-clique-same-graph}). Therefore, for Assumption \ref{assm:gr}(b), (c), and \cref{assm:complete-faimly}, we in addition show that the two measurement models in counterexamples are also \emph{structurally equivalent} in terms of observed variables, which means that $\{\tupleset_X(G_1^{(\Itarget)})\}_{ \Itarget \in \ItargetFamily_1}$ is the same as $\{\tupleset_X(G_2^{(\Itarget)})\}_{ \Itarget \in \ItargetFamily_2}$.

\subsection{\cref{assm:gr}(b)}

\cref{assm:gr}(b) requires that the UDG $\UDG(P^{(\Itarget)})$ correctly encodes the marginal independence structure of $P^{(\Itarget)}$. It is a type of ``marginal faithfulness'' assumption, however, this is a significantly weaker assumption than faithfulness and should not be confused with ordinary faithfulness, which requires that \emph{all} conditional independence statements in $P^{(\Itarget)}$ are encoded in the DAG $G^{(\Itarget)}$. To appreciate the sizable, difference, note that \cref{assm:gr}(b) imposes $O(n^2)$ constraints whereas faithfulness imposes $O(4^{n})$ constraints.
Similar ``marginal''-type assumptions have been invoked previously; see
\citep{textor2015learning,markham2020measurement} and the references therein. \cref{ex:counter-1b} shows why \cref{assm:gr}(b) is needed. Specifically, because we allow nonlinear functions between observed and latent, without any restrictions, the observed distributions (i.e. over $X$, both observational and interventional) can be arbitrary.

\begin{example}[Counterexamples for \cref{assm:gr}(b)]
\label{ex:counter-1b}
Consider the two measurement models $G_{(a)}$ and $G_{(b)}$ in \cref{fig:assm-2}. We will construct structural causal models for $G_{(a)}$ and $G_{(b)}$ such that they both satisfy \cref{assm:gr}(a), and (c). With the complete family of interventional targets (\cref{assm:complete-faimly}), the interventional distributions of observed variables for both DAGs are the same. Thus, without \cref{assm:gr}(b), there are ambiguities.

Specifically, for $G_{(a)}$, we have the following structural equations:
\begin{equation*}
\begin{split}
    H_1 &\leftarrow \bern(0.5) \\
    X_1 &\leftarrow H_1 \xor \epsilon_1^a \quad \epsilon_1^a \leftarrow \bern(0.5) \\
    X_2 &\leftarrow H_1 \xor \epsilon_2^a \quad \epsilon_2^a \leftarrow \bern(0.5) \\
\end{split}
\end{equation*}

For $G_{(b)}$, we have the following structural equations:
\begin{equation*}
\begin{split}
    H_1 &\leftarrow \bern(0.5) \\
    H_2 &\leftarrow H_1 \xor \epsilon_1^b \quad \epsilon_1^b \leftarrow \bern(0.5) \\
    X_1 &\leftarrow H_1 \xor \epsilon_2^b \quad \epsilon_2^b \leftarrow \bern(0.5) \\
    X_2 &\leftarrow H_2 \xor \epsilon_3^b \quad \epsilon_3^b \leftarrow \bern(0.5) \\
\end{split}
\end{equation*}

\cref{assm:gr}(b) is violated because $X_1$ and $X_2$ are independent in both cases. 

Let $P_X^{*}(X_1, X_2)$ be the joint distribution over $X_1, X_2$ where $X_1$ and $X_2$ are independent bernoulli variables with parameter $0.5$.

For $G_{(a)}$, if the intervention target $\Itarget$ is $\emptyset$, then $P_X^{(\Itarget)} = P_X^{*}$. If the intervention target $\Itarget$ is $\{ H_1\}$ but the interventional marginal distribution of $H_1$ is still a Bernoulli distribution, then $P_X^{(\Itarget)} = P_X^{*}$.

For $G_{(b)}$, if the intervention target $\Itarget$ is $\emptyset$, then $P_X^{(\Itarget)} = P_X^{*}$. If the intervention target $\Itarget$ is $\{ H_1\}$ but the interventional marginal distribution of $H_1$ is still a Bernoulli distribution, then $P_X^{(\Itarget)} = P_X^{*}$. If the intervention target $\Itarget$ is $\{ H_2\}$ and we have $H_2 \leftarrow \bern(0.5)$, then $P_X^{(\Itarget)} = P_X^{*}$.

Therefore, both $G_{(a)}$ and $G_{(b)}$ can induce $\{P_X^{(\Itarget)}\}_{ \Itarget \in \ItargetFamily} = \{ P_X^{*}\}$.

\end{example}

\begin{figure}[t]
\centering
\begin{subfigure}{.4\linewidth}
\centering
    \begin{tikzpicture}[node distance={15mm}, 
                    latent/.style = {draw, rectangle, black, minimum size=0.8cm},
                    obs/.style = {draw, circle, black}
                    ] 
\node[latent] (H1) {$H_1$}; 
\node[obs] (X1) [below right of=H1] {$X_1$}; 
\node[obs] (X2) [below left of=H1] {$X_2$};

\draw[->, black] (H1) -- (X1);
\draw[->, black] (H1) -- (X2);
\end{tikzpicture} 
\label{fig:assm-2-a}
    \caption{}
\end{subfigure}%
\begin{subfigure}{.4\linewidth}
\centering
    \begin{tikzpicture}[node distance={15mm}, 
                    latent/.style = {draw, rectangle, black, minimum size=0.8cm},
                    obs/.style = {draw, circle, black}
                    ] 
\node[latent] (H1) {$H_1$}; 
\node[latent] (H2) [right of =H1] {$H_2$}; 
\node[obs] (X1) [below of=H1] {$X_1$}; 
\node[obs] (X2) [below of=H2] {$X_2$};

\draw[->, black] (H1) -- (H2);

\draw[->, black] (H1) -- (X1);
\draw[->, black] (H2) -- (X2);
\end{tikzpicture} 
\label{fig:assm-2-b}
\caption{}
\end{subfigure}%
\caption{Counterexample for \cref{assm:gr}(b)}
\label{fig:assm-2}
\end{figure}

\subsection{\cref{assm:gr}(c)}

\cref{assm:gr}(c) ensures the relationships between the hidden variables leave observable signatures in the observed data. Of all the assumptions, this one is new to the best of our knowledge and arises due to the fact that (a) We allow for dependencies between the latents, and (b) We make no parametric assumptions. The latter is what crucially distinguishes our approach from previous work that inevitably leverages parametric assumptions to either implicitly guarantee \cref{assm:gr}(c) or sidestep it altogether. In the fully nonparametric setting, this cannot be avoided, as shown by \cref{ex:counter-2b}. 

\begin{figure}[t]
    \centering
    \begin{subfigure}{.4\linewidth}
    \centering
      \begin{tikzpicture}[node distance={15mm}, 
                        latent/.style = {draw, rectangle, black, minimum size=0.8cm},
                        obs/.style = {draw, circle, black}
                        ] 
    \node[latent] (H1) {$H_1$}; 
    \node[latent] (H2) [right of=H1] {$H_2$};
    \node[latent] (H3) [right of=H2] {$H_3$};
    \node[obs] (X1) [below of=H1] {$X_1$}; 
    \node[obs] (X2) [below of=H2] {$X_2$};
    \node[obs] (X3) [below of=H3] {$X_3$};
    
    \draw[->, black] (H1) -- (H2);
    \draw[->, black] (H2) -- (H3);
    
    \draw[->, black] (H1) -- (X1);
    \draw[->, black] (H1) -- (X2);
    \draw[->, black] (H2) -- (X2);
    \draw[->, black] (H2) -- (X3);
    \draw[->, black] (H3) -- (X3);
    \draw[->, black] (H3) -- (X1);
    \end{tikzpicture} 
    \label{fig:assm-4-a}
     \caption{}
    \end{subfigure}%
    \begin{subfigure}{.4\linewidth}
    \centering
      \begin{tikzpicture}[node distance={15mm}, 
                        latent/.style = {draw, rectangle, black, minimum size=0.8cm},
                        obs/.style = {draw, circle, black}
                        ] 
    \node[latent] (H1) {$H_1$}; 
    \node[obs] (X1) [below right of=H1] {$X_1$}; 
    \node[obs] (X2) [below of=H1] {$X_2$};
    \node[obs] (X3) [below left of=H1] {$X_3$};
    
    \draw[->, black] (H1) -- (X1);
    \draw[->, black] (H1) -- (X2);
    \draw[->, black] (H1) -- (X3);
    \end{tikzpicture} 
    \label{fig:assm-4-b}
    \caption{}
    \end{subfigure}%
    \caption{Counterexample for \cref{assm:gr}(c)}
    \label{fig:assm-4}
    \end{figure}

\begin{example}[Counterexample for \cref{assm:gr}(c)]
\label{ex:counter-2b}
Consider the two measurement models $G_{(a)}$ and $G_{(b)}$ in \cref{fig:assm-4}. We will construct structural causal models for both $G_{(a)}$ and $G_{(b)}$ such that they satisfy \cref{assm:gr}(a), (b), and (c). With the complete family of interventional targets (\cref{assm:complete-faimly}), the interventional distributions of observed variables for both DAGs are the same. In particular, these two DAGs have the same set of $d$-separation statements of observed variables under interventions. Therefore, \cref{assm:gr}(c) is needed to avoid ambiguities.

First of all, let us examine the structure of these two measurement models. For $G_{(a)}$, if the intervention target $\Itarget$ is either $\emptyset$, $\{ H_1 \}$, $\{ H_2 \}$ or $\{ H_3\}$, $X_1$, $X_2$ and $X_3$ always stays $d$-connected. For $G_{(b)}$, if the intervention target $\Itarget$ is $\emptyset$ or $\{ H_1 \}$, $X_1$, $X_2$ and $X_3$ also stays $d$-connected. Therefore, by \cref{lemma:same-clique-same-ci}, for the complete family of interventional targets $\ItargetFamily_{(a)}$ of $G_{(a)}$ and  $\ItargetFamily_{(b)}$ of $G_{(b)}$,
$\{\tupleset_X(G_{(a)}^{(\Itarget)})\}_{ \Itarget \in \ItargetFamily_{(a)}}$ is the same as $\{\tupleset_X(G_{(b)}^{(\Itarget)})\}_{ \Itarget \in \ItargetFamily_{(b)}}$.

Now let's consider this parametric example. Specifically, for $G_{(a)}$, we have the following structural equations:
\begin{equation*}
\begin{split}
    H_1 &\leftarrow \bern(0.5) \\
    H_2 &\leftarrow H_1 \xor \epsilon_1^a \quad \epsilon_1^a \leftarrow \bern(0.5) \\
    H_3 &\leftarrow H_2 \xor \epsilon_2^a \quad \epsilon_1^a \leftarrow \bern(0.5) \\
    X_1 &\leftarrow H_1 + H_2\\
    X_2 &\leftarrow H_2 + H_3\\
    X_3 &\leftarrow H_1 + H_3\\
\end{split}
\end{equation*}

And for $G_{(b)}$, we have the following structural equations:
\begin{equation*}
\begin{split}
    \epsilon_1^b &\leftarrow \bern(0.5) \quad \epsilon_2^b \leftarrow \bern(0.5) \quad \epsilon_3^b \leftarrow \bern(0.5) \\
    H_1 &\leftarrow (\epsilon_1^b, \epsilon_2^b, \epsilon_3^b) \\
    X_1 &\leftarrow \langle (1, 1, 0),  H_1\rangle\\
    X_2 &\leftarrow \langle (0, 1, 1),  H_1\rangle\\
    X_3 &\leftarrow \langle (1, 0, 1),  H_1\rangle\\
\end{split}
\end{equation*}

For $G_{(a)}$, if the intervention target $\Itarget$ is $\emptyset$, denote $P_X^{(\Itarget)} = P_X^{1}$. If the intervention target $\Itarget$ is $\{ H_1 \}$ and $H_1 \leftarrow \bern(0.6)$, denote $P_X^{(\Itarget)} = P_X^{2}$. If the intervention target $\Itarget$ is $\{ H_2 \}$ and $H_2 \leftarrow \bern(0.5)$ or the intervention target $\Itarget$ is $\{ H_3 \}$ and $H_3 \leftarrow \bern(0.5)$, we have that $P_X^{(\Itarget)} = P_X^{1}$.

For $G_{(a)}$, if the intervention target $\Itarget$ is $\emptyset$, let $P_X^{(\Itarget)} = P_X^{1}$. And if if the intervention target $\Itarget$ is $\{H_1\}$ and we set the distribution of $\epsilon_1^b$ to be $\bern(0.6)$, we also have that $P_X^{(\Itarget)} = P_X^{2}$.

Therefore, both $G_{(a)}$ and $G_{(b)}$ can induce $\{P_X^{(\Itarget)}\}_{ \Itarget \in \ItargetFamily} = \{ P_X^{1}, P_X^{2}\}$. And one can check that all the other assumptions still hold.
\end{example}

\subsection{\cref{assm:complete-faimly}}

\cref{assm:complete-faimly} ensures that the effect of each hidden variable is measured. \cref{ex:counter-4} shows why \cref{assm:complete-faimly} is needed. This assumption is also made in recent work on causal representation learning \citep{seigal2022linear, ahuja2022interventional}. 

\begin{figure}[t]
\centering
\begin{subfigure}{.4\linewidth}
\centering
    \begin{tikzpicture}[node distance={15mm}, 
                    latent/.style = {draw, rectangle, black, minimum size=0.8cm},
                    obs/.style = {draw, circle, black}
                    ] 
\node[latent] (H1) {$H_1$}; 
\node[latent] (H2) [right of=H1] {$H_2$};
\node[latent] (H3) [right of=H2] {$H_3$};
\node[obs] (X1) [below of=H1] {$X_1$}; 
\node[obs] (X2) [below of=H2] {$X_2$};
\node[obs] (X3) [below of=H3] {$X_3$};

\draw[->, black] (H1) -- (H2);
\draw[->, black] (H2) -- (H3);

\draw[->, black] (H1) -- (X1);
\draw[->, black] (H2) -- (X2);
\draw[->, black] (H3) -- (X3);
\end{tikzpicture} 
\label{fig:assm-6-a}
    \caption{}
\end{subfigure}%
\begin{subfigure}{.4\linewidth}
\centering
    \begin{tikzpicture}[node distance={15mm}, 
                    latent/.style = {draw, rectangle, black, minimum size=0.8cm},
                    obs/.style = {draw, circle, black}
                    ] 
\node[latent] (H1) {$H_1$}; 
\node[latent] (H2) [right of=H1] {$H_2$};
\node[obs] (X1) [below of=H1] {$X_1$}; 
\node[obs] (X2) [below of=H2] {$X_2$};
\node[obs] (X3) [right of=X2] {$X_3$};

\draw[->, black] (H1) -- (H2);
\draw[->, black] (H1) -- (X1);
\draw[->, black] (H2) -- (X2);
\draw[->, black] (H2) -- (X3);
\end{tikzpicture} 
\label{fig:assm-6-b}
\caption{}
\end{subfigure}%
\caption{Counterexample for \cref{assm:complete-faimly}}
\label{fig:assm-6}
\end{figure}

\begin{example}[Counterexample for \cref{assm:complete-faimly}]
\label{ex:counter-4}
Consider the two measurement models $G_{(a)}$ and $G_{(b)}$ in \cref{fig:assm-6}. Consider the family of \emph{incomplete} interventional targets $\{\emptyset, \{H_2\} \}$, and assume that $\ItargetFamily_{(a)}=\ItargetFamily_{(b)}=\{\emptyset, \{H_2\} \}$. We will construct structural causal models for both $G_{(a)}$ and $G_{(b)}$ such that they satisfy \cref{assm:gr} and the interventional distributions of observed variables for both DAGs are the same. In particular, these two DAGs have the same set of $d$-separation statements of observed variables under interventions. Therefore, \cref{assm:complete-faimly} is needed to avoid ambiguities.

Let us examine the structure of these two measurement models. For $G_{(a)}$, if the intervention target $\Itarget$ is $\emptyset$, $X_1$, $X_2$ and $X_3$ are $d$-connected. If the intervention target $\Itarget$ is $\{ H_2\}$, then only $X_2$ and $X_3$ are $d$-connected. Similarly, for $G_{(b)}$, if the intervention target $\Itarget$ is $\emptyset$, $X_1$, $X_2$ and $X_3$ also stays $d$-connected. And if the intervention target $\Itarget$ is $\{ H_2\}$, then only $X_2$ and $X_3$ are $d$-connected.
Therefore, by \cref{lemma:same-clique-same-ci}, $\{\tupleset_X(G_{(a)}^{(\Itarget)})\}_{ \Itarget \in \ItargetFamily_{(a)}}$ is the same as $\{\tupleset_X(G_{(b)}^{(\Itarget)})\}_{ \Itarget \in \ItargetFamily_{(b)}}$.

Now let's consider this parametric example. Specifically, for $G_{(a)}$, we have the following structural equations:
\begin{equation*}
\begin{split}
    H_1 &\leftarrow \gaussian(0, 1) \\
    H_2 &\leftarrow H_1 + \epsilon_1^a \quad \epsilon_1^a \leftarrow \gaussian(0, 1) \\
    H_3 &\leftarrow H_2 + \epsilon_2^a \quad \epsilon_2^a \leftarrow \gaussian(0, 1) \\
    X_1 &\leftarrow H_1  \\
    X_2 &\leftarrow H_2  \\
    X_3 &\leftarrow H_3  \\
\end{split}
\end{equation*}

And for $G_{(b)}$, we have the following structural equations:
\begin{equation*}
\begin{split}
    H_1 &\leftarrow \gaussian(0, 1) \\
    H_2 &\leftarrow H_1 + \epsilon_1^b \quad \epsilon_1^b \leftarrow \gaussian(0, 1) \\
    X_1 &\leftarrow H_1 \\
    X_2 &\leftarrow H_2 \\
    X_3 &\leftarrow H_2 + \epsilon_2^b \quad \epsilon_2^b \leftarrow \gaussian(0, 1) \\
\end{split}
\end{equation*}

For $G_{(a)}$, if the intervention target $\Itarget$ is $\emptyset$, $P_X^{(\Itarget)} = \gaussian(0, \Sigma_1)$ and $\Sigma_1 = \begin{pmatrix}
    1 &1 &1\\
    1 &2 &2\\
    1 &2 &3\\
\end{pmatrix}$. If the intervention target $\Itarget$ is $\{ H_2 \}$, and $H_2 \leftarrow \gaussian(0, \alpha)$, then $P_X^{(\Itarget)} = \gaussian(0, \Sigma_2)$ where $\Sigma_2 = \begin{pmatrix}
    1 &0 &0\\
    0 &\alpha &\alpha\\
    0 &\alpha &\alpha + 1\\
\end{pmatrix}$

On the other hand, for $G_{(b)}$, if the intervention target $\Itarget$ is $\emptyset$, $P_X^{(\Itarget)} = \gaussian(0, \Sigma_1)$. And If the intervention target $\Itarget$ is $\{ H_2 \}$, and $H_2 \leftarrow \gaussian(0, \alpha)$, then $P_X^{(\Itarget)} = \gaussian(0, \Sigma_2)$.

Therefore, both $G_{(a)}$ and $G_{(b)}$ can induce $\{P_X^{(\Itarget)}\}_{ \Itarget \in \ItargetFamily} = \{ \gaussian(0, \Sigma_1), \gaussian(0, \Sigma_2) \}$. And one can check that all the other assumptions still hold.

\end{example}

\subsection{Discussion of subset condition}
\label{appendix:subset}

Subset condition is a common assumption used for latent graph identification that dates back to \citet{pearl1992statistical} (see also \citep{evans2016graphs,kivva2021learning}. It is defined as follows:
\begin{assumption*}[Subset condition] 
For any $H_i\ne H_j$, $\cover(H_i)$ is not a subset of $\cover(H_j)$ and vice versa. 
\end{assumption*}

The subset condition ensures that latent variables have observable signatures (i.e. in the observed marginal $P(X)$) and dates back to the 1990s \citep{pearl1992statistical}, where it was used to study graphical latent variable models, and has more recently been applied on similar problems \citep{evans2016graphs,kivva2021learning}. 

It turns out that the subset condition is implied by \cref{assm:gr} and \cref{assm:complete-faimly}. 

\begin{lemma}
\label{lemma:subset}
Under \cref{assm:gr} and \cref{assm:complete-faimly}, the subset condition always holds.
\end{lemma}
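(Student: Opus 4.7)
The plan is to argue by contradiction. Suppose $\cover(H_i)\subseteq\cover(H_j)$ for some distinct latents $H_i\ne H_j$ (the reverse inclusion is handled symmetrically). Since $G$ is acyclic, at most one of $H_i\in\de_G(H_j)$ and $H_j\in\de_G(H_i)$ can hold, so I would pick the intervention that cuts the incoming edges of the ancestor: if $H_i\notin\de_G(H_j)$, take $I=\{H_j\}$; otherwise $H_j\notin\de_G(H_i)$ and take $I=\{H_i\}$. Both are in $\ItargetFamily$ by \cref{assm:complete-faimly}.

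Next I would verify that $\dsep_{G^{(I)}}(\{H_i\}\{H_j\}\given\emptyset)$. The intervened node has no incoming edges in $G^{(I)}$, so any marginal active path between $H_i$ and $H_j$ must be colliderless and must therefore be a directed path emanating from that node. By construction its other endpoint is not a descendant, so no such path exists. Applying \cref{assm:gr}(c) then produces $X_a\in\cover(H_i)$ and $X_b\in\cover(H_j)$ with $\dsep_{G^{(I)}}(\{X_a\}\{X_b\}\given\emptyset)$, where $X_a\ne X_b$ by the disjointness built into the d-separation definition.

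The contradiction comes from the fact that $X_a\in\cover(H_i)\subseteq\cover(H_j)$, so both $X_a$ and $X_b$ are children of $H_j$. Since single-node hard interventions prune only the incoming edges at the target, the outgoing edges of $H_j$ survive in $G^{(I)}$ regardless of whether $I=\{H_i\}$ or $I=\{H_j\}$. Thus the fork $X_a\leftarrow H_j\to X_b$ is an active marginal path in $G^{(I)}$, contradicting the d-separation obtained above. The main (mild) subtlety is making sure the fork through $H_j$ is never destroyed by the intervention, which is precisely why the case split on descendancy is chosen the way it is: it lets us d-separate $H_i$ and $H_j$ at the latent level while leaving $H_j$'s outgoing edges to its children untouched.
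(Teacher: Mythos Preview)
Your proof is correct and follows essentially the same route as the paper's: the paper invokes \cref{lemma:1-target} to obtain an intervention $d$-separating $H_i$ from $H_j$ and then notes that \cref{assm:gr}(c) fails, whereas you inline the content of that lemma (the case split on descendancy) and spell out explicitly why \cref{assm:gr}(c) fails (the fork $X_a\leftarrow H_j\to X_b$ survives). The extra care you take to verify that $H_j$'s outgoing edges are untouched in both cases is exactly what the paper leaves implicit.
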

\begin{proof}
Suppose the subset condition is violated, then without loss of generality, there exist two hidden variables 
$H_i$ and $H_j$ such that $\cover(H_i) \subseteq \cover(H_j)$. Then by \cref{lemma:1-target}, there exists an interventional target such that $H_i$ and $H_j$ are $d$-seperated. But this would violate \cref{assm:gr}(c).
\end{proof}

It is, however, worth noting that the subset section is implied because of \cref{assm:complete-faimly}. If the complete family of intervention targets is not assumed to be given, one might need to assume the subset condition. This is because \cref{assm:gr}(c) relies on the international targets provided. For example, if the latent graph is complete and the one and only interventional target is the empty set, then  \cref{assm:gr}(c) is not violated for any graph.

\subsection{Maximal measurement models}
\label{appen:max}

Recall the definition of the maximal measurement model:

\maxMM*

\cref{ex:counter-maximal} shows why the maximality condition is needed.

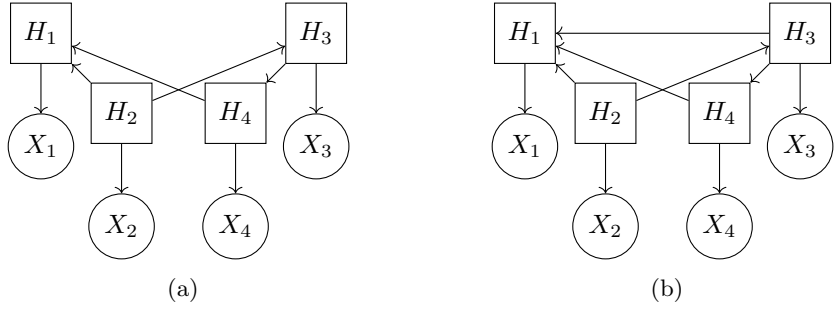
\begin{figure}[t]
    \centering
    \begin{subfigure}{.4\linewidth}
    \centering
      \begin{tikzpicture}[node distance={15mm}, 
                        latent/.style = {draw, rectangle, black, minimum size=0.8cm},
                        obs/.style = {draw, circle, black}
                        ] 
    \node[latent] (H1) {$H_1$}; 
    \node[latent] (H2) [below right of=H1] {$H_2$};
    \node[latent] (H4) [right of=H2] {$H_4$};
    \node[latent] (H3) [above right of=H4] {$H_3$};
    \node[obs] (X1) [below of=H1] {$X_1$}; 
    \node[obs] (X2) [below of=H2] {$X_2$};
    \node[obs] (X3) [below of=H3] {$X_3$};
    \node[obs] (X4) [below of=H4] {$X_4$};
    
    \draw[->, black] (H2) -- (H1);
    \draw[->, black] (H4) -- (H1);
    \draw[->, black] (H2) -- (H3);
    \draw[->, black] (H3) -- (H4);
    
    \draw[->, black] (H1) -- (X1);
    \draw[->, black] (H2) -- (X2);
    \draw[->, black] (H3) -- (X3);
    \draw[->, black] (H4) -- (X4);
    \end{tikzpicture} 
    \label{fig:assm-5-1-a}
     \caption{}
    \end{subfigure}%
    \begin{subfigure}{.4\linewidth}
    \centering
      \begin{tikzpicture}[node distance={15mm}, 
                        latent/.style = {draw, rectangle, black, minimum size=0.8cm},
                        obs/.style = {draw, circle, black}
                        ] 
    \node[latent] (H1) {$H_1$}; 
    \node[latent] (H2) [below right of=H1] {$H_2$};
    \node[latent] (H4) [right of=H2] {$H_4$};
    \node[latent] (H3) [above right of=H4] {$H_3$};
    \node[obs] (X1) [below of=H1] {$X_1$}; 
    \node[obs] (X2) [below of=H2] {$X_2$};
    \node[obs] (X3) [below of=H3] {$X_3$};
    \node[obs] (X4) [below of=H4] {$X_4$};
    
    \draw[->, black] (H2) -- (H1);
    \draw[->, black] (H4) -- (H1);
    \draw[->, black] (H2) -- (H3);
    \draw[->, black] (H3) -- (H4);
    \draw[->, black] (H3) -- (H1);
    
    \draw[->, black] (H1) -- (X1);
    \draw[->, black] (H2) -- (X2);
    \draw[->, black] (H3) -- (X3);
    \draw[->, black] (H4) -- (X4);
    \end{tikzpicture} 
    \label{fig:assm-5-1-b}
    \caption{}
    \end{subfigure}%
    \caption{An Example of maximal measurement model}
    \label{fig:assm-5-1}
    \end{figure}

\begin{example}[Maximal measurement model]
\label{ex:counter-maximal}
Consider the two measurement models $G_{(a)}$ and $G_{(b)}$ in \cref{fig:assm-5-1}. We will construct structural causal models for both $G_{(a)}$ and $G_{(b)}$ such that they satisfy \cref{assm:gr} and with a complete family of interventional targets (\cref{assm:complete-faimly}), the interventional distributions of observed variables for both DAGs are the same.  In particular, the CI relations of observed variables are the same under different interventions. 
Furthermore, $G_{(b)}$ encodes strictly more dependencies than $G_{(a)}$ and is a maximal measurement model.
Since these two graphs cannot be distinguished from the set of interventional distributions $\{P_X^{(\Itarget)}\}_{\Itarget \in \ItargetFamily}$ alone, and one encodes more information than the other, this motivates why we consider maximal measurement models. 

Let's first examine the structure of these two measurement models and show that $G_{(b)}$ is maximal. 

For $G_{(a)}$, if the intervention target $\Itarget$ is $\emptyset$, $X_1$, $X_2$, $X_3$ and $X_4$ are $d$-connected. If the intervention target $\Itarget$ is $\{ H_1\}$, then only $X_2$, $X_3$, $X_4$ are $d$-connected. If the intervention target $\Itarget$ is $\{ H_2\}$, then $X_1$, $X_2$, $X_3$ and $X_4$ are $d$-connected. If the intervention target $\Itarget$ is $\{ H_3\}$, then $X_1$, $X_3$, $X_4$ are $d$-connected and $X_2$ is only $d$-connceted to $X_1$. If the intervention target $\Itarget$ is $\{ H_4\}$, then $X_1$, $X_2$, $X_3$ are $d$-connected and $X_4$ is only $d$-connceted to $X_1$.

Similarly, for $G_{(b)}$, if the intervention target $\Itarget$ is $\emptyset$, if the intervention target $\Itarget$ is $\emptyset$, $X_1$, $X_2$, $X_3$ and $X_4$ are $d$-connected. If the intervention target $\Itarget$ is $\{ H_1\}$, then only $X_2$, $X_3$, $X_4$ are $d$-connected. If the intervention target $\Itarget$ is $\{ H_2\}$, then $X_1$, $X_2$, $X_3$ and $X_4$ are $d$-connected. If the intervention target $\Itarget$ is $\{ H_3\}$, then $X_1$, $X_3$, $X_4$ are $d$-connected and $X_2$ is only connceted to $X_1$. If the intervention target $\Itarget$ is $\{ H_4\}$, then $X_1$, $X_2$, $X_3$ are $d$-connected and $X_4$ is only connceted to $X_1$.

Therefore, by \cref{lemma:same-clique-same-ci}, for the complete family of interventional targets $\ItargetFamily_{(a)}$ of $G_{(a)}$ and  $\ItargetFamily_{(b)}$ of $G_{(b)}$,
$\{\tupleset_X(G_{(a)}^{(\Itarget)})\}_{ \Itarget \in \ItargetFamily_{(a)}}$ is the same as $\{\tupleset_X(G_{(b)}^{(\Itarget)})\}_{ \Itarget \in \ItargetFamily_{(b)}}$.

One can also check $G_{(b)}$ is maximal because adding additional edges to $G_{(b)}$ would lead to different $d$-separation statements of observed variables under interventions.

Now we construct structural causal models for each DAG.
Specifically, for $G_{(a)}$, we have the following structural equations:
\begin{equation*}
\begin{split}
    H_2 &\leftarrow \epsilon_1^a \quad \epsilon_1^a \leftarrow \bern(0.5) \\
    H_3 &\leftarrow H_2\epsilon_2^a \quad \epsilon_2^a \leftarrow \rande(0.5) \\
    H_4 &\leftarrow H_3\epsilon_3^a \quad \epsilon_3^a \leftarrow \bern(0.5) \\
    H_1 &\leftarrow H_2H_4\epsilon_4^a \quad \epsilon_4^a \leftarrow \bern(0.5) \\
    X_1 &\leftarrow H_1  \\
    X_2 &\leftarrow H_2  \\
    X_3 &\leftarrow H_3  \\
    X_4 &\leftarrow H_4  \\
\end{split}
\end{equation*}

And for $G_{(b)}$, we have the following structural equations:
\begin{equation*}
\begin{split}
    H_2 &\leftarrow \epsilon_1^b \quad \epsilon_1^b \leftarrow \bern(0.5) \\
    H_3 &\leftarrow H_2\epsilon_2^b \quad \epsilon_2^b \leftarrow \rande(0.5) \\
    H_4 &\leftarrow H_3\epsilon_3^b \quad \epsilon_3^b \leftarrow \bern(0.5) \\
    H_1 &\leftarrow H_2H_3^2H_4\epsilon_4^b \quad \epsilon_4^b \leftarrow \bern(0.5) \\
    X_1 &\leftarrow H_1  \\
    X_2 &\leftarrow H_2  \\
    X_3 &\leftarrow H_3  \\
    X_4 &\leftarrow H_4  \\
\end{split}
\end{equation*}

For $G_{(a)}$, if the intervention target $\Itarget$ is $\emptyset$, then $X_2 = \epsilon_1^a$, $X_3 = \epsilon_1^a\epsilon_2^a$, $X_4 = \epsilon_1^a\epsilon_2^a\epsilon_3^a$ and $X_1 = \epsilon_1^a\epsilon_2^a\epsilon_3^a\epsilon_4^a$. 

For $G_{(b)}$, if the intervention target $\Itarget$ is $\emptyset$, then $X_2 = \epsilon_1^b$, $X_3 = \epsilon_1^b\epsilon_2^b$, $X_4 = \epsilon_1^b\epsilon_2^b\epsilon_3^b$ and $X_1 = \epsilon_1^b\epsilon_2^b\epsilon_3^b\epsilon_4^b$. 

For $G_{(a)}$, if the intervention target is $\{H_2\}$ and $H_2 \leftarrow \epsilon_1'$ where $\epsilon_1'$ is another Bernoulli random variable, then $X_2 = \epsilon_1'$, $X_3 = \epsilon_1'\epsilon_2^a$, $X_4 = \epsilon_1'\epsilon_2^a\epsilon_3^a$ and $X_1 = \epsilon_1'\epsilon_2^a\epsilon_3^a\epsilon_4^a$.

For $G_{(b)}$, if the intervention target is $\{H_2\}$ and $H_2 \leftarrow \epsilon_1'$ where $\epsilon_1'$ is another Bernoulli random variable, then $X_2 = \epsilon_1'$, $X_3 = \epsilon_1'\epsilon_2^b$, $X_4 = \epsilon_1'\epsilon_2^b\epsilon_3^b$ and $X_1 = \epsilon_1'\epsilon_2^b\epsilon_3^b\epsilon_4^b$.

For $G_{(a)}$, if the intervention target is $\{H_3\}$ and $H_3 \leftarrow \epsilon_2'$ where $\epsilon_2'$ is another Bernoulli random variable, then $X_2 = \epsilon_1^a$, $X_3 = \epsilon_2'$, $X_4 = \epsilon_2'\epsilon_3^a$ and $X_1 = \epsilon_1^a\epsilon_2'\epsilon_3^a\epsilon_4^a$.

For $G_{(b)}$, if the intervention target is $\{H_3\}$ and $H_3 \leftarrow \epsilon_2'$ where $\epsilon_2'$ is another Bernoulli random variable, then $X_2 = \epsilon_1^b$, $X_3 = \epsilon_2'$, $X_4 = \epsilon_2'\epsilon_3^b$ and $X_1 = \epsilon_1^b\epsilon_2'\epsilon_3^b\epsilon_4^b$.

For $G_{(a)}$, if the intervention target is $\{H_4\}$ and $H_4 \leftarrow \epsilon_3'$ where $\epsilon_3'$ is another Bernoulli random variable, then $X_2 = \epsilon_1^a$, $X_3 = \epsilon_1^a\epsilon_2^a$, $X_4 = \epsilon_3'$ and $X_1 = \epsilon_1^a\epsilon_3'\epsilon_4^a$.

For $G_{(b)}$, if the intervention target is $\{H_4\}$ and $H_4 \leftarrow \epsilon_3'$ where $\epsilon_3'$ is another Bernoulli random variable, then $X_2 = \epsilon_1^b$, $X_3 = \epsilon_1^b\epsilon_2^b$, $X_4 = \epsilon_3'$ and $X_1 = \epsilon_1^b\epsilon_3'\epsilon_4^b$.

For $G_{(a)}$, if the intervention target $\Itarget$ is $\{ H_1 \}$ and $H_1 \leftarrow \epsilon_4'$, then $X_2 = \epsilon_1^a$, $X_3 = \epsilon_1^a\epsilon_2^a$, $X_4 = \epsilon_1^a\epsilon_2^a\epsilon_3^a$ and $X_1 = \epsilon_4'$. 

For $G_{(b)}$, if the intervention target $\Itarget$ is $\{ H_1 \}$ and $H_1 \leftarrow \epsilon_4'$, then $X_2 = \epsilon_1^b$, $X_3 = \epsilon_1^b\epsilon_2^b$, $X_4 = \epsilon_1^b\epsilon_2^b\epsilon_3^b$ and $X_1 = \epsilon_4'$. 

Note that the four pairs of $(\epsilon_1^a, \epsilon_1^b)$, $(\epsilon_2^a, \epsilon_2^b)$, $(\epsilon_3^a, \epsilon_3^b)$ and $(\epsilon_4^a, \epsilon_4^b)$ have the same distributions.

Therefore, both $G_{(a)}$ and $G_{(b)}$ can induce the same $\{P_X^{(\Itarget)}\}_{ \Itarget \in \ItargetFamily}$. And one can check that \cref{assm:gr} still hold.
\end{example}

In general, there are multiple DAGs that are Markov to a given distribution, and the question is how do we decide on the correct “minimal” representation. Without latents, there is no ambiguity: We can always test all possible CI relations and obtain a complete picture to obtain a minimal I-map. With latents, we must be careful:

\begin{itemize}
    \item If we can check CI relations over all of $(X,H)$, then the usual notion of a minimal I-map prevails. But in practice, we cannot access $P(X,H)$ since H is unobserved.
    \item Thus, in practice, we should restrict our attention to information about $P(X)$ only. In this case, we argue that we should only remove an edge if its removal can be justified on the basis of information about $P(X)$ only. This is the essence of maximality: We only remove an edge if it follows from the observed data $X$. Otherwise, we remain agnostic: We do not want to remove an edge that may in fact reflect a “real” dependence over H.
\end{itemize}

This encapsulates the concept of maximality, and the underlying intuition is akin to that of maximal ancestral graphs. As a result, our characterization of the maximal measurement model aligns with the core principles of the maximal ancestral graph. Measurement models have latent variables and we only have access to partial information (ie., observed variables). Since two measurement models can encode the same set of conditional independencies over $X$ and the absence of edges encodes nontrivial information, the removal of an edge should be justified carefully on the data we have available. 

\section{Proofs for \cref{sec:bipar}}
\label{sec:appendex-biparG}
In this section, we provide all proofs that were left out in \cref{sec:bipar}. Recall that we assume that $G$ is a maximal measurement model (\cref{defn:maxMM}) satisfying Assumption~\ref{assm:gr}, and that Assumption~\ref{assm:complete-faimly} also holds.

This first proposition shows that maximal valid subsets can be used to identify bipartite graphs.

\hiddenismaximal*

\begin{proof}
By \cref{assm:gr}(a) and \cref{assm:gr}(b), two observed variables $X_i$ and $X_j$ are dependent if and only if $X_i$ and $X_j$ are $d$-connected. Therefore, $\UDG(G^{(\Itarget)})=\UDG(P^{(\Itarget)})$ under any intervention target $\Itarget$.

Any two variables in $\cover(H_i)$ would stay $d$-connected under any unknown intervention on latent variables via the common parent $H_i$. So $\cover(H_i)$ are always in the same clique for any $\UDG(G^{(\Itarget)})$ and thus $\cover(H_i)$ must be a valid subset. 

Suppose $\cover(H_i)$ is not maximal. Then there exists a valid subset $X' \subseteq X$ such that $\cover(H_i) \subsetneq X'$, and by definition, for any intervention target $\Itarget$ and any maximal clique $\Cliq \in \MaxCliq_{G}^{(\Itarget)}$ such that $\cover(H_i) \subseteq \Cliq$, we have $\cover(H_i) \subsetneq X' \subseteq \Cliq$. Let $e$ be an element in $X' \setminus \cover(H_i)$. Consider the following new graph $G'$ with added edge $H_i \rightarrow e$. We know that for any maximal clique $\Cliq$ such that $\cover(H_i) \subseteq \Cliq$, we have $\cover(H_i) \subseteq \cover(H_i) \cup \{ e \} \subseteq X' \subseteq \Cliq$. 

To get a contradiction, we need to show that $\UDG(G^{(\Itarget)}) = \UDG(G'^{(\Itarget)})$, for any intervention target $\Itarget$. Because we are adding an additional edge to $G'$, existing active paths between any observed variables in $G$ still exist in $G'$. Therefore, $\UDG(G'^{(\Itarget)})$ can only have more edges, and any potential additional edges in $\UDG(G'^{(\Itarget)})$ must be between $e$ and some observed node $X_i$. There are two cases. (a) Suppose $X_i \in \cover(H_i)$, then because $e$ is always in the same clique as $\cover(H_i)$, the edge $X_i-e$ must already exist in $\UDG(G^{(\Itarget)})$. (b) Suppose $X_i \in \cover(H_j)$ where $H_i \neq H_j$. Then $X_i$ and $e$ are $d$-connected because there is an active path between $H_i$ and $H_j$ under intervention target $\Itarget$. By \cref{lemma:connected-hidden_maximal}, there exists a maximal clique $\Cliq \in \MaxCliq_{G}^{(\Itarget)}$ where $\cover(H_i) \cup \cover(H_j) \subseteq \Cliq$. However, we also know that $\cover(H_i) \subseteq \cover(H_i) \cup \{ e \} \subseteq X' \subseteq \Cliq$. Therefore, the edge $X_i - e$ must already exist in $\UDG(G^{(\Itarget)})$. Then, $\{\MaxCliq_{G}^{(\Itarget)}\}_{\Itarget \in \ItargetFamily}$ is the same as $\{\MaxCliq_{G'}^{(\Itarget)}\}_{\Itarget \in \ItargetFamily}$. And by \cref{lemma:same-clique-same-ci}, $\{\tupleset_X(G^{(\Itarget)})\}_{ \Itarget \in \ItargetFamily}$ is the same as $\{\tupleset_X(G'^{(\Itarget)})\}_{ \Itarget \in \ItargetFamily}$. 
By \cref{lemma:maximal-violation}, this is not possible because $G$ is maximal.
\end{proof}

\subsection{Replaceable and imaginary subsets}
\label{append:replaceable-img}

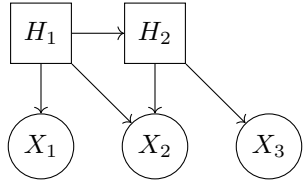
\begin{figure}[t]
  \centering
\begin{tikzpicture}[node distance={15mm}, 
                    latent/.style = {draw, rectangle, black, minimum size=0.8cm},
                    obs/.style = {draw, circle, black}
                    ] 

\node[latent] (H1) {$H_1$}; 
\node[latent] (H2) [right of=H1] {$H_2$};

\node[obs] (X1) [below of=H1]  {$X_1$}; 
\node[obs] (X2) [right of=X1]  {$X_2$}; 
\node[obs] (X3) [right of=X2]  {$X_3$}; 

\draw[->, black] (H1) -- (H2);

\draw[->, black] (H1) -- (X1);
\draw[->, black] (H1) -- (X2);
\draw[->, black] (H2) -- (X2);
\draw[->, black] (H2) -- (X3);
\end{tikzpicture} 
\caption{In this example, $\{ X_2 \}$ is a maximal valid subset and is contained in another maximal valid subset $\{ X_1, X_2\}$}
\label{fig:replaceable}
\end{figure}

Somewhat unintuitively, it is possible that a maximal valid subset can be contained in another maximal valid subset. The reason is that $X''$ in \cref{defn:imagine} is not allowed to depend on the clique $C$.

\begin{example}
\label{ex:replace}
Consider $G$ defined in \cref{fig:replaceable}. When $\Itarget=\{H_1\}$ or $\Itarget=\emptyset$, there is one maximal clique $\Cliq = \{ X_1, X_2, X_3\} \in \MaxCliq_{G}^{(\Itarget)}$. When $\Itarget=\{H_2\}$, then there are two maximal cliques in $\MaxCliq_{G}^{(\Itarget)}$: $\{X_1, X_2 \}$ and $\{X_2, X_3 \}$. Thus, $$\MaxCliq=\{\{X_1, X_2 \},\{X_2, X_3 \},\{ X_1, X_2, X_3\}\},$$ and the valid subsets are $\{X_1\}$, $\{X_2\}$, $\{X_3\}$, $\{X_1,X_2\}$, and $\{X_2,X_3\}$.
It is clear that $\{X_1, X_2 \}$ is a maximal valid subset by the definition. But it turns out that $\{X_2 \}$ is also maximal: Even though there are valid sets between $\{X_2 \}$ and each clique in $\MaxCliq$, there is not a \emph{single} valid subset with this property. 
\end{example}

\thmnoimg*
\begin{proof}

$(\Rightarrow)$ Suppose there exists a non-replaceable maximal valid subset $X'$ such that there is no hidden variable $H_i$ with $\cover(H_i) = X'$. Then $X'$ is imaginary, which is a contradiction. Note that $X'$ cannot be a proper set of any $\cover(H_i)$ either, because it is non-replaceable and all $\cover(H_i)$ are maximal valid subsets by \cref{prop: hidden-is-maximal}.

$(\Leftarrow)$ On the other hand, there exists a hidden variable $H_i$ such that $\cover(H_i) = X'$. Then it must be non-replaceable. Suppose it is replaceable, then, by definition, there exists another maximal valid subset $X''$ such that $X' \subsetneq X''$. Then by the subset condition (\cref{lemma:subset}), $X''$ must be imaginary. Suppose $X''$ is replaceable, then there must exist another non-replaceable imaginary subset that contains both $X'$ and $X''$, which is also a contradiction.
\end{proof}

There might exist a hidden variable $H_i$ such that $\cover(H_i)$ is replaceable (\cref{ex:replace-real}). 

\begin{example}[Non-imaginary set can be replaceable]
\label{ex:replace-real}
Consider \cref{fig:replace-real} below. When the intervention target $\Itarget$ is $\emptyset$, $\{H_4\}$ or $\{H_1\}$, then there are two maximal cliques: $\{X_1, X_2, X_3, X_5 \}$ and $\{X_2, X_3, X_4, X_5 \}$. When the intervention target $\Itarget$ is $\{H_2\}$, then there are three maximal cliques: $\{X_1, X_5\}$, $\{ X_4 \}$ and $\{ X_2, X_3, X_5 \}$. When the intervention target is $\{H_3\}$, then there are three maximal cliques: $\{X_2, X_4 \}$, $\{X_3, X_5 \}$ and $\{X_1, X_2, X_5\}$.  Both $\{ X_2\}$ and $\{X_2, X_5 \}$ are maximal valid subsets. But $\{X_2, X_5 \}$ is imaginary. One might be tempted to add another edge $H_2 \rightarrow X_5$ which would create a clique of $\{X_2, X_4, X_5 \}$ when intervening on $H_3$.
\end{example}

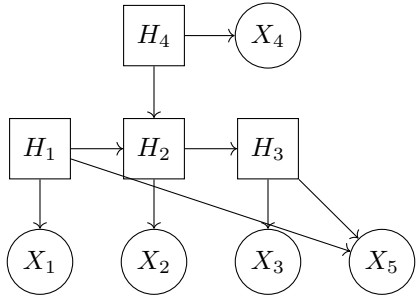
\begin{figure}[t]
\centering
\begin{tikzpicture}[node distance={15mm}, 
                    latent/.style = {draw, rectangle, black, minimum size=0.8cm},
                    obs/.style = {draw, circle, black}
                    ] 

\node[latent] (H1) {$H_1$}; 
\node[latent] (H2) [right of=H1] {$H_2$};
\node[latent] (H3) [right of=H2] {$H_3$};
\node[latent] (H4) [above of=H2] {$H_4$};

\node[obs] (X4) [right of=H4]  {$X_4$}; 
\node[obs] (X1) [below of=H1]  {$X_1$}; 
\node[obs] (X2) [below of=H2]  {$X_2$}; 
\node[obs] (X3) [below of=H3]  {$X_3$}; 
\node[obs] (X5) [right of=X3]  {$X_5$};

\draw[->, black] (H1) -- (H2);
\draw[->, black] (H2) -- (H3);
\draw[->, black] (H4) -- (H2);

\draw[->, black] (H1) -- (X1);
\draw[->, black] (H1) -- (X5);
\draw[->, black] (H2) -- (X2);
\draw[->, black] (H3) -- (X3);
\draw[->, black] (H3) -- (X5);
\draw[->, black] (H4) -- (X4);
\end{tikzpicture} 
\caption{In this example, $\{ X_2 \}$ is a maximal valid subset and is contained in another maximal valid subset $\{ X_2, X_5\}$ even though $\{ X_2\}$ is the child set of $H_2$.}
\label{fig:replace-real}
\end{figure}

\subsection{Single source node}
It can be shown that imaginary subsets do not exist if there is only one single hidden source node, i.e. a hidden variable such that $\pa(H_i)=\emptyset$. Note that this still allows for arbitrarily many hidden variables $|H|>1$.

For any two observed variables $X_a$ and $X_b$. Let $P^{(a,b)}$ be the set of all active paths between $X_a$ and $X_b$ in $G^{(\emptyset)}$.

\begin{defn}[Dominated path]
An active path $P$ between $X_a$ and $X_b$ is called a $(X_a, X_b)$-dominated path if either of the followings is true:

\begin{enumerate}[label=(\arabic*)]
    \item $P$ is a common parent path (i.e., $P$ is $X_a \leftarrow H_k \rightarrow X_b$ for some hidden variable $H_k$) and every directed path to $\source(P)$ has at least one node $H_i \neq H_k$ such that such that $\{X_a, X_b \} \subseteq \cover(H_i)$.

    \item $P$ is not a common parent path and (a) every directed path to $\source(P)$ has at least one node $H_i$ such that $\{X_a, X_b \} \subseteq \cover(H_i)$ or (b) there exists a hidden node $H_j$ in $P$ such that $X_a \leftarrow H_j \rightarrow X_b$ is a non-$(X_a, X_b)$-dominated path (see (1)).
\end{enumerate}

\begin{remark}
We consider the empty path from $H_i$ to $H_i$ as a directed path.
\end{remark}

\begin{remark}
To slightly abuse notation, when $\source(P)$ only has one element, we also use $\source(P)$ to mean that element. 
\end{remark}

\end{defn}

\cref{ssec:key-lemmas-single-source} gives some lemmas to characterize dominated paths.

\corsinglesource*
\begin{proof}
Suppose there is an imaginary subset $X'$. Then it must have at least three elements in it. If $|X'| = 1$, then it must be a subset of $\cover(H_i)$ for some $H_i$. If $|X'| = 2$, then these two nodes do not share the same parent. Thus, there must exist at least two active paths between them in $G^{(\emptyset)}$ because of \cref{lemma:1-target}. By definition, these two active paths are non-dominated. But because there is only one single source node in the latent DAG, by \cref{prop:char-two-paths}, this is not possible.

For every pair of nodes in $X'$, there could only be one non-dominated path between them because the necessary conditions in \cref{prop:char-two-paths} cannot be satisfied when there is only one single source node and \cref{lemma:at-least-one-non-dominated} shows that there is at least one non-dominated path.

By \cref{prop:every-pair}, $X'$ cannot be imaginary.
\end{proof}

\subsubsection{Key lemmas and propositions for proving \cref{cor:single-source}}
\label{ssec:key-lemmas-single-source}

The following two lemmas characterize dominated paths. 

\begin{lemma}
\label{lemma:at-least-one-non-dominated}
If $X_a$ and $X_b$ are $d$-connected in $G^{(\emptyset)}$, then $P^{(a,b)}$ has at least one non-$(X_a, X_b)$-dominated path. 
\end{lemma}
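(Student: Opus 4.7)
The plan is to split on whether $X_a$ and $X_b$ share a common latent parent. Let $\mathcal{H}_{a,b} = \{H_i \in H : \{X_a, X_b\} \subseteq \cover(H_i)\}$. I will treat the two cases $\mathcal{H}_{a,b} \neq \emptyset$ and $\mathcal{H}_{a,b} = \emptyset$ separately, producing in each case an explicit non-dominated witness in $P^{(a,b)}$.

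In the case $\mathcal{H}_{a,b} \neq \emptyset$, the idea is to pick a \emph{minimal} element of $\mathcal{H}_{a,b}$ with respect to the ancestor order of $\latentG$. Concretely, let $H^* \in \mathcal{H}_{a,b}$ be such that no proper ancestor of $H^*$ in $\latentG$ lies in $\mathcal{H}_{a,b}$; such an $H^*$ exists because $\latentG$ is a DAG. The common parent path $P^* = X_a \leftarrow H^* \rightarrow X_b$ is always active in $G^{(\emptyset)}$ and thus lies in $P^{(a,b)}$, with $\source(P^*) = H^*$. For $P^*$ to be $(X_a, X_b)$-dominated, clause (1) of the definition requires that \emph{every} directed path ending at $H^*$ contains some $H_i \neq H^*$ with $H_i \in \mathcal{H}_{a,b}$. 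But any such $H_i$ would be a proper ancestor of $H^*$ in $\mathcal{H}_{a,b}$, contradicting the minimality of $H^*$. Hence $P^*$ is non-dominated.

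In the case $\mathcal{H}_{a,b} = \emptyset$, no common parent path between $X_a$ and $X_b$ exists at all. Since $X_a$ and $X_b$ are $d$-connected in $G^{(\emptyset)}$, $P^{(a,b)}$ is nonempty; pick any $P \in P^{(a,b)}$, which is necessarily not a common parent path, so it falls under clause (2) of the definition. Clause (2)(b) requires the existence of a hidden $H_j$ on $P$ with $X_a \leftarrow H_j \rightarrow X_b$ a non-dominated common parent path, but no common parent path exists at all, so (b) fails. Clause (2)(a) requires every directed path to $\source(P)$ to contain some $H_i \in \mathcal{H}_{a,b}$; since $\mathcal{H}_{a,b} = \emptyset$, this condition is also falsified (e.g., by the empty directed path at $\source(P)$). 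Thus $P$ is non-dominated.

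The main technical point is the careful interpretation of the clauses of the dominated-path definition in conjunction with the remark that the empty path from $H_i$ to $H_i$ counts as a directed path, and I expect this bookkeeping to be the only real obstacle: once that is fixed, the ancestor-minimality argument in Case 1 and the vacuity argument in Case 2 are short. I would double-check in particular that the chosen $H^*$ in Case 1 is legitimate even when $H^*$ is itself a source of $\latentG$ (the empty path to $H^*$ trivially has no $H_i \neq H^*$, which is consistent with $P^*$ being non-dominated) and that Case 2 does not implicitly require $\source(P)$ to have a nontrivial ancestor in $\latentG$.
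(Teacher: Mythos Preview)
Your proposal is correct and follows essentially the same route as the paper: both split on whether $X_a$ and $X_b$ share a latent parent, select an ancestor-minimal member of $\mathcal{H}_{a,b}$ to obtain a non-dominated common-parent path in the first case, and observe that every active path is vacuously non-dominated in the second. Your write-up is in fact slightly more careful than the paper's, which simply asserts the Case~2 conclusion without unpacking clauses (2)(a) and (2)(b).
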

\begin{proof}
Because $X_a$ and $X_b$ are $d$-connected in $G^{(\emptyset)}$, by definition, $P^{(a,b)}$ is not empty. 

Now let's consider two cases:
\begin{enumerate}[label=(\alph*)]
    \item There exists at least one hidden node that is a common parent of $X_a$ and $X_b$. Without loss of generality, let $H_1$ be a common parent of $X_a$ and $X_b$. Because there is no circle and there is only a finite number of hidden variables, there exists a hidden variable $H_2$ that could be $H_1$ or an ancestor of $H_1$ such that $\{X_a, X_b\} \subseteq \cover(H_2)$ and $H_2$ does not have an ancestor such that both $X_a$ and $X_b$ are its children. By definition, $X_a \leftarrow H_2 \rightarrow X_b$ is non-$(X_a, X_b)$-dominated.

    \item There is no hidden node that is a common parent of $X_a$ and $X_b$. Then every path in $P^{(a,b)}$ is non-$(X_a, X_b)$-dominated. Because $P^{(a,b)}$ is not empty, $P^{(a,b)}$ has at least one non-$(X_a, X_b)$-dominated path. 
\end{enumerate}
\end{proof}

\begin{lemma}
\label{lemma:only-one-means-common-parent}
If $\{X_a, X_b \}$ is a valid subset and $P^{a,b}$ has only one non-$(X_a, X_b)$-dominated path, then the non-$(X_a, X_b)$-dominated path must be a common parent path.
\end{lemma}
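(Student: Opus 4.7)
The plan is to argue by contradiction: suppose $\{X_a,X_b\}$ is a valid subset, $P^{(a,b)}$ has a unique non-$(X_a,X_b)$-dominated path $P^*$, and for contradiction $P^*$ is not a common-parent path. I would then split into two cases based on whether any hidden variable $H$ satisfies $\{X_a,X_b\}\subseteq \cover(H)$, i.e.\ whether a common parent of $X_a,X_b$ exists at all.

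In the first case, assume some common parent exists and let $H^*$ be a topologically source-most common parent (so $\{X_a,X_b\}\subseteq\cover(H^*)$ but no strict ancestor of $H^*$ is a common parent of $X_a,X_b$). Consider the common-parent path $Q^\ast:=X_a\leftarrow H^*\rightarrow X_b$; its source is $H^*$ and, by the stated convention, the empty directed path from $H^*$ to itself qualifies as a directed path to $\source(Q^\ast)$. That path contains only $H^*$, so it trivially fails to contain any $H_i\neq H^*$ with $\{X_a,X_b\}\subseteq\cover(H_i)$, and therefore condition~(1) of the definition of dominated fails. Hence $Q^\ast$ is a non-$(X_a,X_b)$-dominated common-parent path, necessarily distinct from $P^*$ since $P^*$ is assumed not to be a common-parent path. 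This contradicts uniqueness.

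In the second case, no common parent exists. For any path $P\in P^{(a,b)}$, condition (a) of the definition of dominated requires every directed path to $\source(P)$ to contain some $H_i$ with $\{X_a,X_b\}\subseteq\cover(H_i)$; no such $H_i$ exists anywhere in the graph, so (a) fails, and (b) fails vacuously. Every active path is therefore non-dominated, and uniqueness forces $P^{(a,b)}=\{P^*\}$. Because observed nodes have no children in a measurement model and $P^*$ is not a common-parent path, $P^*$ must have length $\geq 3$ and hence contain a hidden internal node $H_i\neq \source(P^*)$. By \cref{assm:complete-faimly} we may intervene on $\{H_i\}$; this removes an incoming edge of $P^*$, breaking the only active path, and hard interventions cannot create new active paths. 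Thus $X_a\indep_{P^{(\{H_i\})}} X_b$, which by \cref{assm:gr}(a) forces $X_a,X_b$ to be non-adjacent in $\UDG(P^{(\{H_i\})})$, contradicting the validity of $\{X_a,X_b\}$.

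The main subtle point is the case-2 (no common parent) argument, where one must combine the vacuous failure of both clauses in the definition of dominated with the measurement-model observation that any non-common-parent active path has a non-source hidden internal node available for intervention; the case-1 argument is then just an application of the topological source-most trick together with the empty-path convention built into the definition of dominated.
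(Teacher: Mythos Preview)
Your proof is correct and follows the same two-case split as the paper: in Case~1 you take a source-most common parent $H^*$ and exhibit the non-dominated common-parent path $X_a\leftarrow H^*\rightarrow X_b$, and in Case~2 you observe that with no common parent every active path is non-dominated and then derive a contradiction with validity (the paper phrases this last step as ``there must be at least two active paths, by \cref{lemma:1-target}'', which is the contrapositive of your explicit intervention argument). One small caveat: your appeal to the empty path in Case~1 is superfluous and, if taken literally, would make \emph{every} common-parent path non-dominated, which is inconsistent with how the paper uses dominated common-parent paths elsewhere (e.g.\ \cref{lemma:non-dominate-is-ancester}); the operative reason $Q^*$ is non-dominated is exactly your source-most choice of $H^*$, which already ensures no directed path to $H^*$ contains a common parent $H_i\neq H^*$.
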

\begin{proof}
Let's consider two cases:
\begin{enumerate}[label=(\alph*)]
    \item There is no hidden node that is a common parent of $X_a$ and $X_b$. Because $\{X_a, X_b \}$ is a valid subset, there must exist at least two active paths between $X_a$ and $X_b$ in $G^{\emptyset}$. Otherwise, by \cref{lemma:1-target}, $X_a$ and $X_b$ must be $d$-separated for an intervention target. By definition, any active paths between $X_a$ and $X_b$ are non-$(X_a, X_b)$-dominated paths. So, this case is not possible.

    \item There exists at least one hidden node that is a common parent of $X_a$ and $X_b$. Without loss of generality, let $H_1$ be a common parent of $X_a$ and $X_b$. If the active path $X_a \leftarrow H_1 \rightarrow X_b$ is $(X_a, X_b)$-dominated, we can find an ancestor of $H_1$ called $H_2$ such that $\{X_a, X_b\} \subseteq \cover(H_2)$. Because there is only a finite number of hidden variables and there is no circle, we can eventually find a common parent $H'$ of $X_a$ and $X_b$ such that  $X_a \leftarrow H' \rightarrow X_b$ is a non-$(X_a, X_b)$-dominated path. Therefore at least one of the  non-$(X_a, X_b)$-dominated paths is a common parent path. And if there is only one non-$(X_a, X_b)$-dominated path, it must be a common parent path. 
\end{enumerate}

\end{proof}

\begin{lemma}
\label{lemma:non-dominated-path-not-des}
For any two nodes $X_a$ and $X_b$ where $P^{a,b}$ has only one non-$(X_a, X_b)$-dominated path $P_{*}$, then there does not exist a directed path from $H_1$ to $H_2$ with either $H_1 \rightarrow X_a, H_2 \rightarrow X_b$ or $H_2 \rightarrow X_a, H_1 \rightarrow X_b$ such that $\source(P) \in \de(H_2)$.

\end{lemma}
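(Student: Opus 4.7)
The plan is to argue by contradiction: assume such a configuration exists, and produce a second active non-$(X_a,X_b)$-dominated path distinct from $P_*$, contradicting uniqueness. By symmetry take $H_1\to X_a$, $H_2\to X_b$, and $Q\colon H_1\to\cdots\to H_2$; I read the hypothesis $\source(P)\in\de(H_2)$ as $\source(P_*)\in\de(H_2)$, and by \cref{lemma:only-one-means-common-parent} the unique non-dominated path is $P_*=X_a\leftarrow H_*\to X_b$ with $H_*\in\de(H_2)$.

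First I would rule out extraneous common parents. If some $H_k\neq H_*$ were also a common parent of $X_a$ and $X_b$, then $X_a\leftarrow H_k\to X_b$ would be non-dominated: the trivial (empty) directed path to $H_k$ contains no node $H_i\neq H_k$, so clause (1) of the dominance definition fails, contradicting uniqueness of $P_*$. In particular, $H_1$ cannot be a common parent, a fact I use below.

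Next I would take the candidate $P'=X_a\leftarrow H_1\to Q\to H_2\to X_b$, which is active (every interior hidden node is a chain, with a single fork at $H_1$). Working in the main subcase $H_*\neq H_2$, so $H_*$ is a strict descendant of $H_2$, I verify non-dominance of $P'$. Clause (2b) fails because the only $H_j$ with $X_a\leftarrow H_j\to X_b$ non-dominated is $H_*$ itself, yet every node on $P'$ is an ancestor of $H_2$ while $H_*$ is a strict descendant, so $H_*\notin P'$. Clause (2a) fails because $H_1=\source(P')$ is not a common parent, and so the empty directed path to $H_1$ is already a directed path into $H_1$ that contains no common-parent node. Hence $P'$ is non-dominated and distinct from $P_*$, contradicting uniqueness.

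The main obstacle is the degenerate subcase $H_*=H_2$, in which $H_*$ does lie on $P'$ and clause (2b) applies, so the above construction fails. I plan to dispatch this subcase either by the symmetric argument—using the other endpoint configuration $H_2\to X_a$, $H_1\to X_b$ together with a parent of $X_b$ different from $H_*$ whose existence can be extracted from validity of $\{X_a,X_b\}$ (via \cref{lemma:only-one-means-common-parent}) and the already-established fact that no $H_k\neq H_*$ is a common parent—or by noting that in the intended application (\cref{cor:single-source}) the single-latent-source hypothesis collapses this configuration outright. Getting this degenerate subcase clean is the delicate part; the remainder is a routine verification against the dominance definitions.
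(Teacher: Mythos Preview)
Your argument rests on a misreading of the dominated-path definition. In your first step you claim that for any common parent $H_k\neq H_*$ the path $X_a\leftarrow H_k\to X_b$ is automatically non-dominated because ``the trivial (empty) directed path to $H_k$ contains no node $H_i\neq H_k$''; you invoke the same empty-path trick again when checking clause (2a) for $P'$. But under that reading \emph{every} common-parent path is non-dominated, which trivializes the entire framework and flatly contradicts how the paper uses the notion elsewhere: the proof of \cref{lemma:non-dominate-is-ancester} explicitly takes $X_a\leftarrow H'\to X_b$ with $H'\neq H_*$ to be \emph{dominated} and climbs to a non-dominated ancestor. The quantification in clause (1) is meant to range over nontrivial directed paths into $\source(P)$ (equivalently, paths from latent source nodes), so other common parents can give dominated paths. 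Once the empty-path shortcut is removed, your argument collapses: you cannot conclude that $H_1$ is not a common parent, and you cannot certify that $P'$ fails clause (2a) merely by exhibiting the empty path to $H_1$.

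The paper's proof goes the other way around: since $P_*$ is the unique non-dominated path, the candidate path through $H_1\to\cdots\to H_2$ is forced to be \emph{dominated}, and one then shows each dominance clause yields a contradiction---clause (2b) because $H_*\in\de(H_2)$ keeps $H_*$ off that path, and clause (2a) by invoking \cref{lemma:non-dominate-is-ancester}, which places $H_*$ as an ancestor of any other common parent $H'\in\overline{\an}(H_1)$ and hence produces a cycle. Incidentally, your ``degenerate subcase $H_*=H_2$'' is vacuous: the hypothesis is $H_*\in\de(H_2)$ with $\de$ strict, so $H_*\neq H_2$ automatically and no separate treatment is needed.
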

\begin{proof}
By \cref{lemma:only-one-means-common-parent}, $P_{*}$ is a common parent path. Let $H_{*}$ be the common parent. Now here are two cases:
\begin{enumerate}
    \item $H_1 = H_2$. In this case, $P_{*}$ would not be a non-dominated path by definition. So this is not possible.
    \item $H_1 \neq H_2$. Because this path is dominated, then either (a) $H_{*}$ must be on this path or (b) there exists a common parent of $X_a, X_b$: $H'$ such that $H' \in \overline{\an}(H_1)$. Case (a) is not possible because $H_{*}\in \de(H_2)$. Case (b) is also not possible by \cref{lemma:non-dominate-is-ancester}.
\end{enumerate}
\end{proof}

\begin{lemma}
\label{lemma:non-dominate-is-ancester}
For any two nodes $X_a$ and $X_b$ where $P^{a,b}$ has only one non-$(X_a, X_b)$-dominated path $P_{*}$, for any hidden node $H'$ that is a common parent of $X_a, X_b$ and not the same as $\source(P_{*})$, $\source(P_{*})$ is an ancestor of $H'$.
\end{lemma}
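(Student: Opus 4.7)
The plan is to combine \cref{lemma:only-one-means-common-parent} with a ``first common parent on an ancestor path'' argument. By \cref{lemma:only-one-means-common-parent}, $P_{*}$ is a common parent path $X_a \leftarrow H_* \to X_b$, where $H_* = \source(P_*)$. Let $S$ denote the set of hidden nodes that are common parents of $X_a$ and $X_b$; by hypothesis $H_*, H' \in S$. For each $H \in S$, the path $X_a \leftarrow H \to X_b$ lies in $P^{a,b}$, so uniqueness of $P_*$ as the lone non-$(X_a, X_b)$-dominated path forces the common parent path through every $H \in S \setminus \{H_*\}$ to be dominated.

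Fix $H' \in S \setminus \{H_*\}$ and pick any directed path $\pi : v_0 \to v_1 \to \cdots \to v_N = H'$ starting from a source node $v_0$ of $G$; such $\pi$ exists since $G$ is a finite DAG. Because the common parent path through $H'$ is dominated, the path $\pi$, being a directed path to $H'$, must contain some $v_i \in S$ with $v_i \neq H' = v_N$, so in particular with $i < N$. Let $j$ be the smallest such index with $v_j \in S$, and let $\pi' : v_0 \to \cdots \to v_j$ be the corresponding prefix, which is itself a directed path to $v_j$. By the minimality of $j$, no $v_\ell$ on $\pi'$ with $\ell < j$ lies in $S$, so $\pi'$ contains no $H_i \neq v_j$ with $\{X_a, X_b\} \subseteq \cover(H_i)$.

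This witness shows that the common parent path through $v_j$ is non-$(X_a, X_b)$-dominated, and the uniqueness hypothesis on $P_*$ then forces $v_j = H_*$. Since $j < N$, the node $H_*$ lies on $\pi$ strictly before $H'$, so $H_* \in \an(H')$, as claimed. The main obstacle I anticipate is reading the dominated-path definition correctly: the ``every directed path to $\source(P)$'' clause is falsified by producing a \emph{single} ancestor path $\pi'$ that carries no other common parent of $X_a$ and $X_b$, and the minimal-$j$ prefix delivers exactly such a witness, even in the boundary case $j=0$ where $\pi'$ consists only of the source $v_0$ (in which case $v_0 = H_*$ is itself a source common parent, still strictly before $H'$ on $\pi$).
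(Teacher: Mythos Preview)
Your proof is correct and uses essentially the same idea as the paper's: both locate a common parent of $X_a,X_b$ that is an ancestor of $H'$ and whose common-parent path is non-dominated, then invoke uniqueness of $P_*$ to identify it with $\source(P_*)$. The paper phrases this by contradiction with a terse finiteness argument (``eventually find $H''$''), whereas you make the same descent explicit via the first common parent on a fixed source-to-$H'$ path---a cosmetic difference only.
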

\begin{proof}
Suppose $\source(P_{*})$ is not an ancestor of $H'$. Then because $X_a \leftarrow H' \leftarrow X_b$ is a $(X_a, X_b)$-dominated path, and there is only a finite number of hidden variables, we can eventually find a common parent $H''$ of $X_a$ and $X_b$ such that  $X_a \leftarrow H'' \rightarrow X_b$ is a non-$(X_a, X_b)$-dominated path. But $\source(P_{*})$ is not an ancestor of $H'$. So it cannot be $H''$. This is a contradiction to the fact that $P^{a,b}$ has only one non-$(X_a, X_b)$-dominated path.
\end{proof}

\begin{proposition}
\label{prop:char-two-paths}
If $\{X_a, X_b\} \subseteq X'$ where $X'$ is an imaginary subset and $P^{(a,b)}$ has at least two non-$(X_a, X_b)$-dominated paths, then the following two conditions hold:

\begin{enumerate}[label=(\alph*)]
    \item  There exists a pair of non-$(X_a, X_b)$-dominated paths $(P_1, P_2)$ such that given any intervention target $\Itarget \in \ItargetFamily$, at least one of $P_1$ and $P_2$ is active in $G^{(\Itarget)}$ 

    \item For any pair of non-$(X_a, X_b)$-dominated paths $(P_1, P_2)$ that satisfies (a), there does not exist a single hidden node $H_{\#}$ that is shared by all directed path to $\source(P_1)$ and $\source(P_2)$ in $G^{(\emptyset)}$.
\end{enumerate}

\end{proposition}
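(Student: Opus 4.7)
My plan hinges on two observations. First, since $\{X_a,X_b\}\subseteq X'$ and $X'$ is a valid subset, Assumption~\ref{assm:gr}(a)--(b) imply $X_a$ and $X_b$ are $d$-connected in $G^{(\Itarget)}$ for every $\Itarget\in\ItargetFamily$, i.e.\ some active path in $G^{(\emptyset)}$ survives every single-latent intervention. Second, writing $B(P)$ for the set of non-top interior latents on a path $P\in P^{(a,b)}$, a path $P$ survives $\{H_k\}$ iff $H_k\notin B(P)$; so (a) is exactly the assertion that there exist non-dominated $P_1,P_2$ with $B(P_1)\cap B(P_2)=\emptyset$.

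To prove (a), I would dichotomize on whether any non-dominated common-parent path $X_a\leftarrow H_j\to X_b$ exists. If so, its two edges are outgoing from $H_j$ and hence survive every single-latent intervention; pairing it with the second non-dominated path supplied by the hypothesis immediately yields (a). Otherwise, rerunning the case split in the proof of \cref{lemma:at-least-one-non-dominated} forces $X_a,X_b$ to have no common parent at all in $G$, so \emph{every} path in $P^{(a,b)}$ is non-dominated. Starting from any two non-dominated paths and iteratively repairing them via a splicing argument at any shared non-top interior node $H_k$---using $d$-connectedness under $\{H_k\}$ to supply a third non-dominated path surviving $\{H_k\}$, and splicing it with $P_1$ or $P_2$ at the appropriate side to eliminate $H_k$ from the overlap---produces, after finitely many steps, the required disjoint pair.

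For (b), fix any pair $(P_1,P_2)$ witnessing (a) and suppose toward contradiction that some hidden node $H_\#$ lies on every directed path into both $\source(P_1)$ and $\source(P_2)$. Concatenating the descending half of $P_1$ from $\source(P_1)$ down to $X_a$, a directed up-path from $\source(P_1)$ to $H_\#$, a directed down-path from $H_\#$ to $\source(P_2)$, and the descending half of $P_2$ from $\source(P_2)$ down to $X_b$ yields an active ``bridge'' path $P_\#$ with top $H_\#$. The obligatory-passage hypothesis then lets me identify a single latent whose intervention simultaneously severs every ancestor path of both $\source(P_1)$ and $\source(P_2)$ and, via a careful tracing along $P_1,P_2$, blocks both paths---contradicting (a). Completeness of $\ItargetFamily$ (\cref{assm:complete-faimly}) guarantees this blocking intervention is available in the family.

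The main obstacle is the splicing step in (a): at each iteration one must verify that the spliced replacement is (i) a well-formed active path with no collider at the splice point, (ii) still non-dominated---otherwise one could extract a non-dominated common-parent path and revert to the easy case---and (iii) strictly shrinks the block-set overlap, ensuring termination. A further subtle point for (b) is converting the purely graph-theoretic ``every directed path'' hypothesis on $H_\#$ into a local blocking witness living on both $P_1$ and $P_2$; this requires a case split on whether $H_\#$ equals $\source(P_1)$, equals $\source(P_2)$, or lies strictly above both, together with a careful use of the structure of the ascending halves of the two paths.
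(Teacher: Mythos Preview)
Your plan for part (a) is essentially the paper's: split on whether a non-dominated common-parent path exists, and if it does, pair it with any second non-dominated path (a common-parent path survives every single-latent intervention). Your splicing argument for the no-common-parent branch is more explicit than what the paper writes, and the termination concern you flag is real; however, this is not where your proposal breaks.

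The genuine gap is in part (b). You aim to contradict (a) by exhibiting a single intervention that blocks both $P_1$ and $P_2$, leveraging that $H_\#$ lies on every directed path \emph{into} $\source(P_1)$ and $\source(P_2)$. But that hypothesis places $H_\#$ \emph{upstream} of the sources; the paths $P_1,P_2$ themselves run from $\source(P_i)$ \emph{down} to $X_a,X_b$ and need not contain $H_\#$ (or any ancestor of the sources) at all. Intervening on $H_\#$, or on any node on an ancestor path of $\source(P_i)$, removes only incoming edges to that node and therefore has no effect on $P_i$ unless the intervened node happens to sit on $P_i$ with an incoming edge there. In the generic situation---e.g.\ when $H_\#=\source(P_1)=\source(P_2)$, which is exactly the single-source configuration this proposition is designed to rule out---no single intervention blocks both $P_1$ and $P_2$, so your proposed contradiction with (a) never materializes. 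The ``careful tracing along $P_1,P_2$'' and the case split on whether $H_\#$ equals one of the sources do not rescue this, because in every case the obligatory-passage structure lives above the sources, not on the paths.

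The paper's argument for (b) draws the contradiction from a completely different place: \emph{maximality} of the measurement model (\cref{defn:maxMM}). From $H_\#$ one extracts a lowest common node $H_*$ on two specific directed paths into $\source(P_1)$ and $\source(P_2)$; the paths $P_{1,*},P_{2,*}$ from $H_*$ down to the two sources then have the property that under any single-latent intervention at least one of them survives. A case analysis (on how $P_{1,*},P_{2,*}$ intersect $P_1,P_2$) shows that under every intervention some parent of $X_a$ and some parent of $X_b$ remain $d$-connected to a fixed hidden node ($H_*$, $\source(P_1)$, or an auxiliary $H_!$), so adding the edges from that node to $X_a$ and $X_b$ leaves every $\UDG(G^{(\Itarget)})$ unchanged. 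By \cref{lemma:maximal-violation} this contradicts maximality. Your proposal never invokes maximality, and without it (b) cannot be closed.
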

\begin{proof}

First, let's prove the first condition. We consider two conditions:

\begin{enumerate}[label=(\alph*)]
    \item There is no hidden node that is a common parent of $X_a$ and $X_b$. Because $\{X_a, X_b \}$ is a valid subset, there must exist at least two active paths between $X_a$ and $X_b$ in $G^{(\emptyset)}$. Otherwise, by \cref{lemma:1-target}, $X_a$ and $X_b$ must be $d$-separated for an intervention target. By definition, any active paths between $X_a$ and $X_b$ are non-$(X_a, X_b)$-dominated paths.

    \item There exists at least one hidden node that is a common parent of $X_a$ and $X_b$. Without loss of generality, let $H_1$ be a common parent of $X_a$ and $X_b$. If the active path $X_a \leftarrow H_1 \rightarrow X_b$ is $(X_a, X_b)$-dominated, we can find an ancestor of $H_1$ called $H_2$ such that $\{X_a, X_b\} \subseteq \cover(H_2)$. Because there is only a finite number of hidden variables and there is no circle, we can eventually find a common parent $H'$ of $X_a$ and $X_b$ such that  $X_a \leftarrow H' \rightarrow X_b$ is a non-$(X_a, X_b)$-dominated path. Therefore at least one of the  non-$(X_a, X_b)$-dominated paths is a common parent path. Because $P^{(a,b)}$ has at least two non-$(X_a, X_b)$-dominated paths, and given any intervention target, the common parent path will not be destroyed, the first condition is true. 
\end{enumerate}

Let's denote $P_1$ to be a non-$(X_a, X_b)$-dominated active path with hidden variables $H_{a, 1}$ and $H_{b, 1}$ such that $X_a \leftarrow H_{a, 1}$ and $X_b \leftarrow H_{b, 1}$ are in $P_1$. Let $P_2$ to be an a non-$(X_a, X_b)$-dominated active path with hidden variables $H_{a, 2}$ and $H_{b, 2}$ such that $X_a \leftarrow H_{a, 2}$ and $X_b \leftarrow H_{b, 2}$ are in $P_2$. 

Now let's prove the second condition. Suppose, on the contrary, there exists such a hidden node $H_{\#}$. Because $P_1$ and $P_2$ are non-$(X_a, X_b)$-dominated, there must exist one directed path to $P_1$ and one directed path to $P_2$ such that these two directed paths do not have any hidden node $H_i$ with $\{X_a, X_b \} \subseteq \cover(H_{i})$, except for maybe $\source(P_1)$ and $\source(P_2)$. On the other hand, these two directed paths share at least one common node $H_{\#}$. Let $H_{*}$ be the common nodes with the lowest ranking in topological ordering. Then $H_{*}$ has the property that, under any intervention target $\Itarget$, one of the two directed paths $P_{1, *}$ and $P_{2, *}$, where $P_{1, *}$ is the directed path from $H_{*}$ to $\source(P_1)$ and $P_{2, *}$ is the directed path from $H_{*}$ to $\source(P_2)$, still exists. If this is not true, then there is one intervention target $\Itarget_{*}$ such that intervening on this node would break $P_{1, *}$ and $P_{2, *}$. Then $\Itarget_{*}$ would be a common node on $P_{1, *}$ and $P_{2, *}$ other than $H_{*}$. This violates the assumption $H_{*}$ has the lowest ranking in the topological ordering. Note that $\source(P_1)$ or  $\source(P_2)$ only has one element in it by \cref{lemma:d-sep-cases}.

We now have four paths $P_1$, $P_2$ $P_{1, *}$ and $P_{2, *}$. Under any intervention target $\Itarget$, one of $P_1$, $P_2$ is active and one of $P_{1, *}$ and $P_{2, *}$ is still active. Let's consider the following three cases:

\begin{enumerate}
    \item $P_{1, *}$ has no shared hidden node with $P_2$ and $P_{2, *}$ has no shared hidden node with $P_1$.

    First of all, $H_{*}$ cannot be a common parent of $X_a$ and $X_b$. Suppose $H_{*}$ is a common parent of $X_a$ and $X_b$. Then by our construction, it is only possible if $H_{*} = \source(P_1)$ and $H_{*} = \source(P_2)$. Because $P_1$ and $P_2$ are both non-dominated paths, they must both be common parent paths in this case. This would imply they are the same path, which is a contradiction. 
    
    Note that by \cref{lemma:d-sep-cases}, an active path between hidden nodes $H_1$ and $H_2$ can only be a common ancestor path or a directed path. Either way, there is only one source node of the path and every other node in the path is a descendant of that source node. Therefore, $P_{1, *}$ cannot have a shared hidden node with $P_1$ except for $\source(P_1)$, otherwise there will be a directed circle. Similarly, $P_{2, *}$ cannot have a shared hidden node with $P_2$ except for $\source(P_2)$.
    For any intervention target $\Itarget$, there are three cases:
    \begin{enumerate}
        \item None of $P_1$, $P_2$ $P_{1, *}$ and $P_{2, *}$ are affected by $\Itarget$. Then all of $H_{a,1}, H_{a,2}, H_{b,1}, H_{b,2}$ are descendents of $H_{*}$.
        \item One of $P_{1, *}$ and $P_{2, *}$ is destroyed. Without loss of generality, suppose $P_{2, *}$ no longer exists. The intervention target $\Itarget$ must be a part of the path $P_{2, *}$ and it cannot be $H_{*}$. Because $P_{2, *}$ has no shared hidden node with $P_1$ , intervening on $\Itarget$ means that $P_1$ is still intact. Therefore, $H_{a,1}, H_{b,1}$ are descendents of $H_{*}$.
        \item One of $P_{1}$ and $P_{2}$ is destroyed. Without loss of generality, suppose $P_{2}$ no longer exists. The intervention target $\Itarget$ must be a part of the path $P_{2}$ and it cannot be $\source(P_2)$. Therefore, $P_{1, *}$ is still intact. $H_{a,1}, H_{b,1}$ are descendents of $H_{*}$.
    \end{enumerate}
    Finally, in all three cases, at least one pair of $(H_{a,1}, H_{b,1})$ and $(H_{a,2}, H_{b,2})$ is a set of descendants of $H_{*}$. Now consider graph $G' = (V, E')$ with $E' = E \cup \{ H_{*} \to X_{a} \} \cup \{ H_{*} \to X_{b} \}$. Note that $E'$ is not the same as $E$ because at least one of $\{ H_{*} \to X_{a} \} $ and  $\{ H_{*} \to X_{b} \}$ is missing. Otherwise, $H_{*}$ would be a common parent of $X_a$ and $X_b$. By \cref{lemma:violation-max}, $G'$ violates the maximality.
    
    \item Without loss of generality, $P_{1, *}$ has shared hidden nodes with $P_2$ and $P_1$ is not a common parent path. 

    Note that in this case, $P_{2, *}$ cannot have a shared hidden node with $P_1$ except for one special case. Suppose the opposite is true. Let the shared node between $P_{1, *}$ and $P_2$ be $H_1$. Let the shared node between $P_{2, *}$ and $P_1$ be $H_2$. There must be a circle between $H_1$ and $H_2$ if $H_1 \neq H_2$ which is a contradiction. If $H_1 = H_2$, then $H_1 = H_2 = \source(P_1) =  \source(P_2)$. In this case,  there is always at least one pair of $(H_{a,1}, H_{b,1})$ and $(H_{a,2}, H_{b,2})$ that is a set of descendants of $ \source(P_1) $ and $\source(P_2)$ under any intervention target. In this case,  adding $\{ X_a, X_b \}$ to $\cover(\source(P_1)) $ would not change $\MaxCliq_{G}^{(\Itarget)}$ by the proof of \cref{lemma:violation-max}. By \cref{lemma:maximal-violation}, this is not possible. So we will ignore this special case.

    Now let's consider the following cases under any intervention target $\Itarget$ for when $P_{2, *}$ does not have a shared hidden node with $P_1$:
    \begin{enumerate}
        \item None of $P_1$, $P_2$ $P_{1, *}$ and $P_{2, *}$ are affected by $\Itarget$. Then $H_{a,1}, H_{b,1}$ are descendents of $\source(P_1)$.
        \item $P_{1,*}$ is destroyed. Then $P_1$ must be intact because the intervention target is in $P_{1,*}$. $H_{a,1}, H_{b,1}$ are descendents of $\source(P_1)$.
        \item $P_{2,*}$ is destroyed. Because $P_{2, *}$ does not have a shared hidden node with $P_1$, $H_{a,1}, H_{b,1}$ are descendents of $\source(P_1)$.
        
        \item $P_2$ is destroyed. Then $P_1$ must be intact. Then $H_{a,1}, H_{b,1}$ are descendents of $\source(P_1)$.

        \item $P_1$ is destroyed. Then $P_2$ must be intact. $P_{1,*}$ must also be intact because the intervention target is in $P_1$ and it is not $\source(P_1)$. Because $P_{2, *}$ cannot have a shared hidden node with $P_1$, $P_{2, *}$ is also active. Therefore, $H_{a,2}, H_{b,2}$ are connected to $\source(P_1)$ via the common ancestor $H_{*}$.
    \end{enumerate}
    
    Let's consider graph $G' = (V, E')$ with $E' = E \cup \{ \source(P_1) \to X_{a} \} \cup \{ \source(P_1) \to X_{b} \}$. Note that $E'$ is not the same as $E$ because at least one of $\{ \source(P_1) \to X_{a} \} $ and  $\{ \source(P_1) \to X_{b} \}$ is missing because $P_1$ is not a common parent path and $P_1$ is also non-$(X_a, X_b)$-dominated.

    For any intervention target $\Itarget$ that induces $(a), (b), (c), (d)$, adding $\{ X_a, X_b \}$ to $\cover(\source(P_1)) $ would not change $\MaxCliq_{G}^{(\Itarget)}$ by the proof of \cref{lemma:violation-max}.

    For any intervention target $\Itarget$ that induces $(e)$, $\UDG(G'^{\Itarget})$ does not have more edges than $\UDG(G^{\Itarget})$. If the opposite is true, then the new edge in $\UDG(G'^{\Itarget})$ must involve the added node $X_a$ or $X_{b}$. Without loss of generality, suppose the new edge is between $X_a$ and $X_k$, it must be that a parent of $X_k$ is $d$-connected to $\source(P_1)$. Suppose that the parent of $X_k$ is $H_k$. By \cref{lemma:d-sep-cases}, there are 2 cases. If $\source(P_1)$ has a directed path to $H_{k}$, then $H_{k}$ is connected to $X_a$ and $X_b$ via the common ancestor $H_{*}$. If $H_{k}$ has a directed path to $\source(P_1)$ or $H_{k}$ and $\source(P_1)$ share a common ancestor, by our assumption, $H_{k}$ and $\{ X_a, X_b \}$ are connected by common ancestor $H_{\#}$. Therefore, $X_k$ and $X_a$  are $d$-connected in $\UDG(G^{\Itarget})$. 
    
    So adding $\{ X_a, X_b \}$ to $\cover(\source(P_1)) $ would not change $\MaxCliq_{G}^{(\Itarget)}$.

    Finally, by \cref{lemma:maximal-violation}, this is not possible.

    \item Without loss of generality, $P_{1, *}$ has shared hidden nodes with $P_2$ and $P_1$ is a common parent path. 

    Let $H_{!}$ be the shared node between $P_{1, *}$ and $P_2$ that has the lowest topological ranking. Therefore, for any intervention target $\Itarget$, at least one of $P_2$ and the directed path from $H_{!}$ to $\source(P_1)$ is intact. Otherwise, $H_{!}$ is not the shared node between $P_{1, *}$ and $P_2$ that has the lowest topological ranking.

    First, let's consider two scenarios. 

    \begin{enumerate}[label=(\arabic*)]
        \item Both $P_1$ and $P_2$ are common parent paths. First of all, $\source(P_1)$ is not the same as $\source(P_2)$. Otherwise, $P_1$ and $P_2$ would be the same path. On the other hand, if $\source(P_2)$ appears in $P_{1, *}$ because there is the only possible node that can be shared and is not $\source(P_1)$, it would violate how $P_{1, *}$ is chosen. So this scenario is not possible. 
        \item $P_1$ is a common parent path and $P_2$ is not. First, $H_{!}$ is not the same as $\source(P_1)$ because $H_{!}$ is also on $P_2$ which is a non-$(X_a, X_b)$-dominated path. By definition, this is not possible. On the other hand, $H_{!}$ is not a common parent of $X_a$ and $X_b$ because it would violate how $P_{1, *}$ is chosen. We'll study this scenario next.
    \end{enumerate}

    Similar to the previous case, $P_{2, *}$ cannot have a shared hidden node with $P_1$. Now let's consider the following cases under any intervention target $\Itarget$:

    \begin{enumerate}
        \item None of $P_1$, $P_2$ $P_{1, *}$ and $P_{2, *}$ are affected by $\Itarget$. Then $H_{a,1}, H_{b,1}$ are descendents of $H_{!}$.

        \item $P_{2,*}$ is destroyed. Because $P_{2, *}$ does not have a shared hidden node with $P_1$, $H_{a,1}, H_{b,1}$ are descendents of $H_{!}$.

        \item $P_2$ is destroyed. Then the directed path from $H_{!}$ to $\source(P_1)$ is still intact. Then $H_{a,1}, H_{b,1}$ are descendents of $H_{!}$.
            
        \item $P_{1,*}$ is destroyed but the directed path from $H_{!}$ to $\source(P_1)$ is still intact. Then $P_1$ must be intact because the intervention target is in $P_{1,*}$. $H_{a,1}, H_{b,1}$ are descendents of $H_{!}$.
        
        \item $P_1$ is destroyed or the  the directed path from $H_{!}$ to $\source(P_1)$ is destroyed. In either case, $P_2$ must be intact. The directed path from $H_{*}$ to $H_{!}$ must also be intact. Because $P_{2, *}$ cannot have a shared hidden node with $P_1$ or $P_{1, *}$, $P_{2, *}$ is also active. Therefore, $H_{a,2}, H_{b,2}$ are connected to $H_{!}$ via the common ancestor $H_{*}$.
    \end{enumerate}
    
Now consider graph $G' = (V, E')$ with $E' = E \cup \{ H_{!} \to X_{a} \} \cup \{ H_{!} \to X_{b} \}$. Note that $E'$ is not the same as $E$ because at least one of $\{ H_{!} \to X_{a} \} $ and  $\{ H_{!} \to X_{b} \}$ is missing. By the same argument as fore, $G'$ also violates the maximality assumption by \cref{lemma:maximal-violation}.
    
\end{enumerate}

Finally, we have a contradiction. So the second condition is also necessary. 
\end{proof}

\begin{proposition}
\label{prop:every-pair}
Suppose $X' \subseteq X$ is a valid subset with $|X'| \geq 2$ and every pair of nodes in $X'$ has only one non-dominated path. And the latent DAG only has one source node. Then the followings are true:
\begin{enumerate}[label*=(\arabic*)]
    \item there exists a hidden node $H'$ with $X' \subseteq \cover(H')$ 
    \item for any node $X_i \in X'$, one can find at least another node $X_j \in X'$ such that $X_i \leftarrow H' \rightarrow X_j$ is a non-$(X_i, X_j)$-dominated path. 
\end{enumerate}
\end{proposition}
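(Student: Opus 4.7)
The plan is to reduce the proposition to the single statement that all the hidden variables $H_{ij}$ for pairs $(X_i,X_j)\in\binom{X'}{2}$ coincide with a single $H'$, after which both conclusions are immediate. By \cref{lemma:only-one-means-common-parent}, the unique non-dominated path for each pair $(X_i,X_j)\subseteq X'$ is a common parent path $X_i\leftarrow H_{ij}\to X_j$, where $H_{ij}$ is the unique element of the set of minimal common parents $M_{ij}$. Define $H(X')=\{H_{ij}:(i,j)\in\binom{X'}{2}\}$; I want to show $|H(X')|=1$, and then to take $H'$ to be this common value.

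To prove $|H(X')|=1$, I would pick any two $H_{ab},H_{cd}\in H(X')$ and split into two cases. In the comparable case (say $H_{ab}$ is a proper ancestor of $H_{cd}$), I would invoke maximality of $G$: using validity of $X'$, every pair in $X'\times X'$ is already adjacent in $\UDG(G^{(\Itarget)})$ under every intervention, so adding the edges $H_{ab}\to X_c$ and $H_{ab}\to X_d$ cannot create new UDG adjacencies (because any new active path through these edges lands at a child of $H_{ab}$, which via $H_{ab}\to\cdots\to H_{cd}$ already coincides with an existing active path through $H_{cd}$). Thus by \cref{lemma:maximal-violation}, $H_{ab}$ must already be in $CP(c,d)$, contradicting the minimality of $H_{cd}$. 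In the incomparable case, I would exploit the single source $H_0$: choose directed paths $\pi_1:H_0\to\cdots\to H_{ab}$ and $\pi_2:H_0\to\cdots\to H_{cd}$ and form the active ``tent''-shaped path $P:\,X_a\leftarrow H_{ab}\leftarrow\pi_1\leftarrow H_0\to\pi_2\to H_{cd}\to X_c$; since $P$ is not a common parent path, the only ways it can be $(X_a,X_c)$-dominated are (2)(a), which would force $H_0\in CP(a,c)$ (contradicting minimality of $H_{ac}\ne H_0$ under uniqueness unless $H_{ac}=H_0$, which we rule out by picking $\pi_1,\pi_2$ that avoid $H_{ac}$), or (2)(b), which requires $H_{ac}$ to lie on $P$---also avoidable by our choice of $\pi_1,\pi_2$, using that $H_{ac}$ cannot simultaneously be ancestor of both incomparable $H_{ab}$ and $H_{cd}$. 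Thus $P$ becomes a second non-dominated path for $(X_a,X_c)$, contradicting the uniqueness hypothesis.

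Once $|H(X')|=1$ is established and we set $H'$ to be this unique element, condition (1) follows immediately: for every $X_i\in X'$ and any choice of $j\ne i$, $H'=H_{ij}$ is a parent of $X_i$, so $X'\subseteq\cover(H')$. Condition (2) is then just a restatement of \cref{lemma:only-one-means-common-parent} for the unique non-dominated path: for each $X_i$, pick any $X_j\in X'\setminus\{X_i\}$, and $X_i\leftarrow H'\to X_j$ is by construction the unique non-dominated $(X_i,X_j)$-path, hence non-$(X_i,X_j)$-dominated.

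The main obstacle will be the incomparable case in Step 2, namely showing that $\pi_1,\pi_2$ can be chosen so that $H_{ac}$ does not appear on the constructed path $P$ and that $H_0\notin CP(a,c)$ (or alternately, reducing to the comparable case via the choice of common ancestor structure forced by the single source). This requires careful bookkeeping with the ancestor structure beneath $H_0$ and may need an auxiliary lemma similar in spirit to \cref{lemma:non-dominate-is-ancester} that propagates the uniqueness condition along intermediate pairs; the rest of the argument is largely a packaging of \cref{lemma:only-one-means-common-parent}, the maximality property, and the single-source structure.
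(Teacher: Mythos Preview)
Your plan rests on proving the stronger claim that all the $H_{ij}$ coincide, i.e.\ $|H(X')|=1$. This claim is \emph{false} in general, so the approach cannot succeed as stated. Here is a small counterexample consistent with the hypotheses: take $\latentG$ to be $H_1\to H_2$ (single source $H_1$), with $\cover(H_1)=\{X_1,X_3,X_4\}$ and $\cover(H_2)=\{X_1,X_2,X_3\}$, and let $X'=\{X_1,X_2,X_3\}$. One checks directly that $X'$ is valid and that each pair has exactly one non-dominated path: $H_{12}=H_2$, $H_{23}=H_2$, but $H_{13}=H_1$ (the path $X_1\leftarrow H_2\to X_3$ is dominated because every directed path to $H_2$ passes through the common parent $H_1$). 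Thus $H(X')=\{H_1,H_2\}$. The proposition still holds with $H'=H_2$: condition~(2) only asks that for each $X_i$ there is \emph{some} $X_j$ with $H_{ij}=H'$, not that this holds for \emph{all} pairs. Your reduction to $|H(X')|=1$ therefore overshoots the target, and both your comparable and incomparable case arguments are aimed at a statement that fails.

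The paper's proof avoids this by induction on $|X'|$. The crucial feature is that the witness $H'$ is allowed to \emph{change} when a new node $X_{k+1}$ is adjoined: in one branch of the case analysis (when the inductive $H'$ turns out to be a proper ancestor of $H_{1,k+1}$ and $H_{q,k+1}=H_{1,k+1}$ for all $q\le k$), the new witness becomes $H_{1,k+1}$, a strict descendant of the old $H'$. The induction then only needs to verify condition~(2) for the \emph{new} witness, which is exactly the weaker ``for each $X_i$ some $X_j$'' statement. If you want to salvage your pairwise-comparison idea, you would need to weaken the goal to something like ``there exists $H'$ with $H_{ij}=H'$ for at least one $j$ per $i$,'' but establishing that directly (without induction) seems to require essentially the same three-way ancestor comparison and appeal to \cref{lemma:non-dominated-path-not-des} and \cref{lemma:non-dominate-is-ancester} that the paper's inductive step uses.
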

\begin{proof}
We'll show this by induction.

\paragraph{Base step:} This is true because of \cref{lemma:only-one-means-common-parent}.

\paragraph{Inductive Hypothesis:} Suppose the statement is true for all $X'$ with $|X'| = k$. 

\paragraph{Inductive Step:} We will show the statement is true for $|X'| = k+1$. Without loss of generality, let's suppose $X' = \{X_1, ..., X_{k+1} \}$. Consider the subset $X'' = \{X_1, ..., X_k \}$. By the inductive hypothesis, there exists a hidden node $H'$ with $X' \subseteq \cover(H')$ and again without loss of generality, two nodes $X_1$ and $X_2$, such that $X_1 \leftarrow H' \rightarrow X_2$ is a non-$(X_1, X_2)$-dominated path. 

Now consider another subset $X_{1, k+1} = \{X_1, X_{k+1} \}$. By our assumption and \cref{lemma:only-one-means-common-parent}, there exists a hidden node $H_{1, k+1}$ such that $X_1 \leftarrow H_{1, k+1} \rightarrow X_{k+1}$ is a non-$(X_1, X_{k+1})$-dominated path. We can define $H_{2, k+1}$ in a similar way.

If $H_{1, k+1} = H'$, then we are done.

Let's now consider the cases when $H_{1, k+1} \neq H'$. Because there is only one single source node, there must exist at least one common ancestor of $H_{1, k+1}$ and $H'$: $H_{*}$.
Now let's consider three cases:
\begin{enumerate}[label=(\arabic*)]
    \item $H_{1, k+1}$ is an ancestor of $H'$. Then there is a directed path $P$ from $H_{1, k+1}$ to $H'$. Now we have an active path between $X_2$ and $X_{k+1}$ via $P$. Because this path is $(X_2, X_{k+1})$-dominated, we have a few cases. 
    \begin{enumerate}[label=(\alph*)]
        \item There exists a common parent of $X_2$, $X_{k+1}$ that is in $\overline{\an}(H_{1, k+1})$, which means that $H_{2, k+1}$ is an ancestor of $H_{1, k+1}$ by \cref{lemma:non-dominate-is-ancester} (case (c) covers when $H_{2, k+1} = H_{1, k+1}$). By \cref{lemma:non-dominated-path-not-des}, this is not possible because we have a path between $X_1$ and $X_{2}$ via the directed path from $H_{2, k+1}$ to $H_{1, k+1}$ but $H'$ is a descendent of $H_{1, k+1}$.
        \item $H_{2, k+1}$ is on $P$ but it is not the same as $H_{1, k+1}$ nor $H'$. By the same argument as the case (a), this is also not possible because of \cref{lemma:non-dominated-path-not-des}. 
        \item $H_{2, k+1} = H_{1, k+1}$. This means that $H_{1, k+1}$ is a common parent of $X_1, X_{2}$. So it cannot be an ancestor of $H'$. This is a contradiction. 
        \item When $H_{2, k+1} = H'$, then $H'$ is the hidden node we are looking for because $X' \subseteq \cover(H')$ and for any node $X_i \in X'$, there exists another node $X_j \in X'$ such that $X_i \leftarrow H' \rightarrow X_j$ is a non-$(X_i, X_j)$-dominated path (for $X_{k+1}$, one can choose $X_2$ such that $X_2 \leftarrow H' \rightarrow X_{k+1}$ is a non-dominated path).
    \end{enumerate}

    \item $H'$ is an ancestor of $H_{1, k+1}$. Then there is a directed path $P$ from $H'$ to $H_{1, k+1}$. Let $X_q$ be an arbitrary node in $X''$ that is not $X_1$. Then we can define hidden node $H_{q, k+1}$ such that $X_q \leftarrow H_{q, k+1} \rightarrow X_{k+1}$ is a non-$(X_q, X_{k+1})$-dominated path. Now we have an active path between $X_q$ and $X_{k+1}$ via $P$. Because this path is $(X_q, X_{k+1})$-dominated, we have a few cases. 
    \begin{enumerate}[label=(\alph*)]
    \item There exists a common parent of $X_q$, $X_{k+1}$ that is in $\overline{\an}(H')$, which means that $H_{q, k+1}$ is an ancestor of $H'$ by \cref{lemma:non-dominate-is-ancester} (case (c) covers when $H_{q, k+1} = H'$). By \cref{lemma:non-dominated-path-not-des}. This is not possible because we have a path between $X_1$ and $X_{k+1}$ via the directed path from $H_{q, k+1}$ to $H'$ but $H_{1, k+1}$ is a descendent of $H'$.

    \item $H_{q, k+1}$ is on $P$ but it is not the same as $H_{1, k+1}$ nor $H'$. By the same argument as the case (a), this is also not possible because of \cref{lemma:non-dominated-path-not-des}.

    \item $H_{q, k+1} = H'$. This means $H'$ is a common parent of $X_1, X_{k+1}$.  So it cannot be an ancestor of $H_{1, k+1}$. This is a contradiction.

    \item $H_{q, k+1} = H_{1, k+1}$. This is possible. But because $X_q$ is chosen arbitrarily. We have $H_{q, k+1} = H_{1, k+1}$ for $q = 2, ..., k$. Therefore $X' \subseteq \cover(H_{1, k+1})$. And for any node $X_i \neq X_{k+1}$, we have that $X_i \leftarrow H_{1, k+1} \rightarrow X_{k+1}$ is a non-dominated path.
    \end{enumerate}

    \item There exist a common ancestor $H_{*}$ that is neither $H_{1, k+1}$ nor $H'$, $H'$ is not an ancestor of $H_{1, k+1}$ and $H_{1, k+1}$ is not an ancestor of $H'$. $H_{*}$ is a common ancester of $X_{2}$ and $X_{k+1}$. Therefore, there is a common ancestor path between $X_{2}$ and $X_{k+1}$. But this path is a $(X_2, X_{k+1})$-dominated path. 
    
    There are two cases: (a) $H_{2, k+1} \in \overline{\an}(H_{1, k+1})$ when $H_{2, k+1}$ is on the directed path from $H_{*}$ to $H_{1, k+1}$ or there is another common parent of $X_2, X_{k+1}$ that is in $\overline{\an}(H_{*})$ (\cref{lemma:non-dominate-is-ancester}), (b) $H_{2, k+1} \in \overline{\an}(H')$ when $H_{2, k+1}$ is on the directed path from $H_{*}$ to $H'$ or there is another common parent of $X_2, X_{k+1}$ that is in $\overline{\an}(H_{*})$ (\cref{lemma:non-dominate-is-ancester}). For case (a), we have a $(X_1, X_2)$-dominated path cased by the path from $H_{2, k+1}$ to $H_{1, k+1}$. In this case, because $X_1 \leftarrow H' \rightarrow X_2$ is a non-$(X_1, X_2)$-dominated path. $H'$ must be an ancestor for $H_{1, k+1}$, which is a contradiction. By a similar argument, case (b) is also not possible.\qedhere
\end{enumerate}
\end{proof}

\subsection{\Redundant{} subset condition}
A necessary condition for a non-replaceable imaginary subset is that the subset is~\redundant{}. The definition is restated below. Intuitively, a \redundant{} subset provides redundant information as every connection between nodes in the \redundant{} subset can be explained by other maximal valid subsets.

\defnfrac*

The aforementioned necessity is shown by the following:
\imgredundant*

\begin{proof}
By \cref{lemma:childsetiscomplete}, we know that $\{\cover(H_i) \}_{i=1}^m$  is a complete cover. And because $X'$ is an imaginary subset, $X' \not\subseteq \cover(H_i)$ for any $H_i$. 
\end{proof}

One might suggest getting rid of all~\redundant{} subsets. However, even a non-imaginary subset can be a~\redundant{} subset (\cref{ex:real-fractured}).

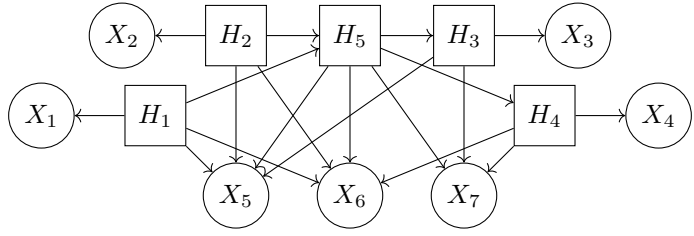
\begin{figure}[t]
\centering
\begin{tikzpicture}[node distance={15mm}, 
                    latent/.style = {draw, rectangle, black, minimum size=0.8cm},
                    obs/.style = {draw, circle, black}
                    ] 

\node[latent] (H5) {$H_5$}; 
\node[latent] (H2) [left of=H5] {$H_2$};
\node[latent] (H1) [below left of=H2] {$H_1$};
\node[latent] (H3) [right of=H5] {$H_3$};
\node[latent] (H4) [below right of=H3] {$H_4$};

\node[obs] (X1) [left of=H1]  {$X_1$}; 
\node[obs] (X2) [left of=H2]  {$X_2$}; 
\node[obs] (X3) [right of=H3]  {$X_3$}; 
\node[obs] (X4) [right of=H4]  {$X_4$}; 
\node[obs] (X5) [below right of=H1]  {$X_5$}; 
\node[obs] (X6) [right of=X5]  {$X_6$}; 
\node[obs] (X7) [right of=X6]  {$X_7$};

\draw[->, black] (H2) -- (H5);
\draw[->, black] (H1) -- (H5);
\draw[->, black] (H5) -- (H3);
\draw[->, black] (H5) -- (H4);

\draw[->, black] (H1) -- (X1);
\draw[->, black] (H1) -- (X5);
\draw[->, black] (H1) -- (X6);
\draw[->, black] (H2) -- (X2);
\draw[->, black] (H2) -- (X5);
\draw[->, black] (H2) -- (X6);
\draw[->, black] (H3) -- (X3);
\draw[->, black] (H3) -- (X5);
\draw[->, black] (H3) -- (X7);
\draw[->, black] (H4) -- (X4);
\draw[->, black] (H4) -- (X6);
\draw[->, black] (H4) -- (X7);
\draw[->, black] (H5) -- (X5);
\draw[->, black] (H5) -- (X6);
\draw[->, black] (H5) -- (X7);
\end{tikzpicture} 
\caption{In this example, the non-imaginary subset $\{X_5, X_6, X_7\}$ is~\redundant{}}
\label{fig:fractured-real}
\end{figure}

\begin{example}[Non-imaginary set can be~\redundant{}]
\label{ex:real-fractured}
Consider \cref{fig:fractured-real}. If the intervention target $\Itarget$ is $\{H_1\}$, $\{H_2\}$, $\emptyset$, then we have two maximal cliques: $\{X_1, X_3, X_4, X_5, X_6, X_7\}$, $\{X_2, X_3, X_4, X_5, X_6, X_7\}$. If the intervention target $\Itarget$ is $\{H_3\}$, then there are three maximal cliques: $\{X_3, X_5, X_7 \}$, $\{X_2, X_4, X_5, X_6, X_7 \}$, $\{X_1, X_4, X_5, X_6, X_7 \}$. If the intervention target $\Itarget$ is $\{H_4\}$, then there are three maximal cliques: $\{X_4, X_6, X_7 \}$, $\{X_2, X_3, X_5, X_6, X_7 \}$, $\{X_1, X_3, X_5, X_6, X_7 \}$. When the intervention target is $\{H_5\}$, there are three maximal cliques: $\{X_1, X_5, X_6 \}$, $\{X_2, X_5, X_6 \}$, $\{X_3, X_4, X_5, X_6, X_7 \}$. By \cref{prop: hidden-is-maximal}, all these five valid subsets are maximal: $\{X_1, X_5, X_6 \}$, $\{X_2, X_5, X_6 \}$, $\{X_3, X_5, X_7 \}$, $\{X_4, X_6, X_7 \}$ and $\{X_5, X_6, X_7 \}$. But $\{X_5, X_6, X_7 \}$ is~\redundant{}. In fact, $\{X_1, X_5, X_6 \}$, $\{X_2, X_5, X_6 \}$, $\{X_3, X_5, X_7 \}$, $\{X_4, X_6, X_7 \}$ constitutes a complete collection of maximal valid subsets.
\end{example}

Even under the maximality condition, it is still possible that two DAGs can share the same UDG (\cref{ex:ambiguities}). The no fractured subset condition captures such ambiguities but is overkill.

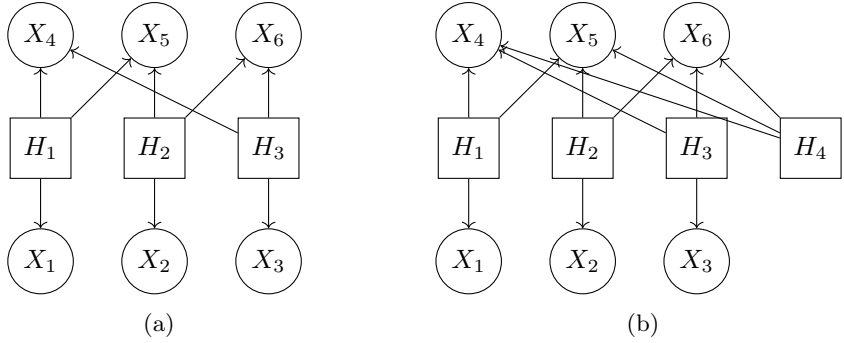
\begin{figure}[t]
        \centering
        \begin{subfigure}{.4\linewidth}
        \centering
          \begin{tikzpicture}[node distance={15mm}, 
                            latent/.style = {draw, rectangle, black, minimum size=0.8cm},
                            obs/.style = {draw, circle, black}
                            ] 
        \node[latent] (H1) {$H_1$}; 
        \node[latent] (H2) [right of=H1] {$H_2$};
        \node[latent] (H3) [right of=H2] {$H_3$};
    
        \node[obs] (X1) [below of=H1] {$X_1$}; 
        \node[obs] (X2) [below of=H2] {$X_2$};
        \node[obs] (X3) [below of=H3] {$X_3$};
        \node[obs] (X4) [above of=H1] {$X_4$};
        \node[obs] (X5) [right of=X4] {$X_5$};
        \node[obs] (X6) [right of=X5] {$X_6$};
        
        \draw[->, black] (H1) -- (X1);
        \draw[->, black] (H2) -- (X2);
        \draw[->, black] (H3) -- (X3);
        \draw[->, black] (H1) -- (X4);
        \draw[->, black] (H1) -- (X5);
        \draw[->, black] (H2) -- (X5);
        \draw[->, black] (H2) -- (X6);
        \draw[->, black] (H3) -- (X4);
        \draw[->, black] (H3) -- (X6);
        \end{tikzpicture} 
        \label{fig:ambi-a}
         \caption{}
        \end{subfigure}%
        \begin{subfigure}{.4\linewidth}
        \centering
          \begin{tikzpicture}[node distance={15mm}, 
                            latent/.style = {draw, rectangle, black, minimum size=0.8cm},
                            obs/.style = {draw, circle, black}
                            ] 
        \node[latent] (H1) {$H_1$}; 
        \node[latent] (H2) [right of=H1] {$H_2$};
        \node[latent] (H3) [right of=H2] {$H_3$};
        \node[latent] (H4) [right of=H3] {$H_4$};
    
        \node[obs] (X1) [below of=H1] {$X_1$}; 
        \node[obs] (X2) [below of=H2] {$X_2$};
        \node[obs] (X3) [below of=H3] {$X_3$};
        \node[obs] (X4) [above of=H1] {$X_4$};
        \node[obs] (X5) [right of=X4] {$X_5$};
        \node[obs] (X6) [right of=X5] {$X_6$};
        
        \draw[->, black] (H1) -- (X1);
        \draw[->, black] (H2) -- (X2);
        \draw[->, black] (H3) -- (X3);
        \draw[->, black] (H1) -- (X4);
        \draw[->, black] (H1) -- (X5);
        \draw[->, black] (H2) -- (X5);
        \draw[->, black] (H2) -- (X6);
        \draw[->, black] (H3) -- (X4);
        \draw[->, black] (H3) -- (X6);
        \draw[->, black] (H4) -- (X4);
        \draw[->, black] (H4) -- (X5);
        \draw[->, black] (H4) -- (X6);
        \end{tikzpicture} 
        \label{fig:ambi-b}
        \caption{}
        \end{subfigure}%
        \caption{An example where two maximal measurement models share the same UDG}
        \label{fig:ambi}
        \end{figure}

\begin{example}
    \label{ex:ambiguities}
    Consider \cref{fig:ambi}. For both $G_{(a)}$ and $G_{(b)}$, under any intervention target, there are four maximal cliques: $\{X_1, X_4, X_5 \}$, $\{X_2, X_5, X_6 \}$, $\{X_3, X_4, X_6 \}$, $\{X_4, X_5, X_6 \}$. And one can check that both $G_{(a)}$ and $G_{(b)}$ satisfy the maximality condition. 
\end{example}

\subsection{Pure child assumption}
\label{appen:pure}
We can also use the pure child assumption (\cref{assm:pure-child}) to get identifiability. The pure child assumption has been made in many existing works \citep{halpern2015anchored, NEURIPS2019_8c66bb19, xie2020generalized} with additional parametric assumptions to get identifiability.

\assmpurechild*

Note that under \cref{assm:pure-child}, there still could be imaginary subsets.

\begin{figure}[b]
  \centering
\begin{tikzpicture}[node distance={15mm}, 
                    latent/.style = {draw, rectangle, black, minimum size=0.8cm},
                    obs/.style = {draw, circle, black}
                    ] 

\node[latent] (H1) {$H_1$}; 
\node[latent] (H2) [right of=H1] {$H_2$};
\node[latent] (H3) [right of=H2] {$H_3$};
\node[latent] (H4) [right of=H3] {$H_4$};

\draw[->, black] (H1) -- (H2);
\draw[->, black] (H3) -- (H4);

\node[obs] (X1) [below of=H1]  {$X_1$}; 
\node[obs] (X2) [below of=H2]  {$X_2$}; 
\node[obs] (X3) [below of=H3]  {$X_3$};
\node[obs] (X4) [below of=H4]  {$X_4$}; 
\node[obs] (X5) [left of=X1]  {$X_5$}; 
\node[obs] (X6) [right of=X4]  {$X_6$}; 

\draw[->, black] (H1) -- (X1);
\draw[->, black] (H2) -- (X2);
\draw[->, black] (H3) -- (X3);
\draw[->, black] (H4) -- (X4);
\draw[->, black] (H1) -- (X5);
\draw[->, black] (H2) -- (X6);
\draw[->, black] (H3) -- (X5);
\draw[->, black] (H4) -- (X6);
\end{tikzpicture} 
\caption{In this example, the graph satisfies pure child but $\{X_5, X_6\}$ is an imaginary subset.}
\label{fig:pure-imaginary-exp}
\end{figure}
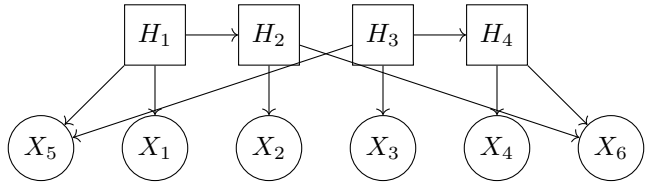

\begin{example}
\label{ex:pure-imag}
\cref{fig:pure-imaginary-exp} satisfies \cref{assm:pure-child} but there is an imaginary subset $\{X_5, X_6\}$.
\end{example}

\purechild*
\begin{proof}

By \cref{assm:gr}(a) and \cref{assm:gr}(b), two observed variables $X_i$ and $X_j$ are dependent if and only if $X_i$ and $X_j$ are $d$-connected. Therefore, $\UDG(G^{(\Itarget)})$ is the same as $\UDG(P^{(\Itarget)})$ under any intervention target $\Itarget$.

By \cref{assm:pure-child}, let's associate each hidden node $H_i$ with one pure child $X_i$.

First of all, there does not exist an imaginary subset with two pure children $X_i$ and $X_j$ of different hidden parents. If the opposite is true, then $\{X_i, X_j \}$ must be valid which is impossible by \cref{lemma:3-nodes}. Therefore, every complete collection must have at least $m$ maximal subsets. 

By \cref{lemma:childsetiscomplete}, $\{\cover(H_i) \}_{i=1}^m$ is a complete collection with $m$ maximal subsets. 

In fact, $\{\cover(H_i) \}_{i=1}^m$ is the only complete collection with $m$ maximal subsets. Suppose there exists a complete collection with imaginary subsets. There are two cases:
\begin{enumerate}
    \item None of the imaginary subsets contains a pure child. Then we still need at least $m$ non-imaginary subsets to cover pure children.
    \item At least one of the imaginary subsets contains a pure child. Suppose one such imaginary subset is $S_i$ and it contains $X_i$. Then there must exist another maximal subset from the collection that contains $X_i$. 
    
    Suppose the opposite is true. Because $S_i$ is an imaginary subset, there exists $X_k \in S_i$ such that $X_k$ and $X_i$ do not share the same parent. And for any intervention target $\Itarget$, if there is any edge between $X_i$ and $X_j$ in $\UDG({G}^{(\Itarget)})$, by definition, there exists a shattered clique $\Cliq \in \UDG({G}^{(\Itarget)})$ such that $\{X_i, X_j\} \subseteq \Cliq$ and $S_i \subseteq \Cliq$. Therefore, there is an edge between $X_j$ and $X_k$ as well. Now consider graph $G' = (V, E')$ where $E' = E \cup \{ H_i \to X_k \}$. Under any intervention target $\Itarget$, $\UDG(G'^{(\Itarget)})$ can only have more edges than $\UDG(G^{(\Itarget)})$ and if there exists a new edge, these edges must involve $X_k$. Suppose one of these newly added edges is between $X_k$ and $X_j$. Then it must be that a parent of $X_j$ is $d$-connecetd to $H_i$. This would mean that $X_i$ is $d$-connedted to $X_j$ in $\UDG(G^{(\Itarget)})$. By the previous argument, $X_k$ is also $d$-connedted to $X_j$ in $\UDG(G^{(\Itarget)})$. Thus, $G'$ would violate the maximality condition by \cref{lemma:same-clique-same-ci} and \cref{lemma:maximal-violation} which is a contradiction. Therefore, if there is one imaginary subset in the complete collection that contains a pure child, there must be at least two imaginary subsets that contain such a pure child. This means that to cover all $m$ pure children, we need more than $m$ maximal subsets. 
\end{enumerate}

On the other hand, suppose there exists a complete collection with at least one maximal subset $X'$ such that $X' \subseteq \cover(H_i)$. Then $X'$ must not include the pure child of $H_i$: $X_i$. This is because, if $X_i \in X'$, then for any maximal clique $\Cliq \in \MaxCliq$ such that $X' \subseteq \Cliq$, we have that $\{ X_i\} \subseteq X' \subseteq \cover(H_i) \subseteq \Cliq$ because any node in $\Cliq \setminus \{ X_i\}$ is $d$-connected by $X_i$ via $H_i$ since $X_i$ is a pure child. Therefore, $X'$ is not maximal which is a contradiction. Because $X'$ does not contain a pure child of $H_i$, we still need to cover all $m$ pure children. We need more than $m$ maximal subsets. 
\end{proof}

For algorithmic purposes, under the pure child assumption, one only needs to check if shattered maximal cliques form an edge cover.

\begin{lemma}
Under Assumptions \ref{assm:gr}-\ref{assm:pure-child}, for any intervention target $\Itarget$, all maximal cliques $\Cliq \in \MaxCliq_P^{(\Itarget)}$ that is shattered by $\{\cover(H_i)\}_{i=1}^m$ is an edge cover of $\UDG(P^{(\Itarget)})$.
\end{lemma}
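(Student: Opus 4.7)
The plan is to observe that the lemma's statement is precisely the defining property of $\{\cover(H_i)\}_{i=1}^m$ being a complete collection (Definition~\ref{defn:fractured}) for the intervention target $\Itarget$: that shattered maximal cliques of $\UDG(P^{(\Itarget)})$ form an edge cover. Since \cref{lemma:childsetiscomplete} has already established that $\{\cover(H_i)\}_{i=1}^m$ is a complete collection under the weaker Assumptions~\ref{assm:gr}-\ref{assm:complete-faimly}, and Assumption~\ref{assm:pure-child} only strengthens the hypothesis, the quickest proof is to apply \cref{lemma:childsetiscomplete} directly, specializing to the given $\Itarget$.

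For a self-contained direct argument that exploits the pure child assumption, I would proceed constructively. Take any edge $(X_a, X_b) \in \UDG(P^{(\Itarget)})$. By Assumption~\ref{assm:gr}(a-b) combined with the measurement-model structure, there exist latent parents $H_a \in \pa(X_a)$ and $H_b \in \pa(X_b)$ (possibly coinciding) that are $d$-connected in the latent intervention graph. I would greedily grow a maximal set $S_*$ of pairwise $d$-connected latents starting from $\{H_a, H_b\}$ by iteratively adjoining any latent that is $d$-connected in $G^{(\Itarget)}$ to every element of the current set. Then $\Cliq_* = \bigcup_{H \in S_*} \cover(H)$ is a clique of $\UDG(P^{(\Itarget)})$ (any two elements either share a parent in $S_*$ or have parents in $S_*$ whose $d$-connection extends to them), contains $(X_a, X_b)$, and is shattered by $\{\cover(H_i)\}_{i=1}^m$ by construction.

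The crux of the direct proof is to show that $\Cliq_*$ is a \emph{maximal} clique of $\UDG(P^{(\Itarget)})$. Pure child is essential here: any candidate extension $Y \notin \Cliq_*$ would, by adjacency to every pure child $x_i$ for $i \in S_*$, need some parent of $Y$ to be $d$-connected to each $H_i$ individually, since $x_i$'s sole parent is $H_i$. A case analysis on the source of a marginally active latent path $P_{12}$ between $H_{i_1}, H_{i_2} \in S_*$---either endpoint of $P_{12}$, or an intermediate fork at some $H_0$---combined with concatenation through the relevant parent of $Y$ produces a $d$-connection contradicting either the failure of some $Y$-parent to connect to some $H_i \in S_*$ or the iterative maximality of $S_*$. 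I anticipate that the main obstacle will be the intermediate-fork case, which requires tracking how $H_0$ gets absorbed into $S_*$ during iteration and how the outgoing-arrow structure at $H_0$ propagates $Y$-parent connectivity through to every element of $S_*$, yielding the required contradiction; the cleanest write-up is therefore to simply invoke \cref{lemma:childsetiscomplete}.
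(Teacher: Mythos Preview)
Your first approach contains a genuine gap. The definition of a complete collection (\cref{defn:fractured}) requires that \emph{all shattered cliques} of $\UDG(P^{(\Itarget)})$ form an edge cover, where ``clique'' means any complete subgraph, not a maximal one. The lemma here asserts the stronger statement that the shattered \emph{maximal} cliques already form an edge cover; this is precisely what the sentence preceding the lemma (``under the pure child assumption, one only needs to check if shattered maximal cliques form an edge cover'') is flagging as an additional algorithmic gain. \cref{lemma:childsetiscomplete} therefore does not imply the lemma: for a given edge it only produces the shattered clique $\cover(H_1)\cup\cover(H_2)$, which need not be maximal.

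Your second approach also fails as stated. The greedily grown maximal pairwise-$d$-connected set $S_*$ need not yield a maximal clique $\Cliq_*$. Take, in $G^{(\Itarget)}$, three latent sources $H_1,H_2,H_3$ (mutually $d$-separated) and $H_4,H_5,H_6$ with parents $\{H_1,H_2\},\{H_2,H_3\},\{H_1,H_3\}$ respectively; give each $H_i$ a pure child $x_i$ and add one further observed node $Y$ with $\pa(Y)=\{H_1,H_2\}$. Starting from the edge $x_4\text{--}x_5$, your procedure grows $S_*=\{H_4,H_5,H_6\}$ (each $H_j$ with $j\le 3$ fails to be $d$-connected to some member), giving $\Cliq_*=\{x_4,x_5,x_6\}$. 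But $Y$ is adjacent to each of $x_4,x_5,x_6$ (via $H_1$ or $H_2$), so $\Cliq_*$ is strictly contained in the maximal clique $\{x_4,x_5,x_6,Y\}$, which moreover is \emph{not} shattered (any $\cover(H_i)$ containing $Y$ also contains $x_1$ or $x_2$). Your anticipated ``case analysis on the source of $P_{12}$'' cannot rescue this, because no single parent of $Y$ is $d$-connected to all of $S_*$.

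The paper's argument avoids this by choosing $S_*$ with more structure: for the edge $X_1\text{--}X_2$ it takes a \emph{source} node $H_*$ ancestral to $\source(P_{1,2})$ and sets $\Cliq_*=\bigcup_{H\in\overline{\de}(H_*)}\cover(H)$. Maximality then follows in one line from the pure child of $H_*$: any $X_3$ adjacent to that pure child forces a parent of $X_3$ to be $d$-connected to $H_*$, and since $H_*$ is a source this parent must lie in $\overline{\de}(H_*)$, so $X_3\in\Cliq_*$ already.
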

\begin{proof}
By \cref{assm:gr}(a) and \cref{assm:gr}(b), two observed variables $X_i$ and $X_j$ are dependent if and only if $X_i$ and $X_j$ are $d$-connected. Therefore, $\UDG(G^{(\Itarget)})$ is the same as $\UDG(P^{(\Itarget)})$ under any intervention target $\Itarget$. By \cref{assm:pure-child}, let's associate each hidden node $H_i$ with one pure child $X_i$. 

Suppose the statement is not true. Then for some intervention target $\Itarget \in \ItargetFamily$, there exists one clique $\Cliq$ in $\MaxCliq_{G}^{(\Itarget)}$ that is not shattered and that clique contains one edge that is not covered by other shattered cliques in $\MaxCliq_{G}^{(\Itarget)}$. Suppose that edge is between $X_1$ and $X_2$, then there exist $H_1$ and $H_2$ with $X_1 \in \cover(H_1)$ and $X_2 \in \cover(H_2)$ and $H_1$ is $d$-connected to $H_2$. Let one such active path between $H_1$ and $H_2$ be $P_{1,2}$. By \cref{lemma:d-sep-cases}, there exists one source node $\source(P_{1,2})$ (If $H_1 = H_2$, then $\source(P_{1,2})$ is just $H_1$). Note that we can always find one node $H_{*}$ in $\overline{\an}(\source(P_{1,2})$ that has no incoming edge. Let's denote that the pure child of $H_{*}$ is $X_{*}$.

Then $\Cliq_{*} = \cup_{H_i \in \overline{\de}(H_{*})} \cover(H_i)$ must be a shattered maximal clique in $\MaxCliq_{G}^{(\Itarget)}$. Suppose $\Cliq_{*}$ is not a maximal clique. Then the maximal clique that contains $\Cliq_{*}$ must contain another element $X_3$ not in $\Cliq_{*}$. There must exist another hidden variable $H_3$ with $X_3 \in \cover(H_3)$ that $H_{*}$ is $d$-connected to because $X_3$ is connected to the pure child of $H_{*}$. But this is not possible by \cref{lemma:d-sep-cases} and the fact that $H_{*}$ is a source node while $X_3$ is not a descendant of $H_{*}$.

\end{proof}

\subsection{Examples of measurement model with no non-replaceable fractured subsets}
\label{sec:ex-no-fractured}
\cref{lemma:img-redundant} shows that an imaginary subset is a fractured subset. Because a replaceable imaginary subset can be found, our concern should focus solely on the non-replaceable ones (\cref{thm:no-img}).
In fact, \cref{cor:no-fractured} can be strengthed to say that as long as there are no non-replaceable fractured subsets, then the bipartite graph can be identified. In this section, we will show a class of measurement models with no non-replaceable fractured subsets (see \cref{remark:non-replace-fractured}).

Let's first present a simple example where there are multiple source nodes and the pure child assumption is violated, and there is still no non-replaceable fractured subset. 

\begin{example}
\label{ex:sparse}
Consider \cref{fig:no-fractured}. If the intervention target $\Itarget$ is $\emptyset$, $H_1$ or $H_3$, then two maximal cliques are $\{X_1, X_2, X_3 \}$ and $\{X_2, X_3, X_4 \}$. If the intervention target is $H_2$, then there are three maximal clique: $\{X_1, X_2\}$, $\{X_2, X_3\}$ and $\{X_3, X_4\}$. One can easily check that there is no non-replaceable fractured subset. 
\end{example}

\begin{figure}[h]
  \centering
\begin{tikzpicture}[node distance={15mm}, 
                    latent/.style = {draw, rectangle, black, minimum size=0.8cm},
                    obs/.style = {draw, circle, black}
                    ] 

\node[latent] (H1) {$H_1$}; 
\node[latent] (H2) [right of=H1] {$H_2$};
\node[latent] (H3) [right of=H2] {$H_3$};

\node[obs] (X1) [below left of=H1]  {$X_1$}; 
\node[obs] (X2) [below left of=H2]  {$X_2$}; 
\node[obs] (X3) [below right of=H2]  {$X_3$}; 
\node[obs] (X4) [below right of=H3]  {$X_4$}; 

\draw[->, black] (H1) -- (H2);
\draw[->, black] (H3) -- (H2);

\draw[->, black] (H1) -- (X1);
\draw[->, black] (H1) -- (X2);
\draw[->, black] (H2) -- (X2);
\draw[->, black] (H2) -- (X3);
\draw[->, black] (H3) -- (X3);
\draw[->, black] (H3) -- (X4);
\end{tikzpicture} 
\caption{In this example, there is no fractured set.}
\label{fig:no-fractured}
\end{figure}
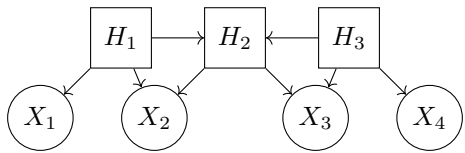

In fact, we can extend \cref{fig:no-fractured} to a family of measurement models as follows:

\begin{defn}[Sparse Measurement Model]
For a given measurement model $G = (V, E)$ and $V = X \cup H$, if $X_1, X_2 \in X$ is $d$-connected, then it must be that
\begin{enumerate}[label*=\arabic*.]
    \item There exists at most one hidden node $H_i$ such that $\{X_1, X_2 \} \subseteq \cover(H_i)$ (Type-1 path).
    \item There exists at most one active path $P$ between $X_1$ and $X_2$ such that $\pa_P(X_1) \neq \pa_P(X_2)$ (Type-2 path).
\end{enumerate}
And for any tuple $(X_1, X_2, X_3) \subseteq X$ such that $\{X_1, X_2\} \subseteq \cover(H_2)$ and $\{X_1, X_3\}  \subseteq \cover(H_3)$ with $H_2 \neq H_3$, $X_2$ and $X_3$ can only be $d$-connected via type-2 path $P$ with the property that $\pa_P(X_2) = H_2$ and $\pa_P(X_3) = H_3$.
\end{defn}

\begin{theorem}
\label{thm:sparse-no-imaginary}
Under Assumptions~\ref{assm:gr}-\ref{assm:complete-faimly} and with maximal sparse measurement model, there is no imaginary subset.
\end{theorem}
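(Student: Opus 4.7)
The plan is to argue by contradiction: assume $X' \subseteq X$ is an imaginary subset and derive that $X' \subseteq \cover(H)$ for some single latent $H$, forcing $X'$ to be non-imaginary. The argument splits into two steps: (i) every pair in $X'$ shares a unique common latent parent, and (ii) these pairwise common parents all coincide. The case $|X'| \le 1$ is immediate from maximality (\cref{defn:maxMM}(a)), so I would assume $|X'| \ge 2$ throughout.

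For step (i), I would fix $X_i, X_j \in X'$. Using validity of $X'$, I first argue that $X_i$ and $X_j$ must remain $d$-connected in $G^{(\Itarget)}$ for every $\Itarget \in \ItargetFamily$: they lie in a common clique of $\UDG(P^{(\Itarget)})$, hence are marginally dependent, and \cref{assm:gr}(a) then forces $d$-connection. Next I would suppose for contradiction that $X_i, X_j$ share no common latent parent; sparseness then leaves exactly one active path $P$ between them in $G^{(\emptyset)}$, a type-2 path of the form $X_i \leftarrow H_a \leftarrow \cdots \leftarrow H_{\mathrm{root}} \to \cdots \to H_b \to X_j$ with $H_a \ne H_b$. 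Since $H_{\mathrm{root}}$ can equal at most one of $H_a, H_b$, the other is a non-root chain node on $P$ whose path edge is incoming to it; intervening on that latent (available by \cref{assm:complete-faimly}) severs $P$ and $d$-separates $X_i, X_j$, contradicting $d$-connection. Thus a common parent exists, and sparse condition (1) makes it unique; denote it $H_{ij}$.

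For step (ii), I would take an arbitrary triple $X_1, X_2, X_3 \in X'$ and suppose $H_{12} \ne H_{13}$. Applying the second sparse condition to $(X_1, X_2, X_3)$ with $H_2 = H_{12}$ and $H_3 = H_{13}$ restricts every $d$-connection between $X_2$ and $X_3$ to a single type-2 path whose endpoint parents are $H_{12}, H_{13}$, and therefore rules out any common parent of $X_2, X_3$---contradicting step (i) applied to $\{X_2, X_3\}$. Hence $H_{12} = H_{13}$ for every triple, which propagates to the uniform identity $H_{ij} = H$ across all pairs in $X'$. Consequently $X' \subseteq \cover(H)$, so $X'$ is not imaginary, finishing the contradiction.

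The hard part will be the edge-breaking argument in step (i): I have to rule out the possibility that the unique type-2 path $P$ is ``protected'' against every latent intervention. This rests on the structural fact that an active path with empty conditioning contains no collider, hence has a unique direction-changing fork $H_{\mathrm{root}}$, combined with the type-2 inequality $H_a \ne H_b$, which guarantees that at least one endpoint-adjacent latent on $P$ is a non-root chain node with an incoming path edge. Once this observation is pinned down, the remaining pieces are routine applications of the sparse conditions and the complete-family assumption.
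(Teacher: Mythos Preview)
Your proposal is correct and matches the paper's proof in substance: both first show that every pair in $X'$ shares a common latent parent by breaking the unique type-2 path with a suitable single-node intervention, and then invoke the second sparse condition on triples to force these pairwise common parents to coincide. The only difference is packaging---the paper wraps the extension from triples to larger $X'$ in a formal induction on $|X'|$ (using sparse condition~(1) to identify the common parents of two overlapping $k$-subsets), whereas you propagate the equalities $H_{ij}=H$ directly across overlapping triples.
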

\begin{proof}
Suppose there is an imaginary subset $X'$. Then $|X'| \geq 3$. If $X'$ has only one element, then it must be a subset of $\cover(H')$ for some hidden node $H'$. If $X'$ has two nodes, by the definition of sparse measurement, there could only be one active path between them and there exists an intervention target that can break this path. So $X'$ would not be valid. 

Next, we'll use induction to show that for any valid subset $X'$ with $|X'| \geq 3$, there exists a hidden node $H'$ such that $X' \subseteq \cover(H')$. In other words, $X'$ cannot be imaginary. 

\paragraph{Base Step:}
If $|X'| =3$, then let $X' = \{X_1, X_2, X_3 \}$. Suppose $X'$ is imaginary. Then every pair of nodes in $X'$ must have a type-1 path and a type-2 path because there exists an intervention target that can destroy the type-2 path but every pair of nodes is valid. Therefore, we have $H_{1,2}$ as the common parent of $X_1, X_2$, $H_{2,3}$ as the common parent of $X_2, X_3$ and $H_{1,3}$ as the common parent of $X_1, X_3$. Note that these three hidden nodes are distinct, otherwise, $X'$ would not be imaginary. This is also not possible by the definition of the sparse measurement model.

\paragraph{Inductive Hypothesis:} For any valid subset $X'$ with $|X'| =  k \geq 3$, there exists a hidden node $H'$ such that $X' \subseteq \cover(H')$.

\paragraph{Inductive Step:} We'll show this is true for $|X'| = k+1$. Without loss of generality, suppose $X' = \{X_1, ..., X_{k+1} \}$. By the inductive hypothesis, $X'' = \{X_1, ..., X_k \}$ and $X''' = \{X_2, ..., X_{k+1} \}$ are both non-imaginary. In other words, there exist $H''$ and $H'''$ such that $X'' \subseteq \cover(H'')$ and $X''' \subseteq \cover(H''')$. Because $|X'| \geq 4$, both $H''$ and $H'''$ are common parent of $X_2$ and $X_3$. By the definition of the sparse measurement model, $H'' = H'''$.
\end{proof}

\begin{theorem}
Under Assumptions~\ref{assm:gr}-\ref{assm:complete-faimly} and with maximal sparse measurement model, there is no non-replaceable fractured subset.
\end{theorem}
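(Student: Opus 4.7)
The plan combines \cref{thm:sparse-no-imaginary} with the subset condition \cref{lemma:subset} to reduce the claim to the following: for every latent $H_i$, the collection $\mathcal{S}:=\{\cover(H_j):j\neq i\}$ is not complete. By \cref{thm:sparse-no-imaginary} every maximal valid subset equals some $\cover(H_j)$, and by \cref{lemma:subset} these are pairwise incomparable under inclusion, so each is non-replaceable and any complete collection certifying fracturedness of $\cover(H_i)$ lies inside $\mathcal{S}$. It therefore suffices to exhibit, for each $i$, an intervention $\Itarget\in\ItargetFamily$ and an edge of $\UDG(P^{(\Itarget)})$ that no shattered clique drawn from $\mathcal{S}$ covers. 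The degenerate case $|\cover(H_i)|=1$ is immediate: \cref{lemma:subset} forces the unique element of $\cover(H_i)$ to be a pure child of $H_i$, so it lies in no $\cover(H_j)$ with $j\neq i$, and any incident edge in any $\UDG(P^{(\Itarget)})$ is uncoverable.

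Assume $|\cover(H_i)|\geq 2$ and pick distinct $X_a,X_b\in\cover(H_i)$. By sparsity there is at most one active type-2 path $P$ between them in $G$. If no such $P$ exists, take $\Itarget=\emptyset$; otherwise pick a latent $H_s$ on $P$ possessing an incoming edge on $P$ (such a non-source latent always exists because $P$ contains the two distinct latent parents $\pa_P(X_a)\neq\pa_P(X_b)$), and take $\Itarget=\{H_s\}$. In either case the type-1 path $X_a\leftarrow H_i\to X_b$ survives in $G^{(\Itarget)}$ because it uses only outgoing edges of $H_i$, so the edge $X_a-X_b$ is present in $\UDG(P^{(\Itarget)})$. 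Furthermore no active type-2 path between $X_a$ and $X_b$ survives in $G^{(\Itarget)}$, since the unique candidate $P$ loses an incident edge at $H_s$ and interventions never create new active paths.

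Suppose for contradiction that $\bigcup\mathcal{S}'$ is a shattered clique of $\UDG(P^{(\Itarget)})$ containing both $X_a$ and $X_b$, for some $\mathcal{S}'\subseteq\mathcal{S}$. Then there exist $\cover(H_c),\cover(H_d)\in\mathcal{S}'$ with $X_a\in\cover(H_c)$, $X_b\in\cover(H_d)$ and $c,d\neq i$; uniqueness of common parents forces $H_c\neq H_d$, and \cref{lemma:subset} rules out $\cover(H_c)=\{X_a\}$, so one may pick $X_e\in\cover(H_c)\setminus\{X_a\}$. Clique-ness forces $X_e$ and $X_b$ to be $d$-connected in $G^{(\Itarget)}$, hence also in $G$. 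Applying the third clause of the sparse-model definition to the tuple $(X_a,X_b,X_e)$ with $\{X_a,X_b\}\subseteq\cover(H_i)$ and $\{X_a,X_e\}\subseteq\cover(H_c)$, the unique $d$-connecting path $Q$ between $X_e$ and $X_b$ in $G$ has the form $X_e\leftarrow H_c-\cdots-H_i\to X_b$. Splicing $X_a\leftarrow H_c$ in place of $X_e\leftarrow H_c$ yields an active type-2 path from $X_a$ to $X_b$ in $G$; by sparse uniqueness this must equal $P$, so $P$ exists, $\pa_P(X_a)=H_c$, $\pa_P(X_b)=H_i$, and $Q$ shares the latent segment $H_c-\cdots-H_i$ with $P$. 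The intervention on $H_s$, which lies inside this shared segment, therefore destroys $Q$ as well; combined with the ``only via $Q$'' clause of the third sparse condition, $X_e$ and $X_b$ become $d$-separated in $G^{(\Itarget)}$, contradicting clique-ness.

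The main obstacle is the last paragraph: the third sparse condition produces a unique $d$-connection $Q$ in $G$, and one must argue that the \emph{same} intervention chosen to kill $P$ also kills $Q$. The splicing step achieves this by using sparse uniqueness to force $P$ and $Q$ to share the latent segment containing $H_s$. A smaller subtlety is verifying that the type-1 path is preserved for every admissible choice of $H_s$ (including the possibility $H_s=H_i$): this holds because the type-1 path uses only outgoing edges of $H_i$, which no hard intervention removes.
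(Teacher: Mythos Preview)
Your overall strategy is essentially the paper's: use \cref{thm:sparse-no-imaginary} to force every non-replaceable maximal valid subset to be some $\cover(H_i)$, pick two elements of $\cover(H_i)$, kill the unique type-2 path between them, and then splice the path produced by the third sparse clause to reach a contradiction. The splicing argument at the end is correct and is exactly the mechanism the paper uses.

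There is, however, a genuine gap in your reduction step. You claim that ``by \cref{thm:sparse-no-imaginary} every maximal valid subset equals some $\cover(H_j)$,'' and hence that it suffices to show $\mathcal{S}=\{\cover(H_j):j\neq i\}$ is not complete. This is false: \cref{thm:sparse-no-imaginary} only says there are no \emph{imaginary} subsets, i.e.\ every maximal valid subset is \emph{contained in} some $\cover(H_j)$, not equal to one. Replaceable maximal valid subsets that are proper subsets of a $\cover(H_j)$ can and do occur in sparse maximal measurement models (the graph of \cref{fig:replaceable} is sparse and maximal, yet $\{X_2\}$ is a maximal valid subset strictly contained in both $\cover(H_1)$ and $\cover(H_2)$). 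Consequently a complete collection witnessing fracturedness need not consist of full child-sets, and your later choice of $X_e\in\cover(H_c)\setminus\{X_a\}$ can land outside the shattered clique $\bigcup\mathcal{S}'$, so ``clique-ness forces $X_e$ and $X_b$ to be $d$-connected'' is unjustified.

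The repair is small and brings you exactly to the paper's argument: drop the reduction to $\mathcal{S}$ and work directly with an arbitrary complete collection $\{S_j\}$ having $S_j\not\subseteq\cover(H_i)$. If $C=\bigcup_{j\in J}S_j$ is a shattered clique containing $X_a,X_b$, take $S_{j_a}\ni X_a$; by non-imaginariness $S_{j_a}\subseteq\cover(H_c)$ for some $c\neq i$, and since $S_{j_a}\not\subseteq\cover(H_i)$ you can pick $X_e\in S_{j_a}\setminus\cover(H_i)$. Now $X_e\in S_{j_a}\subseteq C$ and $X_e\in\cover(H_c)$ automatically, and $X_e\notin\cover(H_i)$ gives $X_e\neq X_a,X_b$, after which your splicing contradiction goes through unchanged.
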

\begin{proof}
By \cref{thm:sparse-no-imaginary}, there is no imaginary subset. Therefore, if $X'$ is a maximal valid subset, then $X' \subseteq \cover(H_i)$ for some hidden variable $H_i$. By definition, if $X'$ is non-replaceable, there must exist a hidden variable $H_i$ such that $X' = \cover(H_i)$.

In addition, if $X'$ is fractured, there exists a complete collection $\mathcal{S}= \{ S_j \}_{j=1}^k$ such that $S_j \not\subseteq X'$ for all $S_j$. Note for any $S_j$, there exists a hidden node $H_j$ such that $S_j \subseteq \cover(H_j)$ because there is no imaginary subset. 

First of all, $X'$ must have at least two elements in it. If $|X'| = 1$, then by the subset condition (\cref{lemma:subset}), no other hidden variable can be a parent of the single node in $X'$ and the aforementioned complete collection does not exist. 

Let $X_1, X_2$ be two elements in $X'$. Because the measurement model is sparse, by \cref{lemma:1-target}, there exists an intervention target $\Itarget$ (could be the empty set) such that, the type-2 path between $X_1, X_2$ is destroyed. Under this intervention target, there must exist a $\Cliq \in \UDG(G^{(\Itarget)})$ that is shattered by $\mathcal{S}$ and contains $\{X_1, X_2\}$. Therefore, there exists a subset of $\mathcal{S}$: $\{ S'_j \}_{j=1}^{k'}$ with $\{X_1, X_2\} \subseteq C = \cup  S'_j $. 

Because $S'_j \not\subseteq X'$, there exist two maximal subsets $S'_1$ and $S'_2$ such that $X_1 \in S'_1$ and $X_2 \in S'_2$. This also means that there exist $H_{1}$ and $H_2$ with  $X_1 \in S'_1 \subseteq \cover(H_1)$ and $X_2 \in S'_1 \subseteq \cover(H_2)$. $H_1$ and $H_2$ must be two different nodes other than $H_i$ by the defition of the sparse measurement model. In addition, because $S'_1 \not\subseteq X'$, there exists $X_3 \in S'_1$ where $X_3 \not\in X'$.

Because $\{X_1, X_2\} \subseteq \cover(H_i)$ and $\{X_1, X_3\} \subseteq \cover(H_1)$ and $X_2, X_3$ are $d$-connected under the intervention target $\Itarget$, by the definition of the sparse measurement model, $X_2$ and $X_3$ are $d$-connected via the path between $H_i$ and $H_1$. This is not possible because this path creates another type-2 path between $X_1$ and $X_2$.

\end{proof}

\subsection{Useful lemmas}
The first two lemmas show that we only need to care about maximal cliques of undirected dependency graphs.
\begin{lemma}
\label{lemma:same-clique-same-ci}
For two measurement model $G_1$ and $G_2$ and two intervention targets $\Itarget_1$ and $\Itarget_2$, we have that $\MaxCliq_{G_1}^{(\Itarget_1)} = \MaxCliq_{G_2}^{(\Itarget_2)}$ if and only if $\tupleset_X(G_1^{(\Itarget_1)}) = \tupleset_X(G_2^{(\Itarget_2)})$.
\end{lemma}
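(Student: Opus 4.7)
The easy direction is $(\Leftarrow)$. Since $\tupleset_X(G^{(\Itarget)})$ contains every $d$-separation statement over $X$, in particular the marginal ones with conditioning set $C=\emptyset$, its equality across the two models forces the marginal (in)dependence structure over $X$ to agree. By definition $\UDG(G^{(\Itarget)})$ has an edge $X_i\!-\!X_j$ iff $\langle\{X_i\},\{X_j\},\emptyset\rangle\notin\tupleset_X(G^{(\Itarget)})$, so the two UDGs coincide and hence so do their sets of maximal cliques.

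For $(\Rightarrow)$, the first observation is routine: the edge set of an undirected graph is recovered from its set of maximal cliques (an edge $X_iX_j$ is present iff some maximal clique contains both), so $\MaxCliq_{G_1}^{(\Itarget_1)}=\MaxCliq_{G_2}^{(\Itarget_2)}$ already yields $\UDG(G_1^{(\Itarget_1)})=\UDG(G_2^{(\Itarget_2)})$. The substantive step is to show that in any measurement model the UDG alone determines the full $d$-separation structure over $X$ under arbitrary conditioning sets $C\subseteq X$.

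To that end, I would prove the following characterisation as the main sublemma: for a measurement model $G$ and disjoint $X_i,X_j\in X$ and $C\subseteq X$, $X_i$ and $X_j$ are $d$-connected given $C$ in $G$ if and only if $X_i$ and $X_j$ lie in the same connected component of the subgraph of $\UDG(G)$ induced on $\{X_i,X_j\}\cup C$. The $(\Leftarrow)$ direction uses that adjacency in the UDG of a measurement model is witnessed by a common hidden ancestor (since $X$-variables have no descendants, marginal dependence can only come from a fork through hidden space); stringing these fork paths together yields a path in $G$ whose observed intermediates all lie in $C$ and appear as colliders, hence activated, while the hidden intermediates are non-colliders. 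The $(\Rightarrow)$ direction starts from an active path in $G$ given $C$ and notes that observed intermediates can only appear as colliders (because $X$-variables have no outgoing edges), so every intermediate observed marker is forced into $C$; the purely-hidden subpaths between consecutive observed markers are then shown to induce UDG-connectivity through $C$ by induction on the number of activated hidden colliders in the subpath, using that each such collider supplies an observed descendant $X_l\in C$ which can be ``promoted'' to an explicit intermediate in the UDG walk.

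The main obstacle is exactly this last path-surgery step in the $(\Rightarrow)$ direction: one needs to replace an activated hidden collider by a suitable $X_l\in C$ descendant while keeping the resulting pieces simple (no repeated vertices) and still active, and then iterate. Once the characterisation is established the lemma follows immediately, for then the induced subgraphs of $\UDG(G_1^{(\Itarget_1)})$ and $\UDG(G_2^{(\Itarget_2)})$ on $\{X_i,X_j\}\cup C$ agree for every choice of $X_i,X_j,C$, so $\tupleset_X(G_1^{(\Itarget_1)})=\tupleset_X(G_2^{(\Itarget_2)})$.
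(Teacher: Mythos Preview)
Your proposal is correct. The $(\Leftarrow)$ direction and the first step of $(\Rightarrow)$ (recovering the UDG from its maximal cliques) match the paper exactly. The difference lies in the substantive step: the paper simply invokes Proposition~6 of \citet{markham2020measurement} for the fact that, in a measurement model, the UDG already determines $\tupleset_X$. You instead reprove this from scratch by establishing the explicit characterisation that $X_i$ and $X_j$ are $d$-connected given $C$ iff they lie in the same connected component of the induced subgraph of $\UDG(G)$ on $\{X_i,X_j\}\cup C$. This is precisely the content of that proposition (measurement models behave like bidirected graphs over $X$, where $m$-connection given $C$ is exactly connectivity through $C$), so you are not doing anything different in substance, only in packaging. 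Your route is self-contained and yields a more informative intermediate statement; the paper's route is a one-line citation. The path-surgery you flag as the main obstacle is real but standard: the induction on the number of hidden colliders works because replacing a hidden collider $H$ by its activating descendant $X_l\in C$ turns $H$ into a non-collider on each of the two new subpaths, so each has strictly fewer hidden colliders; simplicity can be restored by the usual walk-to-path reduction for $d$-connection.
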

\begin{proof}
If $\tupleset_X(G_1^{(\Itarget_1)}) = \tupleset_X(G_2^{(\Itarget_2)})$, then $\UDG(G_1^{(\Itarget_1)})$ is the same as $\UDG(G_2^{(\Itarget_2)})$ by our construction of undirected dependency graphs. The obviously $\MaxCliq_{G_1}^{(\Itarget_1)} = \MaxCliq_{G_2}^{(\Itarget_2)}$. On the other hand, if $\MaxCliq_{G_1}^{(\Itarget_1)} = \MaxCliq_{G_2}^{(\Itarget_2)}$, then $\UDG(G_1^{(\Itarget_1)})$ is the same as $\UDG(G_2^{(\Itarget_2)})$ by \cref{lemma:same-clique-same-graph}. By Proposition 6 of \citep{markham2020measurement}, $\tupleset_X(G_1^{(\Itarget_1)}) = \tupleset_X(G_2^{(\Itarget_2)})$.
\end{proof}

\begin{lemma}
\label{lemma:same-clique-same-graph}
If $G_1 = (V, E_1)$ and $G_2=(V, E_2)$ share the same maximal cliques, then $E_1 = E_2$.
\end{lemma}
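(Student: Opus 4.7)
The plan is to show the stronger claim that the edge set of any undirected graph is completely determined by its collection of maximal cliques via the identity
\[
E \;=\; \bigcup_{\Cliq \,\in\, \mathcal{K}(G)} \binom{\Cliq}{2},
\]
where $\mathcal{K}(G)$ denotes the set of maximal cliques of $G$. Once this identity is established for both $G_1$ and $G_2$, the conclusion $E_1 = E_2$ follows immediately from the hypothesis $\mathcal{K}(G_1) = \mathcal{K}(G_2)$.

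To prove the identity, I would verify the two inclusions separately. For $(\supseteq)$: if $\{u,v\} \subseteq \Cliq$ for some maximal clique $\Cliq$, then by definition of a clique $u$ and $v$ are adjacent, so $(u,v) \in E$. For $(\subseteq)$: if $(u,v) \in E$, then $\{u,v\}$ is itself a clique of $G$; since $V$ is finite, a greedy extension argument (repeatedly adjoin any vertex adjacent to all current members until no such vertex remains) produces a maximal clique $\Cliq$ with $\{u,v\} \subseteq \Cliq$, so $(u,v) \in \binom{\Cliq}{2}$ for this $\Cliq \in \mathcal{K}(G)$.

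Applying the identity to both graphs yields
\[
E_1 \;=\; \bigcup_{\Cliq \,\in\, \mathcal{K}(G_1)} \binom{\Cliq}{2} \;=\; \bigcup_{\Cliq \,\in\, \mathcal{K}(G_2)} \binom{\Cliq}{2} \;=\; E_2,
\]
as required. There is no real obstacle here; the lemma is essentially a restatement of the elementary fact that in a finite undirected graph every edge is contained in (at least) one maximal clique, and conversely every pair of vertices inside a clique is an edge. The only point worth stating explicitly is the greedy extension used in the $(\subseteq)$ direction, which is what guarantees that the union on the right-hand side actually recovers every edge rather than missing isolated edges.
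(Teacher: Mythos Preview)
Your proof is correct and takes essentially the same approach as the paper: both arguments rest on the two elementary facts that every edge lies in some maximal clique and every pair inside a clique is an edge. The paper phrases this as a direct contrapositive (if an edge were in one graph but not the other, the corresponding maximal clique sets would differ), whereas you package the same content into the explicit identity $E = \bigcup_{\Cliq \in \mathcal{K}(G)} \binom{\Cliq}{2}$; the underlying reasoning is identical.
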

\begin{proof}
Let's first show $E_1 \subseteq E_2$. Suppose $(A \rightarrow B) \in E_2$ and $(A \rightarrow B) \notin E_1$. Then there exists a maximal clique that contains $A, B$ for $G_2$. But there is no clique that contains $A, B$ at the same time for $G_1$. Therefore $E_1 \subseteq E_2$. Similarly, $E_2 \subseteq E_1$.
\end{proof}

\begin{lemma}
\label{lemma:same-p-clique-same-graph}
Under \cref{assm:gr}(a) and (b), two distributions $P_1$ and $P_2$ over two measurement models $G_1$ and $G_2$ share the same UDG if and only if $\tupleset_X(G_1)$ = $\tupleset_X(G_2)$.
\end{lemma}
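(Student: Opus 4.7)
The plan is to reduce this statement to \cref{lemma:same-clique-same-ci} by first establishing a matching between the distributional UDG and the graphical UDG. Specifically, I would first show that under \cref{assm:gr}(a) and (b), for any distribution $P$ over a measurement model $G$ satisfying these assumptions, $\UDG(P) = \UDG(G)$ (where $\UDG(G)$ is the undirected graph on $X$ with an edge $X_i$--$X_j$ iff $X_i$ and $X_j$ are $d$-connected by $\emptyset$). This reduction makes the lemma a statement purely about the graphs $G_1,G_2$, after which the prior lemma applies.

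For the reduction, the direction ``$d$-separation by $\emptyset$ $\Rightarrow$ marginal independence'' is exactly the Markov property \cref{assm:gr}(a) specialized to singletons with empty conditioning set. The converse, ``marginal independence $\Rightarrow$ $d$-separation by $\emptyset$'', is \cref{assm:gr}(b). Combining these two one-line consequences gives that $X_i \indep_P X_j \iff \dsep_G(\{X_i\}\{X_j\}\given\emptyset)$, which matches the edge definitions of $\UDG(P)$ and $\UDG(G)$ verbatim. Hence $\UDG(P_1) = \UDG(P_2)$ if and only if $\UDG(G_1) = \UDG(G_2)$.

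For the forward direction ($\UDG$ equality $\Rightarrow$ $\tupleset_X$ equality), once we know $\UDG(G_1) = \UDG(G_2)$, the two graphs share the same set of maximal cliques over $X$, i.e.\ $\MaxCliq_{G_1}^{(\emptyset)} = \MaxCliq_{G_2}^{(\emptyset)}$. Applying \cref{lemma:same-clique-same-ci} with $\Itarget_1 = \Itarget_2 = \emptyset$ then yields $\tupleset_X(G_1) = \tupleset_X(G_2)$. The reverse direction is immediate: specializing $\tupleset_X(G_1) = \tupleset_X(G_2)$ to triples $\langle\{X_i\},\{X_j\},\emptyset\rangle$ shows that $\UDG(G_1)$ and $\UDG(G_2)$ agree edge-by-edge, and then the reduction above transfers the equality to $\UDG(P_1) = \UDG(P_2)$.

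There is no real technical obstacle here; all the substantive content has been offloaded to \cref{lemma:same-clique-same-ci} (whose proof in turn invokes Proposition~6 of Markham--Everitt, the nontrivial fact that for measurement models the UDG already encodes all higher-order $d$-separation relations over $X$). The only care needed is to separate the two roles played by \cref{assm:gr}(a) and (b): (a) is used to push $d$-separation to the distribution, and (b) is used to pull marginal independence back to the graph, and both are required because the statement is an ``if and only if.''
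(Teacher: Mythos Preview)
Your proposal is correct and follows essentially the same approach as the paper: both first use \cref{assm:gr}(a) and (b) to identify $\UDG(P_i)$ with $\UDG(G_i)$, then invoke \cref{lemma:same-clique-same-ci} to pass between UDG equality and $\tupleset_X$ equality. Your reverse direction is slightly more direct (specializing $\tupleset_X$ to empty conditioning sets rather than going through \cref{lemma:same-clique-same-graph}), but this is a cosmetic difference.
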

\begin{proof}
Because of \cref{assm:gr}(a) and (b), $\UDG(P_1) = \UDG(G_1)$ and $\UDG(P_2) = \UDG(G_2)$. By \cref{lemma:same-clique-same-ci} and \cref{lemma:same-clique-same-graph}, the statement is true.
\end{proof}

\begin{lemma}
\label{lemma:connected-hidden_maximal}
If two hidden nodes $H_i$ and $H_j$ are $d$-connected in $G^{(\Itarget)}$ under intervention target $\Itarget$, then there exists a maximal clique $\Cliq \in \MaxCliq_{G}^{(\Itarget)}$ such that $\cover(H_i) \cup \cover(H_j) \subseteq \Cliq$
\end{lemma}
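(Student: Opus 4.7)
The plan is to reduce the claim to showing that $\cover(H_i)\cup\cover(H_j)$ forms a clique in $\UDG(G^{(\Itarget)})$; once this is done, any clique is contained in some maximal clique, yielding the desired $\Cliq\in\MaxCliq_{G}^{(\Itarget)}$. Since edges in $\UDG(G^{(\Itarget)})$ correspond precisely to pairs of observed variables that are $d$-connected (given $\emptyset$) in $G^{(\Itarget)}$, everything reduces to producing, for every pair $X_a,X_b\in\cover(H_i)\cup\cover(H_j)$, an active path in $G^{(\Itarget)}$.

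I would split into three cases. If $X_a,X_b\in\cover(H_i)$, then the two-edge path $X_a\leftarrow H_i\to X_b$ is always present in $G^{(\Itarget)}$ (the intervention on $H_i$, if any, removes only incoming edges, not the outgoing edges to its children), and $H_i$ is a common cause, hence a non-collider, so the path is active. The case $X_a,X_b\in\cover(H_j)$ is symmetric. The interesting case is $X_a\in\cover(H_i)$ and $X_b\in\cover(H_j)$: take an active path $\pi$ from $H_i$ to $H_j$ in $G^{(\Itarget)}$ guaranteed by the hypothesis, and extend it by prepending the edge $X_a\leftarrow H_i$ and appending the edge $H_j\to X_b$.

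The main thing to verify is that this extended walk $X_a\leftarrow H_i\cdots H_j\to X_b$ is still an active path. Because observed variables have no outgoing edges in a measurement model and we are conditioning on $\emptyset$, neither $X_a$ nor $X_b$ can appear inside $\pi$ (they could only appear as colliders, which would already be inactive given $\emptyset$), so the extended walk is a simple path. Activeness at every interior node of $\pi$ is inherited from $\pi$; the only new status-checks are at $H_i$ and $H_j$. At $H_i$, the appended edge is outgoing ($H_i\to X_a$), so for $H_i$ to become a collider on the extended path, $\pi$ would have to begin with another outgoing edge from $H_i$, but then $H_i$ is a common cause, not a collider; if instead $\pi$ begins with an incoming edge into $H_i$, the triple forms a chain $\cdot\to H_i\to X_a$, again a non-collider. (Note that when $H_i\in\Itarget$, only the common-cause case is possible, since incoming edges into $H_i$ are severed.) By the same argument $H_j$ is a non-collider on the extended path, so activeness is preserved.

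I do not foresee a serious obstacle: the argument is essentially bookkeeping on active paths, and the measurement-model structure (outgoing edges from $H$ into $X$ only) is exactly what makes the non-collider verification at $H_i$ and $H_j$ automatic. The small subtlety worth stating explicitly is the interaction with $\Itarget$, which only restricts the possible orientations of $\pi$ at its endpoints and, if anything, rules out the harder sub-case. Concluding, $\cover(H_i)\cup\cover(H_j)$ is pairwise adjacent in $\UDG(G^{(\Itarget)})$, hence a clique, hence contained in some $\Cliq\in\MaxCliq_{G}^{(\Itarget)}$.
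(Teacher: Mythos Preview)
Your proposal is correct and follows the same approach as the paper: show that $\cover(H_i)\cup\cover(H_j)$ is a clique in $\UDG(G^{(\Itarget)})$ and then pass to a maximal clique. The paper's proof is a two-sentence version of exactly this; you have simply spelled out the active-path construction and the non-collider checks at $H_i$ and $H_j$ that the paper leaves implicit.
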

\begin{proof}
Because  $H_i$ and $H_j$ are $d$-connected, $\cover(H_i)$ and $\cover(H_j)$ form a clique. Therefore, there exists a maximal clique that is a superset of $\cover(H_i) \cup \cover(H_j)$.
\end{proof}

\begin{lemma}
\label{lemma:3-nodes}
If a set of two observed elements $\{X_a, X_b\}$ is valid and $X_a, X_b$ do not share the same parent, then one of $X_a$ and $X_b$ must have at least two parents.
\end{lemma}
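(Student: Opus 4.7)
The plan is to argue by contrapositive: assume both $X_a$ and $X_b$ have exactly one parent, say $\pa(X_a)=\{H_a\}$ and $\pa(X_b)=\{H_b\}$ with $H_a\ne H_b$, and exhibit some $I\in\ItargetFamily$ under which $X_a,X_b$ become marginally independent in $P^{(I)}$, contradicting validity of $\{X_a,X_b\}$.

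First, I would classify every active $(X_a,X_b)$-path with $C=\emptyset$ (cf.\ \cref{rem:empty}). In the measurement model, $X_a$ and $X_b$ have no children, and here each has a unique parent, so every such path has the shape
\[
X_a \leftarrow H_a \;\cdots\; H_b \to X_b,
\]
whose middle portion is an active, collider-free latent path between $H_a$ and $H_b$. Crucially, $H_a$ and $H_b$ are themselves non-colliders on this path, because each already has an outgoing edge to its observed child. Combined with the standard characterization of collider-free $d$-connecting paths between two latents used elsewhere in this appendix, the $(H_a,H_b)$-segment must be exactly one of: (a) a directed path $H_a\to\cdots\to H_b$; (b) a directed path $H_b\to\cdots\to H_a$; or (c) a fork through a common ancestor $H_c$, i.e. $H_a\leftarrow\cdots\leftarrow H_c\to\cdots\to H_b$.

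Next, I would use the two interventions $\{H_a\}$ and $\{H_b\}$, both available by \cref{assm:complete-faimly}. Intervening on $H_a$ deletes every incoming edge to $H_a$, so it destroys every type (b) path (which terminates with an edge into $H_a$) and every type (c) path (whose left branch also terminates with an edge into $H_a$), leaving only type (a) paths intact in $G^{(\{H_a\})}$. By symmetry, $G^{(\{H_b\})}$ retains only type (b) paths. If $\{X_a,X_b\}$ were valid, then the contrapositive of the Markov property (\cref{assm:gr}(a)) yields that $X_a,X_b$ remain $d$-connected in both $G^{(\{H_a\})}$ and $G^{(\{H_b\})}$. This forces the simultaneous existence of a type (a) path (so $H_a\in\an(H_b)$) and a type (b) path (so $H_b\in\an(H_a)$), producing a directed cycle and contradicting acyclicity of $G$.

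The main obstacle is the structural bookkeeping in the classification step: one must verify that the three-way enumeration of collider-free active $(H_a,H_b)$-paths is exhaustive, and in particular that neither $H_a$ nor $H_b$ can ever appear as a collider on such a path --- this is precisely where the uniqueness of each observed node's parent is used. Once that is pinned down, the surgery performed by each hard intervention on the three path types is transparent, and the cycle contradiction is immediate.
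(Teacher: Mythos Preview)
Your proof is correct and follows essentially the same approach as the paper. The paper's proof simply invokes \cref{lemma:1-target} to obtain a single intervention target that $d$-separates $H_a$ and $H_b$ (and hence $X_a$ and $X_b$, since those are their unique parents), directly contradicting validity; your argument unfolds that lemma inline, considering both $\{H_a\}$ and $\{H_b\}$ and extracting a directed cycle instead, but the underlying path classification and intervention surgery are the same.
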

\begin{proof}
$X_a$ and $X_b$ do not share the same parent. Therefore, there must exist at least two hidden nodes $H_{a,1}$ and $H_{b,1}$ such that $X_a \in H_{a, 1}$ and $X_b \in H_{b,1}$. However, if there are only two such hidden nodes, \cref{lemma:1-target} suggests there exists an intervention target $\Itarget$ that makes $X_a$ and $X_b$ $d$-separated and thus $X_a$ and $X_b$ $d$-separated. This is a violation of the fact that $\{X_a, X_b\}$ is a valid subset. 
\end{proof}

\begin{lemma}
\label{lemma:violation-max}
For a hidden variable $H_i$, there does not exist an observed node $X_i$ such that $X_i \not\in \cover(H_i)$ but under any intervention target $\Itarget \in \ItargetFamily$, $X_i$ is a descendant of $H_i$.
\end{lemma}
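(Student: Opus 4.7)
The plan is to proceed by contradiction: suppose there exist $H_i$ and $X_i\notin\cover(H_i)$ such that $X_i\in\de_{G^{(\Itarget)}}(H_i)$ for every $\Itarget\in\ItargetFamily$. I will construct the strictly larger DAG $G'=(V,E\cup\{H_i\to X_i\})$ and show that it satisfies \cref{assm:gr} with $\tupleset_X(G^{(\Itarget)})=\tupleset_X(G'^{(\Itarget)})$ for every $\Itarget$, directly violating the maximality of $G$ via \cref{defn:maxMM}(b). First observe that $G'$ is a DAG: taking $\Itarget=\emptyset$ gives $X_i\in\de_G(H_i)$, so adding $H_i\to X_i$ cannot close a directed cycle, and $G'$ is still a measurement model since $X_i$ remains a sink.

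The heart of the proof is the equality $\tupleset_X(G^{(\Itarget)})=\tupleset_X(G'^{(\Itarget)})$ for each $\Itarget\in\ItargetFamily$. The inclusion $\tupleset_X(G'^{(\Itarget)})\subseteq\tupleset_X(G^{(\Itarget)})$ is immediate since adding edges only destroys $d$-separations. For the reverse, consider any active path $\pi'$ from $X_a$ to $X_b$ in $G'^{(\Itarget)}$ given $C\subseteq X$; if $\pi'$ avoids the new edge it already lies in $G^{(\Itarget)}$, so assume it uses $H_i\to X_i$. Because $X_i$ is a sink in the measurement model, $X_i$ appears on $\pi'$ either as an endpoint (WLOG $X_b=X_i$) or as an internal collider, in which case activity forces $X_i\in C$. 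In either case I replace the single edge $H_i\to X_i$ on $\pi'$ by the directed path $H_i\to Z_1\to\cdots\to Z_\ell\to X_i$ whose existence in $G^{(\Itarget)}$ is the standing hypothesis.

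To verify activity of the resulting walk in $G^{(\Itarget)}$ given $C$: at $H_i$ the concatenation produces a chain or fork (depending on the direction of the edge of $\pi'$ incident to $H_i$), both non-colliders, and $H_i\notin C$ since $C\subseteq X$; each intermediate $Z_j$ is hidden (observed nodes are sinks, so a directed path from $H_i$ to $X_i$ has only hidden internal nodes), a chain, and not in $C$; at $X_i$ the node is either the terminal endpoint or a collider activated by $X_i\in C$. A standard walk-to-path extraction then yields an active path in $G^{(\Itarget)}$ between $X_a$ and $X_b$ given $C$, giving the reverse inclusion.

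With $\tupleset_X(G)=\tupleset_X(G')$ in hand, \cref{assm:gr} transfers from $G$ to $G'$: (a) follows from $\tupleset(G'^{(\Itarget)})\subseteq\tupleset(G^{(\Itarget)})\subseteq\tupleset(\distri^{(\Itarget)})$; (b) is immediate from the observed-margin equality; and (c) because $\dsep_{G'^{(\Itarget)}}(\{H_p\}\{H_q\}\given\emptyset)$ implies the same in $G^{(\Itarget)}$, so the witnesses furnished by (c) for $G$ give witnesses $X_p\in\cover_G(H_p)\subseteq\cover_{G'}(H_p)$ and similarly $X_q$ in $G'$ via the same equality. Since $E\subsetneq E'$, this contradicts the maximality of $G$. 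The main technical nuisance I expect is the walk-to-path extraction—guaranteeing a simple active path and not merely an active walk—which is standard but requires careful bookkeeping of which nodes the concatenated segments share.
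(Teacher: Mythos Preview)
Your proof is correct and follows the same overall strategy as the paper: add the edge $H_i\to X_i$ to form $G'$, show this preserves $\tupleset_X(G^{(\Itarget)})$ for every $\Itarget$, verify \cref{assm:gr} transfers, and conclude that maximality is violated. The paper packages the last two steps into \cref{lemma:maximal-violation}, which you prove inline.

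The one noteworthy execution difference: you establish $\tupleset_X(G^{(\Itarget)})=\tupleset_X(G'^{(\Itarget)})$ for \emph{all} conditioning sets $C\subseteq X$ by explicit path surgery (replace the new edge by the directed $H_i\rightsquigarrow X_i$ path, then extract a path from the resulting walk). The paper instead only checks that the UDGs agree---i.e., only marginal $d$-connections with $C=\emptyset$---and then invokes \cref{lemma:same-clique-same-ci} (which in turn cites Proposition~6 of \citet{markham2020measurement}) to lift UDG equality to full $\tupleset_X$ equality in measurement models. Their shortcut avoids the walk-to-path bookkeeping you flag as a nuisance; your argument is more self-contained and does not depend on that external characterization.
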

\begin{proof}
This violates maximality. 

Consider $G' = (V, E')$ with $E' = E \cup \{H_i \to X_i \}$. For any intervention target $\Itarget$, $\UDG(G'^{(\Itarget)})$ does not have more edges than $\UDG(G^{\Itarget})$. If the opposite is true, then the new edge in $\UDG(G'^{(\Itarget)})$ must involve $X_i$. Suppose the new edge is between $X_i$ and $X_j$, it must be that a parent of $X_j$ is $d$-connected to $H_i$. However, because $X_i$ is a descendant of $H_i$, by \cref{lemma:d-sep-cases}, $X_i$ must be $d$-connected to $X_j$ in $\UDG(G^{\Itarget})$. Therefore, by \cref{lemma:same-clique-same-ci}, $\{\tupleset_X(G^{(\Itarget)})\}_{ \Itarget \in \ItargetFamily}$ is the same as $\{\tupleset_X(G^{(\Itarget)})\}_{ \Itarget \in \ItargetFamily}$.

By \cref{lemma:maximal-violation}, the existence of $G'$ violates the maximality condition.
\end{proof}

\begin{lemma}
\label{lemma:childsetiscomplete}

$\{\cover(H_i) \}_{i=1}^m$  is a complete cover.
\end{lemma}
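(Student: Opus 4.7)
The plan is to prove a stronger statement, namely that for every $\Itarget\in\ItargetFamily$ every maximal clique $\Cliq$ in $\UDG(P^{(\Itarget)})$ can itself be written as $\Cliq = \bigcup\{\cover(H_i):\cover(H_i)\subseteq\Cliq\}$, i.e.\ is shattered by $\{\cover(H_i)\}_{i=1}^m$. Since every edge of $\UDG(P^{(\Itarget)})$ lies in some maximal clique, this immediately implies the collection of shattered cliques is an edge cover, which is exactly completeness.

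First, I would invoke \cref{assm:gr}(a) and (b) to identify $\UDG(P^{(\Itarget)})$ with $\UDG(G^{(\Itarget)})$: two observed variables are adjacent iff they are $d$-connected in $G^{(\Itarget)}$. Since $X$ consists of sinks in the measurement model, marginal $d$-connection between $X_a$ and $X_b$ in $G^{(\Itarget)}$ is equivalent (by \cref{lemma:d-sep-cases}) to the existence of a common hidden ancestor in $G^{(\Itarget)}$. A useful structural observation along the way: for every $H^{*}\in H$, the set $\mathcal{D}_X(H^{*})$ of observed descendants of $H^{*}$ in $G^{(\Itarget)}$ is a clique in $\UDG(G^{(\Itarget)})$ and decomposes as $\mathcal{D}_X(H^{*})=\bigcup_{H'\in\overline{\de}_{\latentG^{(\Itarget)}}(H^{*})}\cover(H')$, so any clique of the form $\mathcal{D}_X(H^{*})$ is already visibly shattered.

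The crux of the proof is to show that for every $X_c\in\Cliq$ there is some $H_c\in\pa(X_c)$ with $\cover(H_c)\subseteq\Cliq$. I would argue by contradiction: suppose for some $X_c\in\Cliq$ and every $H_j\in\pa(X_c)$ there is a child $X_{d,j}\in\cover(H_j)\setminus\Cliq$, and by maximality of $\Cliq$ a witness $X_{e,j}\in\Cliq$ with $X_{d,j}$ and $X_{e,j}$ $d$-separated in $G^{(\Itarget)}$. Since $X_{d,j}$ carries $\overline{\an}(H_j)$ as ancestors, the witnesses force $\overline{\an}(H_j)\cap\an(X_{e,j})=\emptyset$ in $G^{(\Itarget)}$, so that the active path from $X_c$ to $X_{e,j}$ must exit $X_c$ through a different parent $H_{j'}\neq H_j$. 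This pigeonhole constraint across all $H_j\in\pa(X_c)$ pinpoints a hidden node $H^{\dagger}$ that is a common ancestor of many elements of $\Cliq$ but is not yet a parent of $X_c$.

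The main obstacle, and where I expect most of the work to lie, is to convert this combinatorial obstruction into a violation of maximality of $G$ (\cref{defn:maxMM}). Concretely, the plan is to build $G'=(V,E\cup\{H^{\dagger}\to X_c\})$ and verify (i) $G'$ still satisfies \cref{assm:gr}, and (ii) $\{\tupleset_X(G'^{(\Itarget)})\}_{\Itarget\in\ItargetFamily}=\{\tupleset_X(G^{(\Itarget)})\}_{\Itarget\in\ItargetFamily}$. For (ii), any active path in $G'^{(\Itarget)}$ that genuinely uses the new edge $H^{\dagger}\to X_c$ yields either a marginal or conditional $d$-connection involving $X_c$ and some $X_e$; the fact that $X_c$ and $X_e$ already lie together in the clique $\Cliq$, combined with \cref{lemma:d-sep-cases} applied to the subgraph of latents, lets one reroute this path through a pre-existing common ancestor in $G^{(\Itarget)}$, so no new $X$-level $d$-(non-)separations are introduced. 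This strict enlargement contradicts the maximality of $G$, completes the contradiction, and thereby establishes the desired shattering.
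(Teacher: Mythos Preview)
Your proposed stronger statement---that every \emph{maximal} clique $\Cliq$ of $\UDG(P^{(\Itarget)})$ satisfies $\Cliq=\bigcup\{\cover(H_i):\cover(H_i)\subseteq\Cliq\}$---is false, so the whole strategy collapses. A concrete counterexample is the model $G_{(a)}$ in the paper's \cref{fig:ambi} (\cref{ex:ambiguities}): three independent latents with $\cover(H_1)=\{X_1,X_4,X_5\}$, $\cover(H_2)=\{X_2,X_5,X_6\}$, $\cover(H_3)=\{X_3,X_4,X_6\}$. The paper verifies this model is maximal and satisfies \cref{assm:gr}. Under any $\Itarget$, the set $\{X_4,X_5,X_6\}$ is a maximal clique, but no union of the $\cover(H_i)$ equals it (each $\cover(H_i)$ drags in one of $X_1,X_2,X_3$). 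Running your contradiction argument here with $X_c=X_4$, the only candidate for $H^{\dagger}$ is $H_2$, but adding $H_2\to X_4$ makes $X_2$ and $X_4$ $d$-connected, changing $\tupleset_X$; so maximality is \emph{not} violated and your contradiction never materializes. The step ``reroute this path through a pre-existing common ancestor in $G^{(\Itarget)}$'' is exactly where the argument fails: there is no such pre-existing connection between $X_4$ and $X_2$.

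The paper's proof avoids this trap by not attempting to shatter maximal cliques at all. Completeness only requires that every \emph{edge} lie in \emph{some} shattered clique, and this is almost immediate: if $X_1-X_2$ is an edge of $\UDG(G^{(\Itarget)})$, pick parents $H_1\in\pa(X_1)$, $H_2\in\pa(X_2)$ lying on an active path (so $H_1$ and $H_2$ are $d$-connected in $G^{(\Itarget)}$, or equal). Then $\cover(H_1)\cup\cover(H_2)$ is itself a clique in $\UDG(G^{(\Itarget)})$ (any two of its vertices share $H_1$, $H_2$, or the active $H_1$--$H_2$ path as a common ancestor), it contains the edge $X_1-X_2$, and it is trivially shattered since it is a union of two members of $\{\cover(H_i)\}_{i=1}^m$. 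That is the entire argument; neither the pigeonhole reasoning nor the maximality of $G$ is needed.
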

\begin{proof}
By \cref{assm:gr}(a) and \cref{assm:gr}(b), two observed variables $X_i$ and $X_j$ are dependent if and only if $X_i$ and $X_j$ are $d$-connected.
Suppose $\{\cover(H_i) \}_{i=1}^m$ is not a complete cover. 
Then for some intervention target $\Itarget \in \ItargetFamily$, there exists one clique $\Cliq$ in $\UDG(G^{(\Itarget)})$ that is not shattered and that clique contains one edge that is not covered by other shattered cliques in $\UDG(G^{(\Itarget)})$. Suppose that edge is between $X_1$ and $X_2$, then there exist $H_1$ and $H_2$ with $X_1 \in \cover(H_1)$ and $X_2 \in \cover(H_2)$ and $H_1$ is $d$-connected to $H_2$. By \cref{lemma:connected-hidden_maximal}, this is not possible.
\end{proof}

\begin{lemma}
\label{lemma:maximal-violation}
If $G = (X \cup H, E)$ is a maximal measurement model, then there does not exist a measurement model $G' = (X \cup H, E')$ with $E' = E \cup \{H_i \rightarrow X_i \}$ for some hidden variable $H_i$ and observed variable $X_i$ such that for any intervention target $\Itarget \in \ItargetFamily$, where $\ItargetFamily$ is the complete family of intervention targets, 
\begin{equation*}
    \tupleset_X(G^{(\Itarget)})=\tupleset_X(G'^{(\Itarget)})
\end{equation*}
\end{lemma}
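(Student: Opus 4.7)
The plan is to argue by contradiction, reducing the claim to a direct invocation of the definition of maximality. Suppose such a $G'$ exists with $E' = E \cup \{H_i \to X_i\}$ and $\tupleset_X(G^{(\Itarget)}) = \tupleset_X(G'^{(\Itarget)})$ for every $\Itarget \in \ItargetFamily$. Since the only added edge points from a hidden to an observed node, $G'$ is still a measurement model, $\pa_{G'}(X_j) \supseteq \pa_G(X_j)$ for every observed $X_j$, and crucially $\ch_{G'}(H_k) \supseteq \ch_G(H_k)$ for every latent $H_k$, so $\cover_{G}(H_k) \subseteq \cover_{G'}(H_k)$. Moreover, since $\Itarget \subseteq H$, the new edge survives every intervention, so $G'^{(\Itarget)}$ is obtained from $G^{(\Itarget)}$ by adding the same edge $H_i \to X_i$, which can only create new active paths; in particular $\tupleset_V(G'^{(\Itarget)}) \subseteq \tupleset_V(G^{(\Itarget)})$.

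I will then verify that the original distribution $P$ witnesses \cref{assm:gr} for $G'$, so that $G'$ qualifies as a competitor in \cref{defn:maxMM}. For (a), Markovness of $P^{(\Itarget)}$ w.r.t.\ $G^{(\Itarget)}$ combined with $\tupleset_V(G'^{(\Itarget)}) \subseteq \tupleset_V(G^{(\Itarget)})$ gives Markovness w.r.t.\ $G'^{(\Itarget)}$. For (b), any marginal independence $X_a \indep X_b$ in $P^{(\Itarget)}$ yields $d$-separation in $G^{(\Itarget)}$ by (b) on $G$, which transfers to $G'^{(\Itarget)}$ by the assumed equality $\tupleset_X(G^{(\Itarget)}) = \tupleset_X(G'^{(\Itarget)})$. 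For (c), any $d$-separation of two hiddens in $G'^{(\Itarget)}$ also holds in $G^{(\Itarget)}$ by the subgraph relation $\tupleset_V(G'^{(\Itarget)}) \subseteq \tupleset_V(G^{(\Itarget)})$; applying (c) on $G$ produces observed witnesses that are $d$-separated in $G^{(\Itarget)}$ and hence in $G'^{(\Itarget)}$ by the same tuple equality, and these witnesses still lie in the possibly larger $G'$-covers by the monotonicity above.

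To conclude, tuple-wise equality of $\{\tupleset_X(G^{(\Itarget)})\}_{\Itarget \in \ItargetFamily}$ implies set-wise equality, and $E \subsetneq E'$ holds by construction. Thus $G'$ is a DAG that satisfies \cref{assm:gr}, strictly extends $E$, and induces the same set of observed $d$-separation tuples across interventions, directly contradicting clause (b) of \cref{defn:maxMM}. The only mildly delicate point is clause (c) of \cref{assm:gr}, where one must exploit the cover monotonicity $\cover_G(H_k) \subseteq \cover_{G'}(H_k)$ together with the tuple equality to port the witnesses from $G$ to $G'$; the rest of the argument is an unpacking of the definition of maximality.
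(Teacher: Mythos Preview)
Your proposal is correct and follows essentially the same approach as the paper: verify that $G'$ satisfies \cref{assm:gr} (so it is an admissible competitor in \cref{defn:maxMM}) and then invoke the definition of maximality for the contradiction. Your treatment of part (c) is in fact more explicit than the paper's, as you spell out both the cover monotonicity $\cover_G(H_k)\subseteq\cover_{G'}(H_k)$ and the transfer of the observed witnesses via the assumed equality $\tupleset_X(G^{(\Itarget)})=\tupleset_X(G'^{(\Itarget)})$, whereas the paper simply remarks that $G$ and $G'$ differ only in the bipartite part.
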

\begin{proof}
By the definition of the maximal measurement model, we just need to show that $G'$ also satisfies \cref{assm:gr}. If $G'$ satisfies \cref{assm:gr}, then the existence of $G'$ would mean that $G$ is not a maximal measurement model. Before delving into the graphical assumptions, one first notices that because for any intervention target $\Itarget \in \ItargetFamily$, $\tupleset_X(G^{(\Itarget)})=\tupleset_X(G'^{(\Itarget)})$, then $X_i$ must always be $d$-connected to $\cover(H_i)$ in both $G^{(\Itarget)}$ and $G'^{(\Itarget)}$ under any intervention target.

For \cref{assm:gr}(a), if a distribution $P$ is Markov to $G$, then it must be Markov to $G'$. This is because $G'$ has one more edge than $G$, any active path in $G$ would remain active in $G'$ as well. Thus $\tupleset(G') \subseteq \tupleset(G) \subseteq \tupleset(P)$. A similar argument can be made for the interventional case.

For \cref{assm:gr}(b), we know that for any intervention target $\Itarget$, $X_i\!\!\indep_{\!\!P^{(\Itarget)}} X_j\!\!\implies\!\!\dsep_{G^{(\Itarget)}}\!(\{X_i\}\{X_j\}\given\emptyset)$. Because $\tupleset_X(G^{(\Itarget)})=\tupleset_X(G'^{(\Itarget)})$, \cref{assm:gr}(b) is true for $G'$ as well.

For \cref{assm:gr}(c), because $\tupleset_X(G^{(\Itarget)})=\tupleset_X(G'^{(\Itarget)})$ for any intervention target and $G', G$ only differs in biparite graph. If \cref{assm:gr}(c) is satisfied by $G$, it must be satisfied by $G'$ as well.

Therefore, $G'$ satisfies \cref{assm:gr} and $G$ is not maximal, which is a contradiction.
\end{proof}

\section{Learning the skeleton of latent structures}
\label{sec:appendix-skeleton}

Once we have learned the bipartite graph $\biParG$, the next step is to learn the DAG $\latentG$ over the latent variables $H$. This turns out to be straightforward: \cref{assm:gr}(c) suggests that two hidden variables $H_i$ and $H_j$ are $d$-separated if and only if $\cover(H_i)$ and $\cover(H_j)$ are in different cliques (\cref{lemma:construct-mariginal-d}). Therefore, the idea is to learn causal graphs using unconditional $d$-separations of latent variables under interventions.

Define $\untupleset_H$ as follows: 
\begin{equation*}
\begin{split}
    \untupleset_H(G) &\coloneqq (\{ \langle A, B \rangle : \dsep_G(AB \given  \emptyset)\text{ for disjoint subsets } A, B \subseteq H \}).
\end{split}
\end{equation*}
Note that $\untupleset_H(G) \subseteq \tupleset_H(G)$ because $\untupleset_H(G)$ only stores unconditional $d$-separations. Therefore, to learn the latent DAG, we would like to answer the following question:

\begin{quote}
    \emph{Given $\{\untupleset_H(G^{(\Itarget)}))\}_{ \Itarget \in \ItargetFamily}$, can we recover $\latentG$?}
\end{quote}
\begin{remark}
    This is harder than the fully observational case where we have access to all \emph{conditional} $d$-separations. 
\end{remark}

The answer to the previous question is affirmative as shown by the next theorem (see \cref{appendix:latentGalgo}).
\begin{restatable}{theorem}{skeleton}
\label{thm: skeleton}
Under Assumptions~\ref{assm:gr}-\ref{assm:complete-faimly} and suppose the bipartite graph $\biParG$ is correct, the skeleton of $\latentG$ is identifiable.
\end{restatable}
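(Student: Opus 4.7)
The plan is to recover the skeleton of $\latentG$ directly from the pairwise marginal $d$-separation signatures of each latent pair under the single-node interventions, and then verify correctness via the maximality of $G$. By \cref{assm:gr}(a) and (b), the UDG $\UDG(G^{(\Itarget)})$ is correctly read off $P_X^{(\Itarget)}$; combined with knowledge of $\biParG$ and \cref{assm:gr}(c), this lifts to pairwise marginal $d$-separations among the latents: $H_i$ and $H_j$ are $d$-separated in $G^{(\Itarget)}$ if and only if there exist $X_i\in\cover(H_i)$ and $X_j\in\cover(H_j)$ that are marginally $d$-separated in $G^{(\Itarget)}$. Hence $\untupleset_H(G^{(\Itarget)})$ is accessible for every $\Itarget\in\ItargetFamily$.

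For each pair $i\ne j$ let $S_{ij}\subseteq\ItargetFamily$ be the set of interventions under which $H_i$ and $H_j$ are marginally $d$-connected in $G^{(\Itarget)}$. Any marginal active path between two latents is either a directed path or a common-ancestor path, and in both cases meets each endpoint via an incoming edge (unless that endpoint is the path source); hence acyclicity of $\latentG$ forces at least one of $\{H_i\},\{H_j\}$ to lie outside $S_{ij}$, so $|S_{ij}|\le m$. The central claim driving the construction is that $H_i$ and $H_j$ are adjacent in $\latentG$ if and only if $|S_{ij}|=m$. The forward direction is a direct check: WLOG $H_i\to H_j\in E$; the edge survives every $\Itarget\ne\{H_j\}$ since hard interventions remove only incoming edges at the intervened node, so $d$-connection holds there, and under $\Itarget=\{H_j\}$ every marginal active path into $H_j$ is severed, giving $d$-separation.

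The backward direction is where the work lies and where maximality is essential. Suppose $|S_{ij}|=m$; acyclicity pins the unique failing intervention to $\{H_i\}$ or $\{H_j\}$, say $\{H_j\}$ by symmetry. Then $d$-connection under $\{H_i\}$ forces $H_i\in\an(H_j)$, and $d$-connection under each $\{H_k\}$ with $k\ne i,j$ yields a directed $H_i$-to-$H_j$ path avoiding $H_k$. Suppose for contradiction that $H_i\to H_j\notin E$ and set $G':=(V,E\cup\{H_i\to H_j\})$. The plan is to show $\{\tupleset_X(G^{(\Itarget)})\}_{\Itarget}=\{\tupleset_X(G'^{(\Itarget)})\}_{\Itarget}$, contradicting \cref{defn:maxMM}: the signature of $(H_i,H_j)$ is unchanged by the forward-direction analysis, and for any other pair $(H_k,H_l)$ any new active path in $G'^{(\Itarget)}$ that traverses the new edge $H_i\to H_j$ can be rerouted in $G^{(\Itarget)}$ by replacing that single edge with an existing directed $H_i$-to-$H_j$ subpath (present for every $\Itarget\ne\{H_j\}$ by hypothesis; for $\Itarget=\{H_j\}$ the new edge is also cut, so there is nothing to replace). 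Lifting this equality to observed tuples via \cref{assm:gr}(c) closes the contradiction.

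The main obstacle is precisely the ``redundant paths'' configuration, in which $H_i$ reaches $H_j$ along several parallel routes without a direct edge: pairwise $d$-connection signatures under single-node interventions then mimic a true direct edge exactly. Maximality is the crucial ingredient that rules this out, and cleanly executing the edge-substitution argument together with the transfer from latent to observed $d$-separations through the bipartite graph is the technical heart of the proof.
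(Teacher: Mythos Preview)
Your high-level strategy matches the paper's: extract $\untupleset_H(G^{(\Itarget)})$ from the data via \cref{lemma:construct-mariginal-d}, then characterize adjacency by the number of targets under which a pair becomes marginally $d$-separated, invoking maximality to rule out the false-positive case. This is exactly the content of the paper's \cref{lemma:2-targets} and \cref{alg:skeleton-learning}.

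There is, however, a genuine gap in your backward direction. You assert that ``$d$-connection under each $\{H_k\}$ with $k\ne i,j$ yields a directed $H_i$-to-$H_j$ path avoiding $H_k$,'' and your edge-substitution argument relies on this. The assertion is false: the $d$-connection under $\{H_k\}$ can be via a common-ancestor path only. Take $\latentG$ with edges $H_1\to H_2$, $H_2\to H_3\to H_5$, $H_1\to H_4\to H_5$, and consider the non-adjacent pair $(H_i,H_j)=(H_2,H_5)$. One checks $|S_{2,5}|=5=m$ with unique failing target $\{H_5\}$, but under $\{H_3\}$ the only directed $H_2$-to-$H_5$ path is cut and connectivity survives solely through the common ancestor $H_1$. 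Now your rerouting breaks: if an active path in $G'^{(\Itarget)}$ enters $H_i$ via an incoming edge before traversing the new $H_i\to H_j$, replacing that edge by a common-ancestor path $H_i\leftarrow\cdots\leftarrow H_c\to\cdots\to H_j$ creates a collider at $H_i$, and the substituted walk is no longer active. So the proof that adding $H_i\to H_j$ leaves every $\untupleset_H(G^{(\Itarget)})$ unchanged does not go through as written; one can in fact build examples (extend the one above with additional source/sink branches off $H_1$ and $H_5$) where the added edge strictly changes pairwise latent $d$-separations under some $\Itarget$.

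It is worth noting that the paper's own proof of \cref{lemma:2-targets} handles this step with a one-line assertion (``we can add $H_1\leftarrow H_2$ \ldots and it would violate the maximality condition'') rather than a detailed argument, so you are attempting to fill in precisely the spot where the paper is terse---and your attempt exposes that the path-substitution idea is not the right mechanism. A correct argument needs to either work at the level of ancestor-set intersections (using that marginal $d$-connection is equivalent to $\overline{\an}(H_k)\cap\overline{\an}(H_l)\ne\emptyset$) together with maximality applied more globally, or else rule out the offending configurations by a separate use of maximality before reaching this step.
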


\subsection{Identifying unconditional $d$-separations of the latents}

Given the bipartite graph $\biParG$ and $\{\tupleset_X(P^{(\Itarget)})\}_{ \Itarget \in \ItargetFamily}$, we can easily construct $\{\untupleset_H(G^{(\Itarget)}))\}_{ \Itarget \in \ItargetFamily}$ by the following lemma.

\begin{lemma}
\label{lemma:construct-mariginal-d}
Given intervention target $\Itarget$, two hidden variables $H_i$ and $H_j$ are $d$-separated in $G^{(\Itarget)}$ if and only if there does not exist a clique $\Cliq \in \MaxCliq_{P}^{(\Itarget)}$ such that $\cover(H_i) \cup \cover(H_j) \subseteq \Cliq$.
\end{lemma}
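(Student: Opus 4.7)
The plan is to prove this biconditional by treating the two directions separately, both passing through the observation that under Assumption~\ref{assm:gr}(a) and (b), the UDG faithfully records marginal $d$-connection among observed variables in $G^{(\Itarget)}$, so the claim reduces to a purely graphical statement about $d$-connection patterns between latent covers.

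For the forward direction, I plan to use Assumption~\ref{assm:gr}(c) directly. If $H_i$ and $H_j$ are $d$-separated in $G^{(\Itarget)}$, then by \ref{assm:gr}(c) there exist $X_a \in \cover(H_i)$ and $X_b \in \cover(H_j)$ with $\dsep_{G^{(\Itarget)}}(\{X_a\}\{X_b\}\given\emptyset)$. By Markov (\ref{assm:gr}(a)), $X_a \indep X_b$ in $P^{(\Itarget)}$, so $X_a$ and $X_b$ are non-adjacent in $\UDG(P^{(\Itarget)})$. Any clique containing $\cover(H_i)\cup\cover(H_j)$ would contain both $X_a$ and $X_b$ as distinct, non-adjacent vertices, a contradiction.

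For the backward direction I plan to argue the contrapositive: if $H_i$ and $H_j$ are $d$-connected in $G^{(\Itarget)}$, then $\cover(H_i)\cup\cover(H_j)$ is a clique in $\UDG(P^{(\Itarget)})$ (and hence sits inside some maximal clique). Distinct pairs inside a single cover $\cover(H_k)$ are $d$-connected through the fork at $H_k$, using that $H_k\notin\Itarget$ is irrelevant because outgoing edges are preserved under interventions. For a cross pair $X_a\in\cover(H_i)$, $X_b\in\cover(H_j)$ with $X_a\ne X_b$, I will take an active path $\pi$ between $H_i$ and $H_j$ and prepend $X_a\leftarrow H_i$ and append $H_j\to X_b$; a short case analysis on whether $\pi$ leaves $H_i$ as an outgoing or incoming edge (and symmetrically at $H_j$) shows the endpoints form either a fork or a chain at $H_i$ and at $H_j$, so the extended path remains active given $\emptyset$. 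Converting $d$-connection back to marginal dependence uses the contrapositive of Assumption~\ref{assm:gr}(b), yielding adjacency in the UDG.

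The main obstacle, or rather the only subtlety worth flagging, is keeping the bookkeeping straight when $\cover(H_i)\cap\cover(H_j)\ne\emptyset$ (shared children appear as colliders that are blocked by the empty set, so they cannot themselves witness $d$-connection between $H_i$ and $H_j$); and verifying that the edge-removal convention for $\Itarget$ does not destroy the prepended/appended edges, since $H_i\to X_a$ is outgoing from a latent and $X_a\notin\Itarget\subseteq H$. Once these are noted, both directions collapse to routine path surgery.
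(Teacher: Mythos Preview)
Your proposal is correct and follows essentially the same approach as the paper. The forward direction is identical (Assumption~\ref{assm:gr}(c) yields a $d$-separated observed pair, hence a non-edge in the UDG). For the backward direction, the paper simply asserts that if $\cover(H_i)\cup\cover(H_j)$ is a clique then $H_i$ and $H_j$ must be $d$-connected (this is also the content of \cref{lemma:connected-hidden_maximal}, whose proof is equally terse); your explicit path-surgery argument---prepending $X_a\leftarrow H_i$ and appending $H_j\to X_b$ to an active $H_i$--$H_j$ path and checking that the endpoints become non-colliders---is exactly the justification the paper leaves implicit, so you are filling in detail rather than taking a different route.
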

\begin{proof}
By \cref{assm:gr}(a) and \cref{assm:gr}(b), two observed variables $X_i$ and $X_j$ are dependent if and only if $X_i$ and $X_j$ are $d$-connected. 

If $H_i$ and $H_j$ are $d$-separated in $G^{(\Itarget)}$, then, by \cref{assm:gr}(c), there exists $X_i, X_j\in X$ where $X_i \in \cover(H_i), X_j \in \cover(H_j)$, such that $X_i$ and $X_j$ are independent. Therefore, $\cover(H_i) \cup \cover(H_j)$ is not a clique. 

On the other hand, if there exist a clique $\Cliq \in \MaxCliq_{P}^{(\Itarget)}$ such that $\cover(H_i) \cup \cover(H_j) \subseteq \Cliq$. This means that $\cover(H_i) \cup \cover(H_j)$ is a clique. Then  $H_i$ and $H_j$ must be $d$-connected, which is a contradiction. 
\end{proof}

\subsection{Algorithm}
\label{appendix:latentGalgo}

Now we have \cref{alg:skeleton-learning} to find the skeleton. 

\begin{algorithm}
\caption{Learning Skeleton of $G_H$}\label{alg:skeleton-learning}

\KwInput{$\{\untupleset_H(G^{(\Itarget)}))\}_{ \Itarget \in \ItargetFamily}$}

$E \gets \emptyset$ \tcp*{Set of unoriented edges}

\tcc{\textbf{Step 0}}

\If{$\{\untupleset_H(G^{(\Itarget)}))\}_{ \Itarget \in \ItargetFamily}$ only has one distinct element}
{
    \KwOutput{$E$} 
}

\tcc{\textbf{Step 1}: Remove any pair of hidden variables that appears twice}

\For{$\untupleset_H(G^{(\Itarget)})) \in \{\untupleset_H(G^{(\Itarget)}))\}_{ \Itarget \in \ItargetFamily}$}
{
\For{$\langle H_1, H_2 \rangle \in \untupleset_H(G^{(\Itarget)}))$}
{
    If $\langle H_1, H_2 \rangle$ appears twice with at least two different intervention targets, delete $\langle H_1, H_2 \rangle$ from $\{\untupleset_H(G^{(\Itarget)}))\}_{ \Itarget \in \ItargetFamily}$.
}
}

\tcc{\textbf{Step 2}: Add unoriented edges}
\For{$\untupleset_H(G^{(\Itarget)})) \in \{\untupleset_H(G^{(\Itarget)}))\}_{ \Itarget \in \ItargetFamily}$}
{
\For{$\langle H_1, H_2 \rangle \in \untupleset_H(G^{(\Itarget)}))$}
{
    $E \gets E \cup \{ \langle H_1, H_2 \rangle \}$
}
}
\KwOutput{$E$}  
\end{algorithm}

\skeleton*
\begin{proof}

The proof relies on the correctness of \cref{alg:skeleton-learning} which we show here. 

First, suppose $\{\untupleset_H(G^{(\Itarget)}))\}_{ \Itarget \in \ItargetFamily}$ only has one distinct element. Then there is no edge between latent variables in $\latentG$. Suppose, on the contrary, there is an edge $H_1 \rightarrow H_2$ between two hidden variables $H_1$ and $H_2$ in $\latentG$, then by \cref{lemma:construct-mariginal-d}, $\cover(H_1) \cup \cover(H_2)$ must be a clique in $\UDG(G^{(\emptyset)})$. But by \cref{lemma:d-sep-cases}, when the intervention target $\Itarget$ is $H_2$, $H_1$ and $H_2$ would be $d$-separated, and by \cref{lemma:construct-mariginal-d}, $\cover(H_1) \cup \cover(H_2)$ must not be a clique in $\UDG(G^{(\Itarget)})$. Therefore, there must be at least two distinct elements in $\{\untupleset_H(G^{(\Itarget)}))\}_{ \Itarget \in \ItargetFamily}$.

Now, let's consider the case after step 0. Let's denote the unoriented edge set returned by \cref{alg:skeleton-learning} as $E_1$ and the true unoriented edge set of $\latentG$ as $E_2$.

\begin{enumerate}
    \item $E_2 \subseteq E_1$. By \cref{lemma:d-sep-cases}, $\{\untupleset_H(G^{(\Itarget)}))\}_{ \Itarget \in \ItargetFamily}$ has all pairs of hidden variables. And we removed all the pair that does not have an edge in $E_2$ by \cref{lemma:2-targets}.

    \item $E_1 \subseteq E_2$. Suppose $\exists \langle H_1, H_2 \rangle \in E_1$ such that $\langle H_1, H_2 \rangle \notin E_2$. Note that $\langle H_1, H_2 \rangle$ only appears with one intervention target and by \cref{lemma:2-targets}, this is not possible.\qedhere
\end{enumerate}
\end{proof}

\subsection{Useful Lemmas}
\begin{lemma}
\label{lemma:d-sep-cases}
Given a DAG $G=(V, E)$. If two nodes $A, B \in V$ are d-connected by the empty set, then at least one of the following must be true
\begin{enumerate}
    \item They share a common ancestor (common ancestor path)
    \item There are directed paths between the two nodes and all the directed paths must have the same direction (directed path)
\end{enumerate}
\end{lemma}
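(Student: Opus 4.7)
The plan is to work directly from the definition of d-connection with empty conditioning set. Since $A$ and $B$ are d-connected by $\emptyset$, I start from any active path $P$ between them. The key observation is that, with conditioning set $\emptyset$, a path is active if and only if no node on it is a collider: non-colliders are never blocked by the empty set, while every collider is blocked (since neither the collider nor any of its descendants can be in $\emptyset$). Thus $P$ is collider-free.

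I would next analyze the structure of a collider-free path. Because every internal node on $P$ is a non-collider, at least one of its two adjacent $P$-edges must point away from it; walking along $P$ from $A$ to $B$, this forces the orientations to consist of a (possibly empty) run of ``backward'' arrows followed by a (possibly empty) run of ``forward'' arrows, with a single switch-point. Equivalently, there is a unique node $S$ on $P$ such that every edge of $P$ points away from $S$. I then case-split on the location of $S$:
\begin{itemize}
    \item if $S = A$, the whole path is a directed path $A \to \cdots \to B$, so $A \in \an(B)$;
    \item if $S = B$, symmetrically $B \in \an(A)$;
    \item if $S \notin \{A,B\}$, the two halves of $P$ exhibit two directed paths out of $S$ ending at $A$ and $B$ respectively, so $S$ is a common ancestor of both.
\end{itemize}
The first two cases yield conclusion 2 of the lemma and the third yields conclusion 1, so at least one of the two conclusions holds. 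For the ``same direction'' clause in conclusion 2, I invoke acyclicity: if a directed path from $A$ to $B$ and a directed path from $B$ to $A$ both existed, their concatenation would be a directed cycle, contradicting that $G$ is a DAG.

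I do not anticipate any real obstacle. The only nontrivial step is the structural claim that a collider-free path admits a unique source node, and this is a short and standard consequence of the non-collider condition. The remainder is case analysis plus acyclicity. If desired, the proof can be made entirely self-contained by induction on the length of $P$ to formalize the ``single switch-point'' observation.
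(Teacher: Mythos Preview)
Your proposal is correct and follows essentially the same approach as the paper: both argue that an active path with empty conditioning set must be collider-free, then analyze the shape of such a path to extract either a directed path or a common ancestor, and finally invoke acyclicity for the ``same direction'' clause. Your formulation via a unique source node $S$ on the path is a slightly cleaner packaging of the paper's case split on the orientations of the terminal edges, but the content is the same.
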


\begin{proof}
Because $A, B$ are d-connected, there must exist active paths between the two nodes. There are four possibilities of active path
\begin{enumerate}
    \item $A \rightarrow ... \rightarrow B$. This could be a direct path if every edge points to the same direction. If one of the intermediate edges points in the opposite direction. Then we would have a collider which makes the path inactive.

    \item $A \leftarrow ... \leftarrow B$. The same as the first case.

    \item $A \rightarrow ... \leftarrow B$. This is not possible because there must exist a collider on the path.

    \item $A \leftarrow ... \rightarrow B$. $A$ and $B$ must share a common ancestor in this path unless there is a collider which is not possible.
\end{enumerate}
Obviously, we cannot have two directed paths pointing in opposite directions because that would create a circle. 
\end{proof}

\begin{lemma}
\label{lemma:1-target}
For any two hidden variables $H_1$ and $H_2$, there exists at least one intervention target $\Itarget \in \ItargetFamily$, such that $\langle H_1, H_2 \rangle \in \untupleset_H(G^{(\Itarget)})$.
\end{lemma}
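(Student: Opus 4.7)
The plan is to use the complete family (Assumption~\ref{assm:complete-faimly}) together with the structural characterization of unconditional $d$-connection from Lemma~\ref{lemma:d-sep-cases}, and show that intervening on the topologically later of the two hidden variables always suffices. Concretely, without loss of generality I would fix a topological ordering of $\latentG$ and assume $H_1$ precedes $H_2$; equivalently, $H_2 \notin \an_G(H_1)$. I then claim that $\Itarget = \{H_2\}$ (which belongs to $\ItargetFamily$ by Assumption~\ref{assm:complete-faimly}) witnesses the desired $d$-separation, i.e. $\langle H_1, H_2\rangle \in \untupleset_H(G^{(\{H_2\})})$.

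To prove the claim, I would invoke Lemma~\ref{lemma:d-sep-cases}: every active path between $H_1$ and $H_2$ (given $\emptyset$) in $G$ falls into one of two classes, namely a directed path or a common-ancestor path. Since $H_2$ is not an ancestor of $H_1$, no directed path $H_2 \to \cdots \to H_1$ can exist, so every directed active path has the form $H_1 \to \cdots \to H_2$ and ends with an incoming edge at $H_2$. A common-ancestor path $H_1 \leftarrow \cdots \leftarrow A \to \cdots \to H_2$ also terminates with an edge into $H_2$. Thus every active $H_1$--$H_2$ path in $G$ uses some edge of the form $\,\cdot \to H_2$.

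Now the intervention graph $G^{(\{H_2\})}$ is obtained by deleting precisely the edges pointing into $H_2$. Deleting edges cannot create new paths, and since every previously active $H_1$--$H_2$ path used at least one deleted edge, no active path survives. Hence $\dsep_{G^{(\{H_2\})}}(H_1 H_2 \given \emptyset)$, as required.

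I do not anticipate a serious obstacle: the proof reduces to a short topological-ordering argument combined with the case analysis already packaged in Lemma~\ref{lemma:d-sep-cases}. The only subtlety worth flagging is to verify that interventions cannot manufacture new active paths from old inactive ones—but since the conditioning set is $\emptyset$ (no colliders can be opened) and interventions only delete edges, this is immediate.
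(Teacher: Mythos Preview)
Your proposal is correct and follows essentially the same approach as the paper: both use Lemma~\ref{lemma:d-sep-cases} to classify active paths and then intervene on the topologically later hidden variable to sever every such path. The paper additionally splits off the trivial case where $H_1$ and $H_2$ are already $d$-separated in $G^{(\emptyset)}$ (taking $\Itarget=\emptyset$ there), but your uniform choice $\Itarget=\{H_2\}$ handles that case too, since edge deletion cannot create new active paths.
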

\begin{proof}
If $H_1$ and $H_2$ are $d$-separated in $G^{(\emptyset)}$, then we can just choose $\Itarget = \emptyset$.

If $H_1$ and $H_2$ are d-connected in $G^{(\emptyset)}$ and assume, without loss of generality, the directed paths between $H_1$ and $H_2$, if exist, point to $H_2$, then based on \cref{lemma:d-sep-cases}, if we intervene on $H_2$, all the active paths between $H_1$ and $H_2$ would disappear. 
\end{proof}

\begin{lemma}
\label{lemma:2-targets}
If $\{\untupleset_H(G^{(\Itarget)})\}_{ \Itarget \in \ItargetFamily}$ has at least two distinct elements, then for any two hidden variables $H_1$ and $H_2$, there is no direct edge between $H_1$ and $H_2$ if and only if there exists at least two intervention targets $\Itarget_1, \Itarget_2 \in \ItargetFamily$, such that $\langle H_1, H_2 \rangle \in \untupleset_H(G^{(\Itarget_1)})$ and $\langle H_1, H_2 \rangle \in \untupleset_H(G^{(\Itarget_2)})$.
\end{lemma}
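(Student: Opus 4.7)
The plan is to prove the two directions separately: the backward direction is handled by a brief contrapositive, whereas the forward direction requires a case analysis whose hardest subcase invokes the maximality of $G$.

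For $(\Leftarrow)$, suppose WLOG that the direct edge is $H_1 \to H_2$ (the case $H_2 \to H_1$ is symmetric). Under any target $\Itarget \neq \{H_2\}$ the edge persists in $G^{(\Itarget)}$, keeping $H_1, H_2$ $d$-connected, so $\langle H_1, H_2 \rangle$ can lie in $\untupleset_H(G^{(\Itarget)})$ only when $\Itarget = \{H_2\}$---at most one target, contradicting the hypothesis of two.

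For $(\Rightarrow)$, I condition on whether $H_1, H_2$ are $d$-separated in $G^{(\emptyset)}$. If they are, then since interventions only delete edges, $d$-separation is preserved and $\langle H_1, H_2 \rangle \in \untupleset_H(G^{(\Itarget)})$ for every $\Itarget \in \ItargetFamily$; since $|\ItargetFamily| = m+1 \geq 2$, this supplies the required two targets. Otherwise $H_1, H_2$ are $d$-connected in $G^{(\emptyset)}$ and by \cref{lemma:d-sep-cases} the active paths decompose into common-ancestor paths and directed paths. If neither $H_1$ nor $H_2$ is an ancestor of the other, only common-ancestor paths can be active, and each of $\Itarget = \{H_1\}$ and $\Itarget = \{H_2\}$ destroys every such path by removing the incoming edge at the corresponding endpoint, giving the two required targets.

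The main obstacle is the remaining subcase: WLOG $H_1$ is an ancestor of $H_2$ yet no direct edge exists. The target $\{H_2\}$ separates because acyclicity forbids $H_2$ being an ancestor of $H_1$ and $H_2$'s incoming edges are gone. To exhibit a second target, set $G' := G \cup \{H_1 \to H_2\}$. One first verifies that $G'$ inherits \cref{assm:gr} whenever the sets $\{\tupleset_X(G^{(\Itarget)})\}_{\Itarget \in \ItargetFamily}$ and $\{\tupleset_X(G'^{(\Itarget)})\}_{\Itarget \in \ItargetFamily}$ coincide, so maximality of $G$ (\cref{defn:maxMM}(b)) forces some target $\Itarget^* \neq \{H_2\}$ at which the two sets differ---$\{H_2\}$ cannot witness the difference since the new edge is immediately removed by that intervention. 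The key claim is that this $\Itarget^*$ must $d$-separate $H_1$ from $H_2$: if they were instead $d$-connected in $G^{(\Itarget^*)}$, then by \cref{lemma:d-sep-cases} some node $V$ (possibly $H_1$ itself when $H_1$ remains an ancestor of $H_2$ after intervention) would be a common ancestor of $H_1$ and $H_2$ in $G^{(\Itarget^*)}$; since bipartite edges from hidden to observed are unaffected by interventions on hidden nodes, $V$ would also be a common ancestor of any $X_i \in \cover(H_1)$ and $X_j \in \cover(H_2)$, and more generally the concatenation argument propagates to any $X_i, X_j$ newly $d$-connected in $G'^{(\Itarget^*)}$ via the new edge, forcing $\tupleset_X(G^{(\Itarget^*)}) = \tupleset_X(G'^{(\Itarget^*)})$ and contradicting the defining property of $\Itarget^*$. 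Hence $\Itarget^*$ is a second target $d$-separating $H_1$ from $H_2$, completing the proof.
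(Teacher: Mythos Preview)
Your overall strategy matches the paper's: the backward direction is the same contrapositive, and in the forward direction you ultimately invoke maximality by adjoining an edge pointing into the unique separating target, which is exactly what the paper does when it writes ``we can add $H_1 \leftarrow H_2$ to the graph and it would violate the maximality condition.'' Your Case~2 (neither an ancestor of the other) is a clean explicit shortcut that the paper handles implicitly via the proof of \cref{lemma:common-parent}.

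There is, however, a genuine gap in your Case~3, in the ``concatenation argument.'' You claim that if $H_1,H_2$ remain $d$-connected in $G^{(\Itarget^*)}$ then $\tupleset_X(G^{(\Itarget^*)})=\tupleset_X(G'^{(\Itarget^*)})$, because a common ancestor of $H_1,H_2$ is automatically a common ancestor of the relevant observed pair. But an observed pair $X_a,X_b$ newly $d$-connected in $G'^{(\Itarget^*)}$ through the added edge $H_1\to H_2$ need not have $X_a\in\cover(H_1)$: the new collider-free path can have an apex $U$ that is a \emph{strict} ancestor of $H_1$, with $U\to\cdots\to X_a$ on one branch and $U\to\cdots\to H_1\to H_2\to\cdots\to X_b$ on the other. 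If the $d$-connection of $H_1,H_2$ in $G^{(\Itarget^*)}$ comes only via a common ancestor $W$ unrelated to $U$, there is no way to splice: neither $U$ nor $W$ need be a common ancestor of $X_a$ and $X_b$ in $G^{(\Itarget^*)}$. Concretely, take $\latentG$ with edges $H_5\to H_1$, $H_4\to H_1$, $H_1\to H_3$, $H_3\to H_2$, $H_4\to H_2$ and a pure child $H_k\to X_k$ for each $k$; one checks that $\{H_2\}$ is the \emph{only} target $d$-separating $H_1,H_2$ (at $\{H_3\}$ they remain connected via $H_4$), yet at $\Itarget^*=\{H_3\}$ the pair $X_5,X_2$ is $d$-separated in $G^{(\Itarget^*)}$ but $d$-connected in $G'^{(\Itarget^*)}$ via $X_5\leftarrow H_5\to H_1\to H_2\to X_2$. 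Thus $\tupleset_X$ can differ at a target that does not separate $H_1,H_2$, breaking your key implication. The paper's one-line appeal to maximality does not visibly resolve this point either, so you are not omitting an argument the paper supplies---but your more explicit write-up exposes precisely where additional work is needed.
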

\begin{proof}
If there is no direct edge between $H_1$ and $H_2$ and there exists only one intervention target $\Itarget_1 \in \ItargetFamily$, such that $\langle H_1, H_2 \rangle \in \untupleset_H(G^{(\Itarget_1)})$, then $\Itarget_1 \neq \emptyset$. There is because if $\Itarget_1 = \emptyset$ and $\langle H_1, H_2 \rangle \in \untupleset_H(G^{(\emptyset)})$. Then there are no active paths between $H_1$ and $H_2$ and deleting edges via interventions would not create active paths. Because $\{\untupleset_H(G^{(\Itarget)})\}_{ \Itarget \in \ItargetFamily}$ has at least two distinct elements, there must exist another intervention target such that $\langle H_1, H_2 \rangle$ appears. By the proof of \cref{lemma:common-parent}, one of $H_1$ and $H_2$ is the intervention target $\Itarget_1$. Without loss of generality, suppose $\Itarget_1 = \{ H_1\}$, we can add $H_1 \leftarrow H_2$ to the graph and it would violate the maximality condition. Thus, if there is no direct edge between $H_1$ and $H_2$, then there exists at least two intervention targets $\Itarget_1, \Itarget_2 \in \ItargetFamily$, such that $\langle H_1, H_2 \rangle \in \untupleset_H(G^{(\Itarget_1)})$ and $\langle H_1, H_2 \rangle \in \untupleset_H(G^{(\Itarget_2)})$.

If there exists at least two intervention targets $\Itarget_1, \Itarget_2 \in \ItargetFamily$, such that $\langle H_1, H_2 \rangle \in \untupleset_H(G^{(\Itarget_1)})$ and $\langle H_1, H_2 \rangle \in \untupleset_H(G^{(\Itarget_2)})$, then suppose there is a direct edge between $H_1$ and $H_2$. Without loss of generality, let's assume $H_1 \rightarrow H_2$. Then two variables would \emph{only} be $d$-separated when intervening on $H_2$ which is a contradiction. 
\end{proof}

\begin{lemma}
\label{lemma:common-parent}

In \cref{alg:skeleton-learning}, after step 1, for any intervention target $\Itarget$ such that $ \untupleset_H(G^{(\Itarget)}) \neq \untupleset_H(G^{(\emptyset)})$, then there exist a hidden variable $H_1$ such that $H_1 \in \langle H_i, H_j \rangle$ for all $\langle H_i, H_j \rangle \in \untupleset_H(G^{(\Itarget)})$ and $H_1$ is the intervention target $\Itarget$.
\end{lemma}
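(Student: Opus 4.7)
The plan is to argue by contradiction. Fix an intervention target $\Itarget = \{H_k\}$ with $\untupleset_H(G^{(\Itarget)}) \neq \untupleset_H(G^{(\emptyset)})$, and suppose some pair $\langle H_i,H_j\rangle$ survives step 1 in $\untupleset_H(G^{(\Itarget)})$ with $H_k \notin \{H_i,H_j\}$. Surviving step 1 means that $\langle H_i,H_j\rangle$ lies in exactly one of the sets $\untupleset_H(G^{(\Itarget')})$ over $\Itarget'\in\ItargetFamily$, namely the one for $\Itarget'=\Itarget$. In particular, $\langle H_i,H_j\rangle \notin \untupleset_H(G^{(\emptyset)})$, so $H_i$ and $H_j$ are $d$-connected in $G$. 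The aim is to produce a second intervention target $\Itarget'\neq\Itarget$ at which the same pair is also $d$-separated, contradicting survival of step 1.

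The key structural observation is that $H_i$ and $H_j$ cannot simultaneously be ancestors of each other (acyclicity), so without loss of generality assume $H_j$ is not an ancestor of $H_i$. I then invoke \cref{lemma:d-sep-cases} to classify every active unconditional path between $H_i$ and $H_j$ in $G^{(\emptyset)}$: each is either a directed path or a common-ancestor path $H_i \leftarrow \cdots \leftarrow H_c \to \cdots \to H_j$. Our assumption rules out any directed path of the form $H_j \to \cdots \to H_i$, so in every remaining case the final edge incident to $H_j$ along the path is incoming to $H_j$. The degenerate sub-cases $H_c=H_i$ (a directed path $H_i\to\cdots\to H_j$) and $H_c=H_j$ (a reverse directed path $H_j\to\cdots\to H_i$, which is excluded by our WLOG choice) are straightforward to dispose of.

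Since intervening on $H_j$ removes precisely the incoming edges to $H_j$, it breaks the terminal edge of every active $H_i$–$H_j$ path, so $H_i\indep H_j$ in $G^{(\{H_j\})}$ and $\langle H_i,H_j\rangle \in \untupleset_H(G^{(\{H_j\})})$. Because $\{H_j\}\in\ItargetFamily$ by \cref{assm:complete-faimly} and $\{H_j\}\neq\{H_k\}=\Itarget$, step 1 would have deleted $\langle H_i,H_j\rangle$ from both sets, contradicting its survival. Hence $H_k\in\{H_i,H_j\}$, which is exactly the required conclusion upon taking $H_1=H_k$.

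The main obstacle is not any single deep step but rather the careful enumeration of path types in \cref{lemma:d-sep-cases} together with the degenerate cases $H_c\in\{H_i,H_j\}$, combined with noticing that the single acyclicity remark (\emph{at least one of} $H_i,H_j$ is a non-ancestor of the other) is exactly what lets us always locate a candidate second intervention target in $\{H_i,H_j\}$ that also $d$-separates the pair.
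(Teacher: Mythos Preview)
Your proof is correct and follows essentially the same route as the paper's own proof: both argue by contradiction, deduce that $H_i$ and $H_j$ are $d$-connected in $G^{(\emptyset)}$, and then invoke \cref{lemma:d-sep-cases} together with acyclicity to locate a second intervention target in $\{H_i,H_j\}$ that also $d$-separates the pair, contradicting survival of step~1. Your version is in fact slightly more explicit than the paper's in pinning down \emph{which} of $H_i,H_j$ to intervene on (the non-ancestor), whereas the paper additionally records the side observation that there is no direct edge between $H_i$ and $H_j$---an observation that is true but not actually needed for the contradiction.
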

\begin{proof}

We just need to show that if $\langle H_i, H_j \rangle \in \untupleset_H(G^{(\Itarget)})$ and $\langle H_i, H_j \rangle \notin \untupleset_H(G^{(\Itarget')})$ for $\Itarget' \neq \Itarget$, then one of $H_i$ or $H_j$ must be the intervention target $\Itarget$. This is because, after step 1, every pair of hidden variables only appear once. 

Suppose the opposite is true, then neither $H_i$ nor $H_j$ is being intervened on. So there is no direct edge between $H_i$ and $H_j$ since otherwise $H_i$ and $H_j$ would be $d$-connected. On the other hand, $\langle H_i, H_j \rangle \notin \untupleset_H(G^{(\Itarget')})$ for $\Itarget' \neq \Itarget$, implies that $H_i$ and $H_j$ are $d$-connected in the observational distribution because   $\untupleset_H(G^{(\Itarget)}) \neq \untupleset_H(G^{(\emptyset)})$. Then by \cref{lemma:d-sep-cases}, we can find another intervention target $H_i$ or $H_j$ such that $\langle H_i, H_j \rangle$ would appear again which would be a contradiction.
\end{proof}

\section{Limitations of edge orientations}
\label{sec:edge-orientation}
Unlike known interventions, with only access to CI information, edge orientation might not always be possible even when there are no latent variables as shown in \cref{ex:known-vs-unknonw}. But it is also possible as demonstrated by \cref{label:ex-triangle}. This raises the question of which edges provably \emph{cannot} be oriented.

In this section, we attempt to answer this question by studying equivalence relations between DAGs under unknown interventions. Such equivalence relations are finer partitions of the Markov equivalence class. \emph{The results are purely graphical} and can be studied in their own right. If assuming Markov and faithfulness, it also shows the limitations of edge orientations with access to CI information only. In addition, one could use conditional invariances to improve the identifiability of edge orientations. But it might require additional assumptions like direct $\mathcal{I}$-faithfulness\citep{squires2020permutation} which we do not make in this paper.

\begin{restatable}{example}{extriangle}
\label{label:ex-triangle}
Consider the CPDAG $G$ shown in Figure~\ref{fig:case-3-no-hidden}. Let $G_1$ be a consistent extension of $G$ such that $X_1 \rightarrow X_2$ and $\Itarget_1 = X_2$. Then $\{(X_1, X_2, \emptyset), (X_3, X_2, \emptyset) \} \subseteq \tupleset(G_{1}^{(\Itarget_1)}) $. $G_2$ is another consistent extension of $G$ where $X_2 \rightarrow X_1$ but there does not exist an intervention target that can induce $\tupleset(G_{1}^{(\Itarget_1)})$. Therefore, if we observe $\tupleset(G_{1}^{(\Itarget_1)})$, then we know that $X_1 \to X_2$.
\end{restatable}

\begin{figure}[h]
  \centering
  \centering
\begin{tikzpicture}[node distance={15mm}, 
                    latent/.style = {draw, rectangle, black, minimum size=0.8cm},
                    obs/.style = {draw, circle, black}
                    ] 
\node[obs] (X3) {$X_3$}; 
\node[obs] (X1) [below left of=X3] {$X_1$};
\node[obs] (X2) [below right of=X3] {$X_2$};

\draw[->, black] (X3) -- (X1);
\draw[->, black] (X3) -- (X2);
\draw[-] (X1) -- (X2);
\end{tikzpicture} 
  \caption{CPDAG of three variables with no latent variables.}
\label{fig:case-3-no-hidden}
\end{figure}
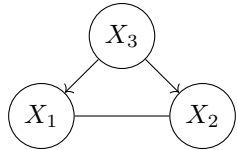

In this section, we consider the \emph{general case of an arbitrary, fully observed DAG (i.e. $H=\emptyset$) } instead of the measurement model. Given the skeleton of the DAG, all the compelled edges in the MEC can already be identified without access to interventional distributions. Therefore, without loss of generality, we assume the CPDAG is given and study if unknown hard interventions on single nodes can help orient reversible edges.
Furthermore, because isolated edges are covered, they must be reversible in the MEC as well (cf. \cref{lemma:covered-reverse}), there is no loss of generality in focusing on reversible edges.

To state the problem formally:
\begin{quote}
\emph{
Given a CPDAG $G$, we want to orient its reversible edges given a list $\{\tupleset(G_{*}^{(\Itarget)})\}_{ \Itarget \in \ItargetFamily}$ for a family of intervention targets $\ItargetFamily$ and $G_{*}$, which is the true consistent extension of $G$. In other words, is the tuple $\langle G_{*}, \ItargetFamily \rangle$ the unique one that can induce $\{\tupleset(G_{*}^{(\Itarget)})\}_{ \Itarget \in \ItargetFamily}$?}
\end{quote}

Let's start with some definitions. Recall the definition of \emph{covered edges}\citep{chickering2013transformational}: 

\covered*

In particular, an \emph{isolated edge} is a special covered edge.
\isolated*

\begin{remark}
\Unshielded{} edges are trivially covered.
\end{remark}

\subsection{\Unshielded{} equivalence class}
\citet{chickering2013transformational} show that two Markov equivalent DAGs can be transformed into one another by a sequence of covered edge reversals. The isolated equivalence class is also defined in a transformational fashion. We restate the definition below.

\begin{definition} (\Unshielded{} equivalence class)
Two DAGS $G_1$ and $G_2$ are~\unshielded{} equivalent, denoted $G_1 \sim_{E}  G_2$, if there exists a sequence of~\unshielded{} edge reversals.  
\end{definition}

The next two theorems show that it is impossible to distinguish DAGs in an IEC by looking at $d$-separations only. 

\begin{theorem}
\label{thm:unshilded-single-edge-same}
Let $G_1$ and $G_2$ be two consistent extensions of CPDAG $G = (V, E)$. Suppose
\begin{enumerate}[label*= \arabic*.]
    \item $G_1$ and $G_2$ only differs in one~\unshielded{} edge
    \item All the interventional targets are single nodes and the intervention is hard. 
\end{enumerate}
Suppose the tuple $(G_1, \ItargetFamily_1)$ induces $\{\tupleset(G_1^{(\Itarget)})\}_{ \Itarget \in \ItargetFamily_1}$.

Then there exists a family of interventional targets $\mathcal{I}_2$ such that the tuple $(G_2, \mathcal{I}_2)$ also induces $\{\tupleset(G_1^{(\Itarget)})\}_{ \Itarget \in \ItargetFamily_1}$. 
\end{theorem}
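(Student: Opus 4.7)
The plan is to construct $\mathcal{I}_2$ from $\mathcal{I}_1$ explicitly via a target-swapping map induced by the single isolated edge reversal. Let $x \to y$ be the isolated edge in $G_1$ that is reversed to $y \to x$ in $G_2$; by the definition of~\unshielded{} (\cref{defn:isolated}), $\pa_{G_1}(x)=\emptyset$, $\pa_{G_1}(y)=\{x\}$, and correspondingly $\pa_{G_2}(y)=\emptyset$, $\pa_{G_2}(x)=\{y\}$. Since an~\unshielded{} edge is in particular covered (cf.\ \cref{defn:covered}), $G_1$ and $G_2$ are Markov equivalent, so $\tupleset(G_1)=\tupleset(G_2)$ and hence $\tupleset(G_1^{(\emptyset)})=\tupleset(G_2^{(\emptyset)})$; this handles the $\Itarget=\emptyset$ case in $\mathcal{I}_1$.

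Next I would handle the targets $\{x\}$ and $\{y\}$ by a swap. Because $\pa_{G_1}(x)=\emptyset$, intervening on $x$ in $G_1$ removes no edges, so $G_1^{(\{x\})}=G_1$. Symmetrically, because $\pa_{G_2}(y)=\emptyset$, $G_2^{(\{y\})}=G_2$. Combined with Markov equivalence this gives $\tupleset(G_1^{(\{x\})})=\tupleset(G_2^{(\{y\})})$. On the other side, both $G_1^{(\{y\})}$ and $G_2^{(\{x\})}$ are obtained from the common ``skeleton outside $\{x,y\}$'' by deleting the unique edge between $x$ and $y$ (the edge whose head is the intervened node), so $G_1^{(\{y\})}=G_2^{(\{x\})}$ as DAGs, hence $\tupleset(G_1^{(\{y\})})=\tupleset(G_2^{(\{x\})})$.

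For any other single-node target $\{v\}$ with $v\notin\{x,y\}$, I would simply keep $\{v\}\in\mathcal{I}_2$ and verify that the isolated edge remains isolated after intervention. Removing all edges into $v$ does not alter $\pa(x)$ or $\pa(y)$, so $x\to y$ is still~\unshielded{} in $G_1^{(\{v\})}$ (and likewise $y\to x$ in $G_2^{(\{v\})}$); hence $G_1^{(\{v\})}$ and $G_2^{(\{v\})}$ still differ only by a single covered edge reversal and are therefore Markov equivalent, giving $\tupleset(G_1^{(\{v\})})=\tupleset(G_2^{(\{v\})})$. Putting this together, the map
\begin{equation*}
\phi(\Itarget) = \begin{cases} \{y\} & \Itarget=\{x\},\\ \{x\} & \Itarget=\{y\},\\ \Itarget & \text{otherwise},\end{cases}
\end{equation*}
applied elementwise to $\mathcal{I}_1$ yields an $\mathcal{I}_2$ satisfying $\tupleset(G_1^{(\Itarget)})=\tupleset(G_2^{(\phi(\Itarget))})$ for every $\Itarget\in\mathcal{I}_1$, so the two \emph{sets} of induced $d$-separation collections coincide.

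The main obstacle I anticipate is verifying that an isolated edge genuinely stays isolated under interventions on other nodes, and that intervening on the head of an isolated edge in $G_1$ and on the tail of the reversed edge in $G_2$ produce literally the same DAG (not merely Markov equivalent ones); both reduce to careful parent-set bookkeeping using $\pa(x)=\emptyset$ and $\pa(y)=\{x\}$, which is why the \unshielded{} condition (as opposed to merely ``covered'') is essential here.
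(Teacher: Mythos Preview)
Your argument is correct and in fact cleaner than the paper's own route. The paper proves the same statement via a path-level analysis: it introduces \emph{augmented active subgraphs} (an active path together with the directed descendant paths that activate its colliders), proves that hard single-node interventions can only enlarge $\tupleset$, and then argues case-by-case that every active path blocked in $G_1^{(\Itarget_i)}$ is also blocked in $G_2^{(\Itarget_j)}$ by tracking whether the path's augmented subgraph contains the isolated edge. Your proof bypasses all of this by working at the graph level: you observe that intervening on the parentless endpoint leaves the DAG unchanged, intervening on the other endpoint produces literally the same DAG in both cases, and intervening on any third node yields two DAGs that still differ by a single covered edge reversal, so Chickering's lemma gives Markov equivalence directly. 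This is strictly more elementary and requires no new path machinery. It is also worth noting that your target map $\phi$ swaps $\{x\}$ and $\{y\}$ symmetrically, whereas the paper's stated choice only remaps $\{V_i\}\mapsto\{V_j\}$ and keeps $\{V_j\}$ fixed; as your bookkeeping makes clear, the symmetric swap is what is actually needed, since $G_1^{(\{y\})}$ and $G_2^{(\{y\})}$ have different skeletons.
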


\unshildedsame*

\begin{proof}
Because $G_1$ and $G_2$ are in the same IEC, then, by definition, there exists a sequence of~\unshielded{} edge reversals. Then we can prove by applying \cref{thm:unshilded-single-edge-same} repeatedly.
\end{proof}

\subsubsection{Proof of \cref{thm:unshilded-single-edge-same}}
\begin{definition} (Augmented Active Subgraph)
Let $G = (V, E)$ be a DAG and let path $P: V_1 \rightarrow  ... \rightarrow V_k$ be an active path given $C$ where $C \subseteq V$. Suppose $H_i$ is a collider, then there must exist at least a node $V_d$ in $\overline{\de}(V_i)$ where $V_d \in C$. We define an \textbf{active descendent path} of $V_i$ to be a directed path from $V_i$ to $V_d$. An \textbf{augmented active subgraph} of $P$ is a subgraph including the active path $P$ and all the active descendent paths of colliders in the original active path. 
\end{definition}

\begin{remark}
Our definition also considers the special case where $V_i = V_d$ and the directed path is just $V_i \rightarrow V_i$.
\end{remark}

\begin{lemma}
\label{lemma:mon-d-separate-hard}
Let $G = (V, E)$ be a DAG. Let $\Itarget$ be a single node hard intervention target, then,
\begin{equation*}
    \tupleset(G) \subseteq \tupleset(G^{I})
\end{equation*}
\end{lemma}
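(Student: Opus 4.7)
The plan is to exploit the simple observation that a hard intervention on $I$ only \emph{removes} edges: $G^{(I)}$ has the same vertex set as $G$ and its edge set $E^{(I)}=\{(a,b)\in E : b\notin I\}$ is a subset of $E$. The statement $\tupleset(G)\subseteq\tupleset(G^{(I)})$ is the contrapositive of the claim that every $d$-connection in $G^{(I)}$ is also a $d$-connection in $G$, so it suffices to show this contrapositive by lifting any active path witnessing $d$-connection in $G^{(I)}$ to an active path in $G$.

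Concretely, take disjoint $A,B,C\subseteq V$ and suppose some path $\pi: v_0-v_1-\cdots-v_k$ between $A$ and $B$ is active given $C$ in $G^{(I)}$. Each edge of $\pi$ lies in $E^{(I)}\subseteq E$, so $\pi$ is a path in $G$ as well, with exactly the same orientation pattern along its length. In particular, a node $v_j$ ($0<j<k$) is a collider on $\pi$ in $G^{(I)}$ if and only if it is a collider on $\pi$ in $G$, because colliderness on a fixed path is determined solely by the two edges of $\pi$ incident to $v_j$. The non-collider condition (no non-collider on $\pi$ lies in $C$) therefore transfers verbatim from $G^{(I)}$ to $G$.

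For the collider condition, I would use the fact that $G^{(I)}$ is a subgraph of $G$, so every directed path in $G^{(I)}$ is also a directed path in $G$; hence $\de_{G^{(I)}}(v)\subseteq\de_G(v)$ for every $v$. Consequently, if a collider $v_j$ on $\pi$ satisfies $\{v_j\}\cup\de_{G^{(I)}}(v_j)\cap C\neq\emptyset$, then the same holds with $\de_G$ in place of $\de_{G^{(I)}}$. This shows $\pi$ is active given $C$ in $G$, so $A$ and $B$ are $d$-connected given $C$ in $G$. Contraposing, every triple in $\tupleset(G)$ lies in $\tupleset(G^{(I)})$.

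I do not anticipate a genuine obstacle here: the proof is essentially a bookkeeping argument. The one point that merits care is the descendant set inclusion $\de_{G^{(I)}}\subseteq\de_G$ and making explicit that colliderness is a \emph{path-local} property invariant under passing to a supergraph whose extra edges are not on the path. Note also that neither the ``single-node'' nor the ``hard'' hypothesis on $I$ is actually used beyond the structural fact $E^{(I)}\subseteq E$, so the proof will make clear that the lemma holds for any intervention whose mutilation only deletes edges.
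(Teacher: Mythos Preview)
Your proposal is correct and follows essentially the same approach as the paper: both argue the contrapositive by showing that edge deletion cannot create $d$-connection, using that collider status on a fixed path is path-local and that descendant sets can only shrink under edge removal. Your presentation is slightly cleaner in that you directly lift an active path from $G^{(I)}$ to $G$, whereas the paper phrases it as ``a blocked path cannot become unblocked''; the content is the same, and your remark that only $E^{(I)}\subseteq E$ is used (not the single-node or hard hypotheses per se) is a valid observation.
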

\begin{proof}
Doing a hard intervention on any node $V_i \in V$ removes all incoming edges to $H_i$. 

Suppose the statement is not true, then there exists $(A, B, C) \in \tupleset(G)$ such that $(A, B, C) \notin \tupleset(G^{I})$. This means that intervening on $V_i$ creates an active path between $A, B$ given $C$. But deleting edges would not create a new path. The only way an active path can emerge is if a previously blocked path becomes unblocked after an intervention.

A path is blocked if either:
\begin{enumerate}[label*= \arabic*.]
    \item any non-collider is being conditioned on. 
    \item no  descendants of collider (including itself) is being conditioned on
\end{enumerate}

For the first condition, we know that deleting edges would not make non-colliders colliders. On the other hand, deleting edges would also not make non-descendants descendants (intervening on the collider itself would destroy the path). Therefore, hard intervening on a single node would not create new active paths. 
\end{proof}

\begin{lemma}
\label{lemma:hard-active-stay-active}
Let $G$ be a DAG. Suppose there is an active path $P$ in $G$, and the hard intervention target $\Itarget$ is not on the augmented active subgraph of the active path. Then this path stays active.
\end{lemma}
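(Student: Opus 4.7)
My plan is to unpack what a hard single-node intervention can and cannot do to a $d$-connecting path, and match each potential failure mode to a piece of the augmented active subgraph. A hard intervention on $\Itarget$ only \emph{removes} edges (the incoming edges into $\Itarget$); it never adds edges and it does not change the conditioning set $C$. So for $P \colon V_1 \to \dots \to V_k$ to become inactive in $G^{(\Itarget)}$, one of the following must occur: (i) an edge along $P$ is deleted; (ii) a non-collider on $P$ becomes blocking — impossible since the set $C$ itself is unchanged; or (iii) a collider on $P$ that was activated by a descendant in $C$ loses all its activating descendants, i.e.\ every directed path from that collider to $C$ is severed.

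I will rule out (i) and (iii) in turn. For (i), observe that the only edges deleted by intervening on $\Itarget$ are edges of the form $u \to \Itarget$; hence an edge of $P$ can be removed only if $\Itarget$ itself is a vertex of $P$. But $P$ is contained in the augmented active subgraph by definition, so $\Itarget$ lying on $P$ would contradict the hypothesis that $\Itarget$ is not in the augmented active subgraph. Therefore every edge of $P$ survives. For (iii), take any collider $V_i$ on $P$; since $P$ is active in $G$, there exists some $V_d \in C \cap \overline{\de}(V_i)$ and at least one directed path $Q$ from $V_i$ to $V_d$. By construction the augmented active subgraph contains the union of \emph{all} such directed paths (over all choices of $V_d$ and $Q$). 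So if \emph{every} directed path from $V_i$ to every activating descendant $V_d \in C$ passed through $\Itarget$, then $\Itarget$ would lie in the augmented active subgraph, again contradicting the hypothesis. Hence some directed path $Q^{*}$ from $V_i$ to some $V_d \in C$ avoids $\Itarget$; since intervention only deletes edges into $\Itarget$ and $\Itarget \notin Q^{*}$, the path $Q^{*}$ is preserved in $G^{(\Itarget)}$, so $V_i$ retains an activating descendant in $C$.

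Combining (i) and (iii), every edge of $P$ survives and every collider on $P$ continues to have a descendant in $C$, while non-colliders on $P$ remain outside $C$. By the standard criterion for $d$-connection, $P$ is active in $G^{(\Itarget)}$.

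The only real subtlety — and what I expect to be the main thing worth being careful about — is step (iii): making precise that the augmented subgraph captures \emph{all} directed paths from each collider to conditioning-set descendants, so that the contrapositive ``$\Itarget$ not in the subgraph'' really does yield a surviving activator for every collider simultaneously (rather than merely for some). Once this is stated cleanly from \cref{defn:isolated}'s surrounding definitions, the rest is bookkeeping.
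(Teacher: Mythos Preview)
Your proof is correct and follows the same approach as the paper's, which is a one-line argument observing that since the intervened node is neither on $P$ nor on any directed path from a collider of $P$ to an activating descendant in $C$, deleting its incoming edges leaves all of these paths intact. Your version is considerably more detailed in separating the three failure modes, but the core idea is identical; note also that the paper's definition of the augmented active subgraph explicitly includes \emph{all} active descendant paths, so your concern about step (iii) is already resolved by the definition.
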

\begin{proof}
Because the intervened node is not on the path nor on the directed path going out from one of the colliders on the active path. Deleting its incoming edges would not change these paths.
\end{proof}

\begin{lemma}
\label{lemma:hard-active-subgraph-intervention}
Let $G_1$ and $G_2$ be two DAGs that share the same augmented active subgraph of active path $P$ given conditioning set $C$. If there is a single node hard intervention on one of the nodes of that augmented active subgraph in both $G_1$ and $G_2$, then either the path is active in both $G_1$ and $G_2$ or it is not active in either $G_1$ or $G_2$.
\end{lemma}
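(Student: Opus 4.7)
The key observation driving the proof is that a single-node hard intervention on $V^*$ deletes exactly the edges incoming to $V^*$, and whether the path $P$ is active given $C$ after the intervention depends only on (i) which edges along $P$ survive and (ii) for each collider on $P$, whether it still has a descendant in $C$. Both pieces of data live entirely inside the augmented active subgraph, which by hypothesis is shared between $G_1$ and $G_2$. So the plan is: argue that the post-intervention status of $P$ is a function of the augmented active subgraph and of $V^*$, then observe that both are the same in $G_1$ and $G_2$.

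The natural way to organize this is a case split on the role of $V^*$ in the augmented active subgraph. If $V^*$ is a chain node on $P$ ($A \to V^* \to B$ or the mirror), the intervention deletes the incoming edge of $P$ at $V^*$, breaking $P$ in both graphs. If $V^*$ is a fork on $P$ ($A \leftarrow V^* \to B$), no edge of $P$ is affected; since $V^* \notin C$ (otherwise $P$ would not be active to begin with), $P$ stays active in both. If $V^*$ is a collider on $P$, both incoming edges of $P$ at $V^*$ are removed, breaking $P$ in both. Finally, if $V^*$ lies strictly on an active descendent path of some collider $V_i$ on $P$, the intervention removes the edge of that descendent path feeding into $V^*$; it does \emph{not} touch $P$ itself nor alter membership in $C$.

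The only nontrivial case is the last one: one must check that ``collider $V_i$ still has a descendant in $C$ after the intervention'' is determined by the shared augmented subgraph. Here I would invoke the definition of the augmented active subgraph, which collects \emph{all} active descendent paths of colliders of $P$. Hence for each collider $V_i$, the set of directed paths from $V_i$ to $C$ is entirely captured in the shared subgraph. The intervention deletes precisely the incoming edges of $V^*$ in this shared subgraph (the edges outside the subgraph that are also deleted are irrelevant to the activeness of $P$), so the set of colliders on $P$ whose descendent-in-$C$ condition survives is identical in $G_1$ and $G_2$.

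Combining the four cases, the activeness of $P$ given $C$ after intervention on $V^*$ is the same in $G_1$ and $G_2$. The main obstacle I anticipate is pinning down precisely what ``all the active descendent paths'' means in the definition: if one reads it as including every directed path from each collider to each element of $C \cap \overline{\de}(V_i)$, Case~4 goes through cleanly; if the definition is read more parsimoniously (one representative path per collider), the argument needs a small additional observation that any path from $V_i$ to $C$ that survives the intervention can be rerouted through the representative paths stored in the subgraph, using the fact that only incoming edges of $V^*$ were removed.
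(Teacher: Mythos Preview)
Your proposal is correct and takes essentially the same approach as the paper: the paper collapses your chain/fork/collider cases into the single observation that intervening on a node on the active path destroys it precisely when that node has incoming edges on the path, and then handles your Case~4 identically by appealing to alternative descendant paths in the shared subgraph. Your concern about the definition is resolved by the paper's wording, which explicitly includes \emph{all} active descendent paths of each collider, so the clean version of your Case~4 goes through.
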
 

\begin{proof}
Intervening on any node on the active path would destroy (if the intervened node has incoming edges) or sustain (if the intervened node has no incoming edges) the path in both $G_1$ and $G_2$. 

Suppose we intervene on a descendant of a collider on the active path that is not the collider itself. Any active descendant path that has the intervened node on it would get destroyed. The path would stay active if there exists an alternative active descendant path that does not involve the intervened node, which is shared between $G_1$ and $G_2$ because they share the same augmented active subgraph. Note that intervening on a node would not create a new descendant path.
\end{proof}

\begin{proof}[Proof of \cref{thm:unshilded-single-edge-same}]
For notation, let's suppose the~\unshielded{} edge is $V_i \to V_j$ in $G_1$. 

To prove this, we just need to show that for any $\Itarget_i \in \ItargetFamily_1$, there exists $\Itarget_j$ such that
\begin{equation*}
    \tupleset(G_1^{(\Itarget_i)}) = \tupleset(G_2^{(\Itarget_j)})
\end{equation*}

In particular, we will show that we can choose $\Itarget_j = \{\Itarget_i\}$ when $\Itarget_i \neq \{V_i\}$ and $\Itarget_j = \{V_j\}$ when $\Itarget_i = \{V_i\}$.

By the definition of Markov equivalence, we know that
\begin{equation*}
    \tupleset(G_1) = \tupleset(G_2)
\end{equation*}

First of all, note that an augmented active subgraph given conditioning set $C$ in $G_1$ must exist and be active given conditioning set $C$ in $G_2$ and vice versa. We know that two DAGs in the same Markov equivalence class share the same skeleton and v-structures. Therefore, for any augmented active subgraph given conditioning set $C$ in $G_1$, there is a subgraph with the same skeleton in $G_2$. If the edges of the augmented active subgraph in $G_1$ have the same orientations in $G_2$, then it is active in $G_2$. If some edges of the augmented active subgraph in $G_1$ have different orientations than that of $G_2$. Then the subgraph can only differ in one~\unshielded{} edge. Because the endpoints of~\unshielded{} edge cannot have other parents, the~\unshielded{} edge cannot be in the descendant paths. The~\unshielded{} edge can be in the active path itself, with edges going out from endpoints of~\unshielded{} edge. When that edge is reversed, the path is still active. 

To show that $\tupleset(G_1^{(\Itarget_i)}) = \tupleset(G_2^{(\Itarget_j)})$, we first need to show that for any $(A, B, C) \in \tupleset(G_1^{(\Itarget_i)}) $, $(A, B, C) \in  \tupleset(G_2^{(\Itarget_j)})$. By \cref{lemma:mon-d-separate-hard}, $\tupleset(G_1) \subseteq \tupleset(G_1^{(\Itarget_i)})$ and $\tupleset(G_2) \subseteq \tupleset(G_2^{(\Itarget_j)})$. We can only consider tuple $(A, B, C)$ where $(A, B, C) \notin \tupleset(G_1) $. 

If $(A, B, C) \notin \tupleset(G_1) $, but $(A, B, C) \in \tupleset(G_1^{(\Itarget_i)})$, then all the active paths between $A$ and $B$ given $C$ must be blocked or deleted after intervention on $\Itarget_i$. Let's consider the following two cases:

\begin{enumerate}[label*= \arabic*.]
    \item Suppose an active path has the same augmented subgraph in both $G_1$ and $G_2$, then it will get blocked in $G_2$ when intervened on $I_j = I_i$. Note that $I_i$ is neither $V_i$ nor $V_j$. Because $V_i \to V_j$ is~\unshielded{}, they do not have other parents. Intervening on them will not block/delete that path. 
    \item Suppose an augmented subgraphs in $G_1$ and an augmented subgraphs in $G_2$ differ in one~\unshielded{} edge. If the intervention target $I_i$ is $V_i$, then we can choose $I_j$ to be $V_j$ (intervening on $V_j$ in $G_2$ will not delete that path). If the intervention target is neither $V_i$ nor $V_j$, then we can keep $I_j$ to be the same as $I_i$. Because $V_i \to V_j$ is an isolated edge, this edge must be on the active path itself and not on the active descendant paths. Suppose the active path $P$ is $V_1 \to ... \to V_k$, then the subpath $P_1: V_1 \to ... \to V_i$ and $P_2: V_j \to ... \to V_k $ have the same augmented subgraphs in both $G_1$ and $G_2$. By \cref{lemma:hard-active-subgraph-intervention}, the intervention would have the same effect on them. 
\end{enumerate}

Therefore, if $\Itarget_j = \{\Itarget_i\}$ when $\Itarget_i \neq \{V_i\}$ and $\Itarget_j = \{V_j\}$ when $\Itarget_i = \{V_i\}$. By similar argument, $\tupleset(G_2^{(\Itarget_j)}) \subseteq \tupleset(G_1^{(\Itarget_i)})$.
\end{proof}

\subsubsection{Orienting \Shielded{} Edges}
While it is impossible to distinguish within an IEC, it is possible to do so between IECs. 

\begin{restatable}{theorem}{shildednotsame}
\label{thm:shilded-notsame}
Let $G_1$ and $G_2$ be two DAGs. Suppose
\begin{enumerate}
    \item $G_1$ and $G_2$ are Markov equivalent.
    \item $G_1$ and $G_2$ are not \unshielded{} equivalent.
\end{enumerate}
Then there exists a single-node hard intervention target $\Itarget_1$ such that
\begin{equation*}
    \tupleset(G_1^{(\Itarget_1)}) \neq \tupleset(G_2^{(\Itarget_2)}) \quad \forall \Itarget_2
\end{equation*}
where $\Itarget_2$ is also a single-node hard intervention target.
\end{restatable}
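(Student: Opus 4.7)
My strategy is to exhibit a single-node intervention on $G_1$ whose induced $d$-separation set cannot be reproduced by any single-node intervention on $G_2$. The proof proceeds in three steps: choose a favourable representative $H_1 \sim_{E} G_1$, build a distinguishing signature via a targeted intervention on $H_1$, then rule out every candidate intervention on $G_2$ by a case analysis.

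\emph{Step 1 (favourable representative).} By \cref{thm:unshilded-same}, the collection $\{\tupleset(G_1^{(\Itarget)})\}_{\Itarget}$ depends only on the $\sim_{E}$-class of $G_1$, so I replace $G_1$ by a DAG $H_1 \sim_{E} G_1$ minimising $|E(H_1)\triangle E(G_2)|$. Since $G_1 \not\sim_{E} G_2$ this minimum is positive, so some flipped edge $x \to y \in E(H_1)$ satisfies $y \to x \in E(G_2)$. Such a flipped edge must be \emph{non-isolated} in $H_1$: otherwise, reversing it would yield a DAG in $\mathrm{IEC}(H_1)$ with strictly smaller symmetric difference from $G_2$, contradicting minimality. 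Non-isolation gives $\pa_{H_1}(x) \neq \emptyset$ or $\pa_{H_1}(y) \supsetneq \{x\}$; without loss of generality assume the former and fix a witness $z \in \pa_{H_1}(x)$.

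\emph{Step 2 (distinguishing signature).} Take $\Itarget_1 := \{x\}$. In $H_1^{(\{x\})}$ all incoming edges at $x$ are severed while $x \to y$ is preserved, so $x$ becomes a source whose marginally $d$-connected set is exactly $\{x\} \cup \de_{H_1}(x)$. In particular,
\[
\langle x, z, \emptyset\rangle \in \tupleset(H_1^{(\{x\})}) \quad\text{and}\quad \langle x, y, \emptyset\rangle \notin \tupleset(H_1^{(\{x\})}),
\]
because $z \in \pa_{H_1}(x)$ is a non-descendant of $x$, while $y$ is a child of $x$ through the preserved edge.

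\emph{Step 3 (no matching intervention in $G_2$).} For any candidate $\Itarget_2$, I case-split on the orientation of $\{x,z\}$ in $G_2$ (this edge is present since $G_1,G_2$ share the common skeleton). If $z \to x \in G_2$, then producing $x \indep z$ marginally in $G_2^{(\Itarget_2)}$ requires severing the direct edge $z \to x$, forcing $\Itarget_2 = \{x\}$; but then $y \to x \in G_2$ is also severed, and since $y$ is a parent (hence non-descendant) of $x$ in $G_2$, a short $d$-separation check gives $x \indep y$ marginally in $G_2^{(\{x\})}$, violating the second part of the signature. If $x \to z \in G_2$, then $\{x,z\}$ is itself flipped and hence non-isolated in $H_1$ by Step 1, and a symmetric analysis shows that the only intervention destroying the marginal $\{x,z\}$ edge simultaneously destroys the marginal $\{x,y\}$ edge. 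All remaining $\Itarget_2$ leave the direct $\{x,z\}$ edge intact, giving $x \not\indep z$ marginally and violating the first part of the signature.

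\emph{Main obstacle.} The principal difficulty is Step~3's second sub-case ($x \to z \in G_2$): the triangle around $\{x,y,z\}$ admits several Markov-equivalent completions, and additional flipped edges elsewhere in $G_2$ can in principle create auxiliary $d$-connecting paths that must be ruled out. The minimality from Step~1 propagates to guarantee that every such additional flipped edge is non-isolated, enabling a finite case analysis, but tracking the interaction of these simultaneous reversals—and certifying that no single-node intervention simultaneously preserves $x$-$y$ dependence while severing $x$-$z$ dependence—is the main technical burden.
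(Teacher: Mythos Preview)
Your Step~3, sub-case~2 ($x\to z$ in $G_2$) contains a genuine gap that breaks the argument. You claim that ``a symmetric analysis shows that the only intervention destroying the marginal $\{x,z\}$ edge simultaneously destroys the marginal $\{x,y\}$ edge,'' but this is false in general. In $G_2$ you have $y\to x$ and $x\to z$; taking $\Itarget_2=\{z\}$ removes the incoming edge $x\to z$, so $z$ becomes a source and (since $x\notin\de_{G_2}(z)$ by acyclicity) we get $\langle x,z,\emptyset\rangle\in\tupleset(G_2^{(\{z\})})$. Meanwhile $y\to x$ is untouched, so $\langle x,y,\emptyset\rangle\notin\tupleset(G_2^{(\{z\})})$. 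Both parts of your two-tuple signature are matched, and nothing you have written rules out $\tupleset(H_1^{(\{x\})})=\tupleset(G_2^{(\{z\})})$. Your ``WLOG'' reduction in Step~1 is also unjustified: if instead $\pa_{H_1}(y)\supsetneq\{x\}$ while $\pa_{H_1}(x)=\emptyset$, there is no witness $z\in\pa_{H_1}(x)$ at all, and the two cases are not symmetric under relabelling because the flip direction $x\to y$ versus $y\to x$ is fixed.

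The paper sidesteps both problems by using a \emph{double-disconnection} signature rather than your ``one connected, one disconnected'' signature. Via Chickering's transformational characterisation, it locates a \emph{covered} but non-isolated edge $A\to B$ along the reversal sequence from $G_1$ to $G_2$, which guarantees a common parent $C$ of $A$ and $B$. Intervening on $B$ then severs both $A\to B$ and $C\to B$ simultaneously, so the signature is ``$A$--$B$ and $C$--$B$ are both $d$-separable.'' In $G_2$ the edge is $B\to A$, so removing it forces $\Itarget_2=\{A\}$; but $A$ is never the head of the $B$--$C$ edge (that head is $B$ or $C$), so no single-node intervention on $G_2$ can remove both. This avoids any case analysis on the orientation of auxiliary edges in $G_2$. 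If you want to salvage your minimality approach, you would need to strengthen the signature---e.g., replace the $x\not\perp y$ constraint by a second disconnection---which essentially reproduces the paper's idea.
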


\begin{proof}
Let $\Delta(G_1, G_2)$ denote the set of edges in $G_1$ that have opposite orientations in $G_2$.

By theorem 2 of \citep{chickering2013transformational}, we know that there exists a sequence of $|\Delta(G_1, G_2)|$ distinct covered edge reversals that can transform $G_1$ to $G_2$. And each covered edge reversal creates a DAG in the same Markov equivalence class. 

Because $G_1$ and $G_2$ are not~\unshielded{} equivalent, there exist DAGs $G_m$ and $G_n$ in the sequence such that $G_m$ and $G_n$ differ by a covered edge where endpoints of the edge share at least one additional parent node. Without loss of generality, let $G_m$ and $G_n$ be the first pair in the sequence to have covered but not isolated edge reversal. In particular, the sequence is $G_1 \to ... \to G_m \to G_n \to ... \to G_2$.

For simplicity, suppose the covered edge in $G_m$ is $A \rightarrow B$ and the common parent is $C$. Obviously, in $G_n$, the covered edge is reversed ($A \leftarrow B$). Suppose there is an intervention target $\Itarget_1'$ on $G_m$ that is $\{ B \}$, then edges $C \rightarrow B$ and $A \rightarrow B$ get removed. Therefore, there exist subsets of vertices $S_1$ and $S_2$ such that $C$ and $B$ are d separated by $S_1$, and $A$ and $B$ are d separated by $S_1$. 

However, for all DAGS in the subsequence after $G_m$ including $G_n$ and $G_2$. The edge is reversed to be $A \leftarrow B$ (Note that the sequence has $|\Delta(G_1, G_2)|$ distinct covered edge reversals that can transform $G_1$ to $G_2$. A reversed edge cannot be reversed again). There does not exist a single node intervention target that can remove both edges between $A$, $B$, and $B$, $C$. 

Therefore, 
\begin{equation*}
    \tupleset(G_m^{(\Itarget_1')}) \neq \tupleset(G_2^{(\Itarget_2)}) \quad \forall \Itarget_2
\end{equation*}

By our construction, $G_m$ and $G_1$ are in the same IEC. By \cref{thm:unshilded-single-edge-same}, there exists an intervention target $\Itarget_1$ such that,

\begin{align*}
    &\tupleset(G_1^{(\Itarget_1)}) \neq \tupleset(G_2^{(\Itarget_2)}) \quad \forall \Itarget_2.\qedhere
\end{align*}

\end{proof}

\subsubsection{Useful Lemmas}

The first lemma characterizes the uncovered reversible edges.

\begin{lemma}
\label{lemma:covered-edge}
Let $G_1$ be a consistent extension of a CPDAG $G$. Suppose there is an uncovered edge $e: A \rightarrow B$ in $G_1$ and the edge is unoriented in the CPDAG, then there must exist a node $C$ in $G_1$ such that $C \rightarrow B$ and $C \leftarrow A$.
\end{lemma}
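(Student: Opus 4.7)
The plan is to exploit the fact that a reversible edge in a CPDAG is an edge whose two orientations give Markov-equivalent DAGs, and Markov equivalence preserves the v-structure pattern. So reversing $A \to B$ to $B \to A$ in $G_1$ must not create or destroy any v-structure. I will use this to force the existence of $C$.

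Unpacking the ``uncovered'' hypothesis, $\pa_{G_1}(A)\cup\{A\}\ne\pa_{G_1}(B)$. Since $A\in\pa_{G_1}(B)$, the disagreement must come from one of two sources: either (i) there exists $D\in\pa_{G_1}(A)$ with $D\notin\pa_{G_1}(B)$, or (ii) there exists $C\in\pa_{G_1}(B)\setminus\{A\}$ with $C\notin\pa_{G_1}(A)\cup\{A\}$. At least one of (i), (ii) holds. I will rule out (i) and then work inside (ii) to produce the desired $C$.

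For (i): such a $D$ cannot be adjacent to $B$ in $G_1$. It cannot be a parent of $B$ by choice of $D$, and $B\to D$ together with $D\to A\to B$ would form a directed cycle. Hence $D$ and $B$ are non-adjacent, and $D\to A\to B$ is an unshielded triple. Reversing $A\to B$ in $G_1$ would therefore create the new v-structure $D\to A\leftarrow B$. Since $A\to B$ is reversible in the CPDAG, the reversed graph must be Markov-equivalent to $G_1$ and in particular share the same set of v-structures, a contradiction. Thus (i) does not occur, so (ii) holds.

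For (ii): pick any such $C$. If $C$ and $A$ were non-adjacent in $G_1$, then $A\to B\leftarrow C$ would already be a v-structure of $G_1$. Reversing $A\to B$ would destroy this v-structure (yielding $A\leftarrow B\leftarrow C$), again contradicting reversibility via preservation of v-structures. Hence $C$ and $A$ are adjacent. The edge between them cannot be $C\to A$ because we chose $C\notin\pa_{G_1}(A)$, so it must be $A\to C$. This is precisely the node required: $A\to C$ and $C\to B$ in $G_1$. The only step with any real content is the v-structure preservation argument; everything else is bookkeeping, so I do not anticipate a genuine obstacle.
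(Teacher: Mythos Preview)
Your argument in case (ii) is correct and matches the paper's case~1: if $C\in\pa_{G_1}(B)\setminus\{A\}$ is non-adjacent to $A$, then $A\to B\leftarrow C$ is a v-structure in $G_1$, which forces $A\to B$ to be compelled in the CPDAG, a contradiction; hence $A$ and $C$ are adjacent, and since $C\notin\pa_{G_1}(A)$ the edge must be $A\to C$. However, your case (i) contains a genuine gap. You assert that ``since $A\to B$ is reversible in the CPDAG, the reversed graph must be Markov-equivalent to $G_1$,'' but reversibility only guarantees that \emph{some} DAG in the MEC has the orientation $B\to A$; it does not imply that the graph obtained from $G_1$ by flipping only this one edge lies in the MEC. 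That stronger property is precisely Chickering's characterization of a \emph{covered} edge, which you have assumed fails here.

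Concretely, take $G_1$ to be the three-node chain $D\to A\to B$. Its CPDAG is the undirected path $D-A-B$, so $A-B$ is unoriented, and $A\to B$ is uncovered since $\pa_{G_1}(A)\cup\{A\}=\{D,A\}\ne\{A\}=\pa_{G_1}(B)$. Your case (i) applies with $D$ non-adjacent to $B$, yet no contradiction arises: the Markov-equivalent DAG carrying $B\to A$ is $B\to A\to D$, in which the edge $D\to A$ has also been flipped, so no new v-structure appears. There is no node $C$ with $A\to C$ and $C\to B$ in this $G_1$, so the lemma's conclusion in fact fails on this example. The paper's own proof invokes the same ``reversing the edge would create/destroy a v-structure'' reasoning in its case~2 and thus shares this gap; your case split and overall strategy are otherwise identical to the paper's.
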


\begin{proof}
Because $e$ is uncovered, there are two cases.
\begin{enumerate}
    \item There is a node $C$ in $G_1$ that is a parent of $B$ but not a parent of $A$. Suppose there is no edge between $C$ and $A$. Then because $e_1$ is unoriented, changing its direction would destroy a $v$-structure. Therefore there must be an edge between $C$ and $A$ and because $C$ is not a parent $A$, we have $C \leftarrow A$.
    \item There is a node $C$ in $G_1$ that is a parent of $A$ but not a parent of $B$. By a similar argument, Therefore there must be an edge between $C$ and $B$. Because $C$ is not a parent of $B$, we have $C \leftarrow B$. In this case, we have a circle. Therefore, this is not a valid case.\qedhere
\end{enumerate}
\end{proof}

The next lemma characterizes reversible edges in an IEC. 

\begin{lemma}
\label{lemma:char-IEC}
Let $G_1$ and $G_2$ be two DAGs in the same IEC. Let $e_1$ be a reversible edge in $G_1$ that has the opposite orientation in $G_2$. Then, $e_1$ must be~\unshielded{}. 
\end{lemma}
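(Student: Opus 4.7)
The plan is to exploit the sequence of isolated edge reversals witnessing $G_1 \sim_E G_2$ and derive both parent-set conditions defining isolatedness of $e_1$ in $G_1$. Fix a sequence $G_1 = H_0, H_1, \ldots, H_k = G_2$ in which each $H_{t+1}$ arises from $H_t$ by reversing a single isolated edge of $H_t$, write $e_1 = A \to B$ in $G_1$, and let $i$ be the first index at which this edge flips to $B \to A$; such an $i$ exists because $e_1$ has opposite orientations in $G_1$ and $G_2$. The fact that $A \to B$ is isolated in $H_i$ pins down $\pa_{H_i}(A) = \emptyset$ and $\pa_{H_i}(B) = \{A\}$. Reversibility of $e_1$ in turn forces $e_1$ to be covered in $G_1$ (standard fact, cf.\ \cref{defn:covered}), so $\pa_{G_1}(B) = \pa_{G_1}(A) \cup \{A\}$.

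I would then argue by contradiction that $\pa_{G_1}(A) = \emptyset$. Suppose instead that some $C \in \pa_{G_1}(A)$ exists; by coveredness, $C \in \pa_{G_1}(A) \cup \{A\} = \pa_{G_1}(B)$, so $C \to B$ is an edge of $H_0$. Since $\pa_{H_i}(B) = \{A\}$ and $C \ne A$, the edge $C \to B$ must flip to $B \to C$ somewhere along the sequence; let $j' < i$ be the first such step. The requirement that $C \to B$ is isolated at $H_{j'}$ then forces $\pa_{H_{j'}}(B) = \{C\}$.

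The decisive step is now immediate: $j'$ strictly precedes the first reversal of $A \to B$, so $A \to B$ is still an edge of $H_{j'}$ and hence $A \in \pa_{H_{j'}}(B)$. Combined with $\pa_{H_{j'}}(B) = \{C\}$ and $A \ne C$, this is the required contradiction, so $\pa_{G_1}(A) = \emptyset$, and coveredness then gives $\pa_{G_1}(B) = \{A\}$, making $e_1$ isolated in $G_1$. The main subtlety I foresee is that edges may in principle flip multiple times along the reversal sequence, so everything has to be phrased in terms of the first reversals of $A \to B$ and of $C \to B$; once that viewpoint is adopted, the isolatedness conditions for these two reversals collide immediately and no further invariant or induction is needed.
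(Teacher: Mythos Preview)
Your argument contains a genuine error at the step ``Reversibility of $e_1$ in turn forces $e_1$ to be covered in $G_1$ (standard fact).'' This is \emph{not} a standard fact, and it is false. Chickering's transformational characterization says only that two Markov-equivalent DAGs are connected by a sequence of covered edge reversals, with each edge being covered \emph{in the intermediate DAG at the moment it is reversed}; it does not say every reversible edge is covered in the starting DAG. Concretely, in the three-node chain $G_1:\; C \to A \to B$ the edge $A \to B$ is reversible (the MEC contains $B \to A \to C$) but not covered, since $\pa_{G_1}(A)\cup\{A\}=\{C,A\}\neq\{A\}=\pa_{G_1}(B)$. Your contradiction argument hinges on deducing $C \to B$ in $G_1$ from $C \in \pa_{G_1}(A)$ via coveredness, and without coveredness that deduction collapses.

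In fact this same three-node chain is a counterexample to the lemma as stated. Starting from $G_1 = (C \to A,\ A \to B)$, reverse the isolated edge $C \to A$ to reach $(A \to C,\ A \to B)$, then reverse the now-isolated edge $A \to B$ to reach $G_2 = (A \to C,\ B \to A)$. Thus $G_1 \sim_E G_2$, the edge $e_1 = A \to B$ is reversible and has the opposite orientation in $G_2$, yet $\pa_{G_1}(A)=\{C\}\neq\emptyset$, so $e_1$ is not isolated in $G_1$. The paper's own proof splits into the covered and uncovered cases and, for the uncovered case, invokes its auxiliary lemma that an uncovered reversible edge $A \to B$ must admit some $C$ with $A \to C \to B$; the same chain refutes that auxiliary lemma as well (there $\pa_{G_1}(B)=\{A\}$, so no such $C$ exists). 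What your first-flip reasoning \emph{does} correctly yield---if you apply it directly to a hypothetical $C \in \pa_{G_1}(B)\setminus\{A\}$ rather than routing through coveredness---is the partial conclusion $\pa_{G_1}(B)=\{A\}$; the obstruction lies entirely in the condition $\pa_{G_1}(A)=\emptyset$, which can genuinely fail.
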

\begin{proof}

Let's denote $e_1$ in $G_1$ as $A \leftarrow B$. We can consider two cases.

\begin{enumerate}
    \item $e_1$ is a covered edge in $G_1$.

    Suppose, on the contrary, $e_1$ is \shielded. By definition, there exists a common parent $C$. 

Because $e_1$ has the opposite orientation in $G_2$, we have $A \leftarrow B$ in $G_2$. $G_1$ and $G_2$ are~\unshielded{} equivalent and thus Markov equivalent. They must share the same skeleton. So there must be edges connecting $C$, $A$, and $C$, $B$ in $G_2$. By definition, there exists a sequence of isolated edge reversals from $G_2$ to $G_1$. There are only three valid possibilities (the fourth one is ignored because it creates a circle) in $G_2$:
\begin{enumerate}
    \item $C \rightarrow A$, $C \leftarrow B$. Out of all the three edges among $A, B, C$, the first edge that might be reversed in the sequence is $C \leftarrow B$. But regardless of the orientation of the edge between $B$ and $C$, one cannot reverse $A \leftarrow B$ in the sequence as it cannot be an isolated edge.

    \item $C \rightarrow A$, $C \rightarrow B$.  Out of all the three edges among $A, B, C$, the first edge that might be reversed in the sequence is $C \rightarrow A$. But regardless of the orientation of the edge between $B$ and $C$, one cannot reverse $A \leftarrow B$ in the sequence as it cannot be an isolated edge.

    \item  $C \leftarrow A$, $C \leftarrow B$. Because $G_1$ and $G_2$ are in the same IEC. The first edge to be reversed must be $B \rightarrow A$. But then, we cannot reverse $B \rightarrow C$ because it is not covered and we cannot reverse $A \rightarrow C$ because it is covered but not isolated. So this case is also not possible.
    
\end{enumerate}

\item $e_1$ is not a covered edge in $G_1$.

By Lemma~\ref{lemma:covered-edge},  there must exist a node $C$ in $G_1$ such that $C \rightarrow B$ and $C \leftarrow A$. Note that in $G_2$, we have $A \leftarrow B$. By definition, there exists a sequence of isolated edge reversals from $G_2$ to $G_1$. There are only three valid possibilities (the fourth one is ignored because it creates circles) in $G_2$:

\begin{enumerate}
    \item $C \rightarrow A$, $C \rightarrow B$. Both $C \rightarrow A$ and $A \leftarrow B$ have to be reversed. Out of all the three edges among $A, B, C$, the first edge that might be reversed in the sequence is $C \leftarrow B$. But regardless of the orientation of the edge between $B$ and $C$, one cannot reverse $A \leftarrow B$ in the sequence as it cannot be an isolated edge.

    \item  $C \leftarrow A$, $C \leftarrow B$. Because $G_1$ and $G_2$ are in the same IEC. The first edge to be reversed must be $B \rightarrow A$. But then, we cannot reverse $B \rightarrow C$ because it is not covered.

    \item $C \rightarrow A$, $C \leftarrow B$. All the edges must be reversed. Because $G_1$ and $G_2$ are in the same IEC. The first edge to be reversed must be $B \rightarrow C$. But then, we cannot reverse $B \rightarrow A$ because it is covered but not isolated and we cannot reverse $C \rightarrow A$ because it is not covered. So this case is also not possible.\qedhere
\end{enumerate}
\end{enumerate}
\end{proof}

\section{Edge orientations for measurement model}
\label{sec:edge-orientation-mm}

Up to this point, we have used unknown interventions to learn the skeleton of the underlying DAG $G$ (\cref{sec:appendex-biparG}, \cref{sec:appendix-skeleton}). For causal interpretation, we must go one step further and orient these edges. Now, orienting the edges in the bipartite graph is easy: In a measurement model, the observed $X$ cannot have any children, so the only possibility is to orient the edges from hidden to observed. Orienting edges in the latent space is a different matter: Since the intervention targets are additionally unknown, this raises additional complications.

We extend \cref{alg:skeleton-learning} to \cref{alg:skeleton-edge-learning} to orient edges in $\latentG$ up to isolated edges which cannot be oriented using CI information only in general (\cref{sec:edge-orientation}).

\latentgidentifiability*

\begin{proof}
The identification is given by \cref{alg:skeleton-edge-learning}.

Suppose the returned PDAG is $\latentG^{\#}$. First, by \cref{thm: skeleton}, $\latentG^{\#}$ and $\latentG$ share the same skeleton. By \cref{lemma:rev-edges}, all the unoriented edges are the reversible edges of the Markov equivalence class $\MEC(\latentG)$. If a reversible edge $H_1 \rightarrow H_2$ is not covered or covered but~\shielded{} in $\latentG$, then $H_2$ has at least two incoming edges by definition and \cref{lemma:covered-edge}. Then $H_1 \rightarrow H_2$ will be correctly oriented because $\untupleset_H(G^{(H_2)})$ has at least two members. Therefore, the only reversible edge that cannot be oriented is the~\unshielded{} one. 

By \cref{lemma:char-IEC}, the only reversible edges in the IEC of $G_H$ are \unshielded{} in  $G_H$. And by \cref{thm:unshilded-same}, we cannot distinguish DAGs in IEC with unknown single-node hard interventions.
\end{proof}

\subsection{Useful Lemmas}
\begin{lemma}
\label{lemma:covered-reverse}
Let $G$ be a DAG and suppose $X \rightarrow Y$ is a covered edge in $G$, then $X \rightarrow Y$ is not compelled in $\MEC(G)$.
\end{lemma}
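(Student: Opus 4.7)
The plan is to prove the contrapositive in the standard way: I would exhibit an explicit DAG $G'\in\MEC(G)$ in which the edge is reversed, namely the graph $G'$ obtained from $G$ by flipping $X\to Y$ to $Y\to X$ while leaving all other edges untouched. Then $X\to Y$ is by definition not compelled in $\MEC(G)$.

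To justify this, I would verify three things, relying critically on the covered-edge hypothesis $\pa_G(Y)=\pa_G(X)\cup\{X\}$. First, \emph{acyclicity}: if $G'$ contained a directed cycle, it would have to use the new edge $Y\to X$, so $G$ would contain a directed path $X\to V_1\to\cdots\to V_k\to Y$ with $V_k\ne X$. Then $V_k\in\pa_G(Y)\setminus\{X\}\subseteq\pa_G(X)$, giving an edge $V_k\to X$ and producing a cycle already in $G$, a contradiction. Second, \emph{same skeleton}: immediate since only the orientation of $X$--$Y$ changes. Third, \emph{same $v$-structures}: I would check that no $v$-structure is created, destroyed, or altered. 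Any triple whose unshielded collider does not involve $X$ or $Y$ is unaffected. A potential new $v$-structure $Y\to X\leftarrow W$ in $G'$ would require $W\in\pa_G(X)$ non-adjacent to $Y$, but the covered condition forces $W\in\pa_G(Y)$, hence adjacent to $Y$, so no new $v$-structure is created at $X$. A $v$-structure in $G$ of the form $X\to Y\leftarrow W$ would similarly require $W\in\pa_G(Y)\setminus\{X\}\subseteq\pa_G(X)$, hence adjacent to $X$, contradicting the unshielded requirement; so no $v$-structure at $Y$ in $G$ is destroyed. Finally, a $v$-structure at $Y$ in either graph that does not involve the edge between $X$ and $Y$ uses only parents in $\pa_G(Y)\setminus\{X\}=\pa_G(X)$, and is therefore present identically in $G'$.

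Combining these observations with the Verma--Pearl characterization (two DAGs are Markov equivalent iff they share the same skeleton and $v$-structures), we conclude $G'\in\MEC(G)$. Since $G$ and $G'$ agree in $\MEC(G)$ but orient $X$--$Y$ oppositely, $X\to Y$ is reversible, i.e.\ not compelled.

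The only ``obstacle'' is bookkeeping: ensuring the covered condition is used in the right direction in each of the three checks. The acyclicity check uses that every parent of $Y$ other than $X$ is a parent of $X$ (preventing a back-path), while the $v$-structure check uses the same inclusion to force adjacency with $X$ and rule out unshielded colliders at both endpoints. No nontrivial combinatorial argument beyond this inclusion is needed.
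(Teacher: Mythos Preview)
Your proof is correct and follows the same idea as the paper: exhibit the DAG $G'$ obtained by reversing the covered edge and show $G'\in\MEC(G)$. The only difference is that the paper dispatches this step by citing Lemma~1 of \citet{chickering2013transformational}, whereas you give a self-contained verification via acyclicity plus the Verma--Pearl skeleton/$v$-structure characterization; your argument is effectively a proof of the cited lemma.
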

\begin{proof}
By Lemma 1 of \citep{chickering2013transformational}, we can reverse this edge to get $G'$ and $G'$ is Markov equivalent to $G$. By definition of compelled edge, it is not a compelled edge. 
\end{proof}

\begin{lemma}
\label{lemma:rev-edges}
Every unoriented edge returned by \cref{alg:skeleton-edge-learning} is a reversible edge of the Markov equivalence class $\MEC(\latentG)$.
\end{lemma}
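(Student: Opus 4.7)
The plan is to prove the contrapositive: every edge that is \emph{compelled} in $\MEC(\latentG)$ is oriented by the algorithm. Since the reversible edges of the MEC are exactly those that are not compelled, this will establish the lemma. Throughout, I assume (as justified by \cref{thm: skeleton}) that the skeleton of $\latentG$ has already been recovered correctly, and I work with the orientation phase of \cref{alg:skeleton-edge-learning}, which (as outlined in the discussion around \cref{thm:main-g-identifiability}) identifies an intervention target $\Itarget = \{H\}$ whenever $\untupleset_H(G^{(\Itarget)})$ (after Step~1 of \cref{alg:skeleton-learning}) has at least two pairs sharing the common element $H$, and then orients all edges incident to $H$ as pointing into $H$.

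So suppose $H_1 \to H_2$ is compelled in $\MEC(\latentG)$. By \cref{lemma:covered-reverse} (contrapositive), $H_1 \to H_2$ cannot be a covered edge. Applying \cref{lemma:covered-edge} to this uncovered edge yields a second parent $C$ of $H_2$ with $C \leftarrow H_1$, so $H_2$ has at least two incoming edges in $\latentG$. I will use these two incoming edges to argue that intervening on $H_2$ leaves at least two identifiable $d$-separation pairs.

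Consider the pairs $\langle H_1, H_2\rangle$ and $\langle C, H_2\rangle$. Each is $d$-\emph{connected} in $G^{(\emptyset)}$ through the respective direct edge, so neither lies in $\untupleset_H(G^{(\emptyset)})$. Under the intervention $\Itarget = \{H_2\}$, all incoming edges to $H_2$ are deleted; by \cref{lemma:d-sep-cases} any remaining active path between $H_1$ (resp.\ $C$) and $H_2$ would have to be a directed path into $H_2$ or go through a common ancestor of $H_2$, both impossible because $H_2$ now has no ancestors (and acyclicity forbids $H_2 \in \an(H_1)$ when $H_1 \to H_2 \in E$). Thus both pairs lie in $\untupleset_H(G^{(\{H_2\})})$. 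Conversely, for any other target $\Itarget' \neq \{H_2\}$, intervention does not remove the edges $H_1 \to H_2$ or $C \to H_2$, so $H_1$ (resp.\ $C$) stays directly adjacent to $H_2$ and cannot be $d$-separated from it. Hence neither pair appears in any other $\untupleset_H(G^{(\Itarget')})$, and Step~1 does not delete them.

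Consequently, after Step~1 the set $\untupleset_H(G^{(\{H_2\})})$ contains the two pairs $\langle H_1, H_2\rangle$ and $\langle C, H_2\rangle$, both sharing $H_2$. By \cref{lemma:common-parent} the target of this intervention is uniquely identified as $H_2$, and the orientation step of \cref{alg:skeleton-edge-learning} therefore orients both $H_1 \to H_2$ and $C \to H_2$ correctly. The main subtlety of the argument is this second step — namely, verifying that no \emph{other} non-trivial intervention target can also explain the $d$-separation of $\langle H_1, H_2\rangle$, so that the pair survives Step~1 and pins down $H_2$ unambiguously; this reduces precisely to the fact that only intervening on the head of a directly adjacent pair can unconditionally $d$-separate them, which follows from the edge-deletion semantics of hard interventions. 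With this in hand, every compelled edge is oriented, so every unoriented edge left by the algorithm is reversible in $\MEC(\latentG)$, as claimed.
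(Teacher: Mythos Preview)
Your contrapositive strategy is reasonable in principle, but the central step is flawed: you misapply \cref{lemma:covered-edge}. That lemma requires the uncovered edge to be \emph{unoriented in the CPDAG}, i.e.\ reversible, whereas you invoke it precisely on a \emph{compelled} edge. Once that application is removed, your conclusion that the head $H_2$ of every compelled edge has at least two parents is simply false. A concrete counterexample: take $\latentG$ with edges $A \to C$, $B \to C$, $C \to D$. The $v$-structure at $C$ forces $A \to C$ and $B \to C$, and Meek's rule R1 then compels $C \to D$; yet $D$ has only one parent. Your argument, which relies entirely on Step~2 (the ``at least two pairs sharing a common element'' detection), never orients $C \to D$.

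What is missing is the propagation phase, Step~3 of \cref{alg:skeleton-edge-learning}: once $C$ is placed in $\widehat{\ItargetFamily}$ by Step~2, the surviving singleton pair $\langle C, D\rangle$ in $\untupleset_H(G^{(\{D\})})$ lets Step~3 orient $C \to D$ and add $D$ to $\widehat{\ItargetFamily}$. Any correct contrapositive argument must therefore track how the set $\widehat{\ItargetFamily}$ grows and argue inductively that it eventually reaches the head of every compelled edge. The paper avoids this by working in the forward direction: it fixes an arbitrary edge left unoriented after all steps, does a case split on the in-degrees of both endpoints, shows that cases with in-degree $\geq 2$ at either end are handled by Steps~2 or~3, and in the remaining low-in-degree cases explicitly constructs a Markov-equivalent DAG with that edge reversed (possibly reversing a chain of such edges). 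You would need either to adopt that case analysis or to supply the missing inductive argument for Step~3 propagation along compelled edges.
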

\begin{proof}
Recall that an edge is reversible if there exist two members of the equivalence class that have two different orientations of this edge. 

Without loss of generality, suppose one of the unoriented edges returned by \cref{alg:skeleton-edge-learning} is $H_1 \leftarrow H_2$ in $\latentG$. Let's consider the following cases:
\begin{enumerate}
    \item In $\latentG$, $H_1$ has incoming edges other than $H_1 \leftarrow H_2$.
    
    In this case, when the unknown intervention target $\Itarget$ is  $\{H_1\}$, there are at least two elements in $\untupleset_H(G^{(\Itarget)})$ after step 1 of the algorithm. By \cref{lemma:common-parent}, we can thus orient the edge $H_1 \leftarrow H_2$ in step 2.
    
    \item In $\latentG$, $H_1$ has no incoming edge other than $H_1 \leftarrow H_2$ and $H_2$ has at least two incoming edges.  

    In this case, when the unknown intervention target $\Itarget$ is  $H_2$, there are at least two elements in $\untupleset_H(G^{(\Itarget)})$ after step 1 of the algorithm. By \cref{lemma:common-parent}, we can thus orient the incoming edge of $H_2$ in step 2 and orient $H_1 \leftarrow H_2$ in step 3.

    \item In $\latentG$, $H_1$ has no incoming edge other than $H_1 \leftarrow H_2$ and $H_2$ has only one incoming edge.  

    Suppose the other edge is $H_2 \leftarrow H_3$ in $\latentG$. Then this edge must also be unoriented the \cref{alg:skeleton-edge-learning}. Because if this edge is oriented, then $H_2$ must be in $\widehat{\ItargetFamily}$ by step 3 which implies that $H_1 \leftarrow H_2$ must also be oriented as well. Therefore, we can reverse $H_1 \leftarrow H_2$  and $H_2 \leftarrow H_3$. Because $H_2 \leftarrow H_3$ is unoriented, $H_3$ must have at most one incoming edge since the other cases have been ruled out by the previous argument. If it has one incoming edge, we can reserve that edge as well. The process continues until we reach one node that does not have any incoming edge. The whole process does not create any new $v$-structures. So we have two Markov equivalent graphs with different orientations of $H_1 \leftarrow H_2$.

    \item In $\latentG$, $H_1$ has no incoming edge other than $H_1 \leftarrow H_2$ and $H_2$ has no incoming edge.  

    Consider $\latentG^{'}$ where $\latentG^{'}$ has the same edges as $\latentG$ except that $\latentG$ has $H_1 \rightarrow H_2$. Because no $v$-structure is created or destroyed with this edge reversal, $\latentG^{'}$ and $\latentG$ are Markov equivalent. 
    
\end{enumerate}
If an edge is unoriented, it must fall in the last two cases and be reversible.
\end{proof}

\begin{algorithm}
\caption{Learning Skeleton and Orienting Edges}\label{alg:skeleton-edge-learning}

\KwInput{$H, \{\untupleset_H(G^{(\Itarget)}))\}_{ \Itarget \in \ItargetFamily}$}

$E \gets \emptyset$ \tcp*{Set of directed and undirected edges}

$\widehat{\ItargetFamily} \gets \emptyset$ \tcp*{Set of intervention targets}

\tcc{\textbf{Step 0}}

\If{$\{\untupleset_H(G^{(\Itarget)}))\}_{ \Itarget \in \ItargetFamily}$ only has one distinct element}
{
    \KwOutput{$E$} 
}

\tcc{\textbf{Step 1}: Remove any pair of hidden variables that appears twice}

\For{$\untupleset_H(G^{(\Itarget)})) \in \{\untupleset_H(G^{(\Itarget)}))\}_{ \Itarget \in \ItargetFamily}$}
{
\For{$\langle H_1, H_2 \rangle \in \untupleset_H(G^{(\Itarget)}))$}
{
    If $\langle H_1, H_2 \rangle$ appears twice with at least two different intervention targets, delete $\langle H_1, H_2 \rangle$ from $\{\untupleset_H(G^{(\Itarget)}))\}_{ \Itarget \in \ItargetFamily}$.
}
}

Remove $\untupleset_H(G^{(\Itarget)})$ from $\{\untupleset_H(G^{(\Itarget)})\}_{ \Itarget \in \ItargetFamily}$ if $\untupleset_H(G^{(\Itarget)})$ has no element.

\tcc{\textbf{Step 2}: Add edges for colliders}
\For{$\untupleset_H(G^{(\Itarget)}) \in \{\untupleset_H(G^{(\Itarget)})\}_{ \Itarget \in \ItargetFamily}$}
{
\If{$|\untupleset_H(G^{(\Itarget)})| > 1$}
{
Find the common hidden variables $H_{*}$ shared by all tuples in $\untupleset_H(G^{(\Itarget)})$

$\widehat{\ItargetFamily} \gets \widehat{\ItargetFamily} \cup \{ H_{*}\}$

\For{$\langle H_1, H_2 \rangle \in \untupleset_H(G^{(\Itarget)}))$}
{
    \If{$H_1 = H_{*}$}
    {
        $E \gets E \cup \{ H_2 \rightarrow H_1 \}$
    }
    \Else
    {
        $E \gets E \cup \{ H_1 \rightarrow H_2 \}$
    }
    
}

Remove $\untupleset_H(G^{(\Itarget)})$ from $\{\untupleset_H(G^{(\Itarget)})\}_{ \Itarget \in \ItargetFamily}$

}

}

\tcc{\textbf{Step 3}: Add compelled edges}

$\NewInv \gets True$ \tcp*{Flag to check if new intervention targets are added to $\widehat{\ItargetFamily}$}

\While{$\NewInv$ is True}
{

$\NewInv \gets False$

\For{$\untupleset_H(G^{(\Itarget)})) \in \{\untupleset_H(G^{(\Itarget)}))\}_{ \Itarget \in \ItargetFamily}$}
{

Let the only element in $\untupleset_H(G^{(\Itarget)})$ be $\langle H_1, H_2 \rangle$

    \If{$H_1 \in \widehat{\ItargetFamily}$}
    {
        $E \gets E \cup \{ H_1 \rightarrow H_2 \}$
        
        $\widehat{\ItargetFamily} \gets \widehat{\ItargetFamily} \cup \{ H_{2}\}$

        $\NewInv \gets True$

        Remove $\untupleset_H(G^{(\Itarget)})$ from $\{\untupleset_H(G^{(\Itarget)})\}_{ \Itarget \in \ItargetFamily}$
    }
    \ElseIf{$H_2 \in \widehat{\ItargetFamily}$}
    {
        $E \gets E \cup \{ H_2 \rightarrow H_1 \}$

        $\widehat{\ItargetFamily} \gets \widehat{\ItargetFamily} \cup \{ H_{1}\}$

        $\NewInv \gets True$

        Remove $\untupleset_H(G^{(\Itarget)})$ from $\{\untupleset_H(G^{(\Itarget)})\}_{ \Itarget \in \ItargetFamily}$
    }

}

}

\tcc{\textbf{Step 4}: Add unoriented edges}
\For{$\untupleset_H(G^{(\Itarget)})) \in \{\untupleset_H(G^{(\Itarget)}))\}_{ \Itarget \in \ItargetFamily}$}
{

    Let the only element in $\untupleset_H(G^{(\Itarget)})$ be $\langle H_1, H_2 \rangle$
    
    {
        $E \gets E \cup \{ H_1 - H_2 \}$
    }

}
\KwOutput{$\latentG^{\#} = (H, E)$}  
\end{algorithm}

\end{document}